\documentclass[11pt]{article}
\usepackage{geometry}
\usepackage{geometry}
\usepackage{amsmath, amssymb, amsthm}
\usepackage[ruled,vlined]{algorithm2e}

\newtheorem{assumption}{Assumption}

\usepackage{float}

\newcommand{\Var}{\mathrm{Var}}
\newcommand{\Corr}{\mathrm{Corr}}
\newcommand{\SNR}{\mathrm{SNR}}
\newcommand{\SR}{\mathrm{SR}}

\usepackage[ruled,vlined]{algorithm2e}
\usepackage{xr-hyper}
\usepackage[affil-it]{authblk}
\usepackage{graphicx,subfigure}
\usepackage{amssymb,amsmath}
\usepackage{comment}
\usepackage{multirow}
\newtheorem{prop}{Proposition}
\newtheorem{lem}{Lemma}
\newtheorem{remark}{Remark}

\newtheorem{assump}{Assumption}

\newtheorem{thm}{Theorem}
\newtheorem{definition}{Definition}

\newcommand{\bbE}{\mathbb{E}} 
\newcommand{\bbR}{\mathbb{R}} 
 
\newcommand{\bbP}{\mathbb{P}}

\newcommand{\bX}{\boldsymbol{X}} 
 
\newcommand{\bY}{\boldsymbol{Y}} 
\newcommand{\bZ}{\boldsymbol{Z}} 
\newcommand{\bq}{\boldsymbol{q}} 
\newcommand{\cS}{\mathcal{S}}
\newcommand{\cP}{\mathcal{P}}
\newcommand{\cU}{\mathcal{U}}

\newcommand{\cN}{\mathcal{N}} 
\newcommand{\err}{\mathrm{Error}}

\newcommand{\ind}{\mathbb{I}}
\newcommand{\del}{\backslash}
\newcommand{\no}{\backslash}
\newcommand{\til}[1]{\widetilde{#1}}

\newcommand{\RNum}[1]{\mathrm{\uppercase\expandafter{\romannumeral #1\relax}}}

\newcommand{\hh}[1]{\hat{h}^{(#1)}}
\newcommand{\hD}[2]{\hat{\Delta}^{(#1)}_{#2}}
\newcommand{\D}[2]{\Delta^{(#1)}_{#2}}

\newcommand{\Q}[2]{Q^{#1(#2)}}

\newcommand{\DErr}{\varepsilon_{\Delta}}
\newcommand{\DErrj}[2]{\varepsilon_{#1}^{(#2)}}

\usepackage{enumitem} 
\usepackage{amsmath, amssymb}
\usepackage{xcolor}

\usepackage[utf8]{inputenc}
\usepackage[T1]{fontenc}  
\usepackage{hyperref}    
\usepackage{url}     
\usepackage{booktabs}    
\usepackage{amsfonts}    
\usepackage{nicefrac}    
\usepackage{microtype}   
\usepackage{xcolor}    
\usepackage[symbol]{footmisc}

\usepackage{hyperref}
\usepackage{enumitem}

\usepackage{natbib}

\usepackage{tabularx}
\usepackage{array}
\usepackage{multirow}
\newcolumntype{Y}{>{\raggedright\arraybackslash}X}

\usepackage{amsmath, amssymb, amsthm}
\usepackage[ruled,vlined]{algorithm2e}

\newif\ifshownotationchanges
\shownotationchangesfalse
\DeclareRobustCommand{\notationchange}[1]{%
  \ifshownotationchanges{\color{blue}#1}\else#1\fi}

\newcommand{\suppfigure}[4][0.95\textwidth]{%
\begin{figure}[tbp]
    \centering
    \IfFileExists{#2}{%
        \includegraphics[width=#1]{#2}
    }{%
        \fbox{%
            \parbox[c][0.34\textheight][c]{0.9\textwidth}{%
                \centering
                \textbf{Figure placeholder}\\[0.5em]
                Insert figure file here:\\
                \texttt{#2}
            }%
        }%
    }
    \caption{#4}
    \label{#3}
\end{figure}
}

\hypersetup{
  pdftitle={Model-Agnostic Feature Selection via LOCO-Guided Adaptive Minipatch Sampling},
  pdfauthor={Xuhui Liu; Lili Zheng},
  pdfkeywords={Black-box models, feature selection, model-agnostic feature importance, minipatch ensembles, feature subsampling, adaptive sampling}}

\begin{document}

\title{Model-Agnostic Feature Selection via LOCO-Guided Adaptive Minipatch Sampling}
\author{Xuhui Liu}
\author{Lili Zheng}
\affil{Department of Statistics, University of Illinois Urbana-Champaign}
\date{\today}
\maketitle

\begin{abstract}
    Black-box machine learning models increasingly deliver strong predictions, but extracting useful information from them, such as a set of important features, remains challenging. Existing model-agnostic methods primarily estimate feature importance or conduct inference on it rather than directly selecting features, whereas many feature selection methods are model-specific or rely on the model-X assumption. We introduce LOCO-guided Adaptive Minipatch Sampling (LAMPS), a model-agnostic ensemble framework that uses any black-box regression algorithm as its base learner to select features important for predicting the response. The base learner need only produce predictions and need not perform feature selection itself. LAMPS operates within a minipatch ensemble framework that subsamples both observations and features, allowing leave-one-covariate-out (LOCO) feature importance scores to be easily computed. It adaptively concentrates minipatch sampling on features with high LOCO scores while maintaining exploration. The resulting sampling probabilities rapidly separate signal from noise features after a few iterations, enabling selection through simple thresholding. We establish that LAMPS achieves exact feature selection in high-dimensional settings, provided that the base predictive models are sufficiently well trained on average. Extensive experiments on synthetic and real data show that LAMPS outperforms state-of-the-art feature selection methods, with particularly strong performance in the presence of correlated features.
\end{abstract}

\noindent%
{\it Keywords:} Black-box models, feature selection, model-agnostic feature importance, minipatch ensembles, feature subsampling, adaptive sampling

\section{Introduction}
Flexible machine learning models arise with great prediction performance but often lack interpretability. In response, a growing body of interpretation methods has been developed to leverage these powerful models to shed light on the underlying structure of large-scale data~\citep{allen2023interpretable}. One fundamental type of interpretation is the set of important features relevant for a given response, such as important genes associated with Alzheimer's disease, effective treatment combination for better health outcomes, etc. Extensive literature has been devoted to model-specific feature selection methods, ranging from linear \citep{Tibshirani1996Lasso,ZouHastie2005ElasticNet} to nonparametric models \citep{friedman1991multivariate,Ravikumar2009SpAM}. 

However, there has been limited study on model-agnostic feature selection that can leverage any predictive model. Classical wrapper methods search for the best predictive model amongst all feature subsets using greedy heuristics but lack convergence or statistical guarantees \citep{guyon2003introduction}; recent model-agnostic frameworks require the base model to output selection (frequency) \citep{LiuRockova2023TVS} or a ranking \citep{guyon2002gene} for features, which are then aggregated to produce the final feature set; the knockoff framework~\citep{Barber2014ControllingTF} can utilize any predictive model and attain FDR control for feature selection, while requiring access to the joint distribution of all features, which can be limiting in some applications. Other recent works explore flexible ensemble frameworks for feature ranking and screening~\citep{Tian2021RaSEAV,chen2025top}, but not for feature selection. A more extensive discussion of related literature is included in Section~\ref{sec:additional_related_work} of Appendix.

Motivated by this gap, we aim to develop a new framework that can utilize any supervised learning algorithm to perform efficient and accurate feature selection for large-scale data sets, and that enjoys theoretical guarantees without assuming access to the joint feature distribution. One key inspiration for our approach lies in the recent advances in model-agnostic feature importance estimation/inference.

\subsection{Leave-one-covariate-out (LOCO) and Minipatch Ensembles} Recently, a flurry of model-agnostic feature importance estimation/inference methods has been proposed~\citep{williamson2023general,zhang2020floodgate,watson2021testing,Shah2018TheHO}. One of the popular frameworks is the leave-one-covariate-out (LOCO) method~\citep{lei2018distribution}, assessing the prediction error change due to feature-occlusion. Formally, the LOCO importance for feature $j$ takes the following form:
\begin{equation}\label{eq:loco_def}
\Delta_j = \bbE_{X^*,Y^*}\big[\err(Y^*,\mu_{\no j}(X^*))-\err(Y^*,\mu(X^*))\mid \bX,\bY\big],    
\end{equation}
where $\mu(\cdot)$ and $\mu_{\no j}(\cdot)$ are predictive models trained on data $(\bX,\bY)$, $\err(\cdot,\cdot)$ is a prespecified prediction error metric (set as squared loss throughout this paper), and $(\notationchange{\bX_{\cdot,\no j}},\bY)$; $(X^*,Y^*)$ is an unseen test sample independent from and identically distributed as the training samples.
The original LOCO method~\citep{lei2018distribution} estimates and provides inference for $\Delta_j$ via data-splitting and refitting a model $\mu_{\no j}$ for each feature of interest.

To improve the statistical and computational efficiency of LOCO, \citet{GanZhengAllen2022MinipatchCI} propose a new ensemble framework, LOCO-MP, which avoids data-splitting and model-refitting after training. In minipatch ensembles, a base learner is repeatedly trained on randomly subsampled observation and feature indices, termed as minipatches; then the final prediction is obtained through model averaging. \citet{GanZhengAllen2022MinipatchCI} show that after the minipatch ensembles are trained, LOCO inference is essentially free: no held-out data is needed as one can assess prediction errors in a leave-one-out manner; no model-refitting is needed as we can obtain $\mu_{\no j}$ simply by averaging minipatches which exclude feature $j$. \citet{GanZhengAllen2022MinipatchCI}
show that LOCO-MP boosts statistical efficiency by avoiding data-splitting; it is especially strong under feature correlations due to the implicit regularization effect of minipatch ensembles.

\subsection{From Feature Importance to Feature Selection}
One natural question is whether we could adapt these LOCO methods to perform feature selection. An immediate idea might be developing an FDR control method built on top of the feature-wise p-values. However, due to the unknown dependence structure among the LOCO importance estimates, going from individual feature inference to feature selection with FDR control remains a highly non-trivial route. 
Another idea is to simply threshold the importance scores, while it is unclear how to choose the threshold in a principled way, given different scales of the LOCO importance scores across data sets. Nevertheless, to examine the potential of this idea, we conduct a toy simulation with a high-dimensional sparse linear model. We estimate the LOCO importance of all features via data-splitting and via minipatch ensembles (LOCO-MP), where the base models are Lasso and least squares, respectively. The results are shown in Figure~\ref{fig:locomp_split}. We first note that LOCO via data-splitting can hardly distinguish half of the signal features from noise features, possibly because (i) data-splitting loses power and (ii) the original LOCO method assigns low importance to correlated features. LOCO-MP yields much more promising results, while the separation between signal and noise features is still not that clear; it is also difficult to find a data-driven threshold given only the importance scores.

\begin{figure}[t]
\centering
\includegraphics[width=.82\linewidth]{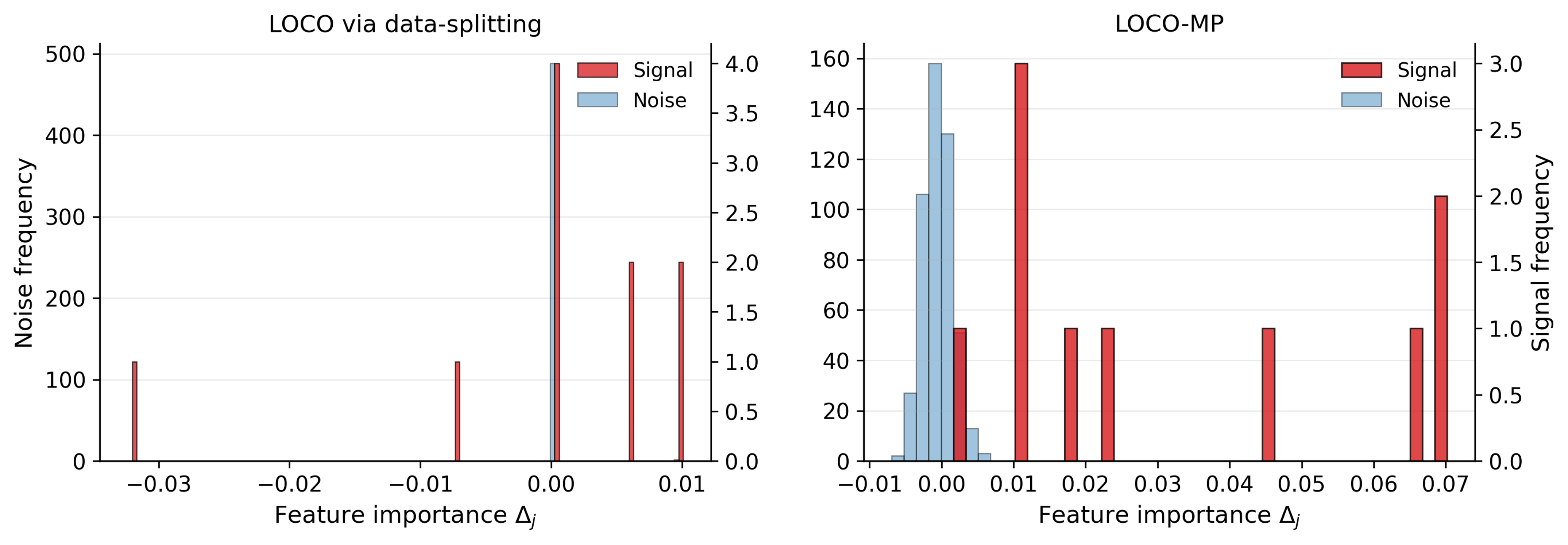}
\caption{
\small Histograms of LOCO feature importance scores obtained via data-splitting and via minipatch ensembles in a correlated linear-model setting with $200$ observations and $500$ features.
Detailed simulation setup is provided in Section~\ref{sec:intro_fig_sim_details} of Appendix.
}
\label{fig:locomp_split}
\end{figure}

\paragraph{From uniform to adaptive feature subsampling for LOCO-MP.} Motivated by Figure~\ref{fig:locomp_split}, we would like to further refine the LOCO-MP feature importance and adapt it to the feature selection problem. We note that one key bottleneck of the LOCO-MP method is the massive uniform feature subsampling. For computational and statistical efficiency of minipatch training, we often set the number of subsampled features to be much smaller than the total number of features (12\% in this example), and hence the signal can be easily buried under the errors due to insufficient sampling. This motivates us to consider an adaptive feature subsampling strategy that gradually assigns higher sampling probabilities to promising features and hence achieves higher accuracy for their importance.

More specifically, we propose the {\bf LOCO-Guided Adaptive Minipatch Sampling (LAMPS)} framework, where we iteratively run the following two steps:
\begin{itemize}[leftmargin=*]
    \item[(i)] Randomly sample minipatches and train a prespecified base model repeatedly;
    \item[(ii)] Compute the (free) LOCO importance for all features given the trained ensemble and update the feature sampling probabilities accordingly.
\end{itemize} 
We adopt uniform observational subsampling and initialize feature sampling probabilities to be the same small value. Figure~\ref{fig:prob_delta_vs_epoch} shows how the LOCO importance and feature sampling probabilities change over three iterations in LAMPS under the same simulation setup as in Figure~\ref{fig:locomp_split}. We can see that there is a clearer separation between signal and noise features as we update the sampling probabilities. Interestingly, the sampling probability itself manifests an even clearer separation; it is also easier to set a universal threshold for the probability given its fixed range $[0,1]$. Therefore, in LAMPS, we choose to threshold the updated sampling probability to obtain the selected feature set $\hat{S}$.

\paragraph{Contribution and organization.} We propose LAMPS, a new ensemble regression framework that can employ any base regression algorithm for feature selection. The base models can be black-box and supply only predictions. The paper is organized as follows.
\begin{itemize}[leftmargin=*]
    \item Section~\ref{sec:method} introduces the detailed procedure of LAMPS. One core innovation is our adaptive feature subsampling scheme guided by the LOCO importance of each feature, easily computed given the trained minipatch ensemble. To make use of the unbounded, possibly negative LOCO importance scores, we develop a new adaptive sampling scheme that is substantially different from prior works~\citep{LiuRockova2023TVS,Tian2021RaSEAV} that often assume access to binary scores, e.g., selection event of a base selector or feature inclusion in the best predictive submodel.
    \item In Section~\ref{sec:theory}, we theoretically establish exact feature selection guarantees for LAMPS in high-dimensional settings after a logarithmic number of iterations. Our theory does not follow directly from prior results on LOCO importance or minipatch ensembles; instead, we develop new proofs to characterize the behavior of our adaptive sampling algorithm and establish tighter probabilistic bounds on LOCO estimation error to accommodate high-dimensional settings. 
    \item In Sections~\ref{sec:simulation} and \ref{sec:casestudy}, we present extensive empirical studies across simulated and real data sets, demonstrating that LAMPS achieves superior feature selection accuracy compared to state-of-the-art methods across linear and nonlinear settings. LAMPS is especially strong in the correlated feature setting; we hypothesize that this is due to the implicit regularization induced by feature subsampling in minipatch ensembles, as has been noted in \cite{GanZhengAllen2022MinipatchCI} and also hinted by our illustrative theory in Section~\ref{sec:correlateddetailtheory}. 
    
\end{itemize}

\begin{figure}[t]
\centering
\includegraphics[width=.82\linewidth]{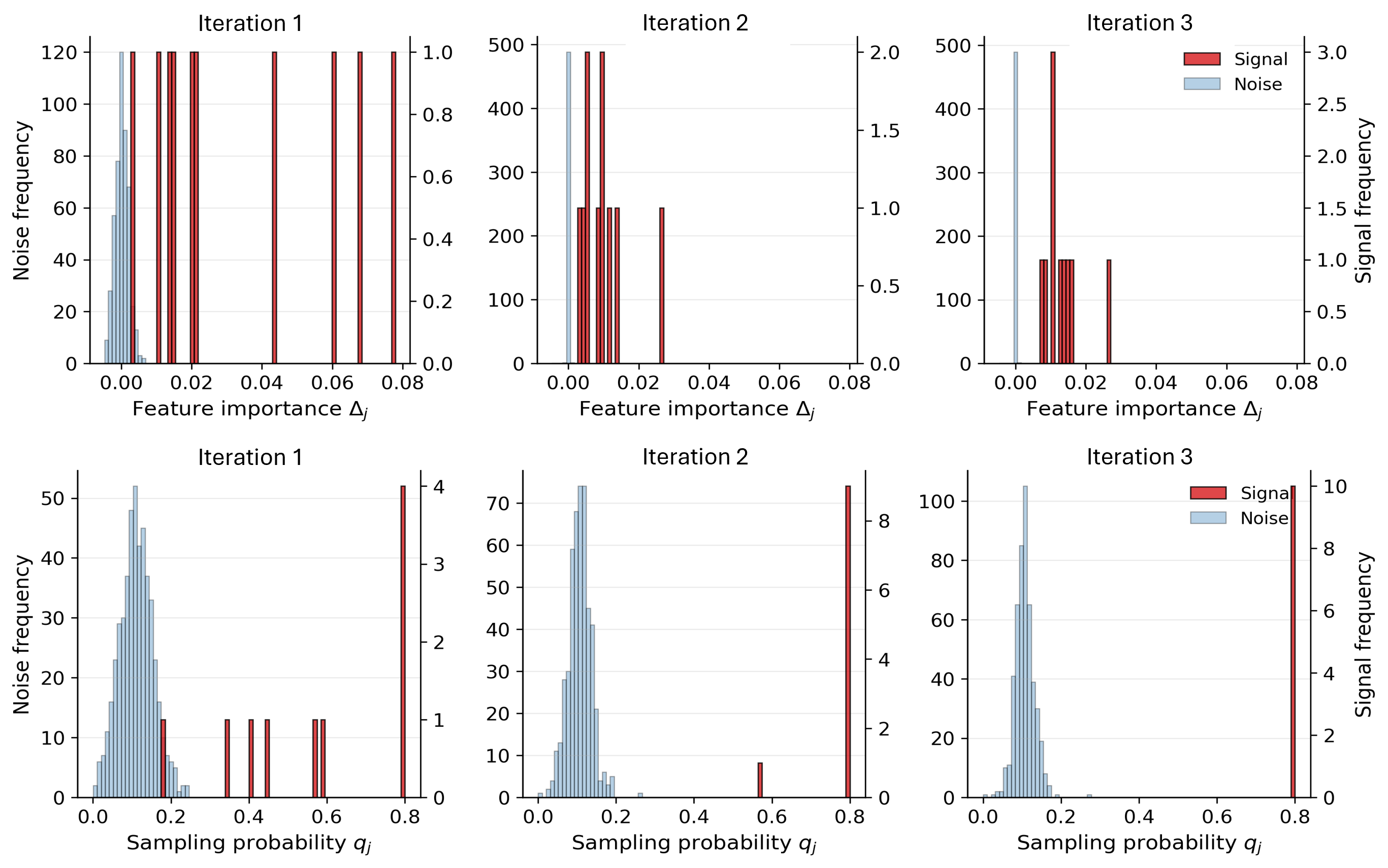}
\caption{\small
Evolution of LOCO feature importance scores and sampling probabilities in LAMPS under the same setup as in Figure~\ref{fig:locomp_split}.
The base learner is linear regression with the least squares estimate. Additional simulation details are provided in Section~\ref{sec:intro_fig_sim_details} of Appendix.}
\label{fig:prob_delta_vs_epoch}

\end{figure}

\section{Proposed Method: LOCO-Guided Adaptive Minipatch Sampling (LAMPS)}\label{sec:method}

Suppose we observe $N$ samples of feature-response pairs $(X_i,Y_i)_{i=1}^N$, where $X_i\in \bbR^M$ includes $M$ features and $Y_i\in \bbR$ is a continuous response. \notationchange{We denote the $j$th feature of sample $i$ by $X_i^{(j)}$.} We organize the observations as a design matrix $\boldsymbol{X}=(X_1^\top,\dots,X_N^\top)^\top \in \mathbb{R}^{N\times M}$ and a response vector $\boldsymbol{Y}=(Y_1,\dots,Y_N)^\top \in \mathbb{R}^N$. We assume that there exists a subset of features indexed by $S\subseteq[M]$, and a function $f:\bbR^{|S|}\rightarrow\bbR$, such that
\begin{equation}\label{eq:model}
Y_i=f(\notationchange{X_i^{(S)}}) + \varepsilon_i, \quad i=1,\dots,N,
\end{equation}
where $\{\varepsilon_i\}_{i=1}^N$ are independent mean-zero random variables, also independent of $\bX$. Furthermore, we assume $S$ is minimal in the sense that for any $S'\subsetneq S$, $\bbE(Y_i|X_i)=\bbE(Y_i|\notationchange{X_i^{(S)}})\neq \bbE(Y_i|\notationchange{X_i^{(S')}})$ holds with positive probability. Our goal is to identify the set $S$ given data $\bZ:=(\bX,\bY)$, using any regression algorithm suitable for the data.

To achieve this, we adopt the minipatch ensemble framework with built-in LOCO importance~\citep{GanZhengAllen2022MinipatchCI} and propose a new adaptive sampling scheme for feature selection. Specifically, we iterate between the following two modules. 
\begin{itemize}[leftmargin=*]
    \item {\bf Minipatch ensemble training and LOCO importance estimate}: given any prespecified base learner, we repeatedly apply it to $K$ randomly subsampled minipatches $(\bX_{I_k,F_k},\bY_{I_k})_{k=1}^K$ which yields $\{\mu_k(\cdot)\}_{k=1}^K$, where $I_k$ is a size-$n$ set uniformly sampled from $[N]$ without replacement, and $F_k\subset[M]$ is an independently sampled index set for features. To better accommodate non-uniform sampling weights, we adopt a Bernoulli-based sampling for $F_k$, different from \cite{GanZhengAllen2022MinipatchCI}: let $F_k=\{j:A_{k,j}=1\}$ where $A_{k,j}$ follows an independent Bernoulli distribution with parameter $q_j$, conditioned on the event that $\sum_{j=1}^M A_{k,j}>0$. $q_j$'s are initialized as $m/M$, where $m$ is a prespecified size parameter.
    
    Given the trained $\{\mu_k(\cdot)\}_{k=1}^K$, one can compute the LOCO importance estimates for each feature via simple model averaging, following the same steps as in \cite{GanZhengAllen2022MinipatchCI}. We adopt the squared error loss and the detailed procedure is summarized in Alg.~\ref{algo:minipatch_train}.
    \item {\bf Update feature sampling probability}: Given the unbounded LOCO importance estimates $\{\hat{\Delta}_j\}_{j=1}^M$, we propose a new adaptive scheme that transforms these scores to updated sampling probabilities $\bq=(q_1,\dots,q_M)^\top$. In particular, we incorporate the following three considerations.
    \begin{itemize}[leftmargin=*]
        \item {\bf Exploitation:} features with larger LOCO importance should be sampled more. We set $q_j\propto \omega_j$, where $\omega_j$ is a non-negative increasing function of $\hat\Delta_j$ to be fixed later.
        \item {\bf Training and computational efficiency:} to ensure the efficiency of minipatch training, the minipatch feature size should be controlled. We set $\sum_{j=1}^M q_j=m$ so that $\bbE(|F_k|)$ is close to a prespecified size parameter $0<m\ll M$; hence $q_j=\frac{m\omega_j}{\sum_{k=1}^M \omega_k}$.
        \item {\bf Exploration and regularization}: the maximum sampling probability for all features should be bounded away from 1 to encourage exploration, regularization, and accurate estimation of each LOCO importance score. We achieve this through thresholding: let the weight parameter $\omega_j=\til{\Delta}_j\wedge t^*$, where $\tilde{\Delta}_j=\hat{\Delta}_j-\min_l\hat\Delta_l+\frac{c_0}{M}$ shifts $\hat{\Delta}_j$ by the same margin across $j$ to ensure positiveness and lower bounded su, while $t^*$ is chosen as the largest threshold to ensure $\max_j q_j=\frac{m\max_j\omega_j}{\sum_{k=1}^M \omega_k}\leq \delta$ for a constant $0<\delta<1$. Therefore, the update behaves like a capped reweighting scheme: $q_j$ is approximately proportional to $\hat\Delta_j$ for moderately important features, but is truncated for extremely large scores.
    \end{itemize}
    Our capped normalization scheme ensures exploitation of promising features while also maintaining continued exploration. The detailed steps are summarized in Alg.~\ref{algo:IndepSampWeight}.
\end{itemize}
Iterating between these two modules for $B$ iterations, one can expect that larger values of the final-iteration sampling probability $q_j^{(B)}$ and LOCO importance scores $\hat\Delta_j^{(B)}$ should both be good indicators for the signal features. As illustrated in Figure~\ref{fig:prob_delta_vs_epoch}, $q_j^{(B)}$ often enjoys a clearer separation than $\hat\Delta_j^{(B)}$. Furthermore, since $q_j^{(B)}\in [0,\delta]$, we can simply estimate the signal feature set by $\hat{S}=\{j: q_j^{(B)}/\delta > 0.5\}$, where the threshold $0.5$ follows the majority-selection principle commonly used in stability selection~\citep{MeinshausenBuhlmann2010}. The full LAMPS algorithm is summarized in Alg.~\ref{algo:loco_adamp}. We note here that our LAMPS algorithm is inspired by but differs substantially from prior adaptive subspace sampling methods~\citep{YaoAllen2020Minipatch,LiuRockova2023TVS,Tian2021RaSEAV}, most of which assume access to binary reward for each feature through base model selection or examination of the inclusion of each feature in the best subspace within a sufficiently large group of subspaces. In contrast, our method designs a new adaptive strategy tailored to the unbounded LOCO importance metric.

\begin{algorithm}[t]
\caption{LOCO-guided Adaptive Minipatch Sampling (LAMPS) for Feature Selection}
\label{algo:loco_adamp}
\noindent\textbf{Input}: Training data $(\boldsymbol{X}, \boldsymbol{Y})$, minipatch sizes $n$, $m$; number of iterations $B$; number of minipatches per iteration $K$; base learner $H$; adaptivity parameter $\delta \in (0,1)$; constant buffer $c_0>0$.

\medskip
\noindent\textbf{Initialization}: Let $M$ be the number of features. Initialize sampling probabilities $\boldsymbol{q}^{(0)} = (m/M, \dots, m/M)$.

\noindent\textbf{Adaptive Minipatch Loop:} For $b = 1$ to $B$,
    \begin{enumerate}
        \item \textbf{Compute Feature Importance Scores:} \\
        Call Alg.~\ref{algo:minipatch_train} with input $(\boldsymbol{X}, \boldsymbol{Y}, n, K, \boldsymbol{q}^{(b-1)}, H)$ to obtain $\{\hat{\Delta}_j^{(b)}\}_{j=1}^M$.

        \item \textbf{Update Sampling Probabilities:} \\
        Call Alg.~\ref{algo:IndepSampWeight} with input $(\hat{\Delta}_1^{(b)}, \dots, \hat{\Delta}_M^{(b)}, m, \delta, c_0)$ to obtain updated sampling probability vector $\boldsymbol{q}^{(b)}$.
\end{enumerate}

\noindent\textbf{Output}: Feature subset $\hat{S}=\{\,j : q_j^{(B)} > \frac{\delta}{2}\}$.
\end{algorithm}

\begin{algorithm}[t]
\caption{LOCO-guided Updates for Feature Sampling Probabilities}
\label{algo:IndepSampWeight}
\noindent\textbf{Input}: Feature importance scores $\{\hat{\Delta}_1, \dots, \hat{\Delta}_M\}$; probability upper bound $\delta$;  expected minipatch feature size $m\leq \delta M$; constant buffer $c_0>0$.

\begin{enumerate}
    \item Compute non-negative scores: $\tilde{\Delta}_j = \hat{\Delta}_j - \min_\ell \hat{\Delta}_\ell + \frac{c_0}{M}$ for all $j$.

        \item Define $\omega_j(t) = \tilde{\Delta}_j \wedge t$ as the thresholded weight of feature $j$, a function of $t>0$. Find the desired threshold
        \begin{equation}\label{eq:threshold}
            t^*: = \sup\left\{0<t\leq \max_l\tilde{\Delta}_l:\frac{m\max_l\omega_l(t)}{\sum_{l=1}^M\omega_l(t)}\leq \delta\right\}
        \end{equation} 
        using Alg. \ref{algo:find_threshold}.
        \item Compute updated feature sampling probabilities $q_j = \frac{m \omega_j(t^*)}{\sum_{k=1}^M \omega_k(t^*)}$ for $j = 1, \dots, M$.

    \end{enumerate}
\noindent\textbf{Output}: Updated sampling probability vector $\boldsymbol{q} = (q_1, \dots, q_M)$.
\end{algorithm}

\subsection{Hyperparameter tuning}
\label{sec:tuning}
Although LAMPS involves many hyperparameters such as the minipatch size $(n,m)$, here we note that most of them are easy to tune and the rest can be set based on a universal rule. One convenient tool we leverage for hyperparameter tuning is the leave-one-out (LOO) error: given the trained minipatch ensemble, one can easily compute its leave-one-out prediction $\hat{\mu}_{-i}(\cdot)$ for each sample $i$ through simple model averaging without model-refitting as in Alg.~\ref{algo:minipatch_train}; the average mean squared LOO error can then be computed as $\frac{1}{N}\sum_{i=1}^N \big(Y_i-\hat{\mu}_{-i}(X_i)\big)^2$.

To tune the number of iterations $B$, we propose to track the LOO errors over iterations. Starting from $b=2$, we check if the LOO error at iteration $b$ decreases significantly from iteration $b-1$, and if not, we will choose $B$ as $b-1$. We detect the significant decrease through a heuristic paired z-test for LOO error comparison. Detailed steps of LAMPS with data-driven selection of $B$ are summarized in Alg.~\ref{algo:loco_adamp_stopping}. We can also tune the minipatch size $(n,m)$ by running LAMPS over a grid of candidate pairs and choosing the pair with the lowest final-iteration LOO error. Similarly, when there are multiple candidate base learners to choose from, we can use LOO errors as the criterion without resorting to a separate cross-validation.

For the selection of the number of minipatches $K$, we note that a larger $K$ reduces variance and is always helpful statistically. We propose to choose $K$ large enough so that each feature and feature pair is sampled at least $C_1$ and $C_2$ times on average given initial sampling probability: $K=\left\lceil\max\left\{
\frac{C_1M}{(1-n/N)m},
\frac{C_2M^2}{(1-n/N)m^2}\right\}\right\rceil$. Throughout our empirical studies, we set $C_1=200$ and $C_2=50$.

The parameter $\delta\in (0,1)$ is the capped feature sampling probability, which ensures sufficient exploration and accurate estimation of the LOCO importance of each feature. In all our empirical studies, we set $\delta=0.8$, yielding robust performance.

\section{Theoretical Guarantees}\label{sec:theory}
In this section, we establish exact feature selection guarantees for LAMPS. Section~\ref{sec:thm1} presents a general deterministic convergence guarantee for LAMPS under a sparse additive model with independent features, subject to small LOCO importance estimation errors. We then study when the LOCO importance can be estimated sufficiently accurately with high probability in Section~\ref{sec:thm2}, characterizing the required conditions on sample size, base-learner training errors, and ensemble hyperparameters. Section~\ref{sec:interaction_theory} extends our main theory beyond additive models, showing that LAMPS can identify features contributing through feature interactions. Finally, Section~\ref{sec:correlated_theory} discusses why LAMPS can be a promising strategy for handling correlated features via illustrative theory and supporting simulations.

We start by describing a general functional ANOVA model that sets the basis for our discussion throughout Sections~\ref{sec:thm1}-\ref{sec:interaction_theory}. For any $F\subseteq[M]$, let $\notationchange{X^{(F)}}$ denote the subvector of a generic feature vector $\notationchange{X=(X^{(1)},\dots,X^{(M)})^\top}$ indexed by $F$. Suppose that $f(\notationchange{X^{(S)}})$ in \eqref{eq:model} can be decomposed as
\begin{equation}
\label{eq:general_interaction_model}
f(\notationchange{X^{(S)}})=f_0+\sum_{w\in \cS} f_w(\notationchange{X^{(w)}}),
\end{equation}
where $f_0$ is a constant, $\cS\subset 2^{S}$ includes all feature subsets $w\subset S$ which jointly contribute to $f(\cdot)$ through $f_w(\notationchange{X^{(w)}})$. To ensure identifiability and following the convention in the functional ANOVA literature~\citep{Hoeffding1948ACO}, we assume that $\bbE(f_w(\notationchange{X^{(w)}})\mid \notationchange{X^{(u)}})=0$ a.s. for any $u\subsetneq w$, which can always be satisfied by defining $f_w(\cdot)$ appropriately, under the independent features assumption made throughout Sections~\ref{sec:thm1}-\ref{sec:interaction_theory}. We also define $\|f_w\|_2^2:= \bbE[f_w^2(\notationchange{X^{(w)}})]$ as the signal strength of feature set $w$. Given i.i.d. samples $(X_i,Y_i)_{i=1}^N$ from model~\ref{eq:general_interaction_model}, we aim to recover the signal feature set $S=\{j:\exists w\in \cS\text{ s.t. }w\owns j\}$.

\subsection{Exact Feature Selection under Additive Models: Deterministic Guarantees}\label{sec:thm1}
Here, we consider the additive model, a special case of \eqref{eq:general_interaction_model}: 
\begin{equation}\label{eq:additive_mod}
f(X)=\sum_{j\in S} f_j(\notationchange{X^{(j)}}).
\end{equation}
To show the exact feature selection, we first require two mild assumptions on the minipatch feature size and feature signal strength heterogeneity. 
\begin{assumption}[Minipatch feature size scaling]
\label{ass:minipatch_feature_scaling}
The minipatch feature size $m$ satisfies

$$\max\Big\{C_1\log M, C_1(s\delta+1)\Big\} \;\le\; m \;<\; \frac{\delta}{4} M$$
for some universal constant $C_1>1$, where $s = \lvert S \rvert$ and $\delta\in(0,1)$ is the probability upper bound used in Alg.~\ref{algo:loco_adamp}. 
\end{assumption}
We require $m$ large enough compared to the sparsity to cover all signal features, but small enough compared to the total number of features $M$. Assumption~\ref{ass:minipatch_feature_scaling} can be easily satisfied by choosing $m$ as a small fraction of $M$, as long as $M\gg 1$ and $s\ll M$. 

\begin{assumption}[Mild conditioning amongst feature importance]
\label{ass:signal_strength_ratio}
The total and minimum feature importance satisfy $$\sum_{k\in \cS}\|f_k\|_2^2\leq c_1M^{(C_1-1)}\min_{j\in \cS}\|f_j\|_2^2$$ 
for a sufficiently small $c_1>0$, where $C_1>1$ is the universal constant in Assumption~\ref{ass:minipatch_feature_scaling}.
\end{assumption}
Assumption~\ref{ass:signal_strength_ratio} is much weaker than assuming all signal strengths to be on the same order; instead, with bounded sparsity $s$, it is implied by $\max_{j\in S}\|f_j\|_2^2\lesssim M^{C_1-1} \min_{j \in S}\|f_j\|_2^2$. 

For our LOCO-guided procedure to succeed, one key requirement is that the estimated LOCO importance accurately reflects the population importance. In each iteration $b$, we define the oracle importance $\Delta_j^{\star(b)}$ of feature $j$ as in \eqref{eq:loco_def}, but replacing $\mu$ (resp. $\mu_{\no j}$) by the ensembled minipatch predictor (resp. without $j$) when there is an infinite number of minipatches and when each minipatch predictor $\mu_{I,F}(\notationchange{X^{(F)}})=\bbE(Y\mid \notationchange{X^{(F)}})$ is the oracle predictor (see Definition~\ref{def:Delta_defs} in the Appendix). Define the LOCO importance estimation error vector $\varepsilon^{(b)}\in \bbR^M$ as $\DErrj{j}{b} = \hat{\Delta}_j^{(b)} - \min_l\hat{\Delta}_l^{(b)} - (\Delta_j^{\star(b)} - \min_l\Delta_l^{\star(b)}),\,1\leq j\leq M$. 
The following Assumptions impose requirements on $\DErrj{j}{b}$.
\begin{assumption}[Signal strength requirement]
\label{ass:min_sig} 
    For any $j\in S$ and any $1\leq b\leq B$,
    \begin{equation}\label{eq:SNR_additive}
    \|f_j\|_2^2
    \;>\;
    \frac{6\big|\varepsilon_j^{(b)}\big|}{q_j^{(b-1)}}\vee \frac{24\|\varepsilon^{(b)}_{S^c}\|_1}{m-s\delta}.
\end{equation}
In addition, the small constant $c_0>0$ in Alg.~\ref{algo:IndepSampWeight} satisfies $c_0<\frac{m-s\delta}{24}\min_{k\in S}\|f_k\|_2^2$.
\end{assumption}
Assumption~\ref{ass:min_sig} requires each signal strength $\|f_j\|_2^2$ to dominate its relative LOCO importance estimation error $|\varepsilon_j^{(b)}|/q_j^{(b-1)}$ and the aggregated error $\|\varepsilon_{S^c}^{(b)}\|_1/(m-s\delta)$ of noise features. We scale the estimation error $\varepsilon_j^{(b)}$ by the inverse sampling probability $1/q_j^{(b-1)}$ since the magnitude of $\Delta_j^{\star(b)}$ is roughly scaled as $q_j^{(b-1)}\|f_j\|_2^2$.  
Similarly, we divide $\|\varepsilon_{S^c}^{(b)}\|_1$ by $m-s\delta$ since the sum of sampling probabilities of all noise features is lower bounded by $m-s\delta$. We also impose an upper bound on $c_0$ to prevent the positive shift $c_0/M$ in Alg.~\ref{algo:IndepSampWeight} from weakening signal-noise separation. 

\begin{assumption}[Small estimation error of noise feature importance]\label{ass:noise_err}
    The constant $c_0>0$ in Alg.~\ref{algo:IndepSampWeight} satisfies
    $c_0>\frac{4m}{\delta}\Big(\max_{1\leq b\leq B}\big\|\varepsilon_{S^c}^{(b)}\big\|_{\infty} + 10M^{-C_1}\sum_{k\in \cS}\|f_k\|_2^2\Big)$.
\end{assumption}
Assumption~\ref{ass:noise_err} requires $c_0$ in Alg.~\ref{algo:IndepSampWeight} to dominate the maximum LOCO estimation error among noise features, preventing false positives; see more discussion in Remark~\ref{rmk:c_0}. 

\begin{thm}[Exact support recovery after $B$ iterations]
\label{thm:main1}
Consider the additive model~\eqref{eq:additive_mod} with independent features $\notationchange{X^{(1)}},\dots,\notationchange{X^{(M)}}$. Suppose Alg.~\ref{algo:loco_adamp} is run for $B$ iterations with the squared error loss $\err(Y,\hat Y)=(Y-\hat Y)^2$, yielding the output $\hat{S}$. The following hold:
\begin{enumerate}
    \item {\bf (No false negative)}: If Assumptions~\ref{ass:minipatch_feature_scaling}--\ref{ass:min_sig} hold, then there exists a constant
$C>0$ such that whenever $B \;>\; C\log\!\Big(\frac{\delta M}{2m}\Big)$, $\hat S \supset S$.
    \item {\bf (No false positive)}: If Assumptions~\ref{ass:minipatch_feature_scaling}, \ref{ass:signal_strength_ratio}, and \ref{ass:noise_err} hold, then $\hat S \subset S$.
\end{enumerate}
\end{thm}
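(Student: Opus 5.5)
The proof splits into three steps. First I compute the oracle importance $\Delta_j^{\star(b)}$ in closed form under the additive model with independent features. Then I analyse the capped reweighting map of Alg.~\ref{algo:IndepSampWeight} as a deterministic dynamical system on $\bq^{(b)}$. Finally I propagate the error terms $\varepsilon^{(b)}$ through that map using Assumptions~\ref{ass:min_sig}--\ref{ass:noise_err}.

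\textbf{Step 1: the oracle importance.} With oracle minipatch predictors, $\mu_{I,F}(X^{(F)})=\bbE(Y\mid X^{(F)})=\sum_{k\in S\cap F}f_k(X^{(k)})$ up to a constant. The infinite ensemble predicts $\sum_{k\in S}\pi_k f_k(X^{(k)})$, where $\pi_k$ is the probability that $k\in F$ under the Bernoulli sampling conditioned on $F\neq\emptyset$, so $\pi_k\approx q_k$. The ensemble that excludes $j$ conditions on $j\notin F$, which changes the coefficients of the other features only through the conditioning on nonemptiness. Using orthogonality of the centred $f_k$ under squared loss, I expect
\[
\Delta_j^{\star(b)}=\pi_j(2-\pi_j)\|f_j\|_2^2+r_j,\qquad j\in S,
\]
and $\Delta_j^{\star(b)}=r_j$ for $j\notin S$. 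Here the remainder satisfies $|r_j|\lesssim \bbP(F=\emptyset)\sum_k\|f_k\|_2^2\le (1-m/M\cdot c)^M\sum_k\|f_k\|_2^2\lesssim M^{-C_1}\sum_k\|f_k\|_2^2$. This uses $\sum_j q_j=m\ge C_1\log M$ from Assumption~\ref{ass:minipatch_feature_scaling}, and it explains the term $10M^{-C_1}\sum\|f_k\|_2^2$ in Assumption~\ref{ass:noise_err}. Hence signal features have importance of order $q_j^{(b-1)}\|f_j\|_2^2$, and noise features have importance essentially $0$, which is also their common minimum.

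\textbf{Step 2: no false positives.} For $j\notin S$, the shifted score satisfies $\tilde\Delta_j\le \|\varepsilon_{S^c}^{(b)}\|_\infty+|r_j|+c_0/M$. The denominator $\sum_l\omega_l(t^*)$ is at least $\sum_{l\notin S}\tilde\Delta_l$, which is of order $c_0$, and the cap ensures it is at least $m\max_l\omega_l/\delta$. I would split into two cases. If the cap is not active, then $\sum_l\omega_l\ge (M-s)c_0/M\ge c_0/2$. If the cap is active, the sum is at least $m t^*/\delta$ and $t^*$ dominates the noise scores. In either case $q_j=m\tilde\Delta_j/\sum_l\omega_l<\delta/2$, provided $c_0>\frac{4m}{\delta}(\|\varepsilon_{S^c}\|_\infty+10M^{-C_1}\sum\|f_k\|_2^2)$, which is Assumption~\ref{ass:noise_err}. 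Assumption~\ref{ass:signal_strength_ratio} is used to control $r_j$ relative to these quantities. This step needs no iteration count.

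\textbf{Step 3: no false negatives, the main obstacle.} I would show a multiplicative growth $q_j^{(b)}\ge \min\{\delta,\ \rho\,q_j^{(b-1)}\}$ with $\rho>1$ a constant (say $\rho=2$) for each $j\in S$. Starting from $q_j^{(0)}=m/M$, after $B>C\log(\delta M/2m)$ rounds this forces $q_j^{(B)}$ to sit at the cap $\delta>\delta/2$, and $q_j$ must stay at the cap once it reaches it.

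Growth needs a lower bound on the numerator and an upper bound on the denominator.
\begin{itemize}
\item Numerator: Assumption~\ref{ass:min_sig} gives $|\varepsilon_j|<q_j\|f_j\|_2^2/6$, so $\tilde\Delta_j\ge (2-\pi_j)q_j\|f_j\|_2^2-q_j\|f_j\|_2^2/6\gtrsim \tfrac{3}{2}q_j\|f_j\|_2^2$.
\item Denominator: the noise contribution is at most $\|\varepsilon_{S^c}\|_1+c_0+\text{remainder}$, which Assumption~\ref{ass:min_sig} bounds by $\tfrac{m-s\delta}{12}\min_k\|f_k\|_2^2$. The signal contribution is $\sum_{k\in S}\omega_k\lesssim\sum_{k\in S}2q_k\|f_k\|_2^2$.
\end{itemize}
The difficulty is that normalisation couples the features: a strong signal feature can inflate the denominator and stall weak ones. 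The cap resolves this. Once strong features hit $t^*$, they contribute only $t^*$ each, and $\sum_l\omega_l=mt^*/\delta$ when the cap binds. This gives $q_j=\delta\,\tilde\Delta_j/t^*$, so I must compare $\tilde\Delta_j$ with $t^*$.

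I would argue by tracking $\sum_{k\in S}q_k\le s\delta$. The noise mass is at least $m-s\delta$, and since noise scores are tiny this forces the ratio (mass)/(weight) for noise to be huge. Concretely, $m/\sum_l\omega_l\ge (m-s\delta)/\sum_{l\notin S}\omega_l$, together with the numerator bound, yields $q_j^{(b)}\ge \frac{(m-s\delta)\cdot\frac32 q_j\|f_j\|^2}{\frac{m-s\delta}{12}\|f_j\|^2+\ldots}$. I expect making this chain rigorous without circularity to be the hardest part, in particular handling signal features that are partially capped. I would try an induction on $b$ that maintains the growth claim and the bound $\sum_{S^c}q\ge m-s\delta$ simultaneously, with Assumption~\ref{ass:minipatch_feature_scaling} ($m\ge C_1(s\delta+1)$) keeping $m-s\delta\asymp m$.
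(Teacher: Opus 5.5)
Your plan follows the paper's proof. Step 1 is Lemma~\ref{lem:Delta_oracle}. Step 2 is Lemma~\ref{lem:noise_prob_bound}: bound $q_j\le m\tilde\Delta_j/\sum_l\omega_l(t^*)$, with the denominator of order $c_0$. Step 3 is the growth Lemma~\ref{lem:q_growth}, which rests on $q_j^{(b)}\ge \frac{(m-s\delta)\tilde\Delta_j^{(b)}}{\sum_{l\in S^c}\tilde\Delta_l^{(b)}}\wedge\delta$.

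The one real difference is how you reach that last inequality. Your version is already complete, so the step you flag as hardest goes away:
\begin{itemize}
\item Your identity $m/\sum_l\omega_l=\sum_{l\in S^c}q_l\big/\sum_{l\in S^c}\omega_l\ge (m-s\delta)\big/\sum_{l\in S^c}\omega_l$ needs no induction. The bound $\sum_{l\in S^c}q_l\ge m-s\delta$ holds at every iteration simply because every $q_l\le\delta$ (Proposition~\ref{prop:sampling_property}).
\item Capped signal features are harmless. If $\tilde\Delta_j>t^*$, then $t^*<\max_l\tilde\Delta_l$, so by continuity and monotonicity the constraint in \eqref{eq:threshold} binds and $q_j=\delta$; this is your own formula $q_j=\delta(\tilde\Delta_j/t^*\wedge 1)$.
\item Otherwise $\omega_j=\tilde\Delta_j$, and $\omega_l\le\tilde\Delta_l$ on $S^c$ finishes the bound.
\end{itemize}
This is shorter than the paper's Lemma~\ref{prop:q_lwrbnd}. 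That lemma works through the two threshold regimes with order statistics and proves the bound for every $k$, although only $k=s$ is used.

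What does need repair is the constants, because the theorem is stated with explicit ones.

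(i) In Step 2 you need $\sum_l\omega_l(t^*)\ge c_0$, not $c_0/2$ or $(M-s)c_0/M$. With $c_0/2$, Assumption~\ref{ass:noise_err} and $m<\delta M/4$ only give $q_j<\delta/2+2m/M<\delta$. With $(M-s)c_0/M$ you get $q_j<\frac{M}{M-s}\cdot\frac{\delta}{2}$. The full $c_0$ follows in both of your cases at once from $t^*\ge\tilde\Delta_{(M)}\ge c_0/M$, since the value $t=\tilde\Delta_{(M)}$ is feasible. Hence $\omega_l\ge c_0/M$ for all $M$ coordinates, signal ones included, and your unproved claim that $t^*$ dominates the noise scores is not needed.

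(ii) For the same reason you need the exact sign $\Delta_j^{\star(b)}\le 0$ for $j\notin S$. Conditioning on $j\notin F$ can only raise each $\mathbb{P}(k\in F)$, as computed in the proof of Lemma~\ref{lem:Delta_oracle}. Then only $-\Delta^{\star(b)}_{\min}\le 10M^{-C_1}\sum_k\|f_k\|_2^2$ enters $\tilde\Delta_j$, rather than $|r_j|+|\Delta^{\star(b)}_{\min}|$.

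(iii) The numerator is not $\gtrsim\frac32 q_j\|f_j\|_2^2$ in general, because $\pi_j$ can be close to $\delta/(1-e^{-m})$. Still, a weaker constant suffices. Use $\pi_j(2-\pi_j)\ge q_j$, $\Delta^{\star}_{\min}\le 0$, $|\varepsilon_j|<q_j\|f_j\|_2^2/6$, and the $q_j$-proportional remainder $|r_j|\lesssim e^{-m}q_j\sum_k q_k\|f_k\|_2^2$. Against your denominator bound $\big(\frac{m-s\delta}{12}+O(c_1)\big)\|f_j\|_2^2$, these make $\frac{(m-s\delta)\tilde\Delta_j}{\sum_{l\in S^c}\tilde\Delta_l}$ several times $q_j$ once $c_1\lesssim m-s\delta$.

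Finally, bound $\mathbb{P}(F=\emptyset)$ by $\prod_l(1-q_l)\le e^{-m}$, not by $(1-cm/M)^M$, since the $q_l$ are no longer uniform after the first iteration.
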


Therefore, as long as the signal strength is sufficiently large compared to the estimation error of LOCO importance and the hyperparameters $m,\,c_0$ are appropriately chosen, Alg.~\ref{algo:loco_adamp} can exactly recover the true signal feature set $S$ after only $O(\log(M/m))$ iterations. 
\begin{remark}[Choice of $c_0$]\label{rmk:c_0}
The parameter $c_0$ in Alg.~\ref{algo:IndepSampWeight} plays a critical role in Assumptions~\ref{ass:min_sig} and \ref{ass:noise_err}. As shown in Theorem~\ref{thm:main1}, to ensure no false negative, $c_0$ needs to be small enough, since we do not want the baseline shift $c_0/M$ to diminish the relative advantage of true signal features. On the other hand, to ensure no false positives, $c_0$ needs to be sufficiently large to provide a lower bound on the sum of the thresholded LOCO importance scores $\omega_j(t^*)$ in Alg.~\ref{algo:IndepSampWeight}, so that a small error in the noise feature importance score does not easily lead to a false positive. Here we note that false positives are a less serious concern since the total number of selected features by LAMPS is typically small (theoretically bounded by $2m/\delta$). Therefore, we recommend that practitioners choose a small $c_0>0$ just to ensure the stability of Alg.~\ref{algo:IndepSampWeight}. Throughout our experiments, we set $c_0=0.01$. 
\end{remark}

\subsection{Exact Feature Selection under Additive Models: Probabilistic Guarantees}\label{sec:thm2} 
Theorem~\ref{thm:main1} illustrates that when the LOCO importance estimation errors are sufficiently small (Assumptions~\ref{ass:min_sig} and \ref{ass:noise_err}), our LAMPS algorithm yields exact selection for the signal features. In this section, we examine when such assumptions hold with high probability. W.L.O.G., throughout Section~\ref{sec:thm2}, we assume that $\bbE(Y^2)=\sum_{j\in \cS}\bbE(f_j^2(\notationchange{X^{(j)}})) +\bbE(\varepsilon^2)=1$. Let $\SNR_j = \frac{\|f_j\|_2^2}{\bbE(\varepsilon^2)}$ denote the SNR associated with the $j$th feature, and let $\SR_j = \frac{\|f_j\|_2^2}{\sum_{k\in S}\|f_k\|_2^2}$ be the proportion of signal tied to feature $j$. In the special case where $\{\|f_j\|_2:j\in S\}$ are of the same order, $\SR_j \asymp \frac{1}{s}$. In addition, let $\SNR_{\min}=\min_{j\in S}\SNR_j$, $\SR_{\min} = \min_{j\in S}\SR_j$.

\begin{assumption}[Bounded predictions and responses]
\label{ass:adp_bounded}
     Suppose that the response variable and the base minipatch predictions are bounded:
    $|Y|\leq C,\,|\mu_{I,F}(X;\bZ)|\le C$, where $\mu_{I,F}(\cdot;\bZ)$ is the predictor fitted on minipatch indices $(I,F)$ given training data $\bZ$.
\end{assumption}

\begin{assumption}[Minipatch size]
\label{ass:small_n}
     There exits a constant $c>0$ such that $m\geq -\log(1-\delta-c)$; $\frac{n}{N}\leq c'(\log M+\log N)^{-1}\big(\SNR_{\min}^2\wedge \SR_{\min}^2\big)$ for a sufficiently small constant $c'>0$; 
\end{assumption}
\noindent The requirement on $m$ in Assumption~\ref{ass:small_n} is mild: when $\delta=0.8$, $m\geq 2$ suffices. The key condition requires the minipatch sample size to be much smaller than the full sample size, ensuring sufficient stability of the ensemble predictor and hence small estimation error of the LOO estimates for the LOCO importance in Algorithm~\ref{algo:minipatch_train}. The condition is more relaxed when the SNR is larger and the signal strengths are more homogeneous. {\em We note that $M$ appears in a $\log$ term and thus is allowed to grow faster than $N$.}

\begin{assumption}[Number of minipatches]
\label{ass:adp_n_minipatches}
     The number of random minipatches $K$ in each iteration satisfies 
     $K \geq C\frac{M}{m}(\log M+\log N)\big(\SNR_{\min}^{-2}\vee \SR_{\min}^{-2}\big)$ for a sufficiently large $C>0$.
\end{assumption}
\noindent We require that sufficiently many minipatches be sampled to accurately estimate the LOCO importance.
$K$ needs to grow as the feature sampling ratio $m/M$ decreases.
\begin{assumption}[Data-independent Sampling-path Approximation]\label{ass:Q_stb}
    There exists a series of probability vectors $\{\bq^{\star(b)}\in(0,\delta]^M\}_{b=1}^B$ independent of $\bZ$ and satisfying $\sum_{j=1}^M (\bq^{\star(b)})_j=m$, such that for a sufficiently small $c>0$, 
    $d(\bq^{(b)},\bq^{\star(b)})\leq c\big(\SNR_{\min}\wedge \SR_{\min}\big)$, where $d(\cdot,\cdot)$ is a sum of $\ell_1$ distance and entry-wise relative distances, with detailed definition in the Appendix (see Assumption~\ref{ass:Q_stb_full}).
\end{assumption}
\noindent Because the same data are reused across iterations, the LOCO estimation error at iteration $b$ shares randomness with the sampling probabilities updated from prior LOCO importance. Assumption~\ref{ass:Q_stb} controls only this extra dependence. It compares the realized sampling path with a nearby data-independent reference path, which may still evolve across iterations. For example, $\bq^{\star(b)} = \bbE(\bq^{(b)})$ is one possible choice when $\bq^{(b)}$ concentrates around its mean.
\begin{assumption}[Small training error of bagged predictors]\label{ass:train_err}
    For any $F\subset[M]$, define $$\varepsilon_{\rm train}(F) = \bbE_X\left[\left|\frac{1}{\binom{N}{n}}\sum_{I\subset[N]:|I|=n}\mu_{I,F}(X;\bZ)-\mu_F^*(X)\right|\mid\bZ\right],$$ 
    where $\mu_{I,F}(\cdot;\bZ)$ is defined in Assumption~\ref{ass:adp_bounded} and $\mu_F^*(X)=\bbE(Y|\notationchange{X^{(F)}})$ is the oracle predictor.
    \begin{equation}\label{eq:train_err_bnd}
    \bbE_{F\sim Q^{\star(b)}}\varepsilon_{\rm train}(F),\max_{1\leq j\leq M}\bbE_{F\sim Q^{\star(b)}_{\no j}}\varepsilon_{\rm train}(F),\max_{1\leq j\leq M}\bbE_{F\sim Q^{\star(b)}_j}\varepsilon_{\rm train}(F)\leq \frac{c(m-s\delta)}{m}\min_{j\in S}\|f_j\|_2^2
    \end{equation}
    hold for a sufficiently small constant $c>0$, where $Q^{\star(b)}$ corresponds to the feature sampling probability vector $\bq^{\star(b)}$ in Assumption~\ref{ass:Q_stb}, $Q^{\star(b)}_j$ (resp. $Q^{\star(b)}_{\no j}$) is the conditional distribution of $F\sim Q^{\star(b)}$ given that $j\in F$ (resp. $j\notin F$).
\end{assumption}
\noindent In order for the LAMPS framework to select the truly important features, Assumption~\ref{ass:train_err} requires that the bagged base learners be well-trained. Note here that we don't require the ensembled minipatch predictor in each iteration to be close to the full population model $f$, or the base model $\mu_{I,F}(\cdot)$ trained on each minipatch to be sufficiently accurate; instead, $\varepsilon_{\rm train}(F)$ is defined for the bagged predictor for a fixed feature subset $F$, and then \eqref{eq:train_err_bnd} looks at the training errors averaged over the random feature indices $F$. Furthermore, $m$ is typically small, so with high probability, $\varepsilon_{\rm train}(F)$ concerns the prediction error of a low-dimensional problem instead of the $M$-dimensional problem. 
\begin{assumption}[Minimum-importance features are not heavily sampled]
\label{ass:no_min_for_large_q}
For each iteration $1\le b \le B$, let $l_{\min}^{(b)} := \arg\min_{l\in[M]}\hat{\Delta}_l^{(b)}$, $l_{\min}^{*(b)} := \arg\min_{l\in[M]}\Delta_l^{*(b)}$ denote indices attaining the minimum empirical and oracle feature importance scores, respectively. 
There exists a constant $C>0$ such that for all $1\le b \le B$,
$q^{(b-1)}_{\,l_{\min}^{(b)}},\quad q^{(b-1)}_{\,l_{\min}^{*(b)}}  \leq \frac{Cm}{M}$.
\end{assumption}
\noindent Assumption~\ref{ass:no_min_for_large_q} requires that for each iteration, heavily sampled features do not have the lowest importance score. This ensures small estimation error in $\min_l \hat{\Delta}_l^{(b)}$, and hence the probability updating procedure (Alg.~\ref{algo:IndepSampWeight}), where all feature importance scores are shifted by $-\min_l \hat{\Delta}_l^{(b)}$, would be less sensitive to data randomness.
\begin{thm}\label{thm:main2}
    Consider the additive model~\eqref{eq:additive_mod} with independent features. Suppose that Assumptions~\ref{ass:minipatch_feature_scaling}, \ref{ass:signal_strength_ratio}, \ref{ass:adp_bounded}-\ref{ass:no_min_for_large_q} hold, and $c_0 = c(m-s\delta)\min_{k\in S}\|f_k\|_2^2$ for some constant $c>0$. Then with probability at least $1-(MN)^{-c}$, the output $\hat{S}$ given by Alg.~\ref{algo:loco_adamp} with squared loss achieves exact recovery $\hat{S}=S$ whenever $C\log\big(\frac{\delta M}{2m}\big)\leq B \leq C'M$ for some $C,\,C'>0$.
\end{thm}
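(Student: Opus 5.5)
The plan is to reduce Theorem~\ref{thm:main2} to the deterministic Theorem~\ref{thm:main1}. With $c_0 = c(m-s\delta)\min_{k\in S}\|f_k\|_2^2$, the upper bound on $c_0$ in Assumption~\ref{ass:min_sig} holds automatically once $c<1/24$. The lower bound on $c_0$ in Assumption~\ref{ass:noise_err} requires two estimates. First, the term $\frac{40m}{\delta}M^{-C_1}\sum_{k}\|f_k\|_2^2$ must be dominated. This follows from Assumption~\ref{ass:signal_strength_ratio}, which gives $M^{-C_1}\sum_k\|f_k\|_2^2\le c_1M^{-1}\min_j\|f_j\|_2^2$, together with $m<\delta M/4$ and $m-s\delta\ge m(1-1/C_1)$ from Assumption~\ref{ass:minipatch_feature_scaling}; choosing $c_1$ small then suffices. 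Second, we need $\max_b\|\varepsilon^{(b)}_{S^c}\|_\infty\lesssim \frac{\delta(m-s\delta)}{m}\min_k\|f_k\|_2^2$. It therefore suffices to show that, with probability at least $1-(MN)^{-c}$ and simultaneously for all $b\le B\le C'M$ (a union bound over $BM\le C'M^2$ events, absorbed into the polynomial probability), three bounds hold:
\[
|\varepsilon^{(b)}_j|\le c\,q^{(b-1)}_j\|f_j\|_2^2\ (j\in S),\qquad \|\varepsilon^{(b)}_{S^c}\|_1\le c(m-s\delta)\min_{k\in S}\|f_k\|_2^2,
\]
and the $\ell_\infty$ bound above, which is implied by the $\ell_1$ one.

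To obtain these, I would decompose $\hat\Delta^{(b)}_j-\Delta^{\star(b)}_j$ into four terms. (i) \emph{Finite-$K$ Monte Carlo error.} Conditional on $\bZ$ and $\bq^{(b-1)}$, this is an average over $K$ i.i.d.\ minipatches, each bounded by Assumption~\ref{ass:adp_bounded}. Feature $j$ lies in about $Kq_j$ minipatches and outside about $K(1-q_j)\ge K(1-\delta)$ of them. Bernstein's inequality then gives error $\lesssim\sqrt{q_j\log(MN)/K}$ for the contribution of $j$. By Assumption~\ref{ass:adp_n_minipatches}, with $q_j\ge c_0/M$-type lower bounds inherited from Alg.~\ref{algo:IndepSampWeight}, this is $\le c\,q_j(\SNR_{\min}\wedge\SR_{\min})$. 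Because the importance is normalized so that $\bbE Y^2=1$, this translates into $c\,q_j\|f_j\|^2_2$. Summing the analogous noise-feature errors weighted by $q_j$ gives the $\ell_1$ bound, using $\sum_j q_j=m$. (ii) \emph{Leave-one-out versus test error.} The LOO average over $i$ replaces the expectation over a fresh test point. Since $n/N$ is small (Assumption~\ref{ass:small_n}), the ensemble is stable: removing one observation changes the bagged predictor by $O(n/N)$. A bounded-differences (McDiarmid) argument over $\bZ$ then gives deviation $\lesssim \sqrt{(n/N)\log(MN)}$, which is $\le c(\SNR_{\min}\wedge\SR_{\min})$. This step is done first for the fixed reference path $\bq^{\star(b)}$. (iii) \emph{Training error.} Replacing bagged predictors by oracle predictors $\mu^*_F$ costs at most a constant times the quantities in \eqref{eq:train_err_bnd}, since squared loss is Lipschitz on bounded ranges. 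Assumption~\ref{ass:train_err} makes these at most $c\frac{m-s\delta}{m}\min_j\|f_j\|^2_2$. For signal features this must be compared with $q_j\|f_j\|_2^2$, which is where the conditioning $Q_j$, $Q_{\no j}$ enters: the $j$-specific part of the error appears multiplied by $q_j$ in $\Delta_j$. (iv) \emph{The shift by $\min_l$.} By Assumption~\ref{ass:no_min_for_large_q}, the minimizing indices have $q\le Cm/M$. Their estimation error is therefore of the noise-feature order, and since it is common to all $j$ it can be absorbed into the bounds above.

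The main obstacle is the data reuse across iterations: $\bq^{(b-1)}$ depends on $\bZ$, so concentration in step (ii) cannot be applied directly. I would first prove all bounds with $\bq^{\star(b-1)}$ in place of $\bq^{(b-1)}$, where $\bZ$-independence makes step (ii) valid. I would then transfer them to the realized path using Assumption~\ref{ass:Q_stb}. The oracle and empirical LOCO functionals are Lipschitz in $\bq$ under the distance $d$: the $\ell_1$ part controls changes of the minipatch law, and the relative part controls the reweighting of $Q_j$ and $Q_{\no j}$. Hence the perturbation is at most $c(\SNR_{\min}\wedge\SR_{\min})$ times $q_j$, which is acceptable. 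With these bounds in place, Assumptions~\ref{ass:min_sig} and \ref{ass:noise_err} hold on the high-probability event, and Theorem~\ref{thm:main1} yields $\hat S=S$ for $B\ge C\log(\delta M/2m)$.
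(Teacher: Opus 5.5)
Your reduction to Theorem~\ref{thm:main1} matches the paper's overall structure: the finite-$K$ error conditional on the realized $\bq^{(b-1)}$, concentration along the data-independent path $\bq^{\star(b-1)}$, a Lipschitz-in-$\bq$ transfer between the two, training error, and the $\min_l$ shift. However, the concentration step (ii) fails as stated. Each observation moves the bagged predictor by $O(n/N)$, so the bounded differences are $c_l\asymp q_j n/N$ for each of the $N$ coordinates. McDiarmid then gives a deviation of order $\sqrt{\sum_l c_l^2\log(MN)}\asymp q_j\,(n/\sqrt N)\sqrt{\log(MN)}$. This is a factor $\sqrt n$ worse than the required $q_j\sqrt{(n/N)\log(MN)}$. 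Assumption~\ref{ass:small_n} allows $n$ of order $cN/\log(MN)$, so your bound need not be small; making it small would require $n\lesssim\sqrt N$. The LOO average is U-statistic-like, and worst-case bounded differences are too crude for it. The paper instead writes the centered sum as an average of $N\binom{N-1}{n}^2$ terms $\alpha_{i,I_1,I_2}$. Each term is a function of $(Z_i,\bZ_{I_1},\bZ_{I_2})$ bounded by $Cq_j$, and each $Z_l$ enters at most $(2n+1)\binom{N-1}{n}^2$ of them. A read-$k$ Hoeffding bound (Lemma~\ref{lem:read_k_concentration}) then gives the $\sqrt{n/N}$ rate. Your stated deviation $\sqrt{(n/N)\log(MN)}$ also drops the factor $q_j$. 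That factor comes from $\sum_F|Q(F)-Q_{\no j}(F)|\lesssim q_j$, and it is what makes the noise errors in $\|\varepsilon^{(b)}_{S^c}\|_1$ sum to order $m$ rather than $M$. Without it, Assumption~\ref{ass:min_sig} fails when $M\gg m$.

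Two further steps need repair. First, the $\ell_\infty$ bound is not implied by the $\ell_1$ bound. With your choice of $c_0$, Assumption~\ref{ass:noise_err} needs $\|\varepsilon^{(b)}_{S^c}\|_\infty\lesssim \delta\frac{m-s\delta}{m}\min_k\|f_k\|_2^2$. The $\ell_1$ bound only gives $c(m-s\delta)\min_k\|f_k\|_2^2$, which is a factor $m/\delta$ too large, and $m\ge C_1\log M$ is unbounded. You need the per-coordinate error bound for noise features, using $q_j\le\delta$.

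Second, the relative bound $|\varepsilon^{(b)}_j|\le c\,q^{(b-1)}_j\|f_j\|_2^2$ for $j\in S$ requires $q^{(b-1)}_j\gtrsim m/M$. This is needed both for the finite-$K$ term and for the common shift $|\hat\Delta^{(b)}_{\min}-\Delta^{\star(b)}_{\min}|$, which is only $m/M$ times a small quantity. Alg.~\ref{algo:IndepSampWeight} does not supply this lower bound. It guarantees $\tilde\Delta_j\ge c_0/M$, hence only $q_j\ge m c_0/(M\sum_l\omega_l)$, which can be far below $m/M$ once some signal weights are large. The paper obtains $q^{(b)}_j\ge m/M$ for $j\in S$ by induction on $b$. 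The signal condition at iterations up to $b$ yields, through Lemma~\ref{lem:q_growth}, $q^{(b)}_j\ge (1.5^b m/M)\wedge\delta\ge m/M$, and this in turn gives the relative error bound at iteration $b+1$. For noise coordinates no lower bound on $q_j$ is needed. The $\ell_1$ sum of finite-$K$ errors is handled by Cauchy--Schwarz, $\sum_j\sqrt{q_j}\le\sqrt{Mm}$, rather than by forcing each error to be $\lesssim q_j$.
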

\noindent Therefore, LAMPS can achieve exact feature selection provided that the base models are bounded and estimated sufficiently well, the hyperparameters are chosen appropriately, and the adaptive sampling distribution is not too sensitive to data randomness. 

\subsection{Beyond Additive Models: Selecting Features with Interactive Importance}\label{sec:interaction_theory}

In this section, we consider a special case of \eqref{eq:general_interaction_model}: an additive model with one interaction term involving two features, where one element in $\cS$ is $\{u,v\}$ and all other elements are singletons. To better illustrate how LAMPS works for detecting interactively important features, we assume that $u$, $v$ only enter the model through their pairwise interaction; that is, the singleton sets $\{u\},\,\{v\}\notin \cS$. Formally, we write
\begin{equation}
\label{eq:interaction_mod}
f(X)
=
f_0+\sum_{l\in S\setminus\{u,v\}} f_l(\notationchange{X^{(l)}})+f_{uv}(\notationchange{X^{(u)}},\notationchange{X^{(v)}}).
\end{equation}
In the following, we will show Alg.~\ref{algo:loco_adamp} can achieve exact recovery for the true signal feature set $S$ under similar assumptions to the additive model, only with some adaptation to the original signal strength condition (Assumption~\ref{ass:min_sig}).
\begin{assumption}[Signal strength requirement]
\label{ass:interaction_pair_min_sig} 
    The following holds for $1\leq b\leq B$: for any $j\in S\setminus\{u,v\}$, \eqref{eq:SNR_additive} holds; while for $j\in \{u,v\}$,
    \begin{equation}\label{eq:SNR_interaction}
    \|f_{uv}\|_2^2
    >
    \frac{72|\varepsilon_j^{(b)}|}
    {41 q_u^{(b-1)}q_v^{(b-1)}}
    \vee
    \frac{12\|\varepsilon^{(b)}_{S^c}\|_1}
    {(m-s\delta)(q_u^{(b-1)}\wedge q_v^{(b-1)})}.
\end{equation}
In addition, the small constant $c_0>0$ in Alg.~\ref{algo:IndepSampWeight} satisfies $c_0<\frac{m(m-s\delta)}{12M}\|f_{uv}\|_2^2 \wedge \frac{m-s\delta}{24}\min_{k\in \cS}\|f_k\|_2^2$.
\end{assumption}

\begin{thm}[Exact support recovery after $B$ iterations]
\label{thm:interaction}
Consider the interaction model~\eqref{eq:interaction_mod}. We recover the same guarantee as in Theorem~\ref{thm:main1} provided that Assumption~\ref{ass:min_sig} is replaced by Assumption~\ref{ass:interaction_pair_min_sig}, and the factor $M^{C_1-1}$ in Assumption~\ref{ass:signal_strength_ratio} is replaced by $M^{C_1-2}$ with $C_1>2$. 
\end{thm}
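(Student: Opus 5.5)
We follow the proof of Theorem~\ref{thm:main1} and change only the step that computes the oracle importance $\Delta_j^{\star(b)}$ of the interacting pair. That proof has two parts. The first is a deterministic formula for $\Delta_j^{\star(b)}$ under independent features and squared loss. The second is a dynamical analysis of Alg.~\ref{algo:IndepSampWeight}. In that analysis, signal features have $\tilde\Delta_j$ large relative to $\sum_l\omega_l(t^*)$, so their probabilities grow geometrically until they hit the cap $\delta$. Noise features have $\tilde\Delta_j\approx c_0/M$ plus errors bounded via Assumption~\ref{ass:noise_err}, so they stay below $\delta/2$.

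\textbf{Oracle importance.} With oracle minipatch predictors $\mu_F^*(X)=f_0+\sum_{w\in\cS,\,w\subseteq F}f_w(X^{(w)})$, which holds by the ANOVA orthogonality and independence, the ensemble predictor is
\[
\sum_w \bbP(w\subseteq F)\,f_w .
\]
Leaving out $j$ zeroes the weight of every $w\ni j$ in the averaged predictor without $j$. Orthogonality then gives an exact formula for the LOCO difference, as a sum over $w\ni j$ of $\|f_w\|_2^2$ times a quadratic in the inclusion probabilities. Up to the conditioning on $\sum_l A_{k,l}>0$, which contributes the $M^{-C_1}$ corrections already present in Assumption~\ref{ass:noise_err}, the result is:
\begin{itemize}
\item for a singleton $j$: $\Delta_j^\star\approx q_j(2-q_j)\|f_j\|_2^2$, roughly $q_j\|f_j\|_2^2$, as in the additive proof;
\item for $j\in\{u,v\}$: $\Delta_j^\star\approx q_uq_v(2-q_uq_v)\|f_{uv}\|_2^2$;
\item for $j\in S^c$: $\Delta_j^\star=0$.
\end{itemize}
So the additive argument carries over once $q_j\|f_j\|_2^2$ is replaced by $q_uq_v\|f_{uv}\|_2^2$ for $j\in\{u,v\}$. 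The factor $72/41$ in \eqref{eq:SNR_interaction} should come from bounding $2-q_uq_v$ below with $q\le\delta$ and absorbing the relative error $|\varepsilon_j|/(q_uq_v)$.

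\textbf{Growth of $q_u,q_v$.} This is the main obstacle. The signal of $u$ scales with $q_v$, so at initialization both are only of order $(m/M)^2$ rather than $m/M$. I would track the ratio
\[
q_j^{(b)}/q_j^{(b-1)}=\frac{m\,\omega_j}{q_j^{(b-1)}\sum_l\omega_l}.
\]
The denominator $\sum_l\omega_l(t^*)$ is bounded above exactly as in the additive case, by a term from the signal features, a term $\|\varepsilon_{S^c}\|_1$ from the noise features, and $c_0$. The numerator for $u$ is at least about $q_uq_v\|f_{uv}\|_2^2$. The needed growth factor, bounded below by a constant exceeding $1$, then follows if $q_v\|f_{uv}\|_2^2$ dominates $\sum_l\omega_l/m$. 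The second term in \eqref{eq:SNR_interaction}, with its $(q_u\wedge q_v)$ denominator, together with the condition $c_0<\frac{m(m-s\delta)}{12M}\|f_{uv}\|_2^2$, gives exactly this, because $q_v\ge m/M$ initially and $q_v$ is nondecreasing.

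\textbf{Remaining steps.} To get that monotonicity, I would run the induction jointly on the pair: the hypothesis is $q_u,q_v\ge m/M$ and both non-decreasing until capped, and the lower bound on $q_v$ is fed into the step for $u$ and vice versa. Once a probability reaches a constant fraction of $\delta$ it stays capped, as in Theorem~\ref{thm:main1}. The number of iterations is still $O(\log(\delta M/2m))$, since each step multiplies by a constant factor. The no-false-positive part is unchanged, except in the bound on the total signal contribution to $\sum_l\omega_l$ relative to the noise weights. There, the interaction signal at initialization is of order $(m/M)^2$ instead of $m/M$, and this costs one extra factor of $M$. That is why the factor $M^{C_1-1}$ becomes $M^{C_1-2}$ in Assumption~\ref{ass:signal_strength_ratio}, requiring $C_1>2$.
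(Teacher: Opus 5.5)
Your plan follows the paper's architecture. The capping lemma (Lemma~\ref{prop:q_lwrbnd}) reduces growth to the ratio $\tilde\Delta_j^{(b)}(m-s\delta)/\sum_{l\in S^c}\tilde\Delta_l^{(b)}$. The pair's oracle importance satisfies $\Delta_u^{\star}\gtrsim q_uq_v\|f_{uv}\|_2^2$, and a joint induction keeps $q_u^{(b)}\wedge q_v^{(b)}\ge m/M$.

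\textbf{Gap: the growth step for $u,v$.} It concerns the ``term from the signal features'' in that denominator, and hence where the strengthened Assumption~\ref{ass:signal_strength_ratio} is needed. That term is the shift $\sum_{l\in S^c}(\Delta_l^{\star(b)}-\Delta_{\min}^{\star(b)})\le (M-s)\,|\Delta_{\min}^{\star(b)}|$. Because of the conditioning on $F\neq\emptyset$, noise importances are negative, of size up to $10M^{-C_1}q_k\big(\sum_{l}q_l\|f_l\|_2^2+q_uq_v\|f_{uv}\|_2^2\big)$. Nothing controls which $q_k$ attains the minimum, so the shift is of order $10M^{-(C_1-1)}\big(\sum_{l}\|f_l\|_2^2+\|f_{uv}\|_2^2\big)$. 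The conditioning corrections therefore do not only feed Assumption~\ref{ass:noise_err}: summed over the $\sim M$ noise features, they enter the growth step.
\begin{itemize}
\item For an additive signal $j$, this term is compared with $(m-s\delta)\|f_j\|_2^2$, and the factor $M^{C_1-1}$ suffices.
\item For $j\in\{u,v\}$, the numerator is only about $q_uq_v\|f_{uv}\|_2^2$. A constant growth factor then requires the whole denominator to be below $\tfrac13(q_u\wedge q_v)(m-s\delta)\|f_{uv}\|_2^2$, and $q_u\wedge q_v$ can be as small as $m/M$.
\item So you need $10M^{-(C_1-1)}\sum_k\|f_k\|_2^2\le\tfrac16\tfrac{m}{M}(m-s\delta)\|f_{uv}\|_2^2$. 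This is exactly the $M^{C_1-2}$ version of Assumption~\ref{ass:signal_strength_ratio}, with $c_1\lesssim m(m-s\delta)$.
\end{itemize}
Your growth paragraph controls only the $\|\varepsilon^{(b)}_{S^c}\|_1$ and $c_0$ terms, via Assumption~\ref{ass:interaction_pair_min_sig}, and bounds the rest ``exactly as in the additive case.'' Under $M^{C_1-1}$ that remaining term is of size $10c_1\min_k\|f_k\|_2^2$. For large $M$ it is not dominated by $\tfrac{m}{M}(m-s\delta)\|f_{uv}\|_2^2$, so the induction step fails by a factor of $M$.

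\textbf{The false-positive step needs no change.} It uses only $\sum_l\omega_l(t^\star)\ge c_0$ and, for $j\in S^c$, the bound $\tilde\Delta_j^{(b)}\le 10M^{-C_1}\big(\sum_l\|f_l\|_2^2+\|f_{uv}\|_2^2\big)+c_0/M+|\varepsilon_j^{(b)}|$. This is closed by Assumptions~\ref{ass:minipatch_feature_scaling} and~\ref{ass:noise_err}, with $f_{uv}$ counted in $\cS$. Nowhere in it is the $(m/M)^2$-scale pair signal compared with noise weights. Your heuristic (one extra factor of $M$ because $q_v$ starts at $m/M$) is correct, but it belongs in the growth induction, not in the false-positive part.

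\textbf{Minor point on constants.} The factor $72/41$ does not come from bounding $2-q_uq_v$; the paper simply uses $2p_{uv}-p_{uv}^2\ge p_{uv}\ge q_uq_v$. The error $|\varepsilon_j|$ is allowed to consume $\tfrac{41}{72}q_uq_v\|f_{uv}\|_2^2$, and the nonpositive main-effect cross term of $u$ (at most $\tfrac{1}{24}(q_u\wedge q_v)q_j\|f_{uv}\|_2^2$) is absorbed. This leaves a numerator of at least $\tfrac{7}{18}(q_u\wedge q_v)q_j\|f_{uv}\|_2^2$ against the denominator bound $\tfrac13(q_u\wedge q_v)(m-s\delta)\|f_{uv}\|_2^2$, giving the growth factor $7/6$.
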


\noindent Therefore, Alg.~\ref{algo:loco_adamp} can also recover features $u,\,v$ even if they only contribute through their interaction, as long as the interactive importance $\|f_{uv}\|_2^2$ is sufficiently strong: 
note the extra factor $1/(q_u^{(b-1)}\wedge q_v^{(b-1)})$ on the R.H.S. of \eqref{eq:SNR_interaction}. This is because the contribution of $f_{uv}$ would only show up when both features $u$ and $v$ are sampled in the same minipatch, thus harder to detect than $j\in S\backslash \{u,v\}$.

\subsection{Illustrative Theory for Correlated Features}\label{sec:correlated_theory}
Feature correlation arises frequently in practice and remains a long-standing challenge for feature selection and feature importance inference. In particular, the LOCO metric can suffer from correlation-induced distortion and yield low importance for a feature $j$ if it is highly correlated to another, as feature $j$'s contribution can be mostly replaced by its correlated proxy~\citep{Verdinelli2021DecorrelatedVI}. On the other hand, as has been shown in \cite{YaoAllen2020Minipatch,GanZhengAllen2022MinipatchCI}, minipatch-based approaches are inherently suited for correlated settings, as feature subsampling breaks the feature correlation structure. Our LAMPS algorithm also enjoys similar advantages in this context: the sampling probability of each feature is capped at $\delta<1$, thus for any signal feature, its correlated proxy cannot fully replace its contribution. To make this intuition more concrete, we show the following illustrative theory under a simple linear model setup. 

\begin{prop}[Informal]
\label{prop:correlated_informal}
Suppose $Y=\beta \notationchange{X^{(1)}}+\varepsilon$ with zero-mean $\varepsilon$ independent of all features; $(\notationchange{X^{(1)}},\notationchange{X^{(2)}})$ is independent of other features, satisfying $\bbE(\notationchange{X^{(1)}}\mid \notationchange{X^{(2)}})=\rho \notationchange{X^{(2)}}$. An oracle version of LAMPS with squared error loss and $B$ iterations satisfies:
\begin{enumerate}
    \item $\forall\rho\in [-1,1]$, with appropriately chosen $m$ and $c_0$, $q_1^{(b)}$ increases geometrically and $S\subset \hat{S}$ if $B=\Theta(\log (M/m))$.
    \item $\hat S\subseteq S$ if $\rho$ is bounded by an increasing function of $B$.
\end{enumerate}
\end{prop}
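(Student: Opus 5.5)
We will work throughout with the oracle version of LAMPS, in which $\hat\Delta_j^{(b)}=\Delta_j^{\star(b)}$. The plan is to compute these oracle LOCO scores in closed form for the linear model $Y=\beta X^{(1)}+\varepsilon$. Under the oracle predictor, a minipatch with feature set $F$ predicts $\mu^*_F(X)=\bbE(Y\mid X^{(F)})$. This depends only on whether $1\in F$ (prediction $\beta X^{(1)}$), or $1\notin F$ and $2\in F$ (prediction $\beta\rho X^{(2)}$), or neither (prediction $0$). Normalize $\Var(X^{(1)})=\Var(X^{(2)})=1$ for the informal statement. The full ensemble is then
$$\mu=\beta\big[q_1X^{(1)}+(1-q_1)q_2\rho X^{(2)}\big],$$
up to the negligible conditioning on $F\neq\emptyset$. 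The ensemble excluding feature $1$ is $\mu_{\no 1}=\beta q_2\rho X^{(2)}$. The ensemble excluding feature $2$ is $\mu_{\no 2}=\beta q_1X^{(1)}$. Excluding any noise feature $j\ge3$ leaves the ensemble unchanged, because its inclusion does not change $\mu^*_F$ and Bernoulli sampling is independent across coordinates. Hence $\Delta_j^\star=0$ for $j\ge3$.

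Next we expand the squared losses, using $\bbE(X^{(1)}X^{(2)})=\rho$. Write $a=q_1$ and $b=(1-q_1)q_2\rho$. This gives
$$\Delta_1^\star=\beta^2\big[(1-q_2\rho^2)^{\!\cdot}\ldots\big],$$
which after simplification is proportional to $\beta^2 q_1\big[(2-q_1)-2q_2\rho^2+q_1q_2\rho^2\big]+O(q_2^2)$. In particular $\Delta_1^\star\ge c\,\beta^2q_1(1-\rho^2 q_2)\ge c\,\beta^2 q_1(1-\delta)$, since $q_2\le\delta<1$. This is the key use of the cap $\delta$. Without the cap, $q_2\to1$ and $|\rho|\to1$ would drive $\Delta_1^\star$ to $0$. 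A similar expansion gives $\Delta_2^\star=\beta^2(1-q_1)q_2\rho^2\big[2(1-q_1)-(1-q_1)q_2\big]$ up to sign-correct terms, so $\Delta_2^\star\lesssim\beta^2q_2\rho^2(1-q_1)^2$.

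\textbf{Part 1.} Since the minimum score is $0$ (attained by noise features), the shifted scores are $\tilde\Delta_1=\Delta_1^\star+c_0/M$, $\tilde\Delta_2=\Delta_2^\star+c_0/M$, and $c_0/M$ for noise. Before the cap $t^*$ binds, the update gives
$$q_1^{(b)}=\frac{m\,\tilde\Delta_1}{\tilde\Delta_1+\tilde\Delta_2+(M-2)c_0/M}.$$
We choose $c_0\asymp\beta^2 m(1-\delta)$ with a small constant. Then $\tilde\Delta_2\lesssim\beta^2\delta$ and the noise total is $\le c_0$, so the denominator is $O(\beta^2)$. Consequently $q_1^{(b)}\ge \kappa\, q_1^{(b-1)}$ with $\kappa\asymp m(1-\delta)>1$ whenever $m$ exceeds a constant depending on $\delta$. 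This is the geometric growth. Starting from $m/M$, after $B=\Theta(\log(M/m))$ steps $q_1$ reaches the cap regime. Once $t^*$ binds, the same bound shows $\omega_1=t^*$ is maximal, so $q_1=\delta>\delta/2$, and hence $1\in\hat S$. This argument mirrors the no-false-negative part of Theorem~\ref{thm:main1} with zero estimation error, and holds for every $\rho\in[-1,1]$.

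\textbf{Part 2.} We need $q_2^{(B)}\le\delta/2$. Noise features have $\omega_j=c_0/M$, and their total mass keeps $q_j\le m/M\cdot O(1)$, so they are never selected. For feature $2$ we bound
$$q_2^{(b)}\le \frac{m\,\tilde\Delta_2}{\sum_l\omega_l(t^*)}.$$
Once $q_1$ has saturated, the denominator is at least a constant multiple of $\beta^2/m$, since the cap forces $\sum_l\omega_l\ge m\,t^*/\delta$ with $t^*\asymp\Delta_1^\star$. This yields a recursion of the form $q_2^{(b)}\le C\rho^2(1-q_1^{(b-1)})^2 q_2^{(b-1)}+Cc_0/M$. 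In the early phase $q_1$ is small, so $q_2$ may grow like $q_1$ when $|\rho|\approx1$. During the growth phase, however, $\rho^2$ multiplies each step. Iterating, $q_2^{(B)}\lesssim (C m\rho^2)^{B}\,m/M$ plus the saturated-phase contribution $\lesssim\rho^2(1-\delta)^2$. Requiring this to be $<\delta/2$ gives a bound $|\rho|\le g(B)$; reading off $g$ from the recursion gives the increasing-in-$B$ threshold stated.

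The main obstacle is the coupled two-dimensional recursion for $(q_1,q_2)$ through the capped normalization $t^*$. We will handle it by splitting into an uncapped phase, analyzed via monotone comparison with scalar recursions, and a capped phase in which $q_1=\delta$ is fixed and $q_2$ contracts by the factor $\rho^2(1-\delta)^2\cdot O(1)$.
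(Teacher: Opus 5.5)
\paragraph{Part 1.} Your argument is essentially the paper's: closed-form oracle LOCO scores, the cap $q_2\le\delta$ keeping $1-\rho^2\bigl(2\pi_2^{(-1)}-(\pi_2^{(-1)})^2\bigr)\ge(1-\pi_2^{(-1)})^2$ bounded away from zero uniformly in $\rho$, and a bounded denominator giving geometric growth. Two idealizations are repairable but should be flagged. First, by dropping the conditioning on $F\neq\emptyset$ you get noise scores exactly $0$ and $\Delta_2^\star\ge 0$. For the actual sampler these are slightly negative and $\Delta_2^\star$ can be the minimum; the paper handles both cases via $m\ge\log M+C_\delta$ and Lemma~\ref{lem:Delta_oracle_bound}. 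Second, the capped case needs $\tilde\Delta_2\le\tilde\Delta_1$, which the paper gets by induction from $q_2^{(b)}\le q_1^{(b)}$. Also, with $c_0\asymp\beta^2 m$ the growth factor is an $O(1)$ constant such as $(1+\delta)/(2\delta)$, not $\asymp m$.

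\paragraph{Part 2: the gap.} The statement's content is that the admissible $\rho$ \emph{relaxes} as $B$ grows, and your argument does not deliver this. Whenever $Cm\rho^2>1$, your bound $q_2^{(B)}\lesssim(Cm\rho^2)^B\,m/M+\rho^2(1-\delta)^2<\delta/2$ forces $\rho^2<\frac{1}{Cm}\bigl(\frac{\delta M}{2m}\bigr)^{1/B}$ up to constants. That threshold \emph{shrinks} with $B$, so the claim that reading off $g$ gives an increasing threshold is contradicted by your own bound. To get a relaxation from a recursion, you would have to separate the roughly $\log(M/m)$ pre-saturation steps (factor about $C\rho^2$, possibly $>1$) from the $B-B_0$ post-saturation steps (factor $C\rho^2(1-\delta)^2$), and show the latter undo the earlier growth. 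Your sketch merges the two phases into a single factor.

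The denominator justification is also wrong. When only feature~1 is truncated, $t^\ast(m/\delta-1)=\sum_{l\ge2}\omega_l$, so $t^\ast$ is set by $\tilde\Delta_2$ and the $c_0$ floor, not by $\Delta_1^\star$. The bound that actually yields your recursion with an $O(1)$ constant, when $c_0\asymp\beta^2m$, is $\sum_l\omega_l\ge c_0$. It holds at every iteration because $t^\ast\ge c_0/M$ forces each $\omega_l\ge c_0/M$. A floor of order $\beta^2/m$, as you wrote it, would make $C$ of order $m^2$.

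\paragraph{The missing idea.} No dynamics of $q_2$ are needed. At iteration $B$ alone, eliminate $\omega_1$ using $q_1^{(B)}=m\omega_1/\sum_l\omega_l$. This gives
$q_2^{(B)}=(m-q_1^{(B)})\,\omega_2/\sum_{l\ge2}\omega_l\le (m-q_1^{(B)})\big/\bigl(1+\tfrac{(M-2)c_0}{M\tilde\Delta_2^{(B)}}\bigr)$.
Combined with the crude bound $\Delta_2^\star\le\beta^2\rho^2$, the requirement $q_2^{(B)}\le\delta/2$ becomes a condition of the form \eqref{eq:rho_cond}, roughly $\rho^2\lesssim\frac{c_0}{\beta^2}\cdot\frac{\delta/2}{m-q_1^{(B)}-\delta/2}$. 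Its right-hand side increases in $q_1^{(B)}$, which is nondecreasing in $B$ by Part~1; that is the source of the $B$-dependence. Noise features $j\ge3$ then follow by symmetry from $q_j^{(B)}=(m-q_1^{(B)}-q_2^{(B)})/(M-2)$.
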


A formal version of Proposition~\ref{prop:correlated_informal} is included in the Appendix. Although Proposition~\ref{prop:correlated_informal} considers a very simple setup, it enables a clean illustration and shows interesting insights for LAMPS. We would like to highlight two points: (i) a signal feature can be selected even if it is perfectly correlated with other features, in sharp contrast to the behavior of LOCO importance without feature subsampling; (ii) a noise feature correlated with a signal feature can act as a weak signal feature; adaptive sampling could help eliminate the correlated proxy as our condition on correlation relaxes as the iteration increases. A full-fledged theory that characterizes feature selection performance of LAMPS given general feature correlation is an interesting direction that we leave for future work.
 
We also conduct linear and nonlinear simulation studies with one correlated signal-noise feature pair to validate these insights. Figure~\ref{fig:corr_theory} shows the feature sampling probability trajectory of the feature pair under a nonlinear model, where the signal feature probability quickly increases to the cap $\delta=0.8$ regardless of the correlation, while the noise feature probability first increases, then drops back to below the threshold when $\rho\leq 0.7$. The detailed simulation setup and results across varying dimensions and for the corresponding linear-model setting are reported in Section~\ref{sec:intro_fig_sim_details}-\ref{sec:supp_correlated_theory_validation} of the Appendix.

\begin{figure}[t]
\centering
\includegraphics[width=.82\linewidth]{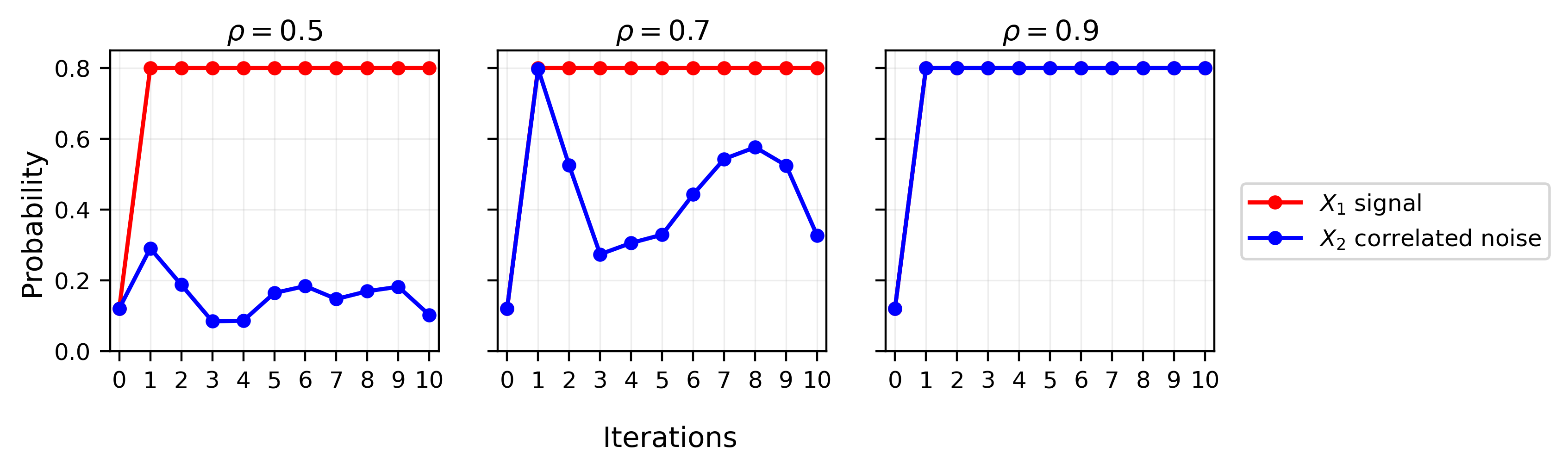}
\vspace{-1em}
\caption{
\small Sampling probabilities in the one-signal correlated-noise setting with $N=500$ and $M=100$.
The base learner is regression tree. Additional simulation details are provided in Section~\ref{sec:intro_fig_sim_details} of the Appendix. 
}
\label{fig:corr_theory}
\end{figure}

\subsection{Proof Sketch}

To better understand why LAMPS works, we provide a proof sketch for our main theory (no false negative results in Theorems~\ref{thm:main1} and \ref{thm:main2}) and highlight the main technical novelty.

\paragraph{Dynamics of the probability updates.} When proving Theorem~\ref{thm:main1}, one key building block is the geometric increase of $q_j^{(b)}$ for any signal feature $j$: 
\begin{equation}\label{eq:proof_sketch_lwrbnd}
    q_j^{(b)}\geq Cq_j^{(b-1)}\wedge\delta, 
\end{equation}
where $C>1$ is a universal constant, immediately implying $\hat{S}\owns S$ given logarithmic iterations $B=\Theta(\log(M/m))$. We prove \eqref{eq:proof_sketch_lwrbnd} by exploiting the fact that $q_j^{(b)}$ relies on the contrast between signal feature importance $\tilde{\Delta}_j^{(b)}$ and noise feature importance $\tilde{\Delta}_{S^c}^{(b)}$:
\begin{equation}\label{eq:proof_sketch_lwrbnd2}
    q_j^{(b)}\geq \frac{\til{\Delta}_j^{(b)}(m-s\delta)}{\sum_{l\in S^c}\til{\Delta}_l^{(b)}}\wedge\delta,
\end{equation}
where $\til{\Delta}_j^{(b)}$ can be approximately viewed as an estimate for $q_j^{(b-1)}\|f_j\|_2^2$. Hence our SNR condition $(m-s\delta)\|f_j\|_2^2>24\|\varepsilon_{S^c}^{(b)}\|_1$ in Assumption~\ref{ass:min_sig} allows us to establish \eqref{eq:proof_sketch_lwrbnd}.

\paragraph{Verifying the SNR condition.} In Theorem~\ref{thm:main2}, we validate that our SNR requirement can be satisfied (e.g., $\|\varepsilon_{S^c}^{(b)}\|_1$ can be well controlled by $(m-s\delta)\|f_j\|_2^2$) with high probability even when $M$ is much larger than $N$. In fact, this is highly non-trivial and does not follow directly from prior work on minipatch LOCO inference: \citet{GanZhengAllen2022MinipatchCI} conduct inference for one feature at a time and the proof only bounds $\varepsilon_j^{(0)}$ for one given $j$ under the initial uniform feature sampling. In contrast, to accommodate the high dimensionality, we bound the relative estimation error $\varepsilon_j^{(b)}/q_j^{(b-1)}$ uniformly over $1\leq j\leq M$; hence $\|\varepsilon_{S^c}\|_1$ can be well controlled on the scale of $\sum_{j=1}^M q_j^{(b-1)}=m$, the feature subsampling size, instead of $M$. To achieve this, we develop a new and fine-grained analysis built on top of (i) a careful decomposition of the LOCO error $\varepsilon_j^{(b)}$ to show the role of $q_j^{(b-1)}$ and (ii) an exponential tail bound for $\varepsilon_j^{(b)}$ by rewriting it as an average of $N^3/n^2$ weakly dependent r.v.s. Furthermore, we prove a smoothness property of the LOCO importance $(\hat\Delta_j^{(b)})_{j=1}^M$ as a function of the sampling probability $\bq^{(b-1)}$ (see Lemma~\ref{lem:DeltaErrDiffQ}), allowing us to safely replace $q_j^{(b-1)}$ by a nearby data-independent probability $q_j^{\star(b-1)}$ in our analysis of $\varepsilon_j^{(b)}/q_j^{(b-1)}$, thereby decoupling the dependence between $q_j^{(b-1)}$ and $\varepsilon_j^{(b)}$. We prove Lemma~\ref{lem:DeltaErrDiffQ} by constructing a coupling that keeps two random feature subsets close when their marginal distributions are close (Lemma~\ref{lem:F1_F2_diff}).

\section{Simulations}\label{sec:simulation}\label{sec:simulation_settings}
We evaluate the feature selection performance of LAMPS in linear, nonlinear additive, and nonlinear interaction models under varying signal strengths and correlation structures. Software for LAMPS implementation and code for reproducing our empirical results can be found at \hyperlink{LAMPS}{https://github.com/liu-xuhui/lamps/tree/main}.
Throughout, the signal feature set is $S=\{1,\ldots,10\}$. The feature–response relationship is specified separately for each experiment below. For the joint feature distribution, we consider two settings: in correlated setting~1, we generate feature vectors $X_i\in \bbR^M$ independently from a zero-mean Gaussian distribution with Toeplitz covariance
$\Sigma_{ab}=\rho^{|a-b|},\;\rho\in\{0,0.5,0.9\}$; in correlated setting~2, we apply a random permutation $\pi:[M]\rightarrow [M]$ to the feature indices and use $\Sigma_\pi=\Sigma[\pi,\pi]$ instead. Thus, when $\rho>0$, signal features form a correlated block in setting~1, whereas correlations span signal and noise features in setting~2.

Throughout Sections~\ref{sec:simulation} and \ref{sec:casestudy}, LAMPS is run with squared-error loss, minipatch subsampling ratios $n/N=0.4$ and $m/M=0.12$, $\delta=0.8$, and an ensemble size $K$ satisfying the coverage conditions in Section~\ref{sec:tuning}. As described in Section~\ref{sec:tuning}, we use data-driven selection for the number of iterations $B$ (see Algorithm~\ref{algo:loco_adamp_stopping}) while keeping it capped at 5. We also include a simulation study on data-driven tuning of minipatch sizes in the Appendix.

\paragraph{Linear Model Experiments.} We set $(N,M)=(200,500)$. For $j\in S^c$, we set $\beta_j=0$, while for $j\in S$, $\beta_j=\xi_jU_j$, where $U_j\stackrel{\mathrm{i.i.d.}}{\sim}\mathrm{Unif}(2,3)$, and $\xi_j$ independently follows the Rademacher distribution.

The response follows $Y=X\beta+\varepsilon \;\;\varepsilon\sim\cN(0,\sigma^2)$, and the $\mathrm{SNR}$ is enforced via $\sigma^2=\Var(X\beta)/\mathrm{SNR}$.
We run 10 independent replicates for every combination of $\rho$ and SNR, and use ordinary least squares without an intercept as the LAMPS base learner. 
We compare LAMPS with widely used feature-selection procedures for linear models including Lasso \citep{Tibshirani1996Lasso} with five-fold cross-validation,
Lasso with eBIC \citep{chen2008extended}, Stability Selection \citep{MeinshausenBuhlmann2010}, CPSS \citep{ShahSamworth2013}, and
Elastic Net \citep{ZouHastie2005ElasticNet} with five-fold cross-validation. 

We report F1 scores across SNRs and correlation levels in Figure~\ref{fig:linear-nonoracle}.
In correlated setting~1, LAMPS maintains the highest F1 scores under moderate and strong correlation and outperforms all baselines.
In correlated setting~2, introducing permutation correlation reduces performance for all methods, but LAMPS remains among the strongest methods. These results suggest that LAMPS is robust to both correlated signal groups and signal--noise correlation.
To isolate performance gain/loss due to differing sparsity-levels of different methods, we additionally report oracle
sparsity-tuning results given by Lasso and Elastic Net as well as precision and recall of each method in the Appendix, where similar comparison results hold.

\begin{figure}[t]
\centering
\includegraphics[width=.82\linewidth]{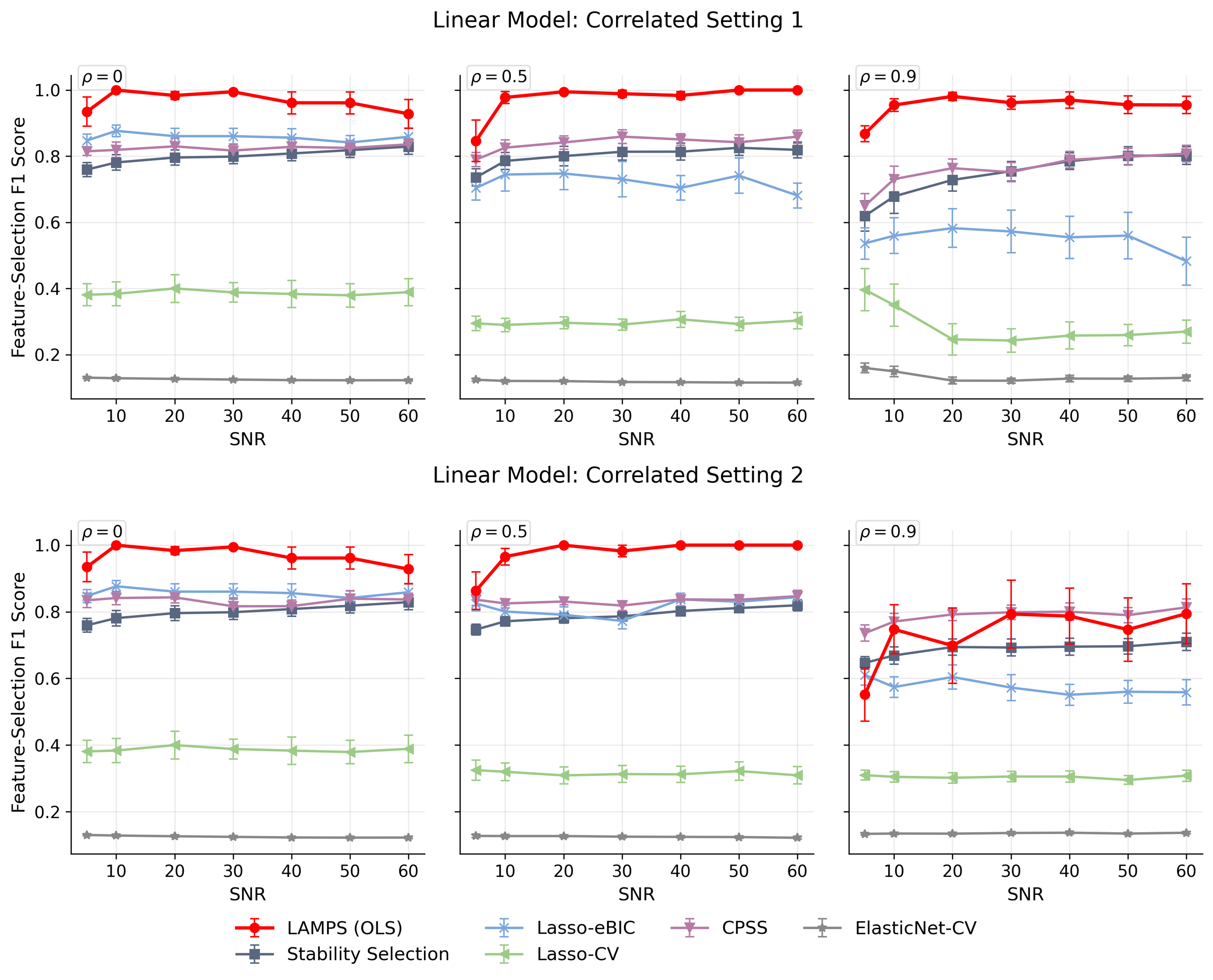}
\caption{
\small Feature-selection performance (F1 score) in linear models across signal-to-noise ratios (SNR).
Top row: Linear Setting 1 with non-permuted Toeplitz covariance; bottom row: Linear Setting 2 with permuted covariance.
Error bars represent $\pm$ one standard error over 10 Monte Carlo replicates.
}
\label{fig:linear-nonoracle}
\end{figure}

\paragraph{Nonlinear Additive Model Experiments.} We set $(N,M)=(200,500)$ and let
$f(\notationchange{X^{(S)}})=\sum_{\ell=1}^{10}\beta_\ell\,g_\ell(\notationchange{X^{(\ell)}}),\;
\beta_\ell \stackrel{\text{i.i.d.}}{\sim}\mathrm{Unif}(2,3)$,
where the component functions are $g_1(x)=x,\, g_2(x)=x,\, g_3(x)=\sin(\pi x),\, g_4(x)=x^2,\, g_5(x)=\text{exp}(x),\,g_6(x)=\log(|x|+1),\, g_7(x)=\max(0,x),\, g_8(x)=\mathbf{1}\{x>0\},\,
g_9(x)=x,\, g_{10}(x)=x.$

Each $g_\ell(\notationchange{\bX_{\cdot,\ell}})$ is normalized by its sample variance on the design. Observations follow
$Y=f(\notationchange{X^{(S)}})+\varepsilon,\;\; \varepsilon\sim\cN(0,\sigma^2),$ and the $\mathrm{SNR}$ is enforced via $\sigma^2=\Var(f(\notationchange{X^{(S)}}))/\mathrm{SNR}.$ We use 10 replicates for every combination of $\rho$ and SNR.
For this nonlinear setting, we use MARS~\citep{friedman1991multivariate, MilborrowEarth} and SpAM~\citep{Ravikumar2009SpAM, jiang2026package} as base learners for LAMPS. Although LAMPS can accommodate any black-box learner, we choose MARS and SpAM because they can also serve as feature selection baselines, allowing us to better isolate the effect of the LAMPS framework. Aside from MARS and SpAM (tuned with cross-validation), two additional nonlinear feature-selection baselines are considered: the Model-X Knockoffs \citep{Barber2014ControllingTF, candes2018panning} and HSIC-Lasso \citep{Yamada2014HSICLasso}. Knockoff is implemented with the random forest feature statistic with swap importance and target FDR levels $q\in\{0.1,0.2,0.3\}$; HSIC-Lasso requires the desired sparsity as input and is therefore reported only in oracle sparsity-tuning comparisons. Figure~\ref{fig:additive-nonoracle} reports F1 scores of the two LAMPS variants, MARS, SpAM, and knockoffs. Under independence, only MARS performs comparably to LAMPS. In the correlated setting~1, the performance of MARS decreases as within-signal correlation increases, whereas both LAMPS variants remain robust against correlation. In correlated setting~2, signal--noise correlation lowers accuracy for all methods, but LAMPS generally retains the highest or near-highest F1 scores. Notably, LAMPS with both base learners shows improved selection performance compared to the corresponding base learners applied to the full data set (MARS and SpAM) without minipatch ensembling and adaptive sampling.
We additionally report precision and recall as well as oracle sparsity-tuning results in the Appendix, allowing us to separately assess the roles of sparsity tuning and signal/noise feature ranking in the performance gains.

\begin{figure}[t]
\centering
\includegraphics[width=.82\linewidth]{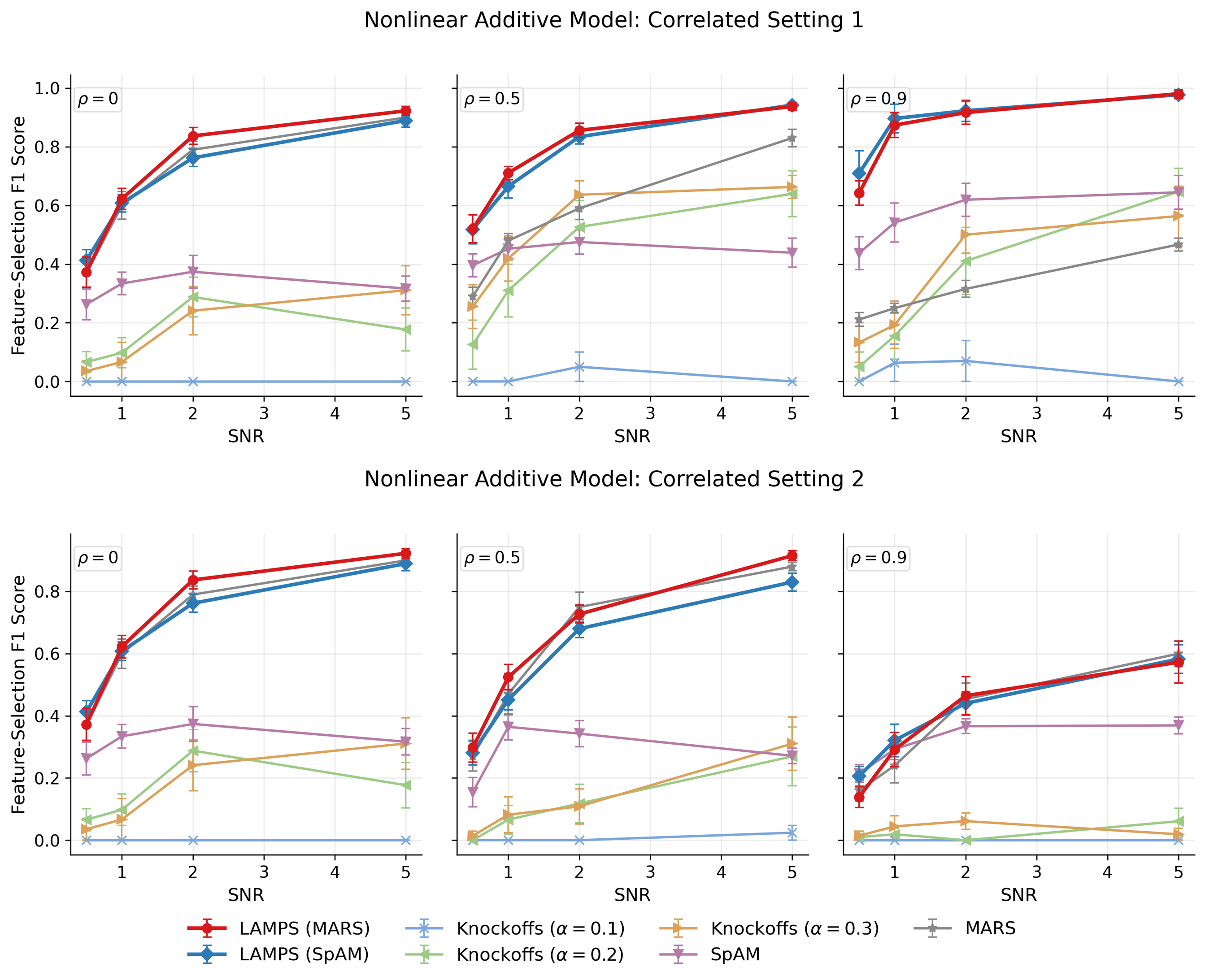}
\caption{
\small Feature-selection performance (F1 score) for nonlinear additive models across signal-to-noise ratios (SNR). Top row: Nonlinear Additive Setting~1 with non-permuted Toeplitz covariance; bottom row: Nonlinear Additive Setting~2 with permuted covariance.
Error bars represent $\pm$ one standard error over 10 Monte Carlo replicates.
}
\label{fig:additive-nonoracle}
\end{figure}

\paragraph{Nonlinear Nonadditive Model Experiments: Detection of Features in Interactions.} Here, we investigate LAMPS' performance in selecting features that contribute only through interactions. We consider $(N,M)=(200,50)$ and modify the nonlinear additive model by replacing the first two univariate components with the interaction term $\notationchange{X^{(1)}}\notationchange{X^{(2)}}$. The remaining components on features $3,\ldots,10$ are unchanged and are normalized as above.
The regression function is therefore of the form
$f(X)=\kappa \notationchange{X^{(1)}}\notationchange{X^{(2)}}+\sum_{\ell=3}^{10}\beta_\ell g_\ell(\notationchange{X^{(\ell)}}),
\;
\beta_\ell\stackrel{\mathrm{i.i.d.}}{\sim}\mathrm{Unif}(2,3),
\;
\kappa\in\{4,6,8,10\}$.
Observations follow
$Y=f(X)+\varepsilon, \;\; \varepsilon\sim\mathcal N(0,\sigma^2)$ with  $\sigma^2=\operatorname{Var}\{f(X)\}$.
We generate 30 replicates for each configuration. LAMPS again uses either MARS or SpAM as the base learner; the baseline methods are the same as in the nonlinear additive study.

Our main evaluation metric is the rate of simultaneously detecting the interacting features ${1,2}$. Figure~\ref{fig:interaction-rate} reports the detection rate across interaction strengths and correlation levels. LAMPS with MARS achieves the highest and often near-perfect detection rates and substantially outperforms MARS applied to the full dataset. SpAM and LAMPS with SpAM also perform well in correlated setting~1 when $\rho=0.5$ or $0.9$, despite SpAM’s additive structure. This occurs because the strong correlation between $\notationchange{X^{(1)}}$ and $\notationchange{X^{(2)}}$ allows the interaction $\notationchange{X^{(1)}}\notationchange{X^{(2)}}$ to be approximated by a univariate nonlinear effect such as $\notationchange{\bigl(X^{(1)}\bigr)^2}$. In setting~2, the permutation weakens the correlation between $\notationchange{X^{(1)}}$ and $\notationchange{X^{(2)}}$, making this additive approximation less effective.

\begin{figure}[t]
\centering
\includegraphics[width=.82\linewidth]{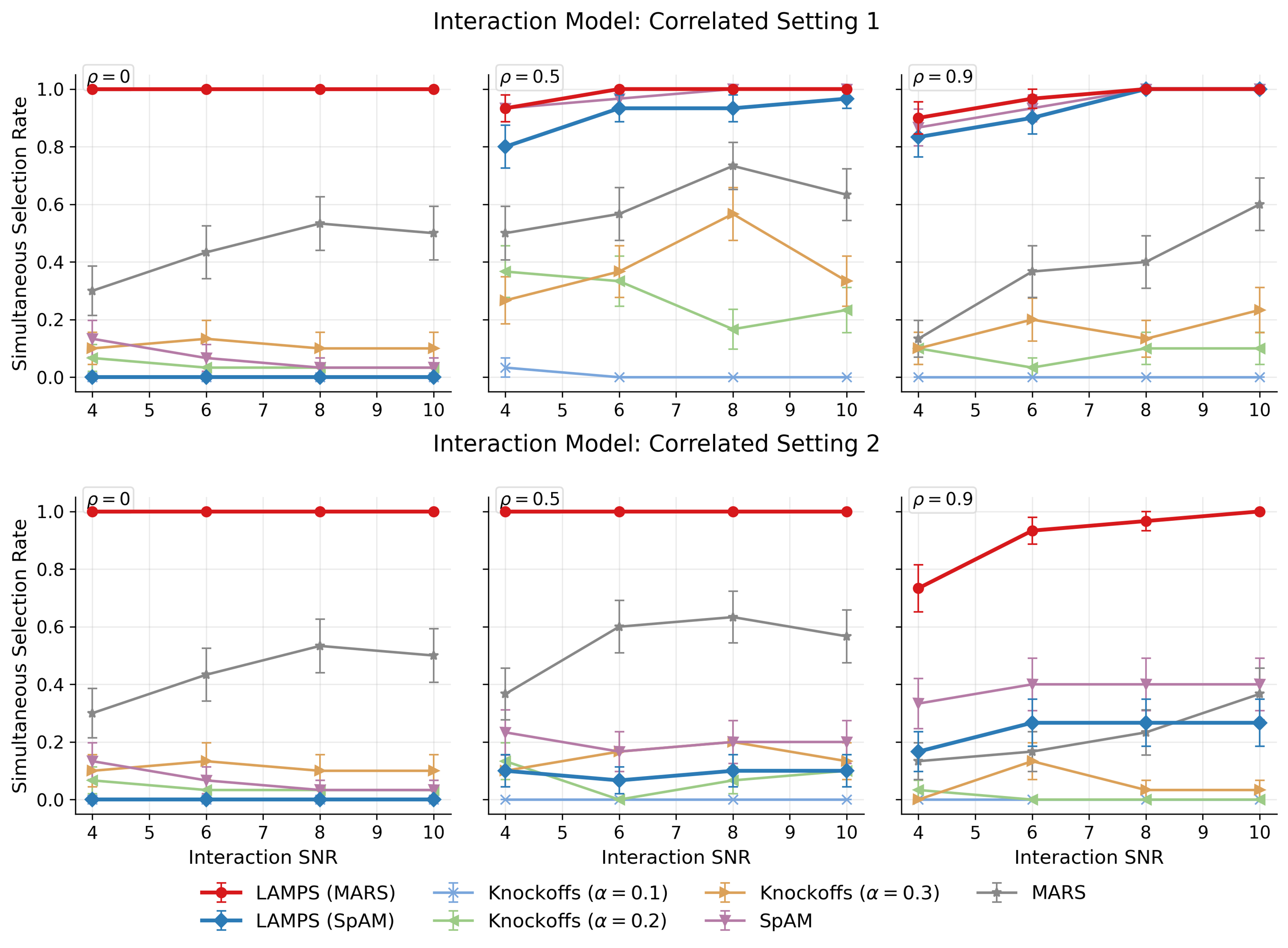}
\caption{
\small Detection rate for interactively important feature pair $(u,\,v)$ in nonlinear nonadditive models across interaction strength $\kappa$.
Top row: Nonlinear Interaction Setting~1 with non-permuted covariance; bottom row: Nonlinear Interaction Setting~2 with permuted covariance.
Error bars represent $\pm$ one standard error over 30 Monte Carlo replicates.
}
\label{fig:interaction-rate}
\end{figure}

\section{Case Studies}\label{sec:casestudy}

We evaluate the feature selection performance of LAMPS on two real-world data sets. 

\subsection{Riboflavin Data Analysis}
\label{sec:riboflavin}

The riboflavin data \citep{dezeure2015high} consist of $N=71$ observations and $M=4{,}088$ continuous covariates measuring log gene-expression levels in Bacillus subtilis. The response is the logarithm of the riboflavin production rate \citep{ZamboniEtAl2005}, and the goal is to identify genes associated with riboflavin production. In this data set, the two genes
\texttt{YXLD\_at} and \texttt{YXLE\_at} have a large sample
correlation $0.978$, providing a representative scenario in which correlated signals pose
challenges for feature selection.

Since the feature size is much larger than the sample size, we apply LAMPS with Lasso \citep{Tibshirani1996Lasso} as the base learner to select the important genes, and compare it with commonly used linear feature selection baselines: complementary-pairs stability selection (CPSS)
\citep{ShahSamworth2013}, cross--validated Elastic Net
\citep{ZouHastie2005ElasticNet}, cross--validated Lasso
\citep{Tibshirani1996Lasso}, EBIC-tuned Lasso
\citep{chen2008extended}, and Stability Selection
\citep{MeinshausenBuhlmann2010}. The hyperparameter configuration for LAMPS follows the same set up as in Section~\ref{sec:simulation}; more implementation details are included in the Appendix.

We report the selection sizes given by all methods in Table~\ref{tab:riboflavin-method-compare}. Lasso and Elastic Net tuned via cross--validation select substantially larger sets than others; CPSS returns an empty set, while EBIC--Lasso, Stability Selection, and our LAMPS select compact gene sets. We further observe that LAMPS selects three genes that are commonly reported in benchmark studies~\citep{chichignoud2016practical,LedererMueller2015TREX,bar2020scalable,GongEtAl2021StatisticaSinica}: \texttt{YOAB\_at}, \texttt{YXLD\_at}, and \texttt{YXLE\_at}. Biologically, all three genes are plausibly related to production rate:
\texttt{YXLD\_at} and \texttt{YXLE\_at} are co-transcribed genes that act as major negative regulators of the extracytoplasmic function sigma factor SigY~\citep{cao2003regulation}; \texttt{YOAB\_at} has
been annotated as a transporter-related gene involved in central metabolic processes~\citep{uniprot2025uniprot}. Notably, LAMPS simultaneously selects the highly correlated pair, \texttt{YXLD\_at} and \texttt{YXLE\_at}, which are difficult to detect simultaneously using the original LOCO importance metric or $\ell_1$-penalty based methods.

\begin{table}[t]
\centering
\caption{Riboflavin data: comparison across selection procedures. We report the number of
selected features ($|\widehat S|$) and indicate whether the most stable core signals are
retained. Full feature lists for methods selecting many genes are provided in the
supplement.}
\label{tab:riboflavin-method-compare}
\scalebox{0.85}{\begin{tabular}{l c c c c}
\hline
Method & $|\widehat S|$ & \texttt{YOAB\_at} & \texttt{YXLD\_at} & \texttt{YXLE\_at} \\
\hline
CPSS \citep{ShahSamworth2013} & 0  & -- & -- & -- \\
Elastic Net (CV) \citep{ZouHastie2005ElasticNet} & 73 & \checkmark & \checkmark & \checkmark \\
Lasso (CV) \citep{Tibshirani1996Lasso} & 41 & \checkmark & \checkmark & \checkmark \\
Lasso (EBIC) \citep{chen2008extended} & 1  & -- & -- & -- \\
Stability Selection \citep{MeinshausenBuhlmann2010} & 2 & \checkmark & \checkmark & -- \\
{\bf LAMPS (Lasso)} & 3 & \checkmark & \checkmark & \checkmark \\
\hline
\end{tabular}}
\end{table}

\subsection{ROSMAP Data Analysis}
\label{sec:rosmap}

We next evaluate LAMPS on a transcriptomic prediction task using data from the Religious Orders Study and Rush Memory and Aging Project (ROSMAP), a pair of ongoing longitudinal clinicopathologic cohort studies of aging and Alzheimer’s disease (AD) \citep{bennett2018religious}.
We use a preprocessed data set with $N=507$ observations and $M=200$ covariates measuring gene expressions from RNA sequencing. We aim to detect important genes associated with a continuous response: the global cognition score of each subject.

To accommodate potentially nonlinear relationships between gene expression and cognitive outcome, we use multivariate adaptive regression splines (MARS) \citep{friedman1991multivariate} as the base learner within LAMPS. Throughout this experiment, we use the same LAMPS hyperparameters as in Section~\ref{sec:simulation}. We compare LAMPS with HSIC Lasso \citep{Yamada2014HSICLasso}, SpAM \citep{Ravikumar2009SpAM}, and Lasso \citep{Tibshirani1996Lasso}. To quantitatively evaluate feature-selection performance, we assess the predictive utility of the selected features using random forest regressors. To isolate the effect of sparsity, we fix the feature selection size for each method as $k\in \{1,\dots,20\}$, by choosing the probability threshold in LAMPS and the regularization parameters in SpAM and Lasso appropriately. For HSIC Lasso, we simply select top-k features following
the strategy used in~\cite{Yamada2014HSICLasso}.
For evaluation, we use 10-fold cross-validation where the training fold is used for feature selection and downstream random forest training given the selected features, with mean squared error computed on the test fold.

Figure~\ref{fig:rosmap-testmse} shows the 10-fold cross-validated test MSE as a function of the number of selected genes. The LAMPS curve decreases over the first few selected features and then remains relatively stable. Across all model sizes considered, LAMPS achieves substantially lower mean test MSE than HSIC Lasso, SpAM, and Lasso.
When using our default selection threshold $\delta/2 = 0.4$, LAMPS selects a compact set of genes, reported in Table~\ref{tab:rosmap_selection_frequency} of the Appendix. Across the 10 splits, a total of 9 features are selected, among which 4 genes are consistently selected. Additional biological context for the selected genes is provided in Section~\ref{sec:supp_case_study_details} of the Appendix.

\begin{figure}[t]
\centering
\includegraphics[width=.7\linewidth]{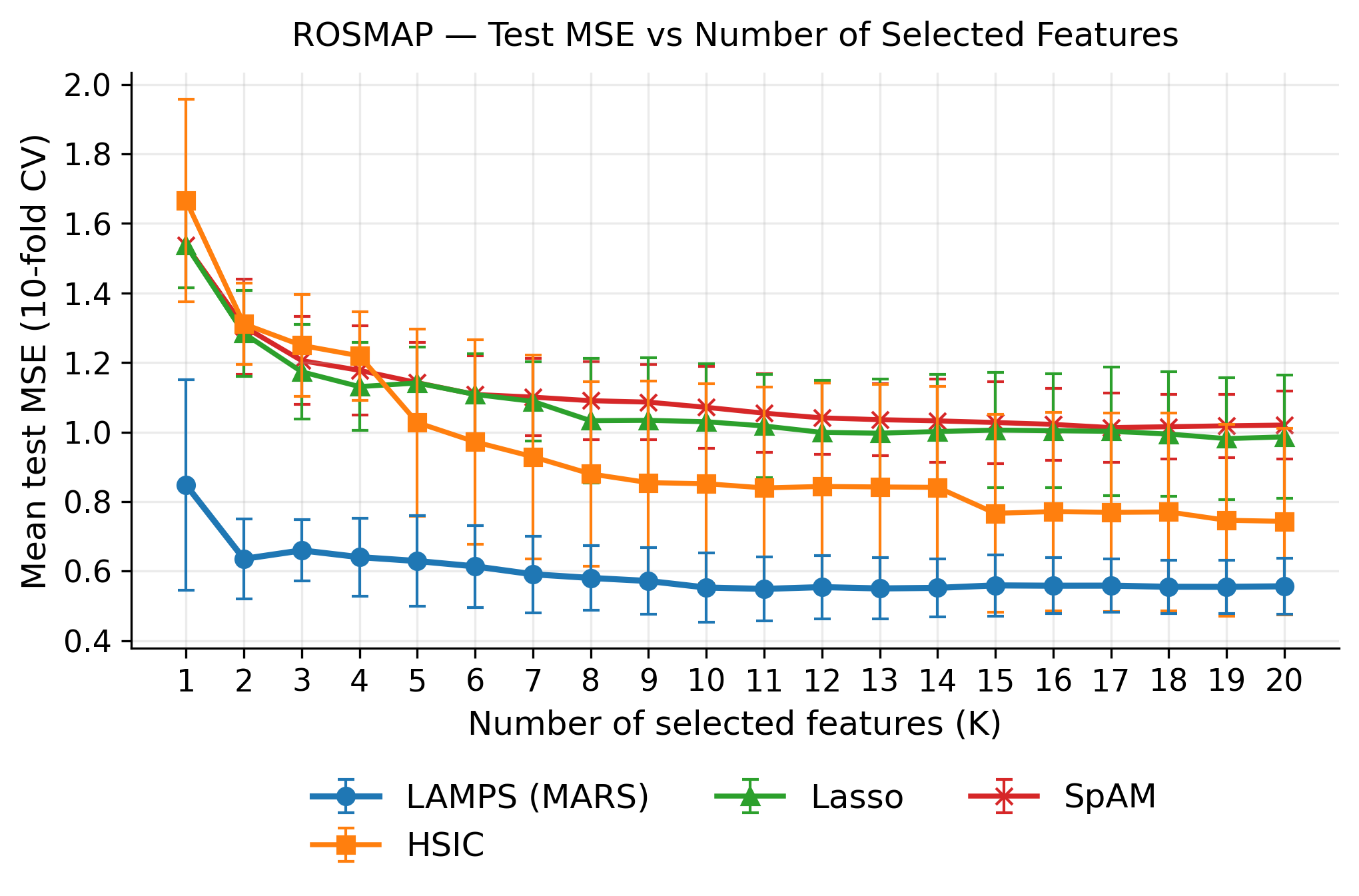}
\caption{\small Cross-validated prediction performance in the ROSMAP data as a function of the number of selected features. Error bars represent standard deviation over 10 folds.
}
\label{fig:rosmap-testmse}
\end{figure}

\section{Discussion}
In this paper, we propose LAMPS (LOCO-guided Adaptive Minipatch Sampling), a model-agnostic feature selection framework that can leverage any regression algorithm for selecting important features relevant for the response. By using LOCO importance as the feature-relevance criterion, LAMPS can be paired with arbitrary base learners that need only provide predictions.
Leveraging the minipatch ensemble framework with double-subsampling of observations and features, the LOCO importance is estimated without data splitting and easily computed even in high dimensions. Our key innovation is a new adaptive feature subsampling scheme, tailored for the unbounded LOCO feature importance, that gradually shifts the sampling distribution towards features with higher LOCO importance. Our adaptive feature sampling leads to clearer signal-noise separation and consequently, a compact and accurate set of important features.
Theoretically, we establish exact support recovery guarantees for LAMPS under high-dimensional sparse models with independent features, when the base learners are sufficiently well trained and the hyperparameters are appropriately chosen. We further provide illustrative theory and simulations to shed light on LAMPS' behavior under feature correlation. Finally, our extensive simulations and case studies demonstrate strong feature-selection performance of LAMPS across linear and nonlinear models, with particular advantages in correlated settings and in identifying interactively important features.

As a new framework, LAMPS also opens up several interesting directions for future work. First, our analysis of correlated features is restricted to an illustrative setting, and developing a general theory under complex feature dependence would provide a more complete understanding of LAMPS. Second, an interesting direction is to extend LAMPS to accommodate structured data, such as time-series observations, thereby further broadening its applicability. Finally, the adaptive sampling idea underlying LAMPS may have the potential to be extended to efficiently detect higher-order feature interactions.

\section*{Acknowledgments}
The authors thank Genevera I. Allen and Jason M. Klusowski for helpful discussions during the preparation of this manuscript.

\bibliographystyle{apalike}
\bibliography{main_ref,Supplement_ref}

\begin{thebibliography}{}

\bibitem[Allen et~al., 2023]{allen2023interpretable}
Allen, G.~I., Gan, L., and Zheng, L. (2023).
\newblock Interpretable machine learning for discovery: Statistical challenges
  and opportunities.
\newblock {\em Annual Review of Statistics and Its Application}, 11.

\bibitem[Bar et~al., 2020]{bar2020scalable}
Bar, H.~Y., Booth, J.~G., and Wells, M.~T. (2020).
\newblock A scalable empirical bayes approach to variable selection in
  generalized linear models.
\newblock {\em Journal of Computational and Graphical Statistics},
  29(3):535--546.

\bibitem[Barber and Cand{\`e}s, 2014]{Barber2014ControllingTF}
Barber, R.~F. and Cand{\`e}s, E.~J. (2014).
\newblock Controlling the false discovery rate via knockoffs.
\newblock {\em Annals of Statistics}, 43:2055--2085.

\bibitem[Bennett et~al., 2018]{bennett2018religious}
Bennett, D.~A., Buchman, A.~S., Boyle, P.~A., Barnes, L.~L., Wilson, R.~S., and
  Schneider, J.~A. (2018).
\newblock Religious orders study and rush memory and aging project.
\newblock {\em Journal of Alzheimer’s disease}, 64(s1):S161--S189.

\bibitem[B{\"u}hlmann and Yu, 2002]{BuhlmannYu2002Bagging}
B{\"u}hlmann, P. and Yu, B. (2002).
\newblock Analyzing bagging.
\newblock {\em The annals of Statistics}, 30(4):927--961.

\bibitem[Candes et~al., 2018]{candes2018panning}
Candes, E., Fan, Y., Janson, L., and Lv, J. (2018).
\newblock Panning for gold:‘model-x’knockoffs for high dimensional
  controlled variable selection.
\newblock {\em Journal of the Royal Statistical Society Series B: Statistical
  Methodology}, 80(3):551--577.

\bibitem[Cannings and Samworth, 2017]{cannings2017random}
Cannings, T.~I. and Samworth, R.~J. (2017).
\newblock Random-projection ensemble classification.
\newblock {\em Journal of the Royal Statistical Society Series B: Statistical
  Methodology}, 79(4):959--1035.

\bibitem[Cao et~al., 2003]{cao2003regulation}
Cao, M., Salzberg, L., Tsai, C.~S., Mascher, T., Bonilla, C., Wang, T., Ye,
  R.~W., M{\'a}rquez-Magana, L., and Helmann, J.~D. (2003).
\newblock Regulation of the bacillus subtilis extracytoplasmic function protein
  $\sigma$y and its target promoters.
\newblock {\em Journal of bacteriology}, 185(16):4883--4890.

\bibitem[Cattaneo et~al., 2026]{cattaneo2026inference}
Cattaneo, M.~D., Klusowski, J.~M., and Underwood, W.~G. (2026).
\newblock Inference with mondrian random forests.
\newblock {\em Journal of the Royal Statistical Society Series B: Statistical
  Methodology}, 88(3):1060--1085.

\bibitem[Chen and Chen, 2008]{chen2008extended}
Chen, J. and Chen, Z. (2008).
\newblock Extended bayesian information criteria for model selection with large
  model spaces.
\newblock {\em Biometrika}, 95(3):759--771.

\bibitem[Chen et~al., 2026]{chen2025revisiting}
Chen, X., Klusowski, J.~M., Tan, Y.~S., and Yu, C. (2026).
\newblock Revisiting randomization in greedy model search.

\bibitem[Chen et~al., 2025]{chen2025top}
Chen, Y., Tang, T., and Allen, G. (2025).
\newblock Top-$k$ feature importance ranking.

\bibitem[Chichignoud et~al., 2016]{chichignoud2016practical}
Chichignoud, M., Lederer, J., and Wainwright, M.~J. (2016).
\newblock A practical scheme and fast algorithm to tune the lasso with
  optimality guarantees.
\newblock {\em Journal of Machine Learning Research}, 17(229):1--20.

\bibitem[Consortium, 2025]{uniprot2025uniprot}
Consortium, T.~U. (2025).
\newblock Uniprot: the universal protein knowledgebase in 2025.
\newblock {\em Nucleic acids research}, 53(D1):D609--D617.

\bibitem[Dezeure et~al., 2015]{dezeure2015high}
Dezeure, R., B{\"u}hlmann, P., Meier, L., and Meinshausen, N. (2015).
\newblock High-dimensional inference: confidence intervals, p-values and
  r-software hdi.
\newblock {\em Statistical science}, pages 533--558.

\bibitem[Fan and Li, 2001]{FanLi2001SCAD}
Fan, J. and Li, R. (2001).
\newblock Variable selection via nonconcave penalized likelihood and its oracle
  properties.
\newblock {\em Journal of the American statistical Association},
  96(456):1348--1360.

\bibitem[Friedman, 1991]{friedman1991multivariate}
Friedman, J.~H. (1991).
\newblock Multivariate adaptive regression splines.
\newblock {\em The annals of statistics}, 19(1):1--67.

\bibitem[Gan and Allen, 2022]{gan2022fast}
Gan, L. and Allen, G.~I. (2022).
\newblock Fast and interpretable consensus clustering via minipatch learning.
\newblock {\em PLOS Computational Biology}, 18(10):e1010577.

\bibitem[Gan et~al., 2026]{GanZhengAllen2022MinipatchCI}
Gan, L., Zheng, L., and Allen, G.~I. (2026).
\newblock Loco feature importance inference without data splitting via
  minipatch ensembles.

\bibitem[Gavinsky et~al., 2015]{gavinsky2015tail}
Gavinsky, D., Lovett, S., Saks, M., and Srinivasan, S. (2015).
\newblock A tail bound for read-k families of functions.
\newblock {\em Random Structures \& Algorithms}, 47(1):99--108.

\bibitem[Gil and Ulitsky, 2020]{gil2020regulation}
Gil, N. and Ulitsky, I. (2020).
\newblock Regulation of gene expression by cis-acting long non-coding rnas.
\newblock {\em Nature Reviews Genetics}, 21(2):102--117.

\bibitem[Gong et~al., 2021]{GongEtAl2021StatisticaSinica}
Gong, S., Zhang, K., and Liu, Y. (2021).
\newblock Penalized linear regression with high-dimensional pairwise screening.
\newblock {\em Statistica Sinica}, 31(1):391--420.

\bibitem[Guyon and Elisseeff, 2003]{guyon2003introduction}
Guyon, I. and Elisseeff, A. (2003).
\newblock An introduction to variable and feature selection.
\newblock {\em Journal of machine learning research}, 3(Mar):1157--1182.

\bibitem[Guyon et~al., 2002]{guyon2002gene}
Guyon, I., Weston, J., Barnhill, S., and Vapnik, V. (2002).
\newblock Gene selection for cancer classification using support vector
  machines.
\newblock {\em Machine learning}, 46(1):389--422.

\bibitem[He and Allen, 2026]{he2026cluster}
He, C.~M. and Allen, G.~I. (2026).
\newblock Cluster loco: Feature importance for interpreting clusters.

\bibitem[Hoeffding, 1948]{Hoeffding1948ACO}
Hoeffding, W. (1948).
\newblock A class of statistics with asymptotically normal distribution.
\newblock {\em Annals of Mathematical Statistics}, 19:308--334.

\bibitem[Jiang et~al., 2026]{jiang2026package}
Jiang, H., Ma, Y., Liu, H., Roeder, K., Li, X., Zhao, T., and Jiang, M.~H.
  (2026).
\newblock {\em Package ‘SAM’}.

\bibitem[Lederer and M{\"u}ller, 2015]{LedererMueller2015TREX}
Lederer, J. and M{\"u}ller, C. (2015).
\newblock Don't fall for tuning parameters: tuning-free variable selection in
  high dimensions with the trex.
\newblock In {\em Proceedings of the AAAI conference on artificial
  intelligence}, volume~29.

\bibitem[Lei et~al., 2018]{lei2018distribution}
Lei, J., G’Sell, M., Rinaldo, A., Tibshirani, R.~J., and Wasserman, L.
  (2018).
\newblock Distribution-free predictive inference for regression.
\newblock {\em Journal of the American Statistical Association},
  113(523):1094--1111.

\bibitem[LeJeune et~al., 2024]{lejeune2024asymptotics}
LeJeune, D., Patil, P., Javadi, H., Baraniuk, R.~G., and Tibshirani, R.~J.
  (2024).
\newblock Asymptotics of the sketched pseudoinverse.
\newblock {\em SIAM Journal on Mathematics of Data Science}, 6(1):199--225.

\bibitem[Lemhadri et~al., 2021]{Lemhadri2021LassoNet}
Lemhadri, I., Ruan, F., Abraham, L., and Tibshirani, R. (2021).
\newblock Lassonet: A neural network with feature sparsity.
\newblock {\em Journal of Machine Learning Research}, 22(127):1--29.

\bibitem[Li et~al., 2005]{li2005lasso}
Li, F., Yang, Y., and Xing, E. (2005).
\newblock From lasso regression to feature vector machine.
\newblock {\em Advances in neural information processing systems}, 18.

\bibitem[Lin and Zhang, 2006]{Lin2006COSSO}
Lin, Y. and Zhang, H.~H. (2006).
\newblock {Component selection and smoothing in multivariate nonparametric
  regression}.
\newblock {\em The Annals of Statistics}, 34(5):2272 -- 2297.

\bibitem[Liu and Ro{\v{c}}kov{\'a}, 2023]{LiuRockova2023TVS}
Liu, Y. and Ro{\v{c}}kov{\'a}, V. (2023).
\newblock Variable selection via thompson sampling.
\newblock {\em Journal of the American Statistical Association},
  118(541):287--304.

\bibitem[Luo and Daniels, 2024]{luo2024variable}
Luo, C. and Daniels, M.~J. (2024).
\newblock Variable selection using bayesian additive regression trees.
\newblock {\em Statistical science: a review journal of the Institute of
  Mathematical Statistics}, 39(2):286.

\bibitem[Ma and Sun, 2026]{ma2026feature}
Ma, Y. and Sun, Q. (2026).
\newblock Feature bagging provides stability.

\bibitem[Meinshausen and B{\"u}hlmann, 2010]{MeinshausenBuhlmann2010}
Meinshausen, N. and B{\"u}hlmann, P. (2010).
\newblock Stability selection.
\newblock {\em Journal of the Royal Statistical Society: Series B (Statistical
  Methodology)}, 72(4):417--473.

\bibitem[Mentch and Hooker, 2016]{mentch2016quantifying}
Mentch, L. and Hooker, G. (2016).
\newblock Quantifying uncertainty in random forests via confidence intervals
  and hypothesis tests.
\newblock {\em Journal of Machine Learning Research}, 17(26):1--41.

\bibitem[Milborrow et~al., 2017]{MilborrowEarth}
Milborrow, S., Hastie, T., Tibshirani, R., Miller, A., and Lumley, T. (2017).
\newblock earth: Multivariate adaptive regression splines.
\newblock {\em R package version}, 5(2).

\bibitem[Patil and LeJeune, 2024]{patil2024asymptotically}
Patil, P. and LeJeune, D. (2024).
\newblock Asymptotically free sketched ridge ensembles: Risks,
  cross-validation, and tuning.
\newblock In {\em International Conference on Learning Representations}, volume
  2024, pages 52793--52828.

\bibitem[Ravikumar et~al., 2009]{Ravikumar2009SpAM}
Ravikumar, P., Lafferty, J., Liu, H., and Wasserman, L. (2009).
\newblock Sparse additive models.
\newblock {\em Journal of the Royal Statistical Society Series B: Statistical
  Methodology}, 71(5):1009--1030.

\bibitem[Schoch et~al., 2001]{schoch2001snare}
Schoch, S., De{\'a}k, F., K{\"o}nigstorfer, A., Mozhayeva, M., Sara, Y.,
  S{\"u}dhof, T.~C., and Kavalali, E.~T. (2001).
\newblock Snare function analyzed in synaptobrevin/vamp knockout mice.
\newblock {\em Science}, 294(5544):1117--1122.

\bibitem[Serrano et~al., 2019]{serrano2019s100a4}
Serrano, A., Apolloni, S., Rossi, S., Lattante, S., Sabatelli, M., Peric, M.,
  Andjus, P., Michetti, F., Carr{\`\i}, M.~T., Cozzolino, M., et~al. (2019).
\newblock The s100a4 transcriptional inhibitor niclosamide reduces
  pro-inflammatory and migratory phenotypes of microglia: implications for
  amyotrophic lateral sclerosis.
\newblock {\em Cells}, 8(10):1261.

\bibitem[Shah and Peters, 2020]{Shah2018TheHO}
Shah, R.~D. and Peters, J. (2020).
\newblock The hardness of conditional independence testing and the generalised
  covariance measure.
\newblock {\em The Annals of Statistics}, 48(3).

\bibitem[Shah and Samworth, 2013]{ShahSamworth2013}
Shah, R.~D. and Samworth, R.~J. (2013).
\newblock Variable selection with error control: another look at stability
  selection.
\newblock {\em Journal of the Royal Statistical Society Series B: Statistical
  Methodology}, 75(1):55--80.

\bibitem[Soloff et~al., 2024]{SoloffBarberWillett2024BaggingStability}
Soloff, J.~A., Barber, R.~F., and Willett, R. (2024).
\newblock Bagging provides assumption-free stability.
\newblock {\em Journal of Machine Learning Research}, 25(131):1--35.

\bibitem[Staerk et~al., 2021]{Staerk2021AdaptiveLearning}
Staerk, C., Kateri, M., and Ntzoufras, I. (2021).
\newblock {High-dimensional variable selection via low-dimensional adaptive
  learning}.
\newblock {\em Electronic Journal of Statistics}, 15(1):830 -- 879.

\bibitem[Sutera et~al., 2018]{Sutera2018RandomSubspaceTrees}
Sutera, A., Ch{\^a}tel, C., Louppe, G., Wehenkel, L., and Geurts, P. (2018).
\newblock Random subspace with trees for feature selection under memory
  constraints.
\newblock In {\em International conference on artificial intelligence and
  statistics}, pages 929--937. PMLR.

\bibitem[Tada et~al., 2007]{tada2007role}
Tada, T., Simonetta, A., Batterton, M., Kinoshita, M., Edbauer, D., and Sheng,
  M. (2007).
\newblock Role of septin cytoskeleton in spine morphogenesis and dendrite
  development in neurons.
\newblock {\em Current Biology}, 17(20):1752--1758.

\bibitem[Tian and Feng, 2021]{Tian2021RaSE}
Tian, Y. and Feng, Y. (2021).
\newblock Rase: Random subspace ensemble classification.
\newblock {\em Journal of Machine Learning Research}, 22(45):1--93.

\bibitem[Tian and Feng, 2023]{Tian2021RaSEAV}
Tian, Y. and Feng, Y. (2023).
\newblock Rase: A variable screening framework via random subspace ensembles.
\newblock {\em Journal of the American Statistical Association},
  118(541):457--468.

\bibitem[Tibshirani, 1996]{Tibshirani1996Lasso}
Tibshirani, R. (1996).
\newblock Regression shrinkage and selection via the lasso.
\newblock {\em Journal of the Royal Statistical Society Series B: Statistical
  Methodology}, 58(1):267--288.

\bibitem[Toghani and Allen, 2021]{9373169}
Toghani, M.~T. and Allen, G.~I. (2021).
\newblock Mp-boost: Minipatch boosting via adaptive feature and observation
  sampling.
\newblock In {\em 2021 IEEE International Conference on Big Data and Smart
  Computing (BigComp)}, pages 75--78.

\bibitem[Verdinelli and Wasserman, 2024]{Verdinelli2021DecorrelatedVI}
Verdinelli, I. and Wasserman, L. (2024).
\newblock Decorrelated variable importance.
\newblock {\em Journal of Machine Learning Research}, 25(7):1--27.

\bibitem[Wang et~al., 2025]{wang2025sharp}
Wang, T., Dobriban, E., Gataric, M., and Samworth, R.~J. (2025).
\newblock Sharp-ssl: Selective high-dimensional axis-aligned random projections
  for semi-supervised learning.
\newblock {\em Journal of the American Statistical Association},
  120(549):395--407.

\bibitem[Watson and Wright, 2021]{watson2021testing}
Watson, D.~S. and Wright, M.~N. (2021).
\newblock Testing conditional independence in supervised learning algorithms.
\newblock {\em Machine Learning}, 110(8):2107--2129.

\bibitem[Westin et~al., 2012]{westin2012ccl2}
Westin, K., Buchhave, P., Nielsen, H., Minthon, L., Janciauskiene, S., and
  Hansson, O. (2012).
\newblock Ccl2 is associated with a faster rate of cognitive decline during
  early stages of alzheimer's disease.
\newblock {\em PloS one}, 7(1):e30525.

\bibitem[Wieland et~al., 2022]{wieland2022epstein}
Wieland, L., Schwarz, T., Engel, K., Volkmer, I., Kr{\"u}ger, A., Tarabuko, A.,
  Junghans, J., Kornhuber, M.~E., Hoffmann, F., Staege, M.~S., et~al. (2022).
\newblock Epstein-barr virus-induced genes and endogenous retroviruses in
  immortalized b cells from patients with multiple sclerosis.
\newblock {\em Cells}, 11(22):3619.

\bibitem[Williamson et~al., 2023]{williamson2023general}
Williamson, B.~D., Gilbert, P.~B., Simon, N.~R., and Carone, M. (2023).
\newblock A general framework for inference on algorithm-agnostic variable
  importance.
\newblock {\em Journal of the American Statistical Association},
  118(543):1645--1658.

\bibitem[Yamada et~al., 2014]{Yamada2014HSICLasso}
Yamada, M., Jitkrittum, W., Sigal, L., Xing, E.~P., and Sugiyama, M. (2014).
\newblock High-dimensional feature selection by feature-wise kernelized lasso.
\newblock {\em Neural computation}, 26(1):185--207.

\bibitem[Yang et~al., 2024]{Yang2024ENNS}
Yang, K., Ganguli, A., and Maiti, T. (2024).
\newblock Enns: Variable selection, regression, classification, and deep neural
  network for high-dimensional data.
\newblock {\em Journal of Machine Learning Research}, 25(335):1--45.

\bibitem[Yao and Allen, 2021]{YaoAllen2020Minipatch}
Yao, T. and Allen, G.~I. (2021).
\newblock Feature selection for huge data via minipatch learning.

\bibitem[Yao et~al., 2023]{yao2021gaussian}
Yao, T., Wang, M., and Allen, G.~I. (2023).
\newblock Fast and accurate graph learning for huge data via minipatch
  ensembles.

\bibitem[Zamboni et~al., 2005]{ZamboniEtAl2005}
Zamboni, N., Fischer, E., Muffler, A., Wyss, M., Hohmann, H.-P., and Sauer, U.
  (2005).
\newblock Transient expression and flux changes during a shift from high to low
  riboflavin production in continuous cultures of bacillus subtilis.
\newblock {\em Biotechnology and bioengineering}, 89(2):219--232.

\bibitem[Zhang and Janson, 2022]{zhang2020floodgate}
Zhang, L. and Janson, L. (2022).
\newblock Floodgate: inference for model-free variable importance.

\bibitem[Zou, 2006]{Zou2006AdaptiveLasso}
Zou, H. (2006).
\newblock The adaptive lasso and its oracle properties.
\newblock {\em Journal of the American statistical association},
  101(476):1418--1429.

\bibitem[Zou and Hastie, 2005]{ZouHastie2005ElasticNet}
Zou, H. and Hastie, T. (2005).
\newblock Regularization and variable selection via the elastic net.
\newblock {\em Journal of the Royal Statistical Society Series B: Statistical
  Methodology}, 67(2):301--320.

\end{thebibliography}

\clearpage
\appendix
\renewcommand{\thethm}{S\arabic{thm}}
\renewcommand{\thelem}{S\arabic{lem}}
\renewcommand{\thecor}{S\arabic{cor}}
\renewcommand{\theassump}{S\arabic{assump}}
\renewcommand{\thealgocf}{S\arabic{algocf}}
\renewcommand{\theprop}{S\arabic{prop}}
\renewcommand{\thefigure}{S\arabic{figure}}
\renewcommand{\thetable}{S\arabic{table}}
\renewcommand{\theequation}{S\arabic{equation}}
\setcounter{section}{0}
\setcounter{thm}{0}
\setcounter{lem}{0}
\setcounter{cor}{0}
\setcounter{assump}{0}
\setcounter{prop}{0}
\setcounter{algocf}{0}
\setcounter{figure}{0}
\setcounter{table}{0}
\setcounter{equation}{0}

\section{Additional Related Work}
\label{sec:additional_related_work}
\paragraph{Feature selection literature.}

Under the linear model setup, a number of penalization-based methods have been proposed for feature selection, including the Lasso~\citep{Tibshirani1996Lasso}, SCAD~\citep{FanLi2001SCAD}, the adaptive Lasso \citep{Zou2006AdaptiveLasso}, and the elastic net \citep{ZouHastie2005ElasticNet}. Stability selection \citep{MeinshausenBuhlmann2010} and its complementary-pairs refinement CPSS \citep{ShahSamworth2013} can further improve stability of feature selection in linear models and provide explicit error bounds.
Various nonlinear feature selection methods have also been proposed. For instance, MARS \citep{friedman1991multivariate} approximates the regression function by a sum of spline basis functions, while feature selection is induced by forward basis construction followed by backward pruning. Feature-wise nonlinear Lasso \citep{li2005lasso} and HSIC-Lasso \citep{Yamada2014HSICLasso} apply implicit nonlinear transformations to individual feature vectors through the kernel trick, enabling nonlinear feature selection. Sparse additive modeling \citep{Ravikumar2009SpAM} and COSSO \citep{Lin2006COSSO} assume that the regression function admits a sparse decomposition into nonlinear functional components belonging to an RKHS, and impose $\ell_1$-type penalties on the component norms to select important components.
\cite{luo2024variable} discuss several variable selection methods based on Bayesian additive regression trees (BART). Recent neural-network-based embedded selectors such as LassoNet \citep{Lemhadri2021LassoNet} and ENNS \citep{Yang2024ENNS} further extend feature selection to highly nonlinear predictors. Complementing these approaches, model-agnostic methods decouple feature selection from a particular predictive learner. Classical wrapper methods evaluate candidate feature subsets according to predictive performance \citep{guyon2003introduction}, while Thompson variable selection formulates feature selection as a multi-armed bandit problem, the reward for each feature (arm) being its selection probability of a given base selector~\citep{LiuRockova2023TVS}.
Finally, the knockoff framework constructs synthetic control variables based on the feature joint distribution and uses flexible feature statistics to select variables with false discovery rate control \citep{Barber2014ControllingTF,candes2018panning}.

\paragraph{Model-agnostic feature importance literature.}

There is a surge of literature on model-agnostic inference for feature importance. \citep{lei2018distribution} introduce leave-one-covariate-out (LOCO) inference which assess each feature's contribution through data-splitting and feature occlusion. \citep{GanZhengAllen2022MinipatchCI} propose a minipatch-based ensemble framework that has built-in LOCO inference without model refitting or data splitting. \citep{zhang2020floodgate} propose Floodgate, which constructs lower confidence bounds for the minimum mean squared error gap due to feature occlusion using a model-X approach. \citep{williamson2023general} develop a general framework that provides inference for a population feature importance metric defined through feature occlusion for the oracle predictor.  \citep{watson2021testing} propose the conditional predictive impact, a knockoff-based measure of conditional importance that applies to arbitrary supervised learners and loss functions. Many of these methods come with validity guarantees for feature importance inference, while they do not provide a principled rule for converting feature-importance scores into a selected set of features.

\paragraph{Ensemble literature.}
Ensemble methods based on data perturbation and/or feature subsampling provide a natural mechanism for improving predictive performance and stabilizing variable rankings in high-dimensional problems. Bagging has been shown to reduce instability of base learners and can yield substantial variance reduction \citep{BuhlmannYu2002Bagging}, with recent work establishing finite-sample, assumption-free stability guarantees for broad classes of bagging schemes \citep{SoloffBarberWillett2024BaggingStability}. Many have leveraged ensemble structures for statistical inference: \citep{mentch2016quantifying} represent predictions from subsampled random forests as incomplete U-statistics to derive confidence intervals and tests for feature significance; \citep{cattaneo2026inference} establish bias and variance characterizations and distributional approximations for Mondrian random forests to construct valid inference for the unknown regression function. Recently, the idea of feature subsampling has been explored in ensemble frameworks: \citep{cannings2017random} propose aggregating prediction results on lower dimensional projection of the feature space to solve high-dimensional classification problem. Building on this idea, \citep{wang2025sharp} propose Sharp-SSL, which aggregates semi-supervised learners fitted on axis-aligned random projections of the data to enable effective variable selection in high-dimensional semi-supervised learning; \cite{Tian2021RaSE,Tian2021RaSEAV} further designs an iterative sampling scheme within the random subspace framework, performing high-dimensional classification and feature screening. Recent theoretical work further characterizes feature subsampling and sketching: \citep{ma2026feature} show that feature bagging improves algorithmic stability. \citep{lejeune2024asymptotics} develop asymptotic theory for sketched pseudoinverses and show that random sketching induces a ridge-like implicit regularization effect; \citep{patil2024asymptotically} show that the prediction risk of sketched ridge ensembles decomposes into an equivalent ridge bias and a sketching-induced variance component, and establish consistent generalized cross-validation for tuning the ensembles; \citep{chen2025revisiting} show that combining feature subsampling with greedy forward selection can reduce both bias and variance under an orthogonal design. Most closely related to our work, the minipatch ensembles have been exploited for graphical modeling, boosting, and clustering. \citep{yao2021gaussian} proposes MPGraph, which fits graph estimators on randomly sampled minipatches and aggregates edge selection frequencies to recover stable graph structure. \citep{9373169} propose MP-Boost, which trains successive weak learners on adaptively sampled minipatches and aggregates them into a boosting ensemble. \citep{he2026cluster} \citep{gan2022fast} use minipatches to construct an ensemble of clustering solutions that are aggregated through consensus clustering.

\paragraph{Adaptive sampling for feature selection, screening, and ranking.} In spirit, our work is related to many prior works that design adaptive sampling strategies for feature selection/screening/ranking, while many impose restrictions on the model class or require the base model to output selection or ranking. For instance, \cite{Sutera2018RandomSubspaceTrees} proposes an adaptive sampling strategy for trees, where sequential random-subspace trees are fitted, retaining variables previously assigned positive tree importance while reserving part of each subspace for unexplored variables \citep{Sutera2018RandomSubspaceTrees}. AdaSTAMPS~\citep{YaoAllen2020Minipatch} and AdaSub~\citep{Staerk2021AdaptiveLearning} both consider linear models: AdaSTAMPS adapts the balance between exploration and exploitation using aggregated selection frequencies from a base selector, while AdaSub updates feature-inclusion probabilities according to the solutions of criterion-based low-dimensional model-selection problems. The Thompson variable selection framework~\citep{LiuRockova2023TVS} can incorporate more general base models, but since it formulates subset search as a combinatorial bandit problem with binary rewards, it requires the base model to output feature selection given subsampled features. RAMPART~\citep{chen2025top} considers the top-$k$ feature ranking problem and it aggregate feature ranking results from the base models applied on randomly sampled minipatches and recursively trims lower-ranked features. Interestingly, \cite{Tian2021RaSE,Tian2021RaSEAV} propose the iterative RaSE, a flexible ensemble framework for high-dimensional sparse classification and variable screening, which can be incorporated with any base learner equipped with an informative criterion such as BIC or cross-validation error. In contrast to our work, RaSE focuses on variable screening rather than exact feature selection. The key methodological idea of our LAMPS framework also significantly differs from RaSE: RaSE iteratively samples groups of subspaces, and for each group, they identify the best subspace within this group using a prespecified criterion, then examine the inclusion event of all features only in the best subspace; while LAMPS is built on top of the estimated LOCO importance from all trained minipatches. Collectively, these methods demonstrate that adaptive allocation can improve computational efficiency and concentrate estimation on promising features. Their update rules, however, rely on discrete selection events, criterion-selected subspaces, or within-subspace rankings, and hence are not directly transferable to our setting, in which an arbitrary base learner supplies only predictions and the continuous LOCO importance scores are used to guide the update of feature sampling distributions.

\section{Detailed Notations for Algorithms, Theoretical Results and Proofs}\label{sec:supp_notations}

\notationchange{Recall that for a generic feature vector $X=(X^{(1)},\dots,X^{(M)})^\top$ and $F\subseteq[M]$, $X^{(F)}$ denotes the subvector indexed by $F$; correspondingly, $X_i^{(F)}$ denotes the feature subvector of sample $i$.}

\begin{definition}[Probability mass function for $F$]
\label{def:prob_Q} Given a feature sampling probability vector $\bq\in [0,1]^M$, we define its associated probability mass function of subsampled feature sets as $Q_{\bq}: 2^{[M]} \rightarrow [0,1]$, satisfying
\begin{align*}
Q_{\bq}(F) = \frac{\prod_{j\in F}q_j\prod_{k\notin F}(1-q_k)}{1-\prod_{k=1}^M(1-q_k)}\ind(F\neq\emptyset).
\end{align*}
We define $Q_{\bq,\no j}$ and $Q_{\bq,j}$ as the probability mass functions conditioning on the exclusion and inclusion of feature $j$, respectively: for any $F \subset [M]\no j$, we have
\begin{align*}
Q_{\bq,\no j}(F) = &\frac{Q_{\bq}(F)}{\sum_{F' \subset [M]}\ind(j \notin F')Q_{\bq}(F')},\\
Q_{\bq, j}(F) = &\frac{Q_{\bq}(F)}{\sum_{F' \subset [M]}\ind(j \in F')Q_{\bq}(F')}.
\end{align*}
\end{definition}
Throughout this proof, we denote $Q_{\bq^{(b)}}$, $Q_{\bq^{(b)},\no j}$, and $Q_{\bq^{(b)},j}$  by $Q^{(b)}$, $Q^{(b)}_{\no j}$ and $Q^{(b)}_{j}$.

\begin{definition}[Oracle and minipatch predictors]
\label{def:mu_defs}
Let $\bZ:=\{(X_i,Y_i)\}_{i=1}^N$ denote the training sample. For any feature subset
$F\subset[M]$, define the oracle regression function
\[
\mu_F^*(x)
:=\bbE\!\left[Y\mid \notationchange{X^{(F)}}=\notationchange{x^{(F)}}\right],
\qquad x\in\bbR^M.
\]
For any index set $I\subset[N]$ with $|I|=n$, let
$\bZ_{I,F}:=\{(\notationchange{X_i^{(F)}},Y_i): i\in I\}$ denote the minipatch restricted to features $F$,
and define the corresponding fitted predictor
\[
\mu_{I,F}(\cdot)
:= H(\bZ_{I,F})(\cdot),
\]
where $H$ is the base learner. We further define the combinatorial averaged minipatch predictor by
\[
\bar\mu_F(x)
:= \frac{1}{\binom{N}{n}}
\sum_{\substack{I\subset[N]\\ |I|=n}}
\hat{\mu}_{I,F}(x),
\qquad x\in\bbR^M.
\]
When evaluating at a random covariate $X$, we write $\mu_F^*(\notationchange{X^{(F)}})$, $\hat{\mu}_{I,F}(\notationchange{X^{(F)}})$,
and $\bar\mu_F(\notationchange{X^{(F)}})$ accordingly.
\end{definition}

\begin{definition}[LOCO feature-importance targets]
\label{def:Delta_defs}
Fix a feature sampling probability vector $\bq\in[0,1]^M$, and let $Q_{\bq}$ and $Q_{\bq,\no j}$
be as in Definition~\ref{def:prob_Q}. We then define the oracle feature-importance scores and LOCO inference target by

\[
\Delta_j^*(\bq)
=
\bbE\!\left[(Y-\bbE_{F\sim Q_{\bq,\no j}}\mu_F^*(\notationchange{X^{(F)}}))^2
-
(Y-\bbE_{F\sim Q_{\bq}}\mu_F^*(\notationchange{X^{(F)}}))^2\right],
\]
\[
\Delta_j(\bq)
=
\bbE\!\left[(Y-\bbE_{F\sim Q_{\bq,\no j}}\bar\mu_F(\notationchange{X^{(F)}}))^2
-
(Y-\bbE_{F\sim Q_{\bq}}\bar\mu_F(\notationchange{X^{(F)}}))^2\right].
\]
where $\mu_F^*$ and $\bar\mu_F$ are defined in Definition~\ref{def:mu_defs}. When working at iteration $b$, we write $\Delta_j^{*(b)}:=\Delta_j^*(\bq^{(b-1)})$ and
$\Delta_j^{(b)}:=\Delta_j(\bq^{(b-1)})$.
\end{definition}

\paragraph{Additional notations.} Given sets $\mathcal{A}$ and positive integer $n>0$, we denote the uniform distribution over size-$n$ subsets of $\mathcal{A}$ by $\mathcal{U}_n^{\mathcal{A}}$. We summarize additional notations used throughout the proofs in Table~\ref{tab:notations}.

 \begin{table}[!htb]
     \centering
     \begin{tabular}{|c|c|}
     \hline
         $\mu_{I,F}(X;\bZ)$ & $H(\bZ_{I,F})(\notationchange{X^{(F)}})$\\
         \hline
         $\mu^*(X;Q,\bZ)$ & $\frac{1}{\binom{N}{n}}\sum_{I\subset[N], F\subset[M]}\mu_{I,F}(X;\bZ)Q(F)$\\
         \hline
         $h_j(Z, \bZ, Q)$  &  $\ell(Z,\mu^*(\cdot;Q_{\no j},\bZ)) - \ell(Z,\mu^*(\cdot;Q,\bZ))$\\
        \hline
         \multirow{2}{*}{$\hat{h}_j^{(t)}(Z_i,\bZ_{\no i,:})$ }& LOCO-LOO score $\hat{\Delta}^{(t)}_j(Z_i)$ at sample $i$, calculated in step 2(c) \\& of Algorithm~\ref{algo:minipatch_train} with input sampling vector $q^{(t)}$.\\
         \hline
         $\hat{\Delta}^{(t)}_j$& Estimated feature importance given in the $t$-th iteration of Algorithm~\ref{algo:loco_adamp}\\
         \hline
         $\bq^{(t)}\in [0,1]^{M}$& The feature sampling probability vector given in the $t$th iteration of Algorithm~\ref{algo:loco_adamp}\\
         \hline
         $Q^{(t)}:2^{[M]}\rightarrow [0,1]$& The feature sampling distribution function corresponding to $\bq^{(t)}$\\
         \hline
         \multirow{2}{*}{$\bq^{*(t)}\in [0,1]^{M}$}& The feature sampling probability vector in Assumption~\ref{ass:Q_stb},\\
         &close to $\bq^{(t)}$ but independent of $\bZ$\\
         \hline
         $Q^{*(t)}:2^{[M]}\rightarrow [0,1]$& The feature sampling distribution function corresponding to $\bq^{*(t)}$\\
     \hline
     \end{tabular}
     \caption{List of notations for predictors and feature importance functions. For training data $\bZ=(\bX,\bY)$, we let $\bZ_{I,F}$ denote $(\bX_{I,F},\bY_I)$. When the training data $\bZ=(\bX,\bY)$ is the full training set, we omit $\bZ$ in the notations above for simplicity. $\bZ_{\no i,:} = (\bX_{\no i,:}, \bY_{\no i})$, $\bZ_{\no i,\no j} = (\bX_{\no i,\no j}, \bY_{\no i})$.}
     \label{tab:notations}
 \end{table}

\section{Additional Algorithms}
Here, we present the detailed algorithms mentioned in the main paper. Alg.~\ref{algo:minipatch_train} summarizes how we compute the LOCO importance estimates via minipatch ensembles; Alg.~\ref{algo:find_threshold} describes how we solve~\eqref{eq:threshold} in Alg.~\ref{algo:IndepSampWeight}; Alg.~\ref{algo:loco_adamp_stopping} shows how we select the number of iterations $B$ by tracking LOO errors, as mentioned in Section~\ref{sec:tuning}.
\begin{algorithm}[!htbp]
\caption{Minipatch Sampling, Training, and Feature Importance Computation}
\label{algo:minipatch_train}
\noindent\textbf{Input}: Training data $(\boldsymbol{X}, \boldsymbol{Y})$ with observation size $N$ and feature size $M$; expected minipatch size $n$; number of minipatches $K$; feature sampling probabilities $\boldsymbol{q} = (q_1, \dots, q_M)$; base learner $H$.

\begin{enumerate}
    \item For $k = 1$ to $K$:
    \begin{enumerate}
        \item Sample $I_k \subset [N]$ uniformly without replacement with size $|I_k| = n$.
        \item Sample feature set $F_k\subset [M]\sim Q_{\boldsymbol{q}}$.
        \item Train base model: $\mu_k = H(\boldsymbol{X}_{I_k, F_k}, \boldsymbol{Y}_{I_k})$.
    \end{enumerate}
    \item For each data point $i \in [N]$ and feature $j \in [M]$:
    \begin{enumerate}
        \item Compute LOO prediction:
        \[
        \mu_{-i}(X_i) = \frac{1}{\sum_{k=1}^K \mathbb{I}(i \notin I_k)} \sum_{k=1}^K \mathbb{I}(i \notin I_k) \hat{\mu}_k(X_i)
        \]
        \item Compute LOO+LOCO prediction:
        \[
        \mu_{-i}^{-j}(X_i) = \frac{1}{\sum_{k=1}^K \mathbb{I}(i \notin I_k)\mathbb{I}(j \notin F_k)} \sum_{k=1}^K \mathbb{I}(i \notin I_k)\mathbb{I}(j \notin F_k) \hat{\mu}_k(X_i)
        \]
        \item Compute difference in prediction error:
        \[
        \hat{\Delta}_{j}(X_i,Y_i) = \mathrm{Error}(Y_i, \mu_{-i}^{-j}(X_i)) - \mathrm{Error}(Y_i, \mu_{-i}(X_i))
        \]
    \end{enumerate}
    \item Average across $i$ to obtain final score for each feature:
    \[
    \hat{\Delta}_j = \frac{1}{N} \sum_{i=1}^N \hat{\Delta}_{j}(X_i,Y_i), \quad \text{for } j = 1, \dots, M.
    \]
\end{enumerate}

\noindent\textbf{Output}: Feature importance scores $\{\hat{\Delta}_j\}_{j=1}^M$.
\end{algorithm}

\begin{algorithm}[!htbp]
\caption{Solving \eqref{eq:threshold} for the Threshold}
\label{algo:find_threshold}
\noindent\textbf{Input}: Feature weights $\{\tilde{\Delta}_1, \dots, \tilde{\Delta}_M\}$; minipatch feature size parameter $m$; probability upper bound $\delta \in (0, 1)$.
\begin{enumerate}
    \item Sort $\{\tilde{\Delta}_1, \dots, \tilde{\Delta}_M\}$ in decreasing order: $\tilde{\Delta}_{(1)}\geq \tilde{\Delta}_{(2)}\geq \cdots \geq \tilde{\Delta}_{(M)}$.
    \item Find smallest index $0\leq k <\frac{m}{\delta}$ such that:
    \begin{equation}\label{eq:find_threshold}
        \sum_{\ell > k} \tilde{\Delta}_{(\ell)} > \left(\frac{m}{\delta} - k\right)\tilde{\Delta}_{(k+1)}.
    \end{equation}
    \item  Let $t^* := \frac{\sum_{\ell > k} \tilde{\Delta}_{(\ell)}}{m/\delta - k} \wedge \tilde{\Delta}_{(1)}$.
\end{enumerate}
\noindent\textbf{Output}: Threshold $t^*$.
\end{algorithm}

\begin{algorithm}[!htbp]
\caption{Adaptive Minipatch Feature Selection with Data-driven Selection of Iterations}
\label{algo:loco_adamp_stopping}
\noindent\textbf{Input}: Training data $(\boldsymbol{X}, \boldsymbol{Y})$, minipatch sizes $n$, $m$; maximum number of iterations $B$; number of minipatches per iteration $K$; base learner $H$; adaptivity parameter $\delta \in (0,1)$.

\medskip
\noindent\textbf{Initialization}: Let $M$ be the number of features. Initialize sampling probabilities $\boldsymbol{q}^{(0)} = (m/M, \dots, m/M)$, $b^*=B-1$.

\begin{enumerate}
    \item \textbf{Adaptive Minipatch Loop:} For $b = 1$ to $B$,
    \begin{enumerate}
        \item \textbf{Compute Feature Importance Scores:} \\
        Call Algorithm~\ref{algo:minipatch_train} with input $(\boldsymbol{X}, \boldsymbol{Y}, n, K, \boldsymbol{q}^{(b-1)}, H)$ to obtain $\{\hat{\Delta}_j^{(b)}\}_{j=1}^M$.

        \item \textbf{Update Sampling Probabilities:} \\
        Call Algorithm~\ref{algo:IndepSampWeight} with input $(\hat{\Delta}_1^{(b)}, \dots, \hat{\Delta}_M^{(b)}, m, \delta)$ to obtain updated sampling probability vector $\boldsymbol{q}^{(b)}$.

        \item \textbf{Check Stopping Criterion:} \\
        If $b>1$, let $\ell_i^{(b)} := \mathrm{Error}(Y_i,\mu_{-i}^{(b)}(X_i))$ denote the LOO loss for observation $i$ at iteration $b$ where $\mu_{-i}^{(b)}(X_i)$ is the iteration-b LOO prediction, and define paired differences
        $$d_i^{(b)} := \ell_i^{(b)} - \ell_i^{(b-1)},\qquad i=1,\dots,N.$$
        Compute the one-sided test statistic
        $$T_b := \frac{\bar d^{(b)}}{s_d^{(b)}/\sqrt{N}},\qquad\bar d^{(b)}=\frac{1}{N}\sum_{i=1}^N d_i^{(b)},\quad(s_d^{(b)})^2=\frac{1}{N-1}\sum_{i=1}^N\big(d_i^{(b)}-\bar d^{(b)}\big)^2.$$
        If $T_b \ge z_{0.025}$ (2.5\% lower-tail quantile of normal distribution), stop and set $b^{\star} = \max\{b-2,\,1\}$; break the loop.
    \end{enumerate}

    \item \textbf{Output Selected Feature Set:} \\
    Return $\hat{S} = \{\,j : q_j^{(b^{\star})} > \frac{1}{2}\delta\}$.
\end{enumerate}

\noindent\textbf{Output}: Feature subset $\hat{S}$.
\end{algorithm}

\section{Property of Algorithm~\ref{algo:find_threshold}}
\begin{prop}\label{prop:sampling_property}
   If $m < \delta M$, then the solution of \eqref{eq:threshold} exists and can be found by Alg. \ref{algo:find_threshold} in the Appendix. We further have the following two consequences: 
    \begin{enumerate}
        \item the feature sampling probability vectors $\bq^{(b)},\,0\leq b\leq B$ in Alg. \ref{algo:loco_adamp} satisfy $\boldsymbol{q}^{(b)}\in[0, \delta]^{M}$, with $\sum_{j=1}^Mq^{(b)}_j=m$;
        \item if $F_k\sim Q_{\boldsymbol{q}^{(b)}}$, its expected size satisfies $\bbE|F_k| \in (m, \frac{m}{1-e^{-m}})$.
    \end{enumerate}
\end{prop}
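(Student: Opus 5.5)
The plan is to study the map $g(t)=\frac{m\max_l\omega_l(t)}{\sum_l\omega_l(t)}$ on $(0,\tilde\Delta_{(1)}]$. Since all $\tilde\Delta_j\ge c_0/M>0$, for $t\le \tilde\Delta_{(M)}$ every weight equals $t$, so $g(t)=m/M<\delta$. The feasible set in \eqref{eq:threshold} is therefore nonempty and $t^*$ exists. For $t$ in the feasible range we have $\max_l\omega_l(t)=t$, so feasibility reads $t\le \frac{\delta}{m}\sum_l(\tilde\Delta_l\wedge t)$. The function $\phi(t)=\frac{\delta}{m}\sum_l(\tilde\Delta_l\wedge t)-t$ is continuous and concave (a sum of concave pieces minus a linear term), and $\phi(t)>0$ for small $t$ because $\delta M/m>1$. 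Hence $\{t:\phi(t)\ge0\}\cap(0,\tilde\Delta_{(1)}]$ is an interval $(0,t^*]$. Either $\phi(\tilde\Delta_{(1)})\ge0$, giving $t^*=\tilde\Delta_{(1)}$, or $t^*$ is the unique root of $\phi$ in $(0,\tilde\Delta_{(1)})$.

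To match Alg.~\ref{algo:find_threshold}, I locate the root on the piece where $\tilde\Delta_{(k+1)}\le t\le\tilde\Delta_{(k)}$. On that piece $\sum_l(\tilde\Delta_l\wedge t)=kt+\sum_{\ell>k}\tilde\Delta_{(\ell)}$, so $\phi(t)=0$ gives $t=\sum_{\ell>k}\tilde\Delta_{(\ell)}/(m/\delta-k)$, which requires $k<m/\delta$. The condition \eqref{eq:find_threshold} says exactly that $\phi(\tilde\Delta_{(k+1)})>0$, i.e.\ the candidate exceeds $\tilde\Delta_{(k+1)}$. Choosing the smallest such $k$ guarantees that $\phi\le 0$ at $\tilde\Delta_{(k)}$, so the candidate lies in the correct piece; if it overshoots $\tilde\Delta_{(1)}$, the $\wedge\tilde\Delta_{(1)}$ handles the boundary case. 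One also needs that such a $k<m/\delta$ exists. At $k=0$ the inequality fails only if $\sum_l\tilde\Delta_l\le (m/\delta)\tilde\Delta_{(1)}$, and the failures propagate monotonically through the index. Because $M>m/\delta$, at $k=\lceil m/\delta\rceil-1$ the right side coefficient $m/\delta-k\in(0,1]$ while the left side contains at least $\tilde\Delta_{(k+1)}$ plus positive terms. This bookkeeping is the most fiddly step, and I would verify it carefully, including ties and the case of integer $m/\delta$.

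For consequence 1, $q_j=m\omega_j(t^*)/\sum\omega_k(t^*)$ sums to $m$ by construction. Nonnegativity is clear, and $\max_j q_j=g(t^*)\le\delta$ by feasibility together with continuity of $\phi$, which makes the supremum attained. At $b=0$, $q=m/M<\delta$ holds trivially.

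For consequence 2, write $\mathbb E|F_k|=\sum_jq_j/(1-\prod_j(1-q_j))=m/(1-P_0)$, where $P_0=\prod_j(1-q_j)\in(0,1)$ since $q_j\le\delta<1$. This gives $\mathbb E|F_k|>m$. For the upper bound, $1-q_j\le e^{-q_j}$ gives $P_0\le e^{-m}$, with strict inequality because some $q_j>0$. Therefore $\mathbb E|F_k|<m/(1-e^{-m})$.
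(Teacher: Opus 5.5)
Your proposal is correct and follows essentially the same route as the paper. Your concave $\phi(t)=t\big(\tfrac{\delta}{m}g(t)-1\big)$ is a rescaling of the paper's non-increasing ratio $g(t)=\sum_l(\tilde\Delta_l\wedge t)/(\tilde\Delta_{(1)}\wedge t)$. Matching $t^*$ to the minimal index in \eqref{eq:find_threshold} is the same piecewise-linear bookkeeping run in the opposite direction: the paper defines $k^*$ from $t^*$ and checks minimality, while you start from the minimal index and locate the root. Both consequences are argued as in the paper.

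One small fix in your existence step: when $M-1<m/\delta<M$, the index $k=\lceil m/\delta\rceil-1$ equals $M-1$, so the left side has no extra positive terms. In that case, however, $m/\delta-k<1$ strictly, so the strict inequality still holds.
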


\begin{proof}[Proof of Proposition \ref{prop:sampling_property}]
We first note that $t=\tilde{\Delta}_{(M)}\geq \frac{c_0}{M}$ satisfies 
$$\frac{m\max_l\omega_l(t)}{\sum_{l=1}^M\omega_l(t)} = \frac{m}{M} \leq \delta.$$
Therefore, $\left\{0<t\leq \max_l\tilde{\Delta}_l:\frac{m\max_l\omega_l(t)}{\sum_{l=1}^M\omega_l(t)}\leq \delta\right\}$ is a non-empty, bounded set, with a supremum $t^*\geq \tilde{\Delta}_{(M)}$. In the following, we show that $t^*$ can be exactly found by Algorithm \ref{algo:find_threshold}. 

Define function $g(t) := \sum_{l=1}^M \frac{\tilde{\Delta}_l \wedge t}{\tilde{\Delta}_{(1)}\wedge t}$. One can rewrite $t^*$ as 
$$t^* = \sup\left\{0<t\leq \tilde{\Delta}_{(1)}:g(t)\geq \frac{m}{\delta}\right\}.$$ 
Note that $g(t)$ is a continuous, non-increasing function on $(0,\infty)$. In particular, $g(t)$ is strictly decreasing on $[\tilde{\Delta}_{(M)}, \tilde{\Delta}_{(1)}]$. We then have that either (i) $g(\tilde{\Delta}_{(1)})> \frac{m}{\delta}$ and $t^* = \tilde{\Delta}_{(1)}$, or (ii) $g(t^*) = \frac{m}{\delta}$ with $t^* \leq \tilde{\Delta}_{(1)}$. If case (i) is true, we have $\sum_{l=1}^M\tilde{\Delta}_l > \frac{m}{\delta}\tilde{\Delta}_{(1)}$, implying \eqref{eq:find_threshold} with $k=0$. Hence, the output threshold of Algorithm \ref{algo:find_threshold} is $\big(\frac{\delta}{m}\sum_{l}\tilde{\Delta}_l\big)\wedge \tilde{\Delta}_{(1)} = \tilde{\Delta}_{(1)} = t^*$. 

Otherwise, case (ii) is true, $g(\tilde{\Delta}_{(1)}) \leq \frac{m}{\delta}$, and $t^*\in [\tilde{\Delta}_{(M)}, \tilde{\Delta}_{(1)}]$. Hence, $k=0$ does not satisfy \eqref{eq:find_threshold}. Define $k^* = \max\{k:1\leq k\leq M, \tilde{\Delta}_{(k)}\geq t^*\}$; note that $t^*\in [\tilde{\Delta}_{(M)}, \tilde{\Delta}_{(1)}]$ already implies that $k^*$ is well-defined. For any $0\leq k\leq k^*-1$, $\tilde{\Delta}_{(k+1)}\geq \tilde{\Delta}_{(k^*)}\geq t^*$. The decreasing property of $g(\cdot)$ suggests that 
\begin{equation*}
    \begin{split}
        g(\tilde{\Delta}_{(k+1)}) \leq g(t^*) =\frac{m}{\delta},\\
        k+\sum_{l>k}\frac{\tilde{\Delta}_{(l)}}{\tilde{\Delta}_{(k+1)}}\leq \frac{m}{\delta},\\
        \sum_{l>k}\tilde{\Delta}_{(l)}\leq \Big(\frac{m}{\delta}-k\Big)\tilde{\Delta}_{(k+1)}.
    \end{split}
\end{equation*}
Furthermore, since $\frac{m}{\delta}\geq g(\tilde{\Delta}_{(k^*)}) = k^*+\sum_{l>k^*}\frac{\tilde{\Delta}_{(l)}}{\tilde{\Delta}_{(k^*)}}>k^*$, we know that $k^*<\frac{m}{\delta}\leq M$. Therefore, $k^*+1\leq M$ and $\tilde{\Delta}_{(k^*+1)}< t^*$ by the definition of $t^*$. Applying the decreasing property of $g(\cdot)$ again, we have
\begin{equation*}
    \begin{split}
        g(\tilde{\Delta}_{(k^*+1)})>g(t^*)=\frac{m}{\delta},\\
        k^*+\sum_{l>k^*}\frac{\tilde{\Delta}_{(l)}}{\tilde{\Delta}_{(k^*+1)}}>\frac{m}{\delta},\\
        \sum_{l>k^*}\tilde{\Delta}_{(l)}>\Big(\frac{m}{\delta}-k^*\Big)\tilde{\Delta}_{(k^*+1)}.
    \end{split}
\end{equation*}
Therefore, in case (ii), $k^*$ is the minimum integer satisfying \eqref{eq:find_threshold}. By the definition of $k^*$, we have that $\tilde{\Delta}_{(k^*+1)}<t^*\leq \tilde{\Delta}_{(k^*)}$, implying that $\frac{m}{\delta}= g(t^*)=k^* +\frac{\sum_{l>k^*}\tilde{\Delta}_{(l)}}{t^*}$, which further yields $t^* = \frac{\sum_{l>k^*}\tilde{\Delta}_{(l)}}{\frac{m}{\delta}-k^*}\leq \tilde{\Delta}_{(1)}$. Therefore, Algorithm \ref{algo:find_threshold} gives the exact solution for \eqref{eq:threshold} also in case (ii).

It remains to verify the two stated consequences. By the definition of the sampling weights,
$$
q_j^{(b)}
=\frac{m\omega_j(t^*)}{\sum_{\ell=1}^M\omega_\ell(t^*)},
\qquad j\in[M].
$$
Hence, $\sum_{j=1}^M q_j^{(b)}=m$, and by the definition of $t^*$,
$$
\max_{j\in[M]}q_j^{(b)}
=\frac{m\max_j\omega_j(t^*)}{\sum_{\ell=1}^M\omega_\ell(t^*)}
\leq \delta.
$$
Thus, $\bq^{(b)}\in[0,\delta]^M$ and $\sum_{j=1}^M q_j^{(b)}=m$. The same properties hold for the initialization $\bq^{(0)}=(m/M,\ldots,m/M)$ since $m/M\leq\delta$.

Next, for $F_k\sim Q_{\bq^{(b)}}$, Definition~\ref{def:prob_Q} gives
$$
\bbE|F_k|
=\frac{\sum_{j=1}^M q_j^{(b)}}{1-\prod_{j=1}^M(1-q_j^{(b)})}
=\frac{m}{1-\prod_{j=1}^M(1-q_j^{(b)})}.
$$
Since $q_j^{(b)}\leq\delta<1$ for all $j$ and $\sum_jq_j^{(b)}=m>0$,
$$
0<\prod_{j=1}^M(1-q_j^{(b)})
<\exp\left\{-\sum_{j=1}^M q_j^{(b)}\right\}=e^{-m},
$$
where we use $1-x\leq e^{-x}$, with strict inequality for at least one $q_j^{(b)}>0$. Therefore,
$$
    m<\bbE|F_k|<\frac{m}{1-e^{-m}},
$$
which proves both claims.
\end{proof}

\section{Proofs of Theorem 1 and Its Auxiliary Lemmas}

\begin{proof}[Proof of Theorem \ref{thm:main1}]
We prove the two inclusions
\[
S \subseteq \hat S
\qquad\text{and}\qquad
\hat S \subseteq S
\]
separately.

\paragraph{No false negatives.}
Our goal is to show that every signal feature $j\in S$ satisfies
$q_j^{(B)} > \frac{1}{2}\delta$, which implies $j\in \hat S$. To establish this, we use the following growth property.

\begin{lem}\label{lem:q_growth}
    Suppose Assumptions~\ref{ass:minipatch_feature_scaling}--\ref{ass:min_sig} hold. Then 
\begin{equation}
\label{eq:growth_factor_C_in_thm1_proof}
q_j^{(b)} \geq \big(1.5\,q_j^{(b-1)}\big)\wedge \delta,
\qquad \forall j\in S,\ \forall\,1\le b\le B.
\end{equation}
\end{lem}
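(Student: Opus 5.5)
The plan is to prove Lemma~\ref{lem:q_growth} in three steps: compute the oracle LOCO scores under the additive model, lower-bound the updated probability of a signal feature through the structure of Alg.~\ref{algo:IndepSampWeight}, and then use Assumption~\ref{ass:min_sig} to turn that bound into the factor $1.5$.

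\textbf{Step 1: oracle scores.} With independent features, $\mu_F^*(X^{(F)})=f_0+\sum_{k\in F\cap S}f_k(X^{(k)})$. Write $\pi_k(Q)=\Pr_{F\sim Q}(k\in F)$. Then
\[
\bbE_{F\sim Q}\mu_F^*=f_0+\sum_{k\in S}\pi_k(Q)f_k .
\]
The centered components $f_k$ are mutually orthogonal and orthogonal to $\varepsilon$. Hence
\[
\Delta_j^{*}=\sum_{k\in S}\Big[\big(1-\pi_k(Q_{\no j})\big)^2-\big(1-\pi_k(Q)\big)^2\Big]\|f_k\|_2^2 .
\]
Under Bernoulli sampling conditioned on a nonempty set, $\pi_k(Q)=q_k/(1-P_0)$ with $P_0=\prod_l(1-q_l)\le e^{-m}$. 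Also $\pi_j(Q_{\no j})=0$, and $\pi_k(Q_{\no j})$ differs from $\pi_k(Q)$ only through the nonemptiness conditioning, by an amount of order $e^{-m}$. Since $m\ge C_1\log M$, this is at most $M^{-C_1}$. This gives two facts:
\[
\Delta_j^{*}\ \ge\ q_j(2-q_j)\|f_j\|_2^2-O(M^{-C_1})\textstyle\sum_k\|f_k\|_2^2 \qquad (j\in S),
\]
and $|\Delta_l^{*}|=O(M^{-C_1})\sum_k\|f_k\|_2^2$ for $l\notin S$. Because of the $O(M^{-C_1})$ terms, $\min_l\Delta_l^{*}$ is itself $O(M^{-C_1})\sum_k\|f_k\|_2^2$. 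Assumption~\ref{ass:signal_strength_ratio}, with $q_j\ge m/M$ (which holds inductively, or at least $q_j^{(0)}=m/M$), shows these remainders are a small fraction of $q_j\|f_j\|_2^2$. This step needs careful but routine bookkeeping.

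\textbf{Step 2: lower bound \eqref{eq:proof_sketch_lwrbnd2}.} Fix $j\in S$. If $\til\Delta_j\ge t^*$, then $q_j=\max_l q_l$. When the cap binds this equals $\delta$; if the cap does not bind, $t^*=\max_l\til\Delta_l$, no truncation occurs, and the argument below applies directly. Otherwise $\omega_j=\til\Delta_j$ and $q_j=m\til\Delta_j/\sum_l\omega_l$.

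Partition $\sum_l\omega_l$ into the signal part and the noise part. By the defining property of $t^*$, each term $\omega_k$ satisfies $m\omega_k/\sum_l\omega_l=q_k\le\delta$, so the signal part is at most $s\delta\sum_l\omega_l/m$. Rearranging gives
\[
\sum_l\omega_l\le \frac{m}{m-s\delta}\sum_{l\notin S}\til\Delta_l ,
\]
which is exactly \eqref{eq:proof_sketch_lwrbnd2}. The denominator is positive because $m>s\delta$ by Assumption~\ref{ass:minipatch_feature_scaling}.

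\textbf{Step 3: plug in errors.} Write $\til\Delta_l=(\Delta_l^*-\min\Delta^*)+\varepsilon_l+c_0/M$.

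For the signal feature, Step 1 and the first part of Assumption~\ref{ass:min_sig} give $|\varepsilon_j|<q_j\|f_j\|^2/6$. With $2-q_j\ge 2-\delta>1$, this yields roughly $\til\Delta_j\ge \tfrac{5}{6}q_j^{(b-1)}\|f_j\|_2^2$ after absorbing remainders.

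For the noise features,
\[
\sum_{l\notin S}\til\Delta_l\le \|\varepsilon_{S^c}\|_1+c_0+M\cdot O(M^{-C_1})\textstyle\sum_k\|f_k\|_2^2 .
\]
Assumption~\ref{ass:min_sig} bounds the first two terms each by $(m-s\delta)\|f_j\|_2^2/24$. Assumption~\ref{ass:signal_strength_ratio} with small $c_1$ makes the third term comparable or smaller.

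Combining, $q_j^{(b)}\ge \frac{(5/6)q_j^{(b-1)}\|f_j\|^2(m-s\delta)}{(m-s\delta)\|f_j\|^2/8}\wedge\delta$, which is well above $1.5\,q_j^{(b-1)}$.

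\textbf{Main obstacle.} The delicate part is Step 1's uniform control of the nonemptiness-conditioning and $\min_l\Delta_l^*$ remainders. The $c_0/M$ shift only needs absorption into the noise sum.
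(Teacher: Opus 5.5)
Your proposal is correct and follows the same route as the paper: the inclusion-probability formula for the oracle LOCO scores (Lemma~\ref{lem:Delta_oracle}), the capped-normalization bound $q_j^{(b)}\ge \frac{\til{\Delta}_j^{(b)}(m-s\delta)}{\sum_{l\in S^c}\til{\Delta}_l^{(b)}}\wedge\delta$, and then Assumption~\ref{ass:min_sig} applied to the numerator and the denominator.

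The differences are technical only. First, you obtain that bound directly, by bounding the signal share of $\sum_l\omega_l$ by $s\delta/m$ via $q_k\le\delta$; this is cleaner than the paper's order-statistic Lemma~\ref{prop:q_lwrbnd}. Second, you drop the factor $q_j$ from the cross-term remainder. The paper keeps it: its bound is $\frac{2e^{-(m-1)}}{1-e^{-(m-1)}}\frac{q_j}{1-\prod_l(1-q_l)}\sum_{k\in S}q_k\|f_k\|_2^2$, and it also shows $\Delta_l^{\star}\le 0$ on $S^c$, so $\Delta^{\star}_{\min}\le 0$ can simply be discarded. Because you lose that factor, you genuinely need $q_j^{(b-1)}\ge m/M$, which comes from induction over the earlier iterations. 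That induction is valid, but you should state it explicitly rather than hedge with ``or at least $q_j^{(0)}=m/M$''.
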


Iterating
\eqref{eq:growth_factor_C_in_thm1_proof} yields that $\forall j\in S$,
\[
q_j^{(B)}\geq 
\big(1.5^B q_j^{(0)}\big)\wedge \delta= \Big(1.5^B \frac{m}{M}\Big)\wedge \delta,
\]
where we utilized the fact that the initial sampling probability $q_k^{(0)} = \frac{m}{M}$ for all feature $k$. Hence, as long as $B> (\log(1.5))^{-1}\log\left(\frac{\delta M}{2m}\right)$, we have $q_j^{(B)}> \frac{\delta M}{2m}\frac{m}{M}=\frac{1}{2}\delta$. Therefore, by the definition of $\hat{S}$ in Algorithm~\ref{algo:loco_adamp}, as long as we set the constant $C$ in Theorem~\ref{thm:main1} as $(\log(1.5))^{-1}$, we have $j\in \hat{S}$. Since the arguments above holds for all $j\in S$, this implies that $S \subseteq \hat{S}$.

\paragraph{No false positives.}
We next show that no noise feature is selected, namely,
\[
\hat S \subseteq S.
\]

For this part, we use the following lemma.

\begin{lem}\label{lem:noise_prob_bound}
Suppose Assumptions~\ref{ass:minipatch_feature_scaling},  \ref{ass:signal_strength_ratio}, and \ref{ass:noise_err} hold.
For any $j\in S^c$ and $1\le b\le B$, $q_j^{(b)}\le
\frac{1}{2}\delta$.
\end{lem}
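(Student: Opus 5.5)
Fix $b$ and $j\in S^c$. The plan is to bound $q_j^{(b)}=m\,\omega_j(t^*)/\sum_l\omega_l(t^*)$ from above: first bound the numerator $\omega_j(t^*)\le\tilde\Delta_j^{(b)}$, then bound the denominator from below using the shift $c_0/M$ added to every feature. Since each $\omega_l(t^*)\ge\tilde\Delta_{(M)}\wedge t^*\ge c_0/M$ (recall $t^*\ge\tilde\Delta_{(M)}$ from Proposition~\ref{prop:sampling_property}), we have $\sum_l\omega_l(t^*)\ge c_0$. This is only useful if $\tilde\Delta_j^{(b)}$ is of order $c_0/M$ times a constant, i.e. if noise features have tiny shifted scores.

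\textbf{Numerator.} Write $\tilde\Delta_j^{(b)}=(\Delta_j^{\star(b)}-\min_l\Delta_l^{\star(b)})+\varepsilon_j^{(b)}+c_0/M$, and bound the oracle contrast for a noise feature. Under the additive model with independent features, $\mu_F^*(X^{(F)})=f_0+\sum_{k\in F\cap S}f_k(X^{(k)})$. For $j\notin S$, the mixtures over $Q_{\bq}$ and $Q_{\bq,\no j}$ differ only through the law of $F\cap S$, which changes only via the nonempty-set conditioning in Definition~\ref{def:prob_Q}. We therefore compute $\Delta_j^{\star(b)}$ explicitly as a squared-loss difference:
\[
\Delta_j^{\star}=\sum_{k\in S}\|f_k\|_2^2\big[(1-P_{\no j}(k\in F))^2-(1-P(k\in F))^2\big].
\]
The conditioning changes $P(k\in F)$ by at most a factor involving $\prod_l(1-q_l)\le e^{-m}\le M^{-C_1}$, using $m\ge C_1\log M$ from Assumption~\ref{ass:minipatch_feature_scaling}. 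Hence $|\Delta_j^{\star(b)}|\lesssim M^{-C_1}\sum_k\|f_k\|_2^2$ for $j\in S^c$. For signal features $\Delta^{\star}$ is nonnegative up to the same error, so $\min_l\Delta_l^{\star(b)}\ge -c\,M^{-C_1}\sum_k\|f_k\|_2^2$. Altogether,
\[
\tilde\Delta_j^{(b)}\le \|\varepsilon_{S^c}^{(b)}\|_\infty+10M^{-C_1}\sum_{k}\|f_k\|_2^2+c_0/M,
\]
with the constant $10$ absorbing these terms.

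\textbf{Conclusion.} Combining the two bounds,
\[
q_j^{(b)}\le \frac{m}{c_0}\Big(\|\varepsilon^{(b)}_{S^c}\|_\infty+10M^{-C_1}\textstyle\sum_k\|f_k\|_2^2\Big)+\frac{m}{M}.
\]
By Assumption~\ref{ass:noise_err}, the first term is below $\delta/4$. By Assumption~\ref{ass:minipatch_feature_scaling}, $m/M<\delta/4$. Together these give $q_j^{(b)}\le\delta/2$.

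\textbf{Main obstacle.} The delicate step is the oracle computation bounding $|\Delta_j^{\star}|$ for noise $j$ and $\min_l\Delta^\star_l$ from below, because of the conditioning on $F\ne\emptyset$. I would handle it with the exact formula $P(k\in F)=q_k/(1-\prod_l(1-q_l))$ and its analogue under exclusion of $j$. Assumption~\ref{ass:signal_strength_ratio} may be needed only to ensure a signal feature does not attain the minimum with a large negative value; if the crude bound suffices, it serves merely as bookkeeping.
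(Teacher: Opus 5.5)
Your proof is correct and follows the paper's argument essentially step for step. You lower-bound the denominator by $\sum_l(\tilde\Delta_l^{(b)}\wedge t^{\star})\ge c_0$ via the $c_0/M$ shift, which gives $q_j^{(b)}\le \frac{m}{c_0}\tilde\Delta_j^{(b)}$. You then bound $\tilde\Delta_j^{(b)}$ by $10M^{-C_1}\sum_{k}\|f_k\|_2^2+|\varepsilon_j^{(b)}|+c_0/M$ and split the result into $\delta/4+\delta/4$ using Assumptions~\ref{ass:noise_err} and~\ref{ass:minipatch_feature_scaling}.

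One bookkeeping point concerns the constant. The constant $10$ only comes out if you use the sign $\Delta_j^{\star(b)}\le 0$ for $j\in S^c$, which holds because conditioning on $j\notin F$ can only raise $\bbP(k\in F)$. This gives $\Delta_j^{\star(b)}-\Delta_{\min}^{\star(b)}\le-\Delta_{\min}^{\star(b)}$. Bounding $|\Delta_j^{\star(b)}|$ and $|\Delta_{\min}^{\star(b)}|$ separately would double the constant and break the $\delta/4+\delta/4$ split.

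Your remark that Assumption~\ref{ass:signal_strength_ratio} is not really needed here is right, since signal features obey the same $-10M^{-C_1}\sum_k\|f_k\|_2^2$ lower bound.
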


\medskip

Applying Lemma~\ref{lem:noise_prob_bound} at iteration $b=B$, we obtain
\[
q_j^{(B)} \le \frac{1}{2}\delta,
\qquad \forall j\in S^c,
\]
Therefore no noise feature can cross the selection threshold, and hence
\[
\hat S \subseteq S.
\]

Combining the no-false-negative and no-false-positive parts, we conclude that
\[
\hat S = S
\]
This completes the proof.
\end{proof}

\subsection{Proof of Auxiliary Lemmas Needed for Theorem~\ref{thm:main1}}

\begin{proof}[Proof of Lemma \ref{lem:q_growth}]
For any iteration $1\leq b\leq B$, we have the following lemma that lower bounds the sampling probability of each feature:
\begin{lem}\label{prop:q_lwrbnd}
    Suppose that $m\leq \delta M$. For any feature $j$, the output feature sampling probability of Algorithm \ref{algo:IndepSampWeight} satisfies $q_j \geq \frac{\til{\Delta}_j(m-k\delta)}{\sum_{l>k}\til{\Delta}_{(l)}}\wedge \delta$ for any $0\leq k <\|\til{\Delta}\|_0$.
\end{lem}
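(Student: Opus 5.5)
The plan is to work directly from the closed form $q_j=\frac{m\,\omega_j(t^*)}{W}$, where $W:=\sum_{l=1}^M\omega_l(t^*)$. The idea is to upper bound the normalizer $W$ by a quantity involving only the tail sum $\sum_{l>k}\til{\Delta}_{(l)}$. Since $\til{\Delta}_l\ge c_0/M>0$ for all $l$, every tail sum with $k<\|\til{\Delta}\|_0$ is strictly positive, so the right-hand side is well defined. I also dispose of the trivial regime first: if $m\le k\delta$, the first term of the minimum is nonpositive and the claim follows from $q_j\ge 0$. Hence I will assume $m>k\delta$.

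\emph{Step 1 (bounding $W$ via the cap).} The top $k$ weights each satisfy $\omega_{(l)}(t^*)\le t^*$, and every remaining weight satisfies $\omega_l(t^*)\le\til{\Delta}_l$. This gives
\[
W\le k\,t^*+\sum_{l>k}\til{\Delta}_{(l)}.
\]
Next I will use that $t^*$ is feasible in \eqref{eq:threshold}, which Proposition~\ref{prop:sampling_property} guarantees because the supremum is attained. Feasibility gives $\max_l\omega_l(t^*)\le \delta W/m$. Moreover $\max_l\omega_l(t^*)=t^*$, because $t^*\le\til{\Delta}_{(1)}$. Substituting $t^*\le \delta W/m$ yields $W(1-k\delta/m)\le\sum_{l>k}\til{\Delta}_{(l)}$, that is,
\[
W\le \frac{m\sum_{l>k}\til{\Delta}_{(l)}}{m-k\delta}.
\]

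\emph{Step 2 (case split on whether $j$ is capped).} If $\til{\Delta}_j\le t^*$, then $\omega_j(t^*)=\til{\Delta}_j$. Step 1 then directly gives
\[
q_j=\frac{m\til{\Delta}_j}{W}\ge \frac{\til{\Delta}_j(m-k\delta)}{\sum_{l>k}\til{\Delta}_{(l)}}.
\]
If instead $\til{\Delta}_j>t^*$, then $t^*<\til{\Delta}_{(1)}$, so we are in case (ii) of the proof of Proposition~\ref{prop:sampling_property}. There $g(t^*)=m/\delta$, which means $m\,t^*/W=\delta$. Since $\omega_j(t^*)=t^*$ in this situation, we get $q_j=\delta$. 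In both cases $q_j$ is at least the stated minimum.

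The only delicate point is the second case. There I must use that the constraint in \eqref{eq:threshold} is \emph{active} at $t^*$ whenever capping actually occurs; this is exactly case (ii) of Proposition~\ref{prop:sampling_property}, which I take as given. The rest of the argument is elementary algebra.
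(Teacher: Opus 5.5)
Your proof is correct and follows essentially the paper's route. Both rest on the bound $\sum_l \omega_l(t^*)\le k\,t^* + \sum_{l>k}\tilde{\Delta}_{(l)}$ combined with feasibility of $t^*$, and both use that the constraint is active (case (ii) of Proposition~\ref{prop:sampling_property}) to get $q_j=\delta$ for capped features. The differences are organizational. You split on whether feature $j$ itself is capped rather than on whether the global constraint is active, so a single bound on the normalizer covers both the paper's case (i) and the uncapped features of its case (ii). You also explicitly dispose of the $m\le k\delta$ regime and note that the zero-entry scenario is vacuous, points the paper either treats separately or leaves implicit.
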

Since $\|\til{\Delta}\|_0=M$, we can simply invoke Lemma~\ref{prop:q_lwrbnd} with $k=s$ for the $b$th iteration. We then have
\begin{equation*}
    q_j^{(b)}\geq \frac{\til{\Delta}_j^{(b)}(m-s\delta)}{\sum_{l>s}\til{\Delta}^{(b)}_{(l)}}\wedge \delta \geq \frac{\til{\Delta}^{(b)}_j(m-s\delta)}{\sum_{l\in S^c}\til{\Delta}^{(b)}_{l}}\wedge \delta,
\end{equation*}
where the second inequality is due to the fact that $\sum_{l\in S^c}\til{\Delta}^{(b)}_{l}\geq \sum_{l>s}\til{\Delta}_{(l)}.$ Therefore, it suffices to show that $\frac{\til{\Delta}^{(b)}_j(m-s\delta)}{\sum_{l\in S^c}\til{\Delta}^{(b)}_{l}}\geq 1.5q_j^{(b-1)}$.

We note that for any $1\leq k\leq M$, the estimated and shifted feature importance score satisfies $\til{\Delta}^{(b)}_k = \Delta_k^{\star(b)} - \Delta_{\min}^{\star(b)} + \frac{c_0}{M} +\varepsilon_k^{(b)}$. The following lemma provides lower and upper bounds for the population feature importance $\Delta_k^{\star(b)}$ when $k\in S$ and $k\in S^c$ separately.
\begin{lem}\label{lem:Delta_oracle}
   Suppose Assumptions~\ref{ass:minipatch_feature_scaling}
and~\ref{ass:signal_strength_ratio} hold. For any $k\in S$,
\begin{equation}
\label{eq:delta_oracle_signal_bound}
\Delta_k^{\star(b)}\geq \frac{(1-2e^{-m})q_k^{(b-1)}}{2\,(1-e^{-m})}\|f_k\|_2^2.
\end{equation}

For any $k\in S^c$,
\begin{equation}
\label{eq:delta_oracle_noise_bound}
-\frac{2e^{-(m-1)}}{1-e^{-(m-1)}}\frac{q_k^{(b-1)}}{1-\prod_l(1-q_l^{(b-1)})}\sum_{l\in S}q_l^{(b-1)}\|f_l\|_2^2\leq \Delta_k^{\star(b)}\leq 0.
\end{equation}
\end{lem}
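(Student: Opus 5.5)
The plan is to compute $\Delta_k^{\star(b)}$ in closed form under the additive model with independent features. Write $\bq=\bq^{(b-1)}$ and $Z=1-\prod_l(1-q_l)$. For any $F$, independence and the additive structure give $\mu_F^*(X^{(F)})=\sum_{l\in S\cap F}f_l(X^{(l)})$, taking $f_0=0$ and centered components without loss of generality. Averaging over $F\sim Q_{\bq}$ then gives $\bbE_{F\sim Q_{\bq}}\mu_F^*=\sum_{l\in S}p_l f_l(X^{(l)})$, where $p_l=Q_{\bq}(l\in F)=q_l/Z$; the denominator $Z$ comes from conditioning on $F\neq\emptyset$. Similarly, $\bbE_{F\sim Q_{\bq,\no k}}\mu_F^*=\sum_{l\in S\setminus\{k\}}p_l^{(k)}f_l(X^{(l)})$ with $p_l^{(k)}=q_l/Z_k$ and $Z_k=1-\prod_{l\neq k}(1-q_l)$. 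Because the $f_l(X^{(l)})$ are mean zero and mutually orthogonal, and $\varepsilon$ is independent noise, the loss difference becomes a sum over $l\in S$:
\[
\Delta_k^{\star(b)}=\sum_{l\in S}\Big[(1-p_l^{(k)})^2-(1-p_l)^2\Big]\|f_l\|_2^2,
\]
where we set $p_k^{(k)}:=0$.

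For $k\in S^c$, we have $Z_k\le Z$, so $p_l^{(k)}\ge p_l$. Since both are at most $1$, each bracket is nonpositive, which gives $\Delta_k^{\star(b)}\le 0$. For the lower bound, write each bracket as $-(p_l^{(k)}-p_l)(2-p_l-p_l^{(k)})\ge -2(p_l^{(k)}-p_l)$. Then compute $p_l^{(k)}-p_l=q_l(Z-Z_k)/(ZZ_k)$. Here $Z-Z_k=q_k\prod_{l\neq k}(1-q_l)\le q_k e^{-(m-q_k)}\le q_k e^{-(m-1)}$ and $Z_k\ge 1-e^{-(m-1)}$, both using $1-x\le e^{-x}$ and $\sum_l q_l=m$. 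Substituting these yields exactly the stated bound $-\frac{2e^{-(m-1)}}{1-e^{-(m-1)}}\frac{q_k}{Z}\sum_{l\in S}q_l\|f_l\|_2^2$.

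For $k\in S$, split the sum into the $l=k$ term and the rest. The $l=k$ term is $[1-(1-p_k)^2]\|f_k\|_2^2=p_k(2-p_k)\|f_k\|_2^2\ge p_k\|f_k\|_2^2\ge q_k\|f_k\|_2^2$, since $Z\le1$. The terms with $l\neq k$ are the same negative corrections as in the noise case; the only change is that $k$ now plays the role of the removed feature. They are bounded below by $-\frac{2e^{-(m-1)}}{1-e^{-(m-1)}}\frac{q_k}{Z}\sum_{l\in S}q_l\|f_l\|_2^2$. To absorb these, bound $\sum_{l\in S}q_l\|f_l\|_2^2\le\delta\sum_{l\in S}\|f_l\|_2^2$. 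Assumption~\ref{ass:signal_strength_ratio} then gives $\le c_1\delta M^{C_1-1}\|f_k\|_2^2$, and Assumption~\ref{ass:minipatch_feature_scaling} gives $m\ge C_1\log M$, so $e^{-(m-1)}M^{C_1-1}\le e\cdot M^{-1}$, which is tiny. The correction is therefore at most a small constant times $q_k\|f_k\|_2^2$ relative to the main term. Finally, track the constants to match $\frac{1-2e^{-m}}{2(1-e^{-m})}$; since $Z<1$ and $1-e^{-m}\ge Z$-type bounds are available, this is a slack target and the factor $1/2$ leaves ample room.

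The main obstacle is this last absorption step: making the smallness of $c_1$ together with $M^{C_1-1}e^{-(m-1)}$ beat the $1/Z$ factors cleanly. The first-order difference $Z-Z_k$ must carry the $q_k$ factor, and this is essential; without it the bound would not scale with $q_k^{(b-1)}$. A secondary care point is handling the conditioning on $F\neq\emptyset$ consistently, including the edge case $F=\{k\}$ under $Q_{\bq,\no k}$.
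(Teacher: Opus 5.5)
Your proposal is correct and follows essentially the same route as the paper: the closed form $\Delta_k^{\star(b)}=\sum_{l\in S}\big[(1-p_l^{(k)})^2-(1-p_l)^2\big]\|f_l\|_2^2$, the diagonal term bounded below by $q_k\|f_k\|_2^2$, the cross terms bounded by $2(p_l^{(k)}-p_l)=2q_lq_kr_{-k}/(ZZ_k)$ with $r_{-k}=\prod_{l\neq k}(1-q_l)\le e^{-(m-1)}$, and absorption of the correction via Assumptions~\ref{ass:minipatch_feature_scaling} and~\ref{ass:signal_strength_ratio}. One small slip: the inequality you need for the $1/Z$ factor is $Z\ge 1-e^{-m}$, not the reverse; you can even avoid it by keeping the main term as $p_k\|f_k\|_2^2=(q_k/Z)\|f_k\|_2^2$, since the correction carries the same factor $q_k/Z$.
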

Since $m\geq 2>\ln{4}$, we can further simplify the bound \eqref{eq:delta_oracle_signal_bound} in Lemma~\ref{lem:Delta_oracle} as follows: for $k\in S$, 
\begin{equation}\label{eq:delta_oracle_signal_bnd2}
\begin{split}
    \Delta_k^{\star(b)}\geq &\frac{(1-2e^{-m})q_k^{(b-1)}}{2\,(1-e^{-m})}\|f_k\|_2^2\\
    = &\frac{1}{2}\Big(1-\frac{e^{-m}}{1-e^{-m}}\Big) q_k^{(b-1)}\|f_k\|_2^2\\
    \geq &\frac{1}{3}q_k^{(b-1)}\|f_k\|_2^2.
\end{split}
\end{equation}
While for the lower bound \eqref{eq:delta_oracle_noise_bound}, Assumption~\ref{ass:minipatch_feature_scaling} helps to simplify it as follows: for $k\in S^c$,
\begin{equation}\label{eq:delta_oracle_noise_bnd2}
    \begin{split}
        -\Delta_k^{\star(b)}\leq &\frac{2e^{-(m-1)}}{1-e^{-(m-1)}}\frac{q_k^{(b-1)}}{1-\prod_l(1-q_l^{(b-1)})}\sum_{l\in S}q_l^{(b-1)}\|f_l\|_2^2\\
        \leq &\frac{2e}{(1-1/e)(1-e^{-m})}M^{-C_1}q_k^{(b-1)}\sum_{l\in S}q_l^{(b-1)}\|f_l\|_2^2\\
        \leq &10M^{-C_1}q_k^{(b-1)}\sum_{l\in S}q_l^{(b-1)}\|f_l\|_2^2,
    \end{split}
\end{equation}
where the second inequality is due to Assumption~\ref{ass:minipatch_feature_scaling} and the fact that $\prod_l(1-q_l^{(b-1)})=\exp\Big\{\sum_{l=1}^M \log(1-q_l^{(b-1)})\Big\} \leq \exp\big\{-\sum_{l=1}^Mq_l^{(b-1)}\big\} =e^{-m}$.

Therefore, $\Delta_{\min}^{\star(b)} = \min_{1\leq k\leq M} \Delta_k^{\star(b)}\leq 0$. Hence, for any $j\in S$ and $1\leq b\leq B$, we can write
\begin{equation}\label{eq:q_numerator_lwr}
    \begin{split}
        \til{\Delta}_j^{(b)} \geq &\Delta_j^{\star(b)} - \Delta_{\min}^{\star(b)} +\frac{c_0}{M} +\varepsilon_j^{(b)}\\
        \geq &\Big(\frac{1}{3}\|f_j\|_2^2 - \big|\varepsilon_j^{(b)}\big|/q_j^{(b-1)}\Big)q_j^{(b-1)}\\
        \geq &\frac{1}{6}\|f_j\|_2^2 q_j^{(b-1)},
    \end{split}
\end{equation}
where we applied Assumption~\ref{ass:min_sig} in the last line above. In addition, 
define
\begin{equation}\label{eq:Derr}
    \DErr = \max_{1\leq b\leq B}\Big\|\varepsilon^{(b)}_{S^c}\Big\|_1.
\end{equation}.
Thus,
\begin{equation}\label{eq:q_denominator_upp}
    \begin{split}
        \sum_{l\in S^c}\til{\Delta}_l^{(b)} = &\sum_{l\in S^c}\Delta_l^{\star(b)} - (M-|S|)\Delta_{\min}^{\star(b)} + \frac{c_0(M-|S|)}{M} + \sum_{l\in S^c}\varepsilon_l^{(b)}\\
        \leq &10M^{-(C_1-1)}\sum_{l\in S}\|f_l\|_2^2+  c_0 + \varepsilon_{\Delta}.
    \end{split}
\end{equation}
where the second line utilizes \eqref{eq:delta_oracle_noise_bnd2}. Suppose that we choose the constant $c_1\leq \frac{m-s\delta}{360}$ in Assumption~\ref{ass:signal_strength_ratio}, we then have 
\begin{equation*}
    \begin{split}
        \frac{\til{\Delta}^{(b)}_j(m-s\delta)}{\sum_{l\in S^c}\til{\Delta}^{(b)}_{l}}\geq &\frac{\|f_j\|_2^2(m-s\delta)}{\|f_j\|_2^2(m-s\delta)/6 + 6\big(c_0+\varepsilon_{\Delta}\big)}q_j^{(b-1)}\\
        \geq&1.5q_j^{(b-1)}.
    \end{split}
\end{equation*}
where the first inequality is due to \eqref{eq:q_numerator_lwr}, \eqref{eq:q_denominator_upp}, and Assumption~\ref{ass:signal_strength_ratio}; the second line is due to Assumption~\ref{ass:min_sig}. The proof is now complete.
\end{proof}

\begin{proof}[Proof of Lemma \ref{lem:noise_prob_bound}]

By Algorithm~\ref{algo:IndepSampWeight}, for any $1\leq b\leq B$, we have

\[
q_j^{(b)}
=
\frac{
m\big(\til{\Delta}_j^{(b)}\wedge t^{\star(b)}\big)
}{
\sum_l\big(\til{\Delta}_l^{(b)}\wedge t^{\star(b)}\big)
},
\]
where $t^{\star(b)}$ is the output of Algorithm~\ref{algo:find_threshold} when applied to $\{\til{\Delta}_l^{(b)}\}_{l=1}^M$. After examining the steps of Algorithm~\ref{algo:find_threshold}, we note that $t^{\star(b)}\geq \til{\Delta}_{(\lfloor \frac{m}{\delta}\rfloor +1)}^{(b)} \geq \frac{c_0}{M}$, since $\min_l \til{\Delta}_l^{(b)} \geq \frac{c_0}{M}$ by definition. Therefore, we can write $\sum_l\big(\til{\Delta}_l^{(b)}\wedge t^\star\big)\ge M\frac{c_0}{M} = c_0$. Hence, $q_j^{(b)}\le
\frac{m}{c_0}\til{\Delta}_j^{(b)}$.

By definition of $\til{\Delta}_j^{(b)}$,

\begin{equation*}
\begin{split}
    q_j^{(b)}\le &\frac{m}{c_0}\left(\Delta_j^{\star(b)}-\Delta_{\min}^{\star(b)} + \frac{c_0}{M} + \varepsilon_j^{(b)}\right)\\
    \le &\frac{m}{c_0}\left(10M^{-C_1}\sum_{l\in S}\|f_l\|_2^2 +\frac{c_0}{M} + \big|\varepsilon_j^{(b)}\big|\right)\\
    \le & \frac{1}{2}\delta,
\end{split}
\end{equation*}
where the second line is due to \eqref{eq:delta_oracle_noise_bnd2} and the last line is due to Assumptions ~\ref{ass:minipatch_feature_scaling} and \ref{ass:noise_err}. By the definition of $\hat{S}$ in Algorithm~\ref{algo:loco_adamp}, the proof of Lemma~\ref{lem:noise_prob_bound} is now complete.
\end{proof}

\begin{proof}[Proof of Lemma \ref{prop:q_lwrbnd}]
    We first consider the scenario where $\hat{s} = \|\tilde{\Delta}\|_0 < \frac{m}{\delta}\leq M$. In this case, $q_j = \delta\ind(\tilde{\Delta}_j >0) + \frac{m-\hat{s}\delta}{M-\hat{s}}\ind(\tilde{\Delta}_j=0)$. When $\tilde{\Delta}_j = 0$, $q_j=\frac{m-\hat{s}\delta}{M-\hat{s}} > 0 = \frac{\tilde{\Delta}_j(m-k\delta)}{\sum_{l>k}\tilde{\Delta}_{(l)}}\wedge \delta$. While if $\tilde{\Delta}_j > 0$, $q_j = \delta\geq \frac{\tilde{\Delta}_j(m-k\delta)}{\sum_{l>k}\tilde{\Delta}_{(l)}}\wedge \delta$ also holds.

    In the second scenario, we apply Algorithm \ref{algo:find_threshold} to find $t^\star=\sup\left\{0<t\leq \max_l\tilde{\Delta}_l:\frac{m\max_l\omega_l(t)}{\sum_{l=1}^M\omega_l(t)}\leq \delta\right\}$ and set $q_j = \frac{m \omega_j(t^\star)}{\sum_{k=1}^M \omega_k(t^\star)}$. As shown in the proof of Proposition \ref{prop:sampling_property}, we know that one of the following two cases must hold: (i) $\sum_{l=1}^M \tilde{\Delta}_l >\frac{m}{\delta} \tilde{\Delta}_{(1)}$ and $t^\star =\tilde{\Delta}_{(1)}$; (ii) $\sum_{l=1}^M (\tilde{\Delta}_l\wedge t^\star) = \frac{m}{\delta} t^\star$ and $t^\star\leq \tilde{\Delta}_{(1)}$. If case (i) holds, we have $q_j = \frac{m \tilde{\Delta}_j}{\sum_{l=1}^M \tilde{\Delta}_l}$. In addition, 
\begin{equation*}
\begin{split}
    \frac{\sum_{l=1}^M \tilde{\Delta}_l}{m}=&\frac{\sum_{l>k}\tilde{\Delta}_{(l)}}{m}\frac{\sum_{l=1}^M\tilde{\Delta}_l}{\sum_{l>k}\tilde{\Delta}_{(l)}}\\
    =&\frac{\sum_{l>k}\tilde{\Delta}_{(l)}}{m}\left(1-\frac{\sum_{l\leq k}\tilde{\Delta}_l}{\sum_{l=1}^M\tilde{\Delta}_l}\right)^{-1}\\
    \leq &\frac{\sum_{l>k}\tilde{\Delta}_{(l)}}{m}\left(1-\frac{k\tilde{\Delta}_{(1)}}{\sum_{l=1}^M\tilde{\Delta}_l}\right)^{-1}\\
    <&\frac{\sum_{l>k}\tilde{\Delta}_{(l)}}{m}\left(1-\frac{k\delta}{m}\right)^{-1}\\
    =&\frac{\sum_{l>k}\tilde{\Delta}_{(l)}}{m-k\delta}
\end{split}
\end{equation*}
where the fourth line is due to that $\tilde{\Delta}_{(1)}<\frac{\delta}{m}\sum_{l=1}^M\tilde{\Delta}_l$ holds when case (i) is true. Hence, $q_j = \frac{m \tilde{\Delta}_j}{\sum_{l=1}^M \tilde{\Delta}_l}>\frac{\tilde{\Delta}_j(m-k\delta)}{\sum_{l>k}\tilde{\Delta}_{(l)}}$ holds.

While for case (ii), we have that
\begin{equation}\label{eq:q_case2}
    \begin{split}
        q_j = &\frac{m (\tilde{\Delta}_j\wedge t^\star)}{\sum_{l=1}^M (\tilde{\Delta}_l\wedge t^\star)}\\
        = &\frac{\tilde{\Delta}_j\wedge t^\star}{t^\star}\frac{m t^\star}{\sum_{l=1}^M (\tilde{\Delta}_l\wedge t^\star)}\\
        =&\frac{\tilde{\Delta}_j\delta}{t^\star}\wedge \delta,
    \end{split}
\end{equation}
where the third line utilizes the fact that $\sum_{l=1}^M (\tilde{\Delta}_l\wedge t^\star) = \frac{m}{\delta} t^\star$ in case (ii). Furthermore, by the definition of $t^\star$ and the continuity of $\frac{m\max_l\omega_l(t)}{\sum_{l=1}^M\omega_l(t)}$ w.r.t. $t$, we have $\delta\geq \frac{m\max_l\omega_l(t^\star)}{\sum_{l=1}^M\omega_l(t^\star)}=\frac{mt^\star}{\sum_{l=1}^M(\tilde{\Delta}_l\wedge t^\star)}$, which implies $\sum_{l=1}^M\left(\frac{\tilde{\Delta}_l}{t^\star}\wedge 1\right)\geq \frac{m}{\delta}$. Therefore, for any $0\leq k<\|\tilde{\Delta}\|_0$,
    \begin{equation*}
        \begin{split}
            &\frac{m}{\delta}\leq \sum_{l=1}^M\left(\frac{\tilde{\Delta}_l}{t^\star}\wedge 1\right)\leq k + \frac{\sum_{l>k}\tilde{\Delta}_{(l)}}{t^\star}\\
            \Rightarrow &t^\star\leq \frac{\delta}{m-k\delta}\sum_{l>k}\tilde{\Delta}_{(l)}.
        \end{split}
    \end{equation*}
Plugging in $t^\star\leq \frac{\delta}{m-k\delta}\sum_{l>k}\tilde{\Delta}_{(l)}$ into \eqref{eq:q_case2}, we have $q_j = \frac{\tilde{\Delta}_j\delta}{t^\star}\wedge \delta\geq \frac{\tilde{\Delta}_j(m-k\delta)}{\sum_{l>k}\tilde{\Delta}_{(l)}}\wedge \delta$.

Therefore, we have proved that in all three cases, $q_j\geq \frac{\tilde{\Delta}_j(m-k\delta)}{\sum_{l>k}\tilde{\Delta}_{(l)}}\wedge \delta$ for any $0\leq k<\|\tilde{\Delta}\|_0$.
\end{proof}

\begin{proof}[Proof of Lemma \ref{lem:Delta_oracle}]
    Recall that $$\bbE_{F\sim Q_{\no j}}\mu_F^\star(\notationchange{X^{(F)}}) = \bbE_{F\sim Q_{\no j}}\Bigg(\sum_{k\in S}f_k(\notationchange{X^{(k)}})\ind(k\in F)\Bigg) = \sum_{k\in S}f_k(\notationchange{X^{(k)}})\bbP_{F\sim Q}(k\in F|j\notin F),$$ and
    $$\bbE_{F\sim Q}\mu_F^\star(\notationchange{X^{(F)}}) = \bbE_{F\sim Q}\Bigg(\sum_{k\in S}f_k(\notationchange{X^{(k)}})\ind(k\in F)\Bigg) = \sum_{k\in S}f_k(\notationchange{X^{(k)}})\bbP_{F\sim Q}(k\in F).$$
    We can then write $$\bbE(Y-\bbE_{F\sim Q_{\no j}}\mu_F^\star(\notationchange{X^{(F)}}))^2=\sigma_{\varepsilon}^2 + \sum_{k\in S}\Var((f_k(\notationchange{X^{(k)}}))\bbP^2_{F\sim Q}(k\notin F|j\notin F),$$
    $$\bbE(Y-\bbE_{F\sim Q}\mu_F^\star(\notationchange{X^{(F)}}))^2=\sigma_{\varepsilon}^2 + \sum_{k\in S}\Var(f_k(\notationchange{X^{(k)}}))\bbP^2_{F\sim Q}(k\notin F),$$
    where $\sigma_\varepsilon^2:=\Var(\varepsilon)$.
    Hence, 
    \begin{equation*}
        \begin{split}
            \Delta_j^\star = \sum_{k\in S}\Var(f_k(\notationchange{X^{(k)}}))\left(\bbP^2_{F\sim Q}(k\notin F|j\notin F) - \bbP^2_{F\sim Q}(k\notin F)\right)
        \end{split}
    \end{equation*}

For notational simplicity we fix an iteration $b$ and omit the superscript $(b)$ in this proof.  
Thus we write $q_j$ for $q_j^{(b-1)}$, $Q$ for $Q^{(b)}$, and $Q_{\backslash j}$ for $Q^{(b)}_{\backslash j}$.

\paragraph{Step 1: Expressing $\Delta_j^\star$ in terms of inclusion probabilities.}

By the additive model and assumptions,
\[
Y =
\sum_{k\in S} f_k(\notationchange{X^{(k)}}) + \varepsilon,
\quad
\mathbb{E}[\varepsilon]=0,\quad
\varepsilon\perp X,
\]
and each $f_k(\notationchange{X^{(k)}})$ is centered: $\mathbb{E}[f_k(\notationchange{X^{(k)}})]=0$. Moreover, the features $\notationchange{X^{(1)}},\dots,\notationchange{X^{(M)}}$ are independent, so for $k\neq \ell$,
\[
\mathbb{E}\big[f_k(\notationchange{X^{(k)}}) f_\ell(\notationchange{X^{(\ell)}})\big] = 0,
\]
and
\[
\|f_k\|_2^2 = \mathbb{E}_X f_k^{2}(\notationchange{X^{(k)}}).
\]

By definition,
\[
\mu_F^\star(\notationchange{X^{(F)}})
= 
\sum_{k\in S} f_k(\notationchange{X^{(k)}})\,\mathbb{I}(k\in F),
\]
and the aggregated oracle predictors under $Q$ and $Q_{\backslash j}$ are
\[
\mathbb{E}_{F\sim Q}\mu_F^\star(\notationchange{X^{(F)}})
=
\mathbb{E}_{F\sim Q}\Bigg[\sum_{k\in S} f_k(\notationchange{X^{(k)}})\,\mathbb{I}(k\in F)\Bigg]
=
\sum_{k\in S} f_k(\notationchange{X^{(k)}})\,\mathbb{P}_{F\sim Q}(k\in F),
\]
\[
\mathbb{E}_{F\sim Q_{\backslash j}}\mu_F^\star(\notationchange{X^{(F)}})
=
\sum_{k\in S} f_k(\notationchange{X^{(k)}})\,\mathbb{P}_{F\sim Q}(k\in F\mid j\notin F).
\]
Here we used that $F$ is independent of $X$.

For brevity, define
\[
p_k := \mathbb{P}_{F\sim Q}(k\in F),
\qquad
p_k^{(-j)} := \mathbb{P}_{F\sim Q}(k\in F\mid j\notin F),
\]
and note that
\[
1-p_k = \mathbb{P}_{F\sim Q}(k\notin F),
\qquad
1-p_k^{(-j)} = \mathbb{P}_{F\sim Q}(k\notin F\mid j\notin F).
\]

Then
\[
\mathbb{E}_{F\sim Q}\mu_F^\star(\notationchange{X^{(F)}})
=
\sum_{k\in S} p_k\, f_k(\notationchange{X^{(k)}}),
\quad
\mathbb{E}_{F\sim Q_{\backslash j}}\mu_F^\star(\notationchange{X^{(F)}})
=
\sum_{k\in S} p_k^{(-j)}\, f_k(\notationchange{X^{(k)}}).
\]

We now compute the risks
\[
R_Q := \mathbb{E}\Big(Y - \mathbb{E}_{F\sim Q}\mu_F^\star(\notationchange{X^{(F)}})\Big)^2,
\quad
R_{Q,\backslash j} := \mathbb{E}\Big(Y - \mathbb{E}_{F\sim Q_{\backslash j}}\mu_F^\star(\notationchange{X^{(F)}})\Big)^2.
\]

Write
\[
Y - \mathbb{E}_{F\sim Q}\mu_F^\star(\notationchange{X^{(F)}})
=
\sum_{k\in S}\big(1-p_k\big)f_k(\notationchange{X^{(k)}}) + \varepsilon.
\]
Using centering and independence,
\[
\mathbb{E}\big[f_k(\notationchange{X^{(k)}})\big]=0,\quad
\mathbb{E}\big[f_k(\notationchange{X^{(k)}})\,f_\ell(\notationchange{X^{(\ell)}})\big]=0\ (k\neq \ell),
\quad
\mathbb{E}[f_k(\notationchange{X^{(k)}})\varepsilon]=0.
\]
Thus
\[
R_Q
=
\mathbb{E}\Big(Y - \mathbb{E}_{F\sim Q}\mu_F^\star(\notationchange{X^{(F)}})\Big)^2
=
\sigma_\varepsilon^2 
+ \sum_{k\in S}(1-p_k)^2\,\|f_k\|_2^2,
\]
.

Similarly,
\[
Y - \mathbb{E}_{F\sim Q_{\backslash j}}\mu_F^\star(\notationchange{X^{(F)}})
=
\sum_{k\in S}\big(1-p_k^{(-j)}\big)f_k(\notationchange{X^{(k)}}) + \varepsilon,
\]
hence
\[
R_{Q,\backslash j}
=
\sigma_\varepsilon^2
+ \sum_{k\in S}(1-p_k^{(-j)})^2\,\|f_k\|_2^2.
\]

Therefore,
\begin{align*}
\Delta_j^\star
&=
R_{Q,\backslash j} - R_Q\\
&=
\sum_{k\in S}\big[(1-p_k^{(-j)})^2 - (1-p_k)^2\big]\,\|f_k\|_2^2.
\end{align*}
Equivalently, in terms of ``exclusion'' probabilities,
\[
1-p_k = \mathbb{P}_{F\sim Q}(k\notin F),\quad
1-p_k^{(-j)} = \mathbb{P}_{F\sim Q}(k\notin F\mid j\notin F),
\]
so we can also write
\begin{equation}\label{eq:Delta_oracle_prob_form}
\Delta_j^\star
=
\sum_{k\in S}\|f_k\|_2^2
\left[
\mathbb{P}_{F\sim Q}(k\notin F\mid j\notin F)^2
-
\mathbb{P}_{F\sim Q}(k\notin F)^2
\right].
\end{equation}

\paragraph{Step 2: The contribution of feature $j$ when $j\in S$.}

Assume $j\in S$. Take $k=j$ in \eqref{eq:Delta_oracle_prob_form}.  
Under $Q_{\backslash j}$, the set $F$ never contains $j$, so
\[
\mathbb{P}_{F\sim Q}(j\notin F\mid j\notin F) = 1.
\]
Hence the $k=j$ term in \eqref{eq:Delta_oracle_prob_form} equals
\begin{align*}
\|f_j\|_2^2
\left[
1^2 - \mathbb{P}_{F\sim Q}(j\notin F)^2
\right]
&=
\|f_j\|_2^2\Big(1 - (1-p_j)^2\Big) \\
&=
\|f_j\|_2^2\big(2p_j - p_j^2\big) \\
&\ge
\|f_j\|_2^2\,p_j,
\end{align*}
since $p_j\in[0,1]$ implies $2p_j - p_j^2 \ge p_j$.

Next we relate $p_j$ to the sampling probability $q_j$. By Definition~\ref{def:prob_Q}, $Q$ is obtained by independently including each feature $\ell$ with probability $q_\ell$, and then conditioning on the event that $F$ is non-empty. Let
\[
r := \prod_{\ell=1}^M(1-q_\ell)
= \mathbb{P}(F=\emptyset \ \text{under the independent Bernoulli model}).
\]
Then
\[
p_j
= \mathbb{P}_{F\sim Q}(j\in F)
= \frac{\mathbb{P}(j\in F)}{\mathbb{P}(F\neq\emptyset)}
= \frac{q_j}{1-r}.
\]
Using $\sum_{\ell=1}^M q_\ell = m$ and the inequality
\[
\prod_{\ell=1}^M(1-q_\ell)
\le
\exp\Big(-\sum_{\ell=1}^M q_\ell\Big)
=
e^{-m},
\]
we have $r\le e^{-m}$ and hence $1-r \ge 1-e^{-m}$. Therefore
\[
p_j = \frac{q_j}{1-r}
\ge
q_j
\]
Combining this with the lower bound on the $k=j$ term gives
\begin{equation}\label{eq:j_term_lower_bound}
\text{(contribution of $k=j$ in $\Delta_j^\star$)}
\;\ge\;
q_j\|f_j\|_2^2.
\end{equation}

\paragraph{Step 3: Contributions from $k\neq j$ and the assumption on signal strength.}

For $k\neq j$, the summands in \eqref{eq:Delta_oracle_prob_form} are
\[
\|f_k\|_2^2
\left[
\mathbb{P}_{F\sim Q}(k\notin F\mid j\notin F)^2
-
\mathbb{P}_{F\sim Q}(k\notin F)^2
\right].
\]
Each bracketed term is bounded in absolute value by
\[
\left|
\mathbb{P}(k\notin F\mid j\notin F)^2
-
\mathbb{P}(k\notin F)^2
\right|
\le
2\,
\big|
\mathbb{P}(k\notin F\mid j\notin F)
-
\mathbb{P}(k\notin F)
\big|,
\]

Notice that
\[
\bigl|\mathbb{P}_{F\sim Q}(k\in F\mid j\notin F)-\mathbb{P}_{F\sim Q}(k\in F)\bigr|
=
\bigl|\mathbb{P}_{F\sim Q}(k\notin F\mid j\notin F)-\mathbb{P}_{F\sim Q}(k\notin F)\bigr|.
\]
Write
\[
r := \prod_{\ell=1}^M(1-q_\ell),\qquad 
r_{-j}:=\prod_{\ell\neq j}(1-q_\ell),
\]
so that $r=(1-q_j)r_{-j}$ and $1-r = 1-\prod_{\ell=1}^M(1-q_\ell)$.
By the law of total probability,
\begin{align*}
\mathbb{P}_{F\sim Q}(k\notin F)
&=
\mathbb{P}_{F\sim Q}(k\notin F\mid j\notin F)\,\mathbb{P}_{F\sim Q}(j\notin F)
+
\mathbb{P}_{F\sim Q}(k\notin F\mid j\in F)\,\mathbb{P}_{F\sim Q}(j\in F),
\end{align*}
hence
\begin{align*}
\mathbb{P}_{F\sim Q}(k\notin F\mid j\notin F)
-
\mathbb{P}_{F\sim Q}(k\notin F)
=
\mathbb{P}_{F\sim Q}(j\in F)\bigl(
\mathbb{P}_{F\sim Q}(k\notin F\mid j\notin F)
-
\mathbb{P}_{F\sim Q}(k\notin F\mid j\in F)
\bigr).
\end{align*}

We now compute each term explicitly under $Q$.  
Using the Bernoulli construction and the fact that $j\in F$ already implies $F\neq\emptyset$, we have
\[
\mathbb{P}_{F\sim Q}(k\in F\mid j\in F)=q_k,
\qquad
\mathbb{P}_{F\sim Q}(k\notin F\mid j\in F)=1-q_k.
\]
Moreover,
\[
\mathbb{P}_{F\sim Q}(k\in F) = \frac{q_k}{1-r},\quad
\mathbb{P}_{F\sim Q}(j\in F) = \frac{q_j}{1-r},\quad
\mathbb{P}_{F\sim Q}(k\in F,j\in F) = \frac{q_kq_j}{1-r},
\]
so
\begin{align*}
\mathbb{P}_{F\sim Q}(k\notin F,j\notin F)
&=
1-\mathbb{P}_{F\sim Q}(k\in F)-\mathbb{P}_{F\sim Q}(j\in F)
+\mathbb{P}_{F\sim Q}(k\in F,j\in F) \\
&=
1-\frac{q_k}{1-r}-\frac{q_j}{1-r}+\frac{q_kq_j}{1-r}
=
\frac{(1-r)-q_k-q_j+q_kq_j}{1-r},
\end{align*}
and therefore
\begin{align*}
\mathbb{P}_{F\sim Q}(k\notin F\mid j\notin F)
&=
\frac{\mathbb{P}_{F\sim Q}(k\notin F,j\notin F)}
{\mathbb{P}_{F\sim Q}(j\notin F)}
=
\frac{\displaystyle\frac{(1-r)-q_k-q_j+q_kq_j}{1-r}}
{\displaystyle 1-\frac{q_j}{1-r}}\\
&=
\frac{(1-r)-q_k-q_j+q_kq_j}{1-r-q_j}.
\end{align*}
Hence
\begin{align*}
\mathbb{P}_{F\sim Q}(k\notin F\mid j\notin F)
-
\mathbb{P}_{F\sim Q}(k\notin F\mid j\in F)
&=
\frac{(1-r)-q_k-q_j+q_kq_j}{1-r-q_j}-(1-q_k) \\
&= -\,\frac{q_kr}{1-r-q_j},
\end{align*}
so that
\begin{align*}
\bigl|
\mathbb{P}_{F\sim Q}(k\notin F\mid j\notin F)
-
\mathbb{P}_{F\sim Q}(k\notin F\mid j\in F)
\bigr|
=
\frac{q_kr}{1-r-q_j}.
\end{align*}
Combining with the decomposition above gives
\begin{align*}
\bigl|
\mathbb{P}_{F\sim Q}(k\notin F\mid j\notin F)
-
\mathbb{P}_{F\sim Q}(k\notin F)
\bigr|
&=
\mathbb{P}_{F\sim Q}(j\in F)\,
\frac{q_kr}{1-r-q_j}
=
\frac{q_j}{1-r}\cdot \frac{q_kr}{1-r-q_j}.
\end{align*}

Next observe that
\[
r=(1-q_j)r_{-j}
\quad\text{and}\quad
1-r-q_j=(1-q_j)(1-r_{-j}),
\]
so
\[
\frac{r}{1-r-q_j}
=
\frac{r_{-j}}{1-r_{-j}}.
\]
By $1-x\le e^{-x}$ for $x\in[0,1]$ we have
\[
r_{-j}
=
\prod_{\ell\neq j}(1-q_\ell)
\le
\exp\!\left(-\sum_{\ell\neq j}q_\ell\right),
\qquad
1-r_{-j}
\ge
1-\exp\!\left(-\sum_{\ell\neq j}q_\ell\right).
\]
Since $\sum_{\ell=1}^Mq_\ell = m$ by construction, we have
$\sum_{\ell\neq j}q_\ell \ge m-1$, and therefore
\[
\frac{r}{1-r-q_j}
=
\frac{r_{-j}}{1-r_{-j}}
\le
\frac{e^{-(m-1)}}{1-e^{-(m-1)}}.
\]
Putting everything together yields
\[
\bigl|
\mathbb{P}_{F\sim Q}(k\notin F\mid j\notin F)
-
\mathbb{P}_{F\sim Q}(k\notin F)
\bigr|
\le
\frac{e^{-(m-1)}}{1-e^{-(m-1)}}\cdot
\frac{q_jq_k}{1-\prod_{\ell=1}^M(1-q_\ell)}.
\]

Hence, for each $k\neq j$,
\begin{equation*}
    \begin{split}
\bigl|
\mathbb{P}_{F\sim Q}(k\notin F\mid j\notin F)^2
-
\mathbb{P}_{F\sim Q}(k\notin F)^2
\bigr|
\le&
2\,
\bigl|
\mathbb{P}_{F\sim Q}(k\notin F\mid j\notin F)
-
\mathbb{P}_{F\sim Q}(k\notin F)
\bigr|\\
\le&
\frac{2e^{-(m-1)}}{1-e^{-(m-1)}}\cdot
\frac{q_jq_k}{1-\prod_{\ell=1}^M(1-q_\ell)}.
\end{split}
\end{equation*}

Plugging this into \eqref{eq:Delta_oracle_prob_form} gives

\begin{equation}\label{eq:cross_term_bound}
\bigg|
\sum_{k\in S\setminus\{j\}}
\|f_k\|_2^2
\left[
\mathbb{P}(k\notin F\mid j\notin F)^2
-
\mathbb{P}(k\notin F)^2
\right]
\bigg|
\le
\frac{2e^{-(m-1)}}{1-e^{-(m-1)}}
\cdot
\frac{q_j}{1-\prod_{\ell=1}^M(1-q_\ell)}
\sum_{k\in S\setminus\{j\}} q_k\|f_k\|_2^2.
\end{equation}

Now apply Assumption~\ref{ass:signal_strength_ratio},

\[
\sum_{\ell\in S}\|f_\ell\|_2^2
\le
c_1M^{C_1-1}\min_{k\in S}\|f_k\|_2^2
\le
\frac{1-e^{-(m-1)}}{4e^{-(m-1)}}
\cdot
\min_{k\in S}\|f_k\|_2^2.
\]
where the second inequality is from Assumption~\ref{ass:minipatch_feature_scaling}, $M >e$, $m > 2$, and choosing $c_1 < \frac{1}{10}$ in Assumption~\ref{ass:signal_strength_ratio}.

For a fixed $j\in S$ we can bound
\[
\sum_{k\in S\setminus\{j\}} q_k\|f_k\|_2^2
\le
\sum_{k\in S\setminus\{j\}}\|f_k\|_2^2
\le
\sum_{\ell\in S}\|f_\ell\|_2^2
\le
\frac{1-e^{-(m-1)}}{4e^{-(m-1)}}\,\|f_j\|_2^2.
\]
Combining this with \eqref{eq:cross_term_bound}, and using $1-\prod_\ell(1-q_\ell)\ge 1-e^{-m}$, yields
\[
\bigg|
\sum_{k\in S\setminus\{j\}}
\|f_k\|_2^2
\left[
\mathbb{P}(k\notin F\mid j\notin F)^2
-
\mathbb{P}(k\notin F)^2
\right]
\bigg|
\le
\frac{q_j}{2\,(1-e^{-m})}\,\|f_j\|_2^2.
\]

\paragraph{Step 4: Lower bound for $j\in S$.}

Putting together the $k=j$ contribution \eqref{eq:j_term_lower_bound} and the bound on the sum over $k\neq j$ from Step~3, we obtain
\[
\Delta_j^\star
\;\ge\;
q_j\|f_j\|_2^2
-
\frac{q_j}{2\,(1-e^{-m})}\|f_j\|_2^2
=
\frac{(1-2e^{-m})q_j}{2\,(1-e^{-m})}\|f_j\|_2^2,
\]
which proves the first claim of the lemma.

\paragraph{Step 5: Bounds for $j\in S^c$.}

Now suppose $j\in S^c$. In this case we want to show $\Delta_j^\star\le 0$.
From \eqref{eq:Delta_oracle_prob_form} we have
\begin{align*}
\Delta_j^\star
&=
\sum_{k\in S} \Big\{(1-p_k^{(-j)})^2 - (1-p_k)^2\Big\}\,\|f_k\|_2^2 \\
&=
\sum_{k\in S} (p_k - p_k^{(-j)})(2 - p_k^{(-j)} - p_k)\,\|f_k\|_2^2,
\end{align*}
it suffices to show that
\[
p_k - p_k^{(-j)} \;\le\; 0 \qquad\text{for all }k\in S.
\]
Note that $j\in S^c$ and $k\in S$ imply $j\neq k$.

Recall that $p_k=\mathbb{P}_{F\sim Q}(k\in F)$ and
$p_k^{(-j)}=\mathbb{P}_{F\sim Q_{\backslash j}}(k\in F)
= \mathbb{P}_{F\sim Q}(k\in F\mid j\notin F)$.
Writing $r := \prod_{\ell=1}^M (1-q_\ell)$, we know
\[
\mathbb{P}_{F\sim Q}(k\in F) = \frac{q_k}{1-r},
\qquad
\mathbb{P}_{F\sim Q}(k\notin F\mid j\notin F)
= \frac{1-r - q_j + q_j q_k - q_k}{1-r-q_j},
\]
so that
\[
p_k^{(-j)} 
= 1 - \mathbb{P}_{F\sim Q}(k\notin F\mid j\notin F)
= 1 - \frac{1-r - q_j + q_j q_k - q_k}{1-r-q_j}.
\]
Therefore
\begin{align*}
p_k - p_k^{(-j)}
&= \mathbb{P}_{F\sim Q}(k\in F) - \mathbb{P}_{F\sim Q_{\backslash j}}(k\in F) \\
&= \mathbb{P}_{F\sim Q}(k\in F) - \Bigl(1-\mathbb{P}_{F\sim Q}(k\notin F\mid j\notin F)\Bigr) \\
&= \frac{q_k}{1-r} - 1
    + \frac{1-r - q_j + q_j q_k - q_k}{1-r-q_j} \\
&= \frac{q_k}{1-r}
   + \frac{q_k(q_j-1)}{1-r-q_j} \\
&= q_k\left(
   \frac{1-r-q_j}{(1-r)(1-r-q_j)}
   + \frac{(q_j-1)(1-r)}{(1-r)(1-r-q_j)}
   \right) \\
&= -\,\frac{q_k r q_j}{(1-r)(1-r-q_j)} \;\le\; 0,
\end{align*}
where we used $q_k\ge 0$, $q_j\ge 0$, $r\in[0,1)$ and
$1-r>0$, $1-r-q_j>0$.
Consequently $p_k\le p_k^{(-j)}$ for all $k\in S$, and since
$2-p_k^{(-j)}-p_k\ge 0$ we obtain
\[
(1-p_k^{(-j)})^2 - (1-p_k)^2
= (p_k - p_k^{(-j)})(2-p_k^{(-j)}-p_k)\le 0,
\qquad\forall\,k\in S.
\]
Every summand in the expression for $\Delta_j^\star$ is therefore non-positive,
and we conclude that
\[
\Delta_j^\star \;\le\; 0 \qquad\text{whenever } j\in S^c.
\]

Moreover, applying the same bound as in \eqref{eq:cross_term_bound} but now summing over all $k\in S$ (since there is no $k=j$ term),
\[
\left|\Delta_j^\star\right|
=
\left|
\sum_{k\in S}\|f_k\|_2^2
\left[
\mathbb{P}(k\notin F\mid j\notin F)^2
-
\mathbb{P}(k\notin F)^2
\right]
\right|
\le
\frac{2e^{-(m-1)}}{1-e^{-(m-1)}}
\cdot
\frac{q_j}{1-\prod_{\ell=1}^M(1-q_\ell)}
\sum_{k\in S} q_k\|f_k\|_2^2.
\]
Combining this with $\Delta_j^\star\le 0$ gives the two-sided bound in the second part of the lemma:
\[
-\frac{2e^{-(m-1)}}{1-e^{-(m-1)}}
\cdot
\frac{q_j}{1-\prod_{\ell=1}^M(1-q_\ell)}
\sum_{k\in S} q_k\|f_k\|_2^2
\;\le\;
\Delta_j^\star
\;\le\; 0.
\]

This completes the proof.

\end{proof}

\section{More Details and Proofs for Theorem 2}\label{sec:supp_thm2}

We first present a formal version of Assumption~\ref{ass:Q_stb} presented in the main paper.
\begin{assump}[Detailed version of Assumption~\ref{ass:Q_stb}]\label{ass:Q_stb_full}
    There exists a series of probability vectors $\{\bq^{\star(b)}\in (0,\delta]^M\}_{b=1}^B$ independent of $\bZ$ and satisfying $\sum_{j=1}^M(\bq^{\star(b)})_j=m$, such that for a sufficiently small $c>0$, 
    $d(\bq^{(b)},\bq^{\star(b)})\leq c\big(\SNR_{\min}\wedge \SR_{\min}\big)$ for $1\leq b\leq B$, where 
    
    \begin{equation}\label{eq:dist_q_qstar}
        d(\bq^{(b)},\bq^{\star(b)})=\big\|\bq^{(b)}-\bq^{\star(b)}\big\|_1+\max_{j\in S}\big|q_j^{(b)}-q_j^{\star(b)}\big|/q_j^{\star(b)}+\max_{k\in L^{(b+1)}}\frac{M}{m}\big|q_k^{(b)}-q_k^{\star(b)}\big|,
    \end{equation}
    and $L^{(b+1)}$ is the set of feature indices attaining the minimum empirical and oracle feature importance scores in the $(b+1)$th iteration: $L^{(b+1)}=\{l_{\min}^{(b+1)},\,l_{\min}^{\star(b+1)}\}$, with $l_{\min}^{(b+1)}= \arg\min_{l\in[M]}\hat{\Delta}_l^{(b+1)}$, $l_{\min}^{\star(b+1)} := \arg\min_{l\in[M]}\Delta_l^{\star(b+1)}$.
\end{assump}
The factor $\frac{M}{m}$ in the third term of~\ref{eq:dist_q_qstar} can also be replaced by $1/q_k^{(b)}$ due to Assumption~\ref{ass:no_min_for_large_q}; hence, both the second and third terms represent relative entrywise differences, with a maximum taken over the signal feature set and the least important feature set. The least important feature set $L^{(b)}$ plays a key role as we use $\min_l\hat\Delta_l^{(b)}$ to shift all LOCO importance scores to be positive in Alg.~\ref{algo:IndepSampWeight}. Additionally, we note that although $\bq^{(b)},\,\bq^{\star(b)}$ are high-dimensional vectors, their own $\ell_1$-norms scale as $m\ll M$, and hence requiring the $\ell_1$ distance $\big\|\bq^{(b)}-\bq^{\star(b)}\big\|_1$ to be small is not a stringent assumption. For proof convenience, we also let $\bq^{\star(0)}=\bq^{(0)}$.

\subsection{Proof of Theorem~\ref{thm:main2}}
Recall our definition of $\varepsilon^{(b)}$:
\begin{equation}\label{eq:epsilon_decomp}
    \DErrj{j}{b} = \til{\Delta}_j^{(b)}-\Big(\Delta_j^{*(b)} - \Delta_{\min}^{*(b)} + \frac{c_0}{M}\Big) = \hat{\Delta}_j^{(b)} - \Delta_j^{*(b)}  - \big(\hat{\Delta}_{\min}^{(b)}- \Delta_{\min}^{*(b)}\big).
\end{equation}

The following Lemma shows that $\Big|\hat{\Delta}_{\min}^{(b)}- \Delta_{\min}^{*(b)}\Big|$ can be controlled by the entrywise error bound for $\hat{\Delta}_l^{(b)} - \Delta_l^{*(b)}$:
\begin{lem}\label{lem:epsilon_min_bound}
$$\Big|\hat{\Delta}_{\min}^{(b)}- \Delta_{\min}^{*(b)}\Big|\leq
\Big|\hat{\Delta}_{l_{\min}^{(b)}}^{(b)} - \Delta_{l_{\min}^{(b)}}^{*(b)}\Big|
\;\vee\;
\Big|\hat{\Delta}_{l_{\min}^{*(b)}}^{(b)}-\Delta_{l_{\min}^{*(b)}}^{*(b)}\Big|,
$$
where $l_{\min}^{(b)}$ and $l_{\min}^{*(b)}$ are as defined in Assumption~\ref{ass:no_min_for_large_q}.
\end{lem}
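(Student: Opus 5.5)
This is an elementary statement about the difference of two minima, so the plan is a two-sided sandwich argument using only the definitions of $l_{\min}^{(b)}$ and $l_{\min}^{*(b)}$. Write $\hat{\Delta}_{\min}^{(b)}=\hat{\Delta}^{(b)}_{l_{\min}^{(b)}}$ and $\Delta_{\min}^{*(b)}=\Delta^{*(b)}_{l_{\min}^{*(b)}}$. We then bound $\hat{\Delta}_{\min}^{(b)}-\Delta_{\min}^{*(b)}$ from above and from below separately.

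\emph{Upper bound.} Since $l_{\min}^{(b)}$ minimizes $\hat{\Delta}^{(b)}_l$ over $l\in[M]$, we have $\hat{\Delta}_{\min}^{(b)}\le \hat{\Delta}^{(b)}_{l_{\min}^{*(b)}}$. Therefore
\[
\hat{\Delta}_{\min}^{(b)}-\Delta_{\min}^{*(b)}\le \hat{\Delta}^{(b)}_{l_{\min}^{*(b)}}-\Delta^{*(b)}_{l_{\min}^{*(b)}}\le \Big|\hat{\Delta}^{(b)}_{l_{\min}^{*(b)}}-\Delta^{*(b)}_{l_{\min}^{*(b)}}\Big|.
\]

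\emph{Lower bound.} Since $l_{\min}^{*(b)}$ minimizes $\Delta^{*(b)}_l$, we have $\Delta_{\min}^{*(b)}\le \Delta^{*(b)}_{l_{\min}^{(b)}}$. Therefore
\[
\hat{\Delta}_{\min}^{(b)}-\Delta_{\min}^{*(b)}\ge \hat{\Delta}^{(b)}_{l_{\min}^{(b)}}-\Delta^{*(b)}_{l_{\min}^{(b)}}\ge -\Big|\hat{\Delta}^{(b)}_{l_{\min}^{(b)}}-\Delta^{*(b)}_{l_{\min}^{(b)}}\Big|.
\]

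Combining the two one-sided bounds shows that the absolute difference is at most the larger of the two entrywise errors, which is exactly the claimed maximum. There is no genuine obstacle here. The only care needed is to pair each minimizer with the correct inequality: the empirical minimizer controls the lower side and the oracle minimizer controls the upper side. The argument also holds if the argmin is not unique, provided any fixed choice of minimizer is used consistently.
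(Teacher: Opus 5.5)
Your proof is correct and follows essentially the same argument as the paper: you sandwich $\hat{\Delta}_{\min}^{(b)}-\Delta_{\min}^{*(b)}$ between the entrywise errors at $l_{\min}^{(b)}$ (lower side, using minimality of $l_{\min}^{*(b)}$ for $\Delta^{*(b)}$) and at $l_{\min}^{*(b)}$ (upper side, using minimality of $l_{\min}^{(b)}$ for $\hat{\Delta}^{(b)}$). You then take the larger absolute value of the two, exactly as the paper does.
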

Therefore, the key is bounding $\hat{\Delta}_j^{(b)}-\Delta_j^{*(b)}$ for each $1\leq j\leq M$. Recall that $Q^{\star(b)}$ is the data-independent feature sampling distribution close to $Q^{(b)}$, defined in Assumption~\ref{ass:Q_stb}.
Let $\Delta_j^{(b)}(Q^{*(b)}) := \bbE(h_j(Z^*,\bZ,Q^{*(b)})|\bZ)$ be the expected feature importance of $j$ conditioning on the training data $\bZ$, where the expectation is taken over an independent test point $Z^*$ sampled from the same distribution as $Z_i$, when the population feature sampling distribution $Q^{*(b)}$ is used in minipatch sampling. We sometimes also write $\Delta_j^{*(b)}$ as $\Delta_j^{*(b)}(Q^{(b-1)})$ to emphasize its dependence on the updated sampling distribution $Q^{(b-1)}$. We can then decompose the feature importance error into two terms:
$$
\hat{\Delta}_j^{(b)} -\Delta_j^{*(b)}(Q^{(b-1)}) = \hat{\Delta}_j^{(b)}-\Delta_j^{(b)}(Q^{*(b-1)}) + \Delta_j^{(b)}(Q^{*(b-1)}) - \Delta_j^{*(b)}(Q^{(b-1)}).
$$

The following Lemmas establish upper bounds for each term above.
\begin{lem}\label{lem:LOCO_LOO_err}
    Suppose that Assumptions~\ref{ass:adp_bounded}-\ref{ass:Q_stb} hold. Then with probability at least $1-(MN)^{-c}$, for any $0\leq b\leq B-1$ and $1\leq j\leq M$,
   \begin{equation*}
    \begin{split}
        \big|\hD{b+1}{j} - \D{b+1}{j}(Q^{*(b)})\big| \lesssim &\sqrt{\frac{q_j^{(b)}(\log M+\log N)}{K}} + \frac{\log M+\log N}{K} \\
        &+\big(q_j^{(b)}+q_j^{*(b)}\big)\|\bq^{(b)}-\bq^{*(b)}\|_1+\big|q_j^{(b)}-q_j^{*(b)}\big|\Big) \\
        &+Cq_j^{*(b)}\sqrt{\frac{n}{N}(\log M+\log N)}.
    \end{split}
\end{equation*}
\end{lem}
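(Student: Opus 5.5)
We decompose $\hD{b+1}{j}-\D{b+1}{j}(Q^{*(b)})$ into three pieces and bound each separately, then take a union bound over $j\le M$ and $b\le B\le C'M$. Let $\bar\Delta^{(b+1)}_j$ be the infinite-$K$ analogue of Algorithm~\ref{algo:minipatch_train}. In it the LOO and LOO+LOCO predictions are replaced by their conditional expectations over $(I,F)$ given $\bZ$, with $F\sim Q^{(b)}$. Let $\bar\Delta^{(b+1)}_j(Q^{*(b)})$ be the same quantity with $F\sim Q^{*(b)}$. Then
\[
\hD{b+1}{j}-\D{b+1}{j}(Q^{*(b)}) = \big(\hD{b+1}{j}-\bar\Delta^{(b+1)}_j\big)+\big(\bar\Delta^{(b+1)}_j-\bar\Delta^{(b+1)}_j(Q^{*(b)})\big)+\big(\bar\Delta^{(b+1)}_j(Q^{*(b)})-\D{b+1}{j}(Q^{*(b)})\big).
\]
The three terms give, in order, the Monte Carlo terms, the sampling-perturbation terms, and the $\sqrt{n/N}$ stability term.

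\textbf{Monte Carlo term.} Condition on $\bZ$ and on $\bq^{(b)}$. For each $i$, write $\mu_{-i}-\mu_{-i}^{-j}$ as an average over minipatches with $i\notin I_k$. The two averages differ only through the minipatches with $j\in F_k$, whose number concentrates around $Kq_j^{(b)}(1-n/N)$. By Assumption~\ref{ass:adp_bounded}, each summand of $\hat\Delta_j(X_i,Y_i)$ is bounded. We linearize the squared loss: $(Y-a)^2-(Y-b)^2=(b-a)(2Y-a-b)$. The fluctuation of $\mu_{-i}^{-j}-\mu_{-i}$ around its mean then has variance of order $q_j^{(b)}/K$. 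Bernstein's inequality for sums of $K$ independent bounded minipatch draws then gives $\sqrt{q_j(\log M+\log N)/K}+(\log M+\log N)/K$, uniformly over $i,j$ after a union bound. Averaging over $i$ preserves this bound. A ratio-of-averages argument handles the random denominators, using $K\gtrsim (M/m)\log(MN)$ from Assumption~\ref{ass:adp_n_minipatches}.

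\textbf{Sampling-perturbation term.} Here we invoke the smoothness property (Lemma~\ref{lem:DeltaErrDiffQ}) via the coupling of Lemma~\ref{lem:F1_F2_diff}. Couple $F_1\sim Q^{(b)}$ and $F_2\sim Q^{*(b)}$ coordinatewise so that $\bbP(F_1\ne F_2)\le\|\bq^{(b)}-\bq^{*(b)}\|_1$ up to the normalization by the non-emptiness event. The difference of the ensemble predictors, both with and without $j$, is bounded by a constant times this mismatch probability. The LOCO contrast is nonzero only through the event $j\in F$, which contributes the factor $(q_j^{(b)}+q_j^{*(b)})\|\bq^{(b)}-\bq^{*(b)}\|_1$. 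A direct change in the inclusion probability of $j$ gives $|q_j^{(b)}-q_j^{*(b)}|$.

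\textbf{Stability term, the main obstacle.} Now $Q^{*(b)}$ is data-independent, so $\bar\Delta_j^{(b+1)}(Q^{*(b)})$ is a LOO estimate of $\D{b+1}{j}(Q^{*(b)})$. The error has two parts: replacing $\bar\mu_{-i}$ by $\bar\mu$, which is an $O(n/N)$ change, and the concentration of $\frac1N\sum_i h_j(Z_i,\bZ,Q^{*})$ around its conditional mean. The difficulty is to get a bound proportional to $q_j^{*(b)}$ rather than $O(1)$; otherwise the relative error in Assumption~\ref{ass:min_sig} fails. We factor $h_j=(\bar\mu-\bar\mu^{-j})(2Y-\bar\mu-\bar\mu^{-j})$. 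Since $\bar\mu-\bar\mu^{-j}=\bbP(j\in F)\big(\bbE_{Q_j}-\bbE_{Q_{\no j}}\big)\bar\mu_F$, it carries an explicit factor $q_j^*$. Expanding $\bar\mu_F$ as an average over $\binom{N}{n}$ subsets, the statistic becomes an average of order-$N^3/n^2$ terms indexed by (test point, two minipatches). Each sample enters only a fraction $n/N$ of them. We would apply a read-$k$ family tail bound (Gavinsky et al.) to obtain exponential concentration at rate $\sqrt{(n/N)\log(MN)}$, multiplied by $q_j^*$. Verifying bounded dependence degree and obtaining a uniform bound over $j$ is the delicate part. Finally we union bound over $j$, $b$, and the Monte Carlo events to get probability $1-(MN)^{-c}$.
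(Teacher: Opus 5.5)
Your proposal is correct and follows essentially the same route as the paper. The paper's four-term split \eqref{eq:Delta_err_decomp} is your three-term split with the stability part separated out: the Monte Carlo term is handled by Hoeffding/Bernstein, the perturbation term by Lemma~\ref{lem:DeltaErrDiffQ} via the coupling, and the stability part by the read-$k$ bound (Lemma~\ref{lem:read_k_concentration}) plus a deterministic $O(q_j^{*(b)}n/N)$ swap. The only difference is the ordering in that last step. The paper first centers $h_j(Z_i,\bZ_{\no i},Q^{*(b)})$ at $\bbE[\,\cdot\mid\bZ_{\no i}]$, so every summand with $i\notin I_1\cup I_2$ is exactly mean zero, and only afterwards replaces $\bZ_{\no i}$ by $\bZ$ inside the expectation. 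Your order passes first to the in-sample $h_j(Z_i,\bZ,Q^{*(b)})$, so you would have to peel off the terms with $i\in I_1\cup I_2$ separately before read-$k$ applies; this works but is less clean.
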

\begin{lem}\label{lem:training_err}
Suppose that Assumptions\ref{ass:adp_bounded} and \ref{ass:Q_stb} hold. Then for any $0\leq b\leq B-1$ and $1\leq j\leq M$,
    \begin{equation*}
        \begin{split}
        \big|\Delta^{(b+1)}_j(Q^{*(b)}) - \Delta^{*(b+1)}_j(Q^{(b)})\big|\lesssim& q_j^{*(b)}\left[\bbE_{F\sim Q^{*(b)}}\varepsilon_{\rm train}(F)+\bbE_{F\sim Q^{*(b)}_{j}}\varepsilon_{\rm train}(F)+\bbE_{F\sim Q^{*(b)}_{\no j}}\varepsilon_{\rm train}(F)\right]\\
            &+\big(q_j^{(b)}+q_j^{*(b)}\big)\big\|\bq^{(b)}-\bq^{*(b)}\big\|_1 + \big|q_j^{(b)}-q_j^{*(b)}\big|,
        \end{split}
    \end{equation*}
    where $\varepsilon_{\rm train}(F)$ is as defined in Assumption~\ref{ass:train_err}.
\end{lem}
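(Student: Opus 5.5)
We need to prove Lemma \ref{lem:training_err}: bound $|\Delta_j^{(b+1)}(Q^{*(b)})-\Delta_j^{*(b+1)}(Q^{(b)})|$. The plan is to insert the intermediate quantity $\Delta_j^{*}(Q^{*(b)})$ and split
\[
\Delta^{(b+1)}_j(Q^{*(b)}) - \Delta^{*}_j(Q^{(b)}) = \big[\Delta^{(b+1)}_j(Q^{*(b)}) - \Delta^{*}_j(Q^{*(b)})\big] + \big[\Delta^{*}_j(Q^{*(b)}) - \Delta^{*}_j(Q^{(b)})\big].
\]
The first bracket uses the same data-independent distribution in both terms and differs only through $\bar\mu_F$ versus $\mu_F^*$. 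It should give the training-error part. The second bracket is a purely population quantity. It should give the $\|\bq^{(b)}-\bq^{*(b)}\|_1$ and $|q_j^{(b)}-q_j^{*(b)}|$ terms.

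\textbf{First bracket.} Write $g=\bbE_{F\sim Q^*}\bar\mu_F$, $g_{\no j}=\bbE_{F\sim Q^*_{\no j}}\bar\mu_F$, and $g^*,g^*_{\no j}$ for the oracle analogues. Using $a^2-b^2=(a-b)(a+b)$, the difference of the two LOCO targets becomes
\[
\bbE\big[(g^*-g_{\no j}^*-g+g_{\no j})\,\cdot\,(\text{bounded})\big] + \bbE\big[(g^*_{\no j}-g^*)\,\cdot\,(g_{\no j}-g^*_{\no j}+g^*-g)\big],
\]
with boundedness supplied by Assumption~\ref{ass:adp_bounded}. The key identity is $Q^*=q_j' Q^*_j+(1-q_j')Q^*_{\no j}$, where $q_j'=\bbP_{Q^*}(j\in F)\asymp q_j^{*(b)}$. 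From it, $g-g_{\no j}=q_j'\,(\bbE_{Q^*_j}\bar\mu_F-\bbE_{Q^*_{\no j}}\bar\mu_F)$, and the same holds for the oracle versions. Therefore
\[
|(g-g_{\no j})-(g^*-g^*_{\no j})| \le q_j'\,\big(|\bbE_{Q^*_j}(\bar\mu_F-\mu_F^*)| + |\bbE_{Q^*_{\no j}}(\bar\mu_F-\mu^*_F)|\big).
\]
Taking $\bbE_X$ and applying Jensen gives $q_j^{*}\big[\bbE_{Q^*_j}\varepsilon_{\rm train}+\bbE_{Q^*_{\no j}}\varepsilon_{\rm train}\big]$, which is the desired form. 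For the cross term, $|g^*_{\no j}-g^*|\lesssim q_j'$ by the same identity and boundedness, and the other factor has $L^1$ norm at most $\bbE_{Q^*}\varepsilon_{\rm train}+\bbE_{Q^*_{\no j}}\varepsilon_{\rm train}$. Recovering the prefactor $q_j^*$ on every piece is what makes this bracket work.

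\textbf{Second bracket.} This is the main obstacle. We need $\Delta^*_j(\bq)$ to be Lipschitz in $\bq$ with a relative factor: the difference should be bounded by $(q_j+q_j^*)\|\bq-\bq^*\|_1+|q_j-q_j^*|$, not merely by $\|\bq-\bq^*\|_1$. The plan is to apply the same mixture decomposition to each $\bq$, writing
\[
\Delta^*_j(\bq)=\bbE\big[(\bar p_j\,D_j(\bq))\,(2Y-\cdots)\big], \qquad D_j(\bq)=\bbE_{Q_{\bq,j}}\mu_F^*-\bbE_{Q_{\bq,\no j}}\mu_F^*.
\]
Then $|\bar p_j(\bq)-\bar p_j(\bq^*)|\lesssim |q_j-q_j^*|+q_j\|\bq-\bq^*\|_1$, because of the normalization by $1-\prod_l(1-q_l)\ge 1-e^{-m}$. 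The conditional laws $Q_{\bq,j}$ and $Q_{\bq^*,j}$ are product-Bernoulli, up to the nonempty conditioning. Their total variation distance is therefore at most $\sum_l|q_l-q_l^*|$ plus an $e^{-m}$-small correction from the conditioning. So $\|D_j(\bq)-D_j(\bq^*)\|_{L^1}\lesssim\|\bq-\bq^*\|_1$ by boundedness of $\mu_F^*$. The same total variation bound controls the remaining $(2Y-\cdots)$ factor.

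\textbf{Assembly.} Combining, every term in the second bracket carries either $\bar p_j\asymp q_j$ times a total variation distance, or a difference of $\bar p_j$'s. Summing the two brackets yields the bound stated in Lemma~\ref{lem:training_err}.
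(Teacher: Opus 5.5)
Your proof is correct. The first bracket is essentially the paper's bound on $E_1$: the mixture identity $Q^{*(b)}=\bbP(j\in F)\,Q^{*(b)}_j+\bbP(j\notin F)\,Q^{*(b)}_{\no j}$, then boundedness, then Fubini to reach $\varepsilon_{\rm train}$. One sign in your cross term is off: its second factor should be $(g^*-g)+(g^*_{\no j}-g_{\no j})$. This is harmless, since only absolute values are used.

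The second bracket is where you genuinely differ. The paper does not re-derive $E_2$. It notes that the proof of Lemma~\ref{lem:DeltaErrDiffQ} uses only boundedness of the predictors, so it applies verbatim with $\mu_F^*$ in place of $\bar\mu_F$. That lemma works with $\delta_j(F;Q)=Q(F)\big(\ind(j\in F)-\omega_j\ind(j\notin F)\big)$ without factoring out the inclusion probability. It therefore needs the explicit coupling of Lemma~\ref{lem:F1_F2_diff}, including its refined bounds on $\bbP(j\in F^{(1)},F^{(1)}\neq F^{(2)})$, and the Lipschitz bound on $\omega_j$ in Lemma~\ref{lem:omega_j_diff}.

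You instead factor $\bbE_{Q}\mu^*_F-\bbE_{Q_{\no j}}\mu^*_F=\bar p_j D_j$ up front. This makes the $q_j$ prefactor explicit, and what remains is plain total-variation control of the conditional laws, which are (conditioned) product-Bernoulli laws.

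What each route buys:
\begin{itemize}
\item Your route is more elementary and self-contained. It avoids the side conditions of Lemma~\ref{lem:F1_F2_diff} and the requirement $1-\delta-e^{-m}\ge c$ used to control $\omega_j$, needing only $1-\prod_{l\neq j}(1-q_l)$ bounded away from zero.
\item The paper's route is economical, because Lemma~\ref{lem:DeltaErrDiffQ} is needed anyway for the pointwise, data-dependent term $\varepsilon_2$ in Lemma~\ref{lem:LOCO_LOO_err}. Your factorization, applied pointwise in $Z$, would also prove that lemma.
\end{itemize}

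One phrasing needs tightening. Conditioning on $F\neq\emptyset$ contributes a multiplicative correction, not an additive one:
$\mathrm{TV}\big(P(\cdot\mid A),P'(\cdot\mid A)\big)\le 2\,\mathrm{TV}(P,P')/P(A)$, with $P(A)=1-\prod_{l\neq j}(1-q_l)\ge 1-e^{-(m-\delta)}$. An additive $O(e^{-m})$ term would not vanish as $\bq\to\bq^*$ and would break the bound. Your conclusion $\|D_j(\bq)-D_j(\bq^*)\|_{L^1}\lesssim\|\bq-\bq^*\|_1$ is nonetheless right.
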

Putting Lemmas~\ref{lem:LOCO_LOO_err} and \ref{lem:training_err} together, we have that with probability at least $1-(MN)^{-c}$, for any $0\leq b\leq B-1$ and $1\leq j\leq M$,
\begin{equation}\label{eq:Delta_err}
    \begin{split}
        \big|\hD{b+1}{j} - \Delta^{*(b+1)}_j\big| \lesssim& \sqrt{\frac{q_j^{(b)}(\log M+\log N)}{K}} + \frac{\log M+\log N}{K} \\
        &+\Big(q_j^{(b)}+q_j^{*(b)}\Big)\|\bq^{(b)}-\bq^{*(b)}\|_1+\Big|q_j^{(b)}-q_j^{*(b)}\Big|\\
        &+q_j^{*(b)}\sqrt{\frac{n}{N}(\log M+\log N)}\\
        &+q_j^{*(b)}\left[\bbE_{F\sim Q^{*(b)}}\varepsilon_{\rm train}(F)+\bbE_{F\sim Q^{*(b)}_{j}}\varepsilon_{\rm train}(F)+\bbE_{F\sim Q^{*(b)}_{\no j}}\varepsilon_{\rm train}(F)\right],
    \end{split}
\end{equation}
where the errors arise from finite sampling of minipatches, deviation of $Q^{(b)}$ from a data-independent $Q^{*(b)}$, minipatch sampling ratio $\frac{n}{N}$, and base model training error. The sampling probabilities $q_j^{(b)}$ and $q_j^{*(b)}$ of feature $j$ also play important roles.

By Assumptions~\ref{ass:no_min_for_large_q} and \ref{ass:Q_stb}, for $k=l_{\min}^{(b)}$ or $l_{\min}^{*(b)}$, $q_k^{*(b-1)}\leq q_k^{(b-1)}+\frac{cm}{M}\leq \frac{Cm}{M}$ for some constant $C>0$. Now, combining this fact with Lemma~\ref{lem:epsilon_min_bound}, Eq.~\eqref{eq:Delta_err}, and Assumption~\ref{ass:adp_n_minipatches}, we have 
\begin{equation}\label{eq:Delta_min_err}
    \begin{split}
        \Big|\hat{\Delta}_{\min}^{(b+1)}- \Delta_{\min}^{*(b+1)}\Big|\lesssim & \sqrt{\frac{m(\log M+\log N)}{MK}}+\frac{m}{M}\big\|\bq^{(b)}-\bq^{*(b)}\big\|_1\\
        &+ \Big|q_{l_{\min}^{(b+1)}}^{(b)}-q_{l_{\min}^{(b+1)}}^{*(b)}\Big|+\Big|q_{l_{\min}^{*(b+1)}}^{(b)}-q_{l_{\min}^{*(b+1)}}^{*(b)}\Big|\\
        &+\frac{m}{M}\sqrt{\frac{n}{N}(\log M+\log N)}\\
        &+\frac{m}{M}\left[\bbE_{F\sim Q^{*(b)}}\varepsilon_{\rm train}(F)+\max_{j\in [M]}\Big(\bbE_{F\sim Q^{*(b)}_{j}}\varepsilon_{\rm train}(F)+\bbE_{F\sim Q^{*(b)}_{\no j}}\varepsilon_{\rm train}(F)\Big)\right].
    \end{split}
\end{equation}

In the following, we prove that Assumptions~\ref{ass:min_sig} and \ref{ass:noise_err} both hold, and then Theorem~\ref{thm:main2} immediately follows from Theorem~\ref{thm:main1}.

\paragraph{Assumption~\ref{ass:min_sig} holds.} Given \eqref{eq:epsilon_decomp}, \eqref{eq:Delta_err}, and \eqref{eq:Delta_min_err}, we have that with probability at least $1-(MN)^{-c}$, for any $0\leq b\leq B-1$,
\begin{equation*}
    \begin{split}
        \|\varepsilon^{(b+1)}_{S^c}\|_1\leq &\sqrt{\frac{mM(\log M+\log N)}{K}} + m\sqrt{\frac{n}{N}(\log M+\log N)} \\
        &+m\left[\bbE_{F\sim Q^{*(b)}}\varepsilon_{\rm train}(F)+\bbE_{F\sim Q^{*(b)}_{j}}\varepsilon_{\rm train}(F)+\bbE_{F\sim Q^{*(b)}_{\no j}}\varepsilon_{\rm train}(F)\right]\\
        &+m\|\bq^{(b)}-\bq^{*(b)}\|_1 + M\Big|q_{l_{\min}^{(b+1)}}^{(b)}-q_{l_{\min}^{(b+1)}}^{*(b)}\Big|+M\Big|q_{l_{\min}^{*(b+1)}}^{(b)}-q_{l_{\min}^{*(b+1)}}^{*(b)}\Big|\\
        \leq&c(m-s\delta)\min_{k\in S}\|f_k\|_2^2,
    \end{split}
\end{equation*}
where the second inequality is due to Assumptions~\ref{ass:small_n}- \ref{ass:train_err} and \ref{ass:Q_stb}. As long as we choose the constant $c>0$ ($C>0$) sufficiently small (resp. large) in Assumptions~\ref{ass:small_n}- \ref{ass:train_err} and \ref{ass:Q_stb}, it is guaranteed that $\min_{k\in S}\|f_k\|_2^2\geq \frac{24\|\varepsilon^{(b+1)}_{S^c}\|_1}{m-s\delta}$. To show that Assumption~\ref{ass:min_sig} holds, it now remains to prove that $\min_{k\in S}\|f_k\|_2^2\geq \frac{6|\varepsilon^{(b+1)}_j|}{q_j^{(b)}}$. We prove this using an induction argument on $b=0,\dots B-1$.

First, fix $b=0$. By initialization, $\bq^{(b)}=\frac{m}{M}\mathbf{1}_M$. Assumption~\ref{ass:Q_stb} implies that $1-c\leq|q_j^{(b)}/q_j^{*(b)}|\leq 1 + c$, and hence $|q_j^{*(b)}/q_j^{(b)}|$ is bounded. For any $j\in S$, we then have
\begin{equation}\label{eq:epsilon_by_q_err}
    \begin{split}
        \frac{\big|\varepsilon_j^{(b+1)}\big|}{q_j^{(b)}}\lesssim &\sqrt{\frac{M(\log M+\log N)}{mK}} + \sqrt{\frac{n}{N}(\log M+\log N)}\\
        &+\big\|\bq^{(b)}-\bq^{*(b)}\big\|_1 + \frac{\big|q_j^{(b)}-q_j^{*(b)}\big|}{q_j^{*(b)}} \\
        &+\bbE_{F\sim Q^{*(b)}}\varepsilon_{\rm train}(F)+\bbE_{F\sim Q^{*(b)}_{j}}\varepsilon_{\rm train}(F)+\bbE_{F\sim Q^{*(b)}_{\no j}}\varepsilon_{\rm train}(F)\\
        &+\frac{M}{m}\Big|q_{l_{\min}^{(b+1)}}^{(b)}-q_{l_{\min}^{(b+1)}}^{*(b)}\Big|+\frac{M}{m}\Big|q_{l_{\min}^{*(b+1)}}^{(b)}-q_{l_{\min}^{*(b+1)}}^{*(b)}\Big|.
    \end{split}
\end{equation}
Assumptions~\ref{ass:small_n}-\ref{ass:no_min_for_large_q} then imply that $\min_{k\in S}\|f_k\|_2^2\geq \frac{6|\varepsilon^{(b+1)}_j|}{q_j^{(b)}}$ for all $j\in S$ when $b=0$.

Now suppose that $\min_{k\in S}\|f_k\|_2^2\geq \frac{6|\varepsilon^{(l+1)}_j|}{q_j^{(l)}}$ holds for $j\in S$ and $l\leq b$. Then Lemma~\ref{lem:q_growth} implies that $q_j^{(l+1)}\geq 1.5q_j^{(l)}\wedge \delta$, $\forall j\in S$, $\forall 0\leq l\leq b$. Hence, $q_j^{(l+1)}\geq (1.5)^{l+1}q_j^{(0)}\wedge \delta\geq \frac{m}{M}$. Therefore, \eqref{eq:epsilon_by_q_err} still holds for $l=b+1$. Hence, $\forall j\in S$, $\min_{k\in S}\|f_k\|_2^2\geq \frac{6|\varepsilon^{(l+1)}_j|}{q_j^{(l)}}$ also holds for $l=b+1$. By induction, $\min_{k\in S}\|f_k\|_2^2\geq \max_{0\leq b\leq B-1}\frac{6|\varepsilon^{(b+1)}_j|}{q_j^{(b)}}$ and consequently, Assumption~\ref{ass:min_sig} holds, as long as we choose $c_0=c(m-s\delta)\min_{k\in S}\|f_k\|_2^2$ for some appropriate $c>0$.

\paragraph{Assumption~\ref{ass:noise_err} holds.} First, Assumption~\ref{ass:minipatch_feature_scaling} immediately implies that $\frac{40m}{\delta}M^{-C_1}\sum_{k\in S}\|f_k\|_2^2\leq \frac{cm}{M}\min_{j\in S}\|f_j\|_2^2$. Furthermore, \eqref{eq:epsilon_decomp}, \eqref{eq:Delta_err}, and \eqref{eq:Delta_min_err} imply that
\begin{equation*}
    \begin{split}
        \Big\|\varepsilon^{(b+1)}_{S^c}\Big\|_{\infty}\lesssim &\sqrt{\frac{\log M+\log N}{K}} + \sqrt{\frac{n}{N}\big(\log M+\log N\big)} \\
        &+\bbE_{F\sim Q^{*(b)}}\varepsilon_{\rm train}(F)+\bbE_{F\sim Q^{*(b)}_{j}}\varepsilon_{\rm train}(F)+\bbE_{F\sim Q^{*(b)}_{\no j}}\varepsilon_{\rm train}(F)\\
        &+\big\|\bq^{(b)}-\bq^{*(b)}\|_1 + \Big|q_{l_{\min}^{(b+1)}}^{(b)}-q_{l_{\min}^{(b+1)}}^{*(b)}\Big|+\Big|q_{l_{\min}^{*(b+1)}}^{(b)}-q_{l_{\min}^{*(b+1)}}^{*(b)}\Big|\\
        \leq&\frac{c(m-s\delta)}{m}\min_{j\in S}\|f_j\|_2^2.
    \end{split}
\end{equation*}
Therefore, as long as we choose $c_0=c(m-s\delta)\min_{j\in S}\|f_j\|_2^2$ for some appropriate $c>0$, Assumption~\ref{ass:noise_err} also holds without violating Assumption~\ref{ass:min_sig}.

The proof of Theorem~\ref{thm:main2} is now complete by invoking Theorem~\ref{thm:main1}.

\subsection{Proof of Lemma~\ref{lem:LOCO_LOO_err}}\label{sec:LOCO_LOO_err}
For simplicity, throughout this proof, we omit $\D{b+1}{j}(Q^{*(b)})$ to $\D{b+1}{j}$. In order to concentrate $\hD{b+1}{j}$ around $\D{b+1}{j}$, we first decompose the error as follows:
\begin{equation}\label{eq:Delta_err_decomp}
\begin{split}
    \hD{b+1}{j} - \D{b+1}{j} =&\frac{1}{N}\sum_{i=1}^N\hh{b+1}_j(Z_i,\bZ_{\del i}) - \bbE[h_j(Z^*,\bZ, \Q{*}{b})|\bZ]\\
    =&\underbrace{\frac{1}{N}\sum_{i=1}^N\big(\hh{b+1}_j(Z_i,\bZ_{\del i}) - h_j(Z_i,\bZ_{\del i}, \Q{}{b})\big)}_{\varepsilon_1}\\
    &+\underbrace{\frac{1}{N}\sum_{i=1}^N\big(h_j(Z_i,\bZ_{\del i}, \Q{}{b}) - h_j(Z_i,\bZ_{\del i}, \Q{*}{b})\big)}_{\varepsilon_2}\\
    &+\underbrace{\frac{1}{N}\sum_{i=1}^N\big(h_j(Z_i,\bZ_{\del i}, \Q{*}{b}) - \bbE[h_j(Z_i,\bZ_{\del i}, \Q{*}{b})|\bZ_{\del i}]\big)}_{\varepsilon_3}\\
    &+\underbrace{\frac{1}{N}\sum_{i=1}^N\big(\bbE[h_j(Z^*,\bZ_{\del i}, \Q{*}{b})|\bZ_{\del i}] - \bbE[h_j(Z^*,\bZ, \Q{*}{b})|\bZ]\big)}_{\varepsilon_4},
\end{split}
\end{equation}
where the first line is due to the definition of $\D{b}{j}$; the expectation in $\varepsilon_3$ is taken over $Z_i$ conditioning on $\bZ_{\del i} = (\bX_{\no i,:}, \bY_{\no i})$, and the expectation in $\varepsilon_4$ is taken over $Z^*$ conditioning on $\bZ$, respectively. $Z^*$ is independent from $\bZ$ and follows the same distribution $\cP_Z$ as each $Z_i$. We will show upper bounds for each error terms above.

The following lemma bounds the error arising from finite sampling of minipatches.
\begin{lem}\label{lem:finiteKerr}
    Suppose that Assumptions \ref{ass:adp_bounded} and \ref{ass:adp_n_minipatches} hold. With probability at least $1-(MN)^{-c}$, for any $0\leq b\leq B-1$ and $1\leq j\leq M$, the following holds:
\begin{align*}
     \frac{1}{N}\sum_{i=1}^N\left|\hat{h}_j^{(b+1)}(Z_i,\bZ_{\backslash i,:}) - h_j(Z_i,\bZ_{\backslash i,:},Q^{(b)})\right|\lesssim \sqrt{\frac{q_j(\log M+\log N)}{K}}+\frac{\log M+\log N}{K}.
\end{align*}
\end{lem}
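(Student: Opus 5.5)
We condition on the training data $\bZ$ and on the sampling vector $\bq^{(b)}$; given these, the $K$ minipatches $(I_k,F_k)$ are i.i.d., with $I_k\sim\cU_n^{[N]}$ and $F_k\sim Q^{(b)}$. Under the conditional law $\bq^{(b)}$ is fixed. The bound is proved for each fixed $(i,j,b)$ and then extended by a union bound over $N\cdot M\cdot B\le N M^2$ events (using $B\le C'M$), which yields the $\log M+\log N$ factor.

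For fixed $i,j$, write $a_k=\ind(i\notin I_k)$ and $a_k^{j}=a_k\ind(j\notin F_k)$. The LOO and LOO+LOCO predictions are ratio estimators,
$$\mu_{-i}(X_i)=\frac{\sum_k a_k\mu_k(X_i)}{\sum_k a_k},\qquad \mu_{-i}^{-j}(X_i)=\frac{\sum_k a_k^{j}\mu_k(X_i)}{\sum_k a_k^{j}}.$$
Their conditional means are the population averages $\mu^*(X_i;Q^{(b)},\bZ_{\no i})$ and $\mu^*(X_i;Q^{(b)}_{\no j},\bZ_{\no i})$. Here we use that conditioning on $i\notin I$ makes $I$ uniform over size-$n$ subsets of $[N]\no i$.

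By Assumption~\ref{ass:adp_bounded}, the map $(u,v)\mapsto (Y-u)^2-(Y-v)^2$ is $O(1)$-Lipschitz on the bounded range. A plain Bernstein bound on each prediction would only give a rate $\sqrt{\log/K}$ with no factor $q_j$. To obtain $\sqrt{q_j\log/K}$, we exploit the structure of $\hat h_j$ and rewrite the difference of the two ratio estimators as
$$\mu_{-i}^{-j}-\mu_{-i}=\frac{\sum_k a_k\ind(j\in F_k)\big(\mu_{-i}^{-j}-\mu_k(X_i)\big)}{\sum_k a_k}.$$
Only minipatches containing $j$ contribute to this expression. Their fraction has mean $\asymp q_j/(1-r)\lesssim q_j$, and each summand is bounded. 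The loss difference then factors as
$$(\mu_{-i}-\mu_{-i}^{-j})\,(2Y_i-\mu_{-i}-\mu_{-i}^{-j}).$$
The first factor is estimated at variance scale $q_j/K$ via Bernstein's inequality. The second factor is controlled at the cruder scale $\sqrt{\log/K}$, so its deviation multiplied by the $O(q_j)$ size of the first factor is of lower order. We apply the same decomposition to the population quantity $h_j(Z_i,\bZ_{\no i},Q^{(b)})$ and subtract.

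The denominators are handled next. Using Assumption~\ref{ass:adp_n_minipatches} and $n/N$ small, $\sum_k a_k\ge K/2$ and $\sum_k a_k^j\ge K(1-\delta)/4$ with probability $1-(MN)^{-c}$ by a Chernoff bound. Since $q_j\le\delta<1$, this bounds the ratio linearization errors by $O(\log/K)$. The Bernstein bound on $\sum_k a_k\ind(j\in F_k)(\mu_k(X_i)-\text{mean})$ has variance proxy $Kq_j$ and range $O(1)$. It therefore gives deviation $\sqrt{Kq_j\log}+\log$, which after division by $K$ is exactly $\sqrt{q_j\log/K}+\log/K$.

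Averaging over $i$ preserves the bound, since we have a uniform bound for every $i$. The main obstacle is that $\bq^{(b)}$ is data dependent. Nevertheless, conditional on everything up to iteration $b$, the minipatches drawn at iteration $b+1$ are fresh randomness. The conditional Bernstein bound therefore holds with $q_j=q_j^{(b)}$, and taking expectations removes the conditioning. The remaining care is in making the $q_j$ scaling survive the ratio linearization, which is why the cross terms are organized around the factor $\ind(j\in F_k)$.
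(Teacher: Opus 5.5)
Your proposal is correct and follows essentially the same route as the paper. You factor both $\hat h_j$ and $h_j$ as (difference of predictions)$\times$(sum term), use the $O(q_j)$ size of the population difference so that the Hoeffding-scale $\sqrt{(\log M+\log N)/K}$ error of the sum factor contributes only $q_j\sqrt{(\log M+\log N)/K}$, and control the difference factor by Bernstein with variance proxy $\asymp q_j$ on the terms carrying $\ind(j\in F_k)$, with denominators bounded away from zero and a union bound over $(i,j,b)$. Your recentered form $\sum_k a_k\ind(j\in F_k)(\mu_{-i}^{-j}-\mu_k(X_i))/\sum_k a_k$ is algebraically the paper's $\gamma_{i,j,k}/\beta_{i,j,k}$ decomposition, and conditioning on past iterations to treat $\bq^{(b)}$ as fixed makes explicit what the paper does implicitly.
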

Lemma~\ref{lem:finiteKerr} immediately implies that with probability at least $1-(MN)^{-c}$, \begin{equation}\label{eq:DeltaErr1}
    |\varepsilon_1|\lesssim \sqrt{\frac{q_j(\log M+\log N)}{K}}+\frac{\log M+\log N}{K}.
\end{equation}

While for $\varepsilon_2$, we note that it arises from the difference between $Q^{(b)}$ and $Q^{*(b)}$, where $Q^{(b)}$ is the adaptively updated sampling distribution function using Algorithm~\ref{algo:loco_adamp}, and $Q^{*(b)}$ is the deterministic sampling distribution that is close to $Q^{(b)}$ in Assumption~\ref{ass:Q_stb}. The following Lemma shows how the difference in $Q$ translates to difference in the feature importance function.
\begin{lem}\label{lem:DeltaErrDiffQ}
    Suppose that Assumption~\ref{ass:adp_bounded} holds. Given two feature sampling probability vectors $\bq,\,\tilde{\bq}\in[0,1]^M$, satisfying $\sum_{j=1}^M q_j=\sum_{j=1}^M \tilde{q}_j=m$, we have
    \begin{equation*}
\begin{split}
    \left|h_j(Z,\bZ,Q_{\bq}) -h_j(Z,\bZ,Q_{\tilde{\bq}})\right|
    \lesssim \big(q_j+\tilde{q}_j\big)\big\|\bq-\tilde{\bq}\big\|_1 + \big|q_j - \tilde{q}_j\big|.
\end{split}
\end{equation*}
    where $Q_{\bq},\,Q_{\tilde{\bq}}:2^{[M]}\rightarrow [0,1]$ are the feature sampling distribution functions corresponding to $\bq$ and $\tilde{\bq}$.
\end{lem}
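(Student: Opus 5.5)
We prove Lemma~\ref{lem:DeltaErrDiffQ} by writing $h_j$ as a difference of squared losses and reducing everything to differences of ensemble predictors. Write $\mu^*(\cdot;Q)=\bbE_{F\sim Q}\bar\mu_F$, so that
\[
h_j(Z,\bZ,Q)=\big(Y-\mu^*(X;Q_{\no j})\big)^2-\big(Y-\mu^*(X;Q)\big)^2 .
\]
Set $a=\mu^*(X;Q_{\bq,\no j})-\mu^*(X;Q_{\bq})$ and define $\tilde a$ in the same way for $\tilde\bq$. Then
\[
h_j=a\big(2Y-\mu^*(X;Q_{\bq,\no j})-\mu^*(X;Q_{\bq})\big).
\]
By Assumption~\ref{ass:adp_bounded} all predictors and $Y$ are bounded by $C$. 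Hence
\[
|h_j(Q_{\bq})-h_j(Q_{\tilde\bq})|\lesssim |a-\tilde a|+|a|\cdot\big(|\mu^*(Q_{\bq})-\mu^*(Q_{\tilde\bq})|+|\mu^*(Q_{\bq,\no j})-\mu^*(Q_{\tilde\bq,\no j})|\big).
\]
The proof therefore needs three ingredients: (i) $|a|\lesssim q_j$; (ii) the full and leave-out ensembles move by $O(\|\bq-\tilde\bq\|_1)$; (iii) $|a-\tilde a|\lesssim (q_j+\tilde q_j)\|\bq-\tilde\bq\|_1+|q_j-\tilde q_j|$.

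For (i), use the law of total probability as in the proof of Lemma~\ref{lem:Delta_oracle}:
\[
Q_{\bq}=(1-p_j)Q_{\bq,\no j}+p_jQ_{\bq,j},\qquad p_j=\frac{q_j}{1-r}.
\]
This gives $a=p_j\big(\mu^*(Q_{\bq,\no j})-\mu^*(Q_{\bq,j})\big)$, so $|a|\le 2Cp_j\lesssim q_j$, since $1-r\ge 1-e^{-m}$.

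For (ii), use the coupling idea signalled in the proof sketch, which is Lemma~\ref{lem:F1_F2_diff}. For each feature $\ell$, draw a single uniform $U_\ell$ and set $A_\ell=\ind(U_\ell\le q_\ell)$ and $\tilde A_\ell=\ind(U_\ell\le\tilde q_\ell)$. Then $\bbP(F\neq\tilde F)\le\sum_\ell|q_\ell-\tilde q_\ell|$ before conditioning. Conditioning on nonemptiness changes this only by a factor $(1-e^{-m})^{-1}$ plus a term of order $|r-\tilde r|\lesssim\|\bq-\tilde\bq\|_1$. Boundedness then gives $|\bbE_{Q_{\bq}}\bar\mu_F-\bbE_{Q_{\tilde\bq}}\bar\mu_F|\lesssim\|\bq-\tilde\bq\|_1$. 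The same coupling restricted to coordinates $\ell\neq j$ handles $Q_{\no j}$ and $Q_j$; under $Q_j$ the remaining coordinates are plain independent Bernoullis. Multiplying by $|a|\lesssim q_j$ yields the term $q_j\|\bq-\tilde\bq\|_1$.

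For (iii), write $D=\mu^*(Q_{\bq,\no j})-\mu^*(Q_{\bq,j})$ and $\tilde D$ for its analogue under $\tilde\bq$. Then
\[
a-\tilde a=(p_j-\tilde p_j)D+\tilde p_j(D-\tilde D).
\]
The first term is bounded by $|q_j-\tilde q_j|+q_j|r-\tilde r|$. For the second, the coupling in (ii) gives $|D-\tilde D|\lesssim\|\bq-\tilde\bq\|_1$, so the product is at most $\tilde q_j\|\bq-\tilde\bq\|_1$. Combining (i)–(iii) gives the claim.

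The main obstacle is making the coupling bound uniform under the conditioning on $F\neq\emptyset$, especially for $Q_{\no j}$. Its normalizer $1-r-q_j$ could in principle be small, but it equals $(1-q_j)(1-r_{-j})$. This stays bounded below because $q_j\le\delta<1$ and $\sum_{\ell\neq j}q_\ell\ge m-1$. I would check this step explicitly, and it is where the constraints $q\in[0,\delta]^M$ and $\sum_\ell q_\ell=m$ are used.
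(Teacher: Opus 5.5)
Your proof is correct, but its key step differs from the paper's. Both arguments share three ingredients: a product-rule decomposition of $h_j$, boundedness, and a coupling of feature subsets giving $|\bbE_{Q_{\bq}}\bar\mu_F-\bbE_{Q_{\tilde{\bq}}}\bar\mu_F|\lesssim\|\bq-\tilde{\bq}\|_1$. Your step (ii) plays the role of the paper's bound on $\eta_1$.

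The paper writes $\mu^*(Q)-\mu^*(Q_{\no j})=\sum_F\bar\mu_F\,\delta_j(F;Q)$ with weights $\delta_j(F;Q)=Q(F)\{\ind(j\in F)-\omega_j(Q)\ind(j\notin F)\}$. Because the $q_j$-dependence stays inside the weights, it needs three extra tools:
\begin{itemize}
\item the refined coupling estimates \eqref{eq:F1F2_diff3}--\eqref{eq:F1F2_diff4} on $\bbP(j\in F^{(1)},F^{(1)}\neq F^{(2)})$;
\item a separate Lipschitz bound for $\omega_j$ (Lemma~\ref{lem:omega_j_diff});
\item two independent coupled pairs to handle the quadratic term $\eta_3$.
\end{itemize}

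Your mixture identity $a=p_j\{\mu^*(Q_{\bq,\no j})-\mu^*(Q_{\bq,j})\}$ extracts the factor $q_j$ before any coupling. Afterwards you only need two things. First, crude total-variation bounds of order $\|\bq-\tilde{\bq}\|_1$ for $Q_{\bq}$, $Q_{\bq,j}$ and $Q_{\bq,\no j}$; a one-round shared-uniform coupling, a union bound and a normalizer perturbation suffice. This avoids the rejection construction of Algorithm~\ref{algo:sample_F1F2} and its hypothesis $\|\bq-\tilde{\bq}\|_\infty\le 1/2$. Second, the elementary Lipschitz bound on $p_j=q_j/(1-r)$. The product form of $h_j$ also makes the double coupling for the quadratic term unnecessary. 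Your display for $h_j$ has the sign of $a$ flipped, which is harmless.

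Your ``main obstacle'' is milder than you suggest. $Q_{\bq,\no j}$ is the nonempty-conditioned Bernoulli product on $[M]\setminus\{j\}$, so its normalizer is $1-r_{-j}\ge 1-e^{-(m-1)}$ and the factor $1-q_j$ cancels. Your route therefore does not actually need $q_j\le\delta$. By contrast, the paper's control of $\omega_j$ uses $q_j\le\delta$ and $m\ge-\log(1-\delta-c)$.

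What the paper's heavier machinery buys is an explicit coupling of the conditioned laws, plus the $\delta_j$-weight representation it reuses in Lemmas~\ref{lem:LOCO_LOO_concentration} and~\ref{lem:LOO_err}. For this lemma alone, your decomposition is shorter and cleaner.
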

Therefore, Lemma~\ref{lem:DeltaErrDiffQ} shows that $\varepsilon_2$ can be bounded as follows:
\begin{equation}\label{eq:DeltaErr2_1}
   |\varepsilon_2|\lesssim \big(q_j^{(b)}+q_j^{*(b)}\big)\big\|\bq^{(b)}-\bq^{*(b)}\big\|_1 + \big|q_j^{(b)} - q_j^{*(b)}\big|.
\end{equation}

The following two lemmas each establishes a probabilistic bound for $\varepsilon_3$ and $\varepsilon_4$, respectively.
\begin{lem}\label{lem:LOCO_LOO_concentration}
    Suppose that Assumptions~\ref{ass:adp_bounded} and \ref{ass:small_n} hold, then with probability at least $1-(MN)^{-c}$, for any $1\leq b\leq B$ and $1\leq j\leq M$,
    \begin{align*}
            \left|\frac{1}{N}\sum_{i=1}^N\left(h_j(Z_i,\bZ_{\del i},Q^{*(b)}) -  \bbE\Big[h_j(Z_i,\bZ_{\del i},Q^{*(b)})|\bZ_{\del i}\Big]\right)\right|\lesssim q_j^{*(b)}\sqrt{\frac{n}{N}(\log M+\log N)}.
    \end{align*}
\end{lem}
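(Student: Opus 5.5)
The plan is to first extract the factor $q_j^{*(b)}$ deterministically, then prove a concentration bound for a bounded, self-normalized quantity, and finally take a union bound over $j$ and $b$.

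\textbf{Step 1: isolate the factor $q_j^{*(b)}$.} Write $Q=Q^{*(b)}$, and write $\mu(\cdot)=\mu^*(\cdot;Q,\bZ_{\del i})$ and $\mu^{-j}(\cdot)=\mu^*(\cdot;Q_{\no j},\bZ_{\del i})$ for the ensemble predictors with and without feature $j$. For squared loss,
\[
h_j(Z_i,\bZ_{\del i},Q)=\big(\mu(X_i)-\mu^{-j}(X_i)\big)\big(2Y_i-\mu(X_i)-\mu^{-j}(X_i)\big).
\]
By the law of total probability over $F\sim Q$,
\[
\mu-\mu^{-j}=\bbP_Q(j\in F)\Big(\bbE_{F\sim Q_j}\bar\mu_F-\bbE_{F\sim Q_{\no j}}\bar\mu_F\Big),
\]
where $\bar\mu_F$ is computed on $\bZ_{\del i}$. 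We have $\bbP_Q(j\in F)=q_j/(1-\prod_l(1-q_l))\le q_j/(1-e^{-m})$, so Assumption~\ref{ass:adp_bounded} gives
\[
h_j(Z_i,\bZ_{\del i},Q)=q_j^{*(b)}\,g_j(Z_i,\bZ_{\del i}),\qquad |g_j|\le C.
\]
The bounded function $g_j$ depends on the data only through $U$-statistic-type averages over size-$n$ subsets of $[N]\del i$. Since $Q^{*(b)}$ is independent of $\bZ$, it suffices to show, for each fixed $j$ and $b$, that
\[
\Big|\frac1N\sum_{i=1}^N\big(g_j(Z_i,\bZ_{\del i})-\bbE[g_j\mid\bZ_{\del i}]\big)\Big|\lesssim\sqrt{(n/N)(\log M+\log N)}
\]
holds with probability at least $1-(MN)^{-c-2}$. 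A union bound over $j\le M$ and $b\le B\le C'M$ then gives the lemma.

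\textbf{Step 2: rewrite the sum as an average of sparse-dependence terms.} Expand $g_j$: it is bilinear in averages of the form $\binom{N-1}{n}^{-1}\sum_{I\not\ni i}\mu_{I,F}(X_i)$, multiplied by $Y_i$ or by another such average. Hence $g_j(Z_i,\bZ_{\del i})$ equals an average, over pairs $(I,I')$ of size-$n$ subsets avoiding $i$, of bounded kernels $\phi(Z_i,\bZ_I,\bZ_{I'})$ (already averaged over $F$ against $Q_j$ and $Q_{\no j}$). The centered sum then becomes an average over triples $(i,I,I')$ of centered bounded terms.

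I will re-index these terms into about $N^3/n^2$ groups. Partition the index set $[N]$ randomly, or via a family of balanced designs, into blocks of size about $n$. Each term then touches at most three blocks, and every data point $Z_l$ enters at most $k\asymp N^2/n^2$ of the grouped functions. This is exactly the structure of a read-$k$ family.

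\textbf{Step 3: exponential tail and the main obstacle.} I expect the hardest step to be the centering. The term $g_j(Z_i,\cdot)-\bbE[g_j\mid\bZ_{\del i}]$ is centered only conditionally on $\bZ_{\del i}$, and these conditioning sets overlap across $i$. So neither an i.i.d.\ nor a plain martingale argument applies directly. A naive McDiarmid argument only yields the weaker rate $(n/N)\sqrt{N\log}$.

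My first attempt will be to combine three ingredients:
\begin{itemize}
\item the read-$k$ tail bound of \citet{gavinsky2015tail}, which controls deviations of the average of $T\asymp N^3/n^2$ bounded functions, each read by variables at most $k$ times, at rate $\sqrt{k\log(1/\eta)/T}$, that is, $\sqrt{(n/N)\log(1/\eta)}$ up to constants;
\item a leave-two-out stability step to remove the dependence of each centering term on the other samples. Replacing $\bZ_{\del i}$ by $\bZ_{\del\{i,l\}}$ changes the bagged predictor by $O(n/N)$, since only a fraction $n/N$ of subsets contain $l$. This lets me replace conditional means by expectations over fresh, independent copies, at a cost of $O(n/N)\le O(\sqrt{n/N})$;
\item the choice $\eta=(MN)^{-c-2}$, which gives the claimed $\sqrt{(n/N)(\log M+\log N)}$ rate.
\end{itemize}
Assumption~\ref{ass:small_n}, namely $n/N\ll(\log M+\log N)^{-1}$, ensures these stability corrections are lower order. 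Multiplying the resulting bound back by $q_j^{*(b)}$ from Step 1 completes the argument.
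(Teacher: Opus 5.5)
The paper's proof is exactly your Steps 1 and 2 plus the read-$k$ bullet of Step 3: bound every summand by $Cq_j^{*(b)}$, expand $h_j(Z_i,\bZ_{\del i},Q^{*(b)})$ as an average of kernels $\alpha_{i,I_1,I_2}$ over pairs of size-$n$ subsets $I_1,I_2\subset[N]\setminus\{i\}$, apply a read-$k$ Hoeffding bound, and union bound over $j\le M$ and $b\le B\lesssim M$. The gap is in Step 3. You call the centering the main obstacle and add a leave-two-out stability step to handle it, but there is no obstacle. Since $Q^{*(b)}$ is data-independent, $\alpha_{i,I_1,I_2}$ depends only on $(Z_i,\bZ_{I_1},\bZ_{I_2})$. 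Since $Z_i$ is independent of $\bZ_{\del i}$,
\[
\bbE\big[\alpha_{i,I_1,I_2}\mid \bZ_{\del i}\big]=\bbE\big[\alpha_{i,I_1,I_2}\mid \bZ_{I_1\cup I_2}\big].
\]
So each centered summand $\alpha_{i,I_1,I_2}-\bbE[\alpha_{i,I_1,I_2}\mid\bZ_{I_1\cup I_2}]$ is a local function of $\bZ_{\{i\}\cup I_1\cup I_2}$, bounded by $Cq_j$, with \emph{unconditional} mean zero. That is all a read-$k$ Hoeffding bound needs; the overlapping conditioning sets are irrelevant. Apply it directly to all $N\binom{N-1}{n}^2$ triples. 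Each $Z_l$ enters at most $(2n+1)\binom{N-1}{n}^2$ of them, so the tail is $2\exp\{-cNt^2/(q_j^2 n)\}$, and $t=Cq_j\sqrt{(n/N)(\log M+\log N)}$ finishes.

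The leave-two-out step would also not work as stated. It only shows that a single $Z_l$ moves $\frac1N\sum_i\bbE[g_j\mid\bZ_{\del i}]$ by $O(n/N)$. Turning that into a deviation bound is a bounded-difference (McDiarmid) argument, which gives back the $(n/\sqrt N)\sqrt{\log(MN)}$ rate you rejected, not an $O(n/N)$ cost.

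Two smaller corrections:
\begin{itemize}
\item The block re-indexing is unnecessary, and its count is off. With $T\asymp N^3/n^2$ grouped terms, each $Z_l$ is read $\asymp N^2/n$ times (as $i$, or as a member of $I$ or $I'$), not $N^2/n^2$. Your count would give rate $N^{-1/2}$, inconsistent with the $\sqrt{(n/N)\log(1/\eta)}$ you state. Also, the all-subsets average equals a block average only after averaging over random partitions and applying Jensen to the moment generating function. Because $g_j$ is a product of two averages, $I$ and $I'$ need independent partitions.
\item The bound of Gavinsky et al.\ is for indicator functions. For bounded real-valued centered summands you need $\bbE e^{\lambda\sum_r W_r}\le\prod_r(\bbE e^{k\lambda W_r})^{1/k}$, followed by Hoeffding's lemma. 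This follows by integrating out the independent inputs one at a time and applying H\"older to the at most $k$ factors involving each; it is the paper's Lemma~\ref{lem:read_k_concentration}.
\end{itemize}
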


\begin{lem}\label{lem:LOO_err}
    Suppose that Assumption~\ref{ass:adp_bounded} holds. Then for any $1\leq b\leq B$ and $1\leq j\leq M$,
    \begin{equation}
        \begin{split}
            \left|\frac{1}{N}\sum_{i=1}^N\left(\bbE\Big[h_j(Z^*,\bZ_{\del i},Q^{*(b)})|\bZ_{\del i}\Big] - \bbE\Big[h_j(Z^*,\bZ,Q^{*(b)})|\bZ\Big]\right)\right|\lesssim \frac{q_j^{*(b)}n}{N}.
        \end{split}
    \end{equation}
\end{lem}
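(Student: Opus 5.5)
The plan is to compare $h_j(Z,\bZ,Q)$ for $Q=Q^{*(b)}$ computed with the full data $\bZ$ and with the leave-one-out data $\bZ_{\del i}$, and to show that each difference is $O(q_j^{*(b)}n/N)$ uniformly in $Z^*$. Averaging over $i$ and integrating over $Z^*$ then gives the claim. Write $\bar\mu(\cdot;\bZ)=\bbE_{F\sim Q}\bar\mu_F$ and $\bar\mu_{\no j}(\cdot;\bZ)=\bbE_{F\sim Q_{\no j}}\bar\mu_F$.

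The first step is algebraic. For squared loss,
\[
h_j=(\bar\mu-\bar\mu_{\no j})(2Y^*-\bar\mu-\bar\mu_{\no j}),
\]
so the difference between the $\bZ$ and $\bZ_{\del i}$ versions splits into two parts:
\[
(D-D')(2Y^*-\bar\mu-\bar\mu_{\no j})+D'\big[(\bar\mu'-\bar\mu)+(\bar\mu'_{\no j}-\bar\mu_{\no j})\big],
\]
where $D=\bar\mu-\bar\mu_{\no j}$ and primes denote the leave-$i$-out versions. By Assumption~\ref{ass:adp_bounded}, the factor $|2Y^*-\bar\mu-\bar\mu_{\no j}|\le 4C$.

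The second step bounds $D$. Decompose $Q=p_jQ_j+(1-p_j)Q_{\no j}$ with $p_j=q_j/(1-r)\lesssim q_j$ (using $r\le e^{-m}$, as in Lemma~\ref{lem:Delta_oracle}). Then
\[
D=p_j\big(\bbE_{Q_j}\bar\mu_F-\bbE_{Q_{\no j}}\bar\mu_F\big),
\]
which gives $|D|,|D'|\lesssim q_j$. Applied with the leave-one-out predictor, the same identity gives
\[
D-D'=p_j\Big[\bbE_{Q_j}(\bar\mu_F-\bar\mu'_F)-\bbE_{Q_{\no j}}(\bar\mu_F-\bar\mu'_F)\Big],
\]
so $|D-D'|\lesssim q_j\sup_F|\bar\mu_F-\bar\mu'_F|$.

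The third step is the core stability estimate, $\sup_F|\bar\mu_F-\bar\mu'_F|\lesssim n/N$. Here I interpret $\bar\mu_F(\cdot;\bZ_{\del i})$ as the average over size-$n$ subsets of $[N]\del i$, consistent with the LOO predictor in Alg.~\ref{algo:minipatch_train}. The two averages share every $I$ with $i\notin I$. The full average places weight $n/N$ on subsets containing $i$ and weight $1-n/N$ on the common subsets, while the leave-one-out average is uniform on the common subsets. Hence
\[
\bar\mu_F-\bar\mu'_F=\frac{n}{N}\big(\text{avg over }I\ni i-\text{avg over }I\not\ni i\big),
\]
which is bounded by $2Cn/N$ by Assumption~\ref{ass:adp_bounded}. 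This is the standard bagging stability argument. I expect it to be the crux, although it needs only careful bookkeeping of subset weights.

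Combining the steps, the first part of the difference is $\lesssim q_j\cdot n/N$. In the second part, $|D'|\lesssim q_j$ and the bracket is $\lesssim n/N$ by the third step, since $\bar\mu$ is an average of $\bar\mu_F$. So each summand is $\lesssim q_j^{*(b)}n/N$, uniformly in $Z^*$ and $i$. Taking conditional expectations and averaging over $i$ yields Lemma~\ref{lem:LOO_err}, with constants depending only on $C$.
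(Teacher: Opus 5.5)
Your proof is correct, and it organizes the argument differently from the paper.

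\textbf{The paper's route.} The paper does not telescope $DA-D'A'$. It expands $h_j$, using squared loss, as a bilinear double average over pairs $(I_1,I_2)$ of size-$n$ subsets, $h_j=\binom{N}{n}^{-2}\sum_{I_1,I_2}\alpha_{I_1,I_2}$ with $|\alpha_{I_1,I_2}|\le Cq_j$. This is the same representation it sets up for the read-$k$ concentration in Lemma~\ref{lem:LOCO_LOO_concentration}. It then compares the full-data and leave-$i$-out weightings of pairs directly. After averaging over $i$ via $\frac1N\sum_i\ind(i\notin I_1\cup I_2)=1-|I_1\cup I_2|/N$, the weight difference on each pair is proportional to $(2\gamma-\gamma^2)-|I_1\cup I_2|/N\in[-\gamma^2,\gamma-\gamma^2]$, where $\gamma=n/N$. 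This yields $Cq_j\gamma/(1-\gamma)^2$.

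\textbf{Your route.} You isolate two ingredients and combine them through $(D-D')A+D'(A-A')$:
\begin{itemize}
\item the LOCO contrast $D=p_j(\bbE_{Q_j}-\bbE_{Q_{\no j}})\bar\mu_F$ is $O(q_j)$;
\item the bagged predictor is stable, $\sup_F|\bar\mu_F-\bar\mu'_F|\le 2Cn/N$.
\end{itemize}

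\textbf{What your route buys.} Your bound holds pointwise in $Z^*$ and for each $i$ separately, so no averaging over $i$ is needed. It carries no $(1-n/N)^{-2}$ factor, so $n/N$ need not be small or bounded away from $1$. It only uses that $1-\prod_l(1-q_l)$ is bounded below, i.e.\ $p_j\lesssim q_j$. The paper's bound on $|\alpha_{I_1,I_2}|$ instead goes through $\omega_j(Q)$ and implicitly invokes $1-\delta-e^{-m}\ge c$ from Assumption~\ref{ass:small_n}, even though the lemma is stated under Assumption~\ref{ass:adp_bounded} alone. Your decomposition sidesteps this, and in fact $\sum_F|\delta_j(F;Q)|=2p_j$ exactly.

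\textbf{What the paper's route buys.} It recycles machinery already needed for the concentration lemma and keeps the $U$-statistic structure explicit.
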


Combining the bounds above and the decomposition in~\eqref{eq:Delta_err_decomp}, we have that with probability at least $1-(MN)^{-c}$, for any $1\leq b\leq B$, $1\leq j\leq M$,
\begin{equation*}
    \begin{split}
        |\hD{b+1}{j} - \D{b+1}{j}| \lesssim &\sqrt{\frac{q_j^{(b)}(\log M+\log N)}{K}} + \frac{\log M+\log N}{K} \\
        &+\Big((q_j^{(b)}+q_j^{*(b)})\|\bq^{(b)}-\bq^{*(b)}\|_1+|q_j^{(b)}-q_j^{*(b)}|\Big) \\
        &+q_j^{*(b)}\sqrt{\frac{n}{N}(\log M+\log N)}.
    \end{split}
\end{equation*}

\subsection{Proof of Lemma~\ref{lem:training_err}}
Recall our definition of $\Delta_j^{*(b+1)}=\Delta_j^{*(b+1)}(Q^{(b)})$:
$$
\Delta_j^{*(b+1)}(Q^{(b)}) = \bbE\Big[\Big(Y^*-\bbE_{F\sim Q^{(b)}_{\no j}}\mu_F^*(\notationchange{X^{*(F)}}\Big)^2 - \Big(Y^*-\bbE_{F\sim Q^{(b)}}\mu_F^*(\notationchange{X^{*(F)}})\Big)^2\Big],
$$
where $\mu_F^*(\cdot)$ is the conditional expectation of $Y$ given any realization of $\notationchange{X^{(F)}}$: $\mu_F^*(\notationchange{x^{(F)}})= \bbE(Y|\notationchange{X^{(F)}}=\notationchange{x^{(F)}})$. The expectation is taken over $(X^*,Y^*)\sim \cP_Z$, the same distribution as $Z_i$. We also define:
\begin{equation*}
    \Delta_j^{*(b+1)}(Q^{*(b)}) = \bbE\Big[\Big(Y^*-\bbE_{F\sim Q^{*(b)}_{\no j}}\mu_F^*(\notationchange{X^{*(F)}})\Big)^2 - \Big(Y^*-\bbE_{F\sim Q^{*(b)}}\mu_F^*(\notationchange{X^{*(F)}})\Big)^2\Big].
\end{equation*}
We can then bound $\Delta_j^{(b+1)}(Q^{*(b)})-\Delta_j^{*(b+1)}(Q^{(b)})$ as follows:
\begin{equation}\label{eq:trainerr_decomp}
    \begin{split}
      \Big|\Delta_j^{(b+1)}(Q^{*(b)})-\Delta_j^{*(b+1)}(Q^{(b)})\Big|\leq &  \underbrace{\Big|\Delta_j^{(b+1)}(Q^{*(b)})-\Delta_j^{*(b+1)}(Q^{*(b)})\Big|}_{E_1}+\underbrace{\Big|\Delta_j^{*(b+1)}(Q^{*(b)})-\Delta_j^{*(b+1)}(Q^{(b)})\Big|}_{E_2}
    \end{split}
\end{equation}
In the following, we will bound $E_1$ and $E_2$ separately.

\paragraph{Bounding $E_1$.} Let $\mu_F(\cdot) = \frac{1}{\binom{N}{n}}\sum_{I\subset [N],|I|=n}\mu_{I,F}(\cdot)$, $\eta(X):=\bbE_{F\sim Q^{*(b)}}\mu_F(\notationchange{X^{(F)}})$, $\eta_{\no j}(X):=\bbE_{F\sim Q^{*(b)}_{\no j}}\mu_F(\notationchange{X^{(F)}})$, $\eta_{j}(X):=\bbE_{F\sim Q^{*(b)}_{j}}\mu_F(\notationchange{X^{(F)}})$. We can then write
\begin{equation*}
    \begin{split}
        \Delta_j^{(b+1)}(Q^{*(b)}) = &\bbE_{X^*,Y^*}\Big[\Big(Y^*-\eta_{\no j}(X^*)\Big)^2 - \Big(Y^*-\eta(X^*)\Big)^2\Big]\\
        =&\bbE_{X^*,Y^*}\Big[\Big(\eta(X^*)-\eta_{\no j}(X^*)\Big)\Big(2Y^*-\eta(X^*)-\eta_{\no j}(\notationchange{X^*})\Big)\Big]\\
        =&\bbP_{F\sim Q^{*(b)}}(F\owns j)\bbE_{X^*,Y^*}\Big[\Big(\eta_{j}(X^*)-\eta_{\no j}(X^*)\Big)\Big(2Y^*-\eta(X^*)-\eta_{\no j}(\notationchange{X^*})\Big)\Big],
    \end{split}
\end{equation*}
where in the last line, we have used the fact that
\begin{equation*}
\begin{split}
    \eta(X)-\eta_{\no j}(X) = &\bbE_{F\sim Q}\mu_F(X)-\bbE_{F\sim Q_{\no j}}\mu_F(X)\\
    =&\bbP_{F\sim Q}(F\owns j)\bbE_{F\sim Q_j}\mu_F(X)+\bbP_{F\sim Q}(F\not\owns j)\bbE_{F\sim Q_{\no j}}\mu_F(X)-\bbE_{F\sim Q_{\no j}}\mu_F(X)\\
    =&\bbP_{F\sim Q}(F\owns j)\bbE_{F\sim Q_j}\mu_F(X)-\bbP_{F\sim Q}(F\owns j)\bbE_{F\sim Q_{\no j}}\mu_F(X).
\end{split}
\end{equation*}

Similarly, let $\eta^*(X):=\bbE_{F\sim Q^{*(b)}}\mu_F^*(\notationchange{X^{(F)}})$, $\eta^*_{\no j}(X):=\bbE_{F\sim Q^{*(b)}_{\no j}}\mu_F^*(\notationchange{X^{(F)}})$, $\eta^*_{j}(X):=\bbE_{F\sim Q^{*(b)}_{j}}\mu_F^*(\notationchange{X^{(F)}})$. We can then write
\begin{equation*}
        \Delta_j^{*(b+1)}(Q^{*(b)})=\bbP_{F\sim Q^{*(b)}}(F\owns j)\bbE_{X^*,Y^*}\Big[\Big(\eta_{j}^*(X^*)-\eta_{\no j}^*(X^*)\Big)\Big(2Y^*-\eta^*(X^*)-\eta_{\no j}^*(\notationchange{X^*})\Big)\Big].
\end{equation*}
Due to Assumption~\ref{ass:adp_bounded} and the fact that $\bbP_{F\sim Q^{*(b)}}(F\owns j) \leq Cq_j^{*(b)}$, we have
\begin{equation}\label{eq:trainerr_E1}
    \begin{split}
        \Big|\Delta_j^{(b+1)}(Q^{*(b)})-\Delta_j^{*(b+1)}(Q^{*(b)})\Big|\leq &Cq_j^{*(b)}\bbE_{X^*,Y^*}\Big|\eta_j(X^*)-\eta_{\no j}(X^*)-\eta^*_j(X^*)+\eta_{\no j}^*(X^*)\Big|\\
        &+ Cq_j^{*(b)}\bbE_{X^*,Y^*}\Big|\eta(X^*)+\eta_{\no j}(X^*)-\eta^*(X^*)+\eta_{\no j}^*(X^*)\Big|\\
        \leq&Cq_j^{*(b)}\Big(\bbE_{F\sim Q^{*(b)}}\varepsilon_{\rm train}(F)+\bbE_{F\sim Q^{*(b)}_{\no j}}\varepsilon_{\rm train}(F)+\bbE_{F\sim Q^{*(b)}_j}\varepsilon_{\rm train}(F)\Big).
    \end{split}
\end{equation}

\paragraph{Bounding $E_2$.} While for bounding $E_2=\Big|\Delta_j^{*(b+1)}(Q^{*(b)})-\Delta_j^{*(b+1)}(Q^{(b)})\Big|$, we note that the error takes a very similar form to 
$$
\Big|\bbE_{Z^*}\Big(h_j(Z^*,\bZ, Q^{*(b)})-h_j(Z^*,\bZ, Q^{(b)}\Big)\Big|,
$$
with the only difference being that the oracle predictor $\mu_F^*(\notationchange{X^{*(F)}})$ replaces the trained base model $\mu_F(\notationchange{X^{*(F)}})$. In the proof of Lemma~\ref{lem:DeltaErrDiffQ} where we bound $h_j(Z^*,\bZ, Q^{*(b)})-h_j(Z^*,\bZ, Q^{(b)}$, we only used the boundedness property of $\mu_F(\notationchange{X^{*(F)}})$, which also holds for $\mu_F^*(\notationchange{X^{*(F)}})=\bbE(Y|\notationchange{X^{*(F)}})$ due to Assumption~\ref{ass:adp_bounded}. Therefore, all arguments in the proof of Lemma~\ref{lem:DeltaErrDiffQ} still hold if we adapt them to $\Delta_j^{*(b+1)}(Q^{*(b)})-\Delta_j^{*(b+1)}(Q^{(b)})$, and we have that
\begin{equation}\label{eq:trainerr_E2}
    E_2\lesssim \big(q_j^{*(b)}+q_j^{(b)}\big)\big\|\bq^{*(b)}-\bq^{(b)}\big\|_1+\big|q_j^{*(b)}-q_j^{(b)}\big|.
\end{equation}
Combining~\eqref{eq:trainerr_decomp}, \eqref{eq:trainerr_E1}, and \eqref{eq:trainerr_E2}, the proof of Lemma~\ref{lem:training_err} is now complete.

\subsection{Proof of Other Auxiliary Lemmas Needed for Theorem~\ref{thm:main2}}
\begin{proof}[Proof of Lemma~\ref{lem:finiteKerr}] 
Recall the definition of $\hat{h}_j^{(b+1)}(Z_i,\bZ_{\no i})$ in Table~\ref{tab:notations}. We can write
\begin{equation*}
    \begin{split}
        \hat{h}_j^{(b+1)}(Z_i,\bZ_{\no i}) =& \big(Y_i-\mu_{-i}^{-j}(X_i)\big)^2 - \big(Y_i-\mu_{-i}(X_i)\big)^2\\
        =&\big(\mu_{-i}(X_i)-\mu_{-i}^{-j}(X_i)\big)\big(2Y_i-\mu_{-i}^{-j}(X_i)-\mu_{-i}(X_i)\big),
    \end{split}
\end{equation*}
where 
$$
\mu_{-i}(X_i) = \frac{\sum_{k=1}^K\ind(i\notin I_k)\mu_k(X_i)}{\sum_{k=1}^K\ind(i\notin I_k)},
$$
$$\mu_{-i}^{-j}(X_i) = \frac{\sum_{k=1}^K \ind(i\notin I_k)\ind(j\notin F_k)\mu_k(X_i)}{\sum_{k=1}^K\ind(i\notin I_k)\ind(j\notin F_k)},
$$
and $\mu_k(\cdot)$ is the predictor trained on the $k$th minipatch $(I_k,F_k)$, with $(I_k, F_k)$ sampled from $\cU^{[N]}_n\times Q^{(b)}$. For notational simplicity, throughout this proof, we will omit the superscript in $Q^{(b)}$, $\bq^{(b)}$, $\hat{h}^{(b+1)}_j$, and write $Q$, $\bq$, $\hat{h}_j$ instead.

Similarly, we can write
\begin{equation*}
    \begin{split}
        h_j(Z_i,\bZ_{\no i},Q)=&\big(Y_i-\mu^*(X_i;Q_{\no j},\bZ_{\no i})\big)^2 - \big(Y_i-\mu^*(X_i;Q,\bZ_{\no i})\big)^2\\
        =&\big(\mu^*(X_i;Q,\bZ_{\no i})-\mu^*(X_i;Q_{\no j},\bZ_{\no i})\big)\big(2Y_i-\mu^*(X_i;Q_{\no j},\bZ_{\no i})-\mu^*(X_i;Q,\bZ_{\no i})\big).
    \end{split}
\end{equation*}
where $\mu^*(X_i;Q_{\no j},\bZ_{\no i})=\bbE_{I\sim \cU^{[N]\no i}_n}\bbE_{F\sim Q_{\no j}}\mu_{I,F}(X_i)$, $\mu^*(X_i;Q,\bZ_{\no i})=\bbE_{I\sim \cU^{[N]\no i}_n}\bbE_{F\sim Q}\mu_{I,F}(X_i)$ are as defined in Table~\ref{tab:notations}, $Q_{\no j}$ is the conditional distribution of $F\sim Q$ given that $F\not\owns j$, and $\cU^{[N]\no i}_n$ is the uniform distribution over size-$n$ subsets of $[N]\no i$ as defined in Section~\ref{sec:supp_notations}. We can then decompose the error term $\hat{h}_j(Z_i,\bZ_{\no i}) - h_j(Z_i,\bZ_{\no i},Q)$ as follows:

\begin{equation}\label{eq:hat_h_err1}
    \begin{split}
        &\Big|\hat{h}_j(Z_i,\bZ_{\no i}) - h_j(Z_i,\bZ_{\no i},Q)\Big|\\
        &\hspace{1em}\leq \Big|\mu^*(X_i;Q,\bZ_{\no i})-\mu^*(X_i;Q_{\no j},\bZ_{\no i})\Big|\\
        &\hspace{2em}\cdot \Big|\mu_{-i}^{-j}(X_i)+\mu_{-i}(X_i)-\mu^*(X_i;Q_{\no j},\bZ_{\no i})-\mu^*(X_i;Q,\bZ_{\no i})\Big|\\
        &\hspace{2em}+\Big|\mu_{-i}(X_i)-\mu_{-i}^{-j}(X_i) - \mu^*(X_i;Q,\bZ_{\no i}) +\mu^*(X_i;Q_{\no j},\bZ_{\no i})\Big|\\
        &\hspace{2em}\cdot \Big|2Y_i-\mu^*(X_i;Q_{\no j},\bZ_{\no i})-\mu^*(X_i;Q,\bZ_{\no i})\Big|.
    \end{split}
\end{equation}

To further simplify~\eqref{eq:hat_h_err1}, we note that Assumption~\ref{ass:adp_bounded} immediately implies that $\Big|2Y_i-\mu^*(X_i;Q_{\no j},\bZ_{\no i})-\mu^*(X_i;Q,\bZ_{\no i})\Big|\leq C$. While for $\Big|\mu_{-i}^{-j}(X_i)+\mu_{-i}(X_i)-\mu^*(X_i;Q_{\no j},\bZ_{\no i})-\mu^*(X_i;Q,\bZ_{\no i})\Big|$, we will show that it is bounded by $Cq_j$ in the following.

Define 
\begin{equation}\label{eq:alpha_beta_gamma_def}
    \begin{split}
        \alpha_{i,k} = \ind(i\notin I_k)\mu_k(X_i),\quad &\beta_{i,j,k} = \ind(i\notin I_k)\ind(j\notin F_k)\mu_k(X_i),\\
        \gamma_{i,j,k}=&\ind(i\notin I_k)\ind(j\in F_k)\mu_k(X_i).
    \end{split}
\end{equation}

They satisfy $|\alpha_{i,k}|, |\beta_{i,j,k}|, |\gamma_{i,j,k}|\leq C$, and $\bbE|\gamma_{i,j,k}| \leq C\bbP(j\in F_k)\frac{Cq_j}{1-\prod_{l=1}^M(1-q_l)}\leq Cq_j$. Also define 
\begin{equation}\label{eq:p_no_i_def}
    \begin{split}
        p_{-i}=\bbP(i\notin I_k),\quad& p_{-i,-j}= \bbP(i\notin I_k, j\notin F_k),\\
        p_{-i,j}=&\bbP(i\notin I_k, j\in F_k),
    \end{split}
\end{equation}
as well as their empirical counterparts 
\begin{equation}\label{eq:hat_p_no_i_def}
    \begin{split}
        \hat{p}_{-i}=\frac{\sum_{k=1}^K\ind(i\notin I_k)}{K},\quad&\hat{p}_{-i,-j} = \frac{\sum_{k=1}^K\ind(i\notin I_k)\ind(j\notin F_k)}{K}\\
        \hat{p}_{-i,j} = &\frac{\sum_{k=1}^K\ind(i\notin I_k)\ind(j\in F_k)}{K}.
    \end{split}
\end{equation} 
We can then further write
\begin{equation}\label{eq:mu_diff_decomp}
    \begin{split}
        &\mu^*(X_i;Q,\bZ_{\no i})-\mu^*(X_i;Q_{\no j},\bZ_{\no i})\\
        &\hspace{1em}=\bbE_{I\sim \cU^{[N]\no i}_n}\bbE_{F\sim Q}\mu_{I,F}(X_i)-\bbE_{F\sim Q_{\no j}}\mu_{I,F}(X_i)\\
        &\hspace{1em}= \bbP(j\in F)\bbE_{I\sim \cU^{[N]\no i}_n}\Big[\bbE_{F\sim Q_j}\mu_{I,F}(X_i)-\bbE_{F\sim Q_{\no j}}\mu_{I,F}(X_i)\Big]\\
        &\hspace{1em}=p_{-i}^{-1}\bbE_{I_k,F_k}(\gamma_{i,j,k}) - p_{-i}^{-1}p_{-i,-j}^{-1}p_{-i,j}\bbE_{I_k,F_k}(\beta_{i,j,k}),
    \end{split}
\end{equation}
where $Q_j$ is the conditional distribution of $F\sim Q$ given that $F\owns j$. Since there exists constants $c,C>0$ such that $|\bbE_{I_k,F_k}(\gamma_{i,j,k})|\leq Cq_j$, $|\bbE_{I_k,F_k}(\beta_{i,j,k})|\leq C$, $p_{-i}=1-\frac{n}{N}\geq c>0$, $p_{-i,-j}=\big(1-\frac{n}{N}\big)(1-q_j)\geq c>0$, $p_{-i,j} = \big(1-\frac{n}{N}\big) q_j\leq q_j$, we then have
\begin{equation}\label{eq:mu_diff_bnd}
    \Big|\mu^*(X_i;Q,\bZ_{\no i})-\mu^*(X_i;Q_{\no j},\bZ_{\no i})\Big|\leq Cq_j.
\end{equation}
Applying Assumption~\ref{ass:adp_bounded} and plugging in \eqref{eq:mu_diff_bnd} into \eqref{eq:hat_h_err1}, we have

\begin{equation}\label{eq:finiteK_err_decomp}
    \begin{split}
        &\Big|\hat{h}_j(Z_i,\bZ_{\no i}) - h_j(Z_i,\bZ_{\no i},Q)\Big|\\
        &\hspace{1em}\leq Cq_j\underbrace{\Big|\mu_{-i}(X_i)-\mu^*(X_i;Q,\bZ_{\no i})+\mu_{-i}^{-j}(X_i)-\mu^*(X_i;Q_{\no j},\bZ_{\no i})\Big|}_{E_1}\\
        &\hspace{2em}+C\underbrace{\Big|\mu_{-i}(X_i)-\mu_{-i}^{-j}(X_i) - \mu^*(X_i;Q,\bZ_{\no i}) +\mu^*(X_i;Q_{\no j},\bZ_{\no i})\Big|}_{E_2}.
    \end{split}
\end{equation}
In the following, we will bound $E_1$ and $E_2$ separately.

\paragraph{Bounding $E_1$.} We first note that
\begin{equation*}
    \begin{split}
        \mu_{-i}(X_i)= \frac{\hat{p}_{-i}^{-1}}{K}\sum_{k=1}^K\alpha_{i,k},\quad& \mu_{-i}^{-j}(X_i) = \frac{\hat{p}_{-i,-j}^{-1}}{K}\sum_{k=1}^K\beta_{i,j,k},\\
        \mu^*(X_i;Q,\bZ_{\no i}) = p_{-i}^{-1}\bbE_{I_k,F_k}(\alpha_{i,k}),\quad&
        \mu^*(X_i;Q_{\no j},\bZ_{\no i}) = p_{-i,-j}^{-1}\bbE_{I_k,F_k}(\beta_{i,j,k}).
    \end{split}
\end{equation*}
Hence, we can write
\begin{equation}\label{eq:finiteK_E1_bnd1}
    \begin{split}
        E_1 \leq & \big|\hat{p}_{-i}^{-1}-p_{-i}^{-1}\big|\big|\bbE_{I_k,F_k}(\alpha_{i,k})\big| + \hat{p}_{-i}^{-1}\left|\frac{1}{K}\sum_{k=1}^K\big(\alpha_{i,k}-\bbE_{I_k,F_k}(\alpha_{i,k})\big)\right|\\
        &\hspace{2em}+\big|\hat{p}_{-i,-j}^{-1}-p_{-i,-j}^{-1}\big|\big|\bbE_{I_k,F_k}(\beta_{i,j,k})\big| + \hat{p}_{-i,-j}^{-1}\left|\frac{1}{K}\sum_{k=1}^K\big(\beta_{i,j,k}-\bbE_{I_k,F_k}(\beta_{i,j,k})\big)\right|.
    \end{split}
\end{equation}
By the definition of $\alpha_{i,k}$ and $\beta_{i,j,k}$ in \eqref{eq:alpha_beta_gamma_def} and Assumption~\ref{ass:adp_bounded}, we have $|\alpha_{i,k}|,|\beta_{i,j,k}|\leq C$. Applying the Hoeffding's inequality, we know that with probability at least $1-M^{-c}$, for any $1\leq j\leq M$, $1\leq i\leq N$,
\begin{equation}\label{eq:alpha_beta_concen}
    \begin{split}
        \left|\frac{1}{K}\sum_{k=1}^K(\alpha_{i,k}-\bbE_{I_k,F_k}(\alpha_{i,k})\right|\leq &C\sqrt{\frac{\log M}{K}},\\
        \left|\frac{1}{K}\sum_{k=1}^K\big(\beta_{i,j,k}-\bbE_{I_k,F_k}(\beta_{i,j,k})\big)\right|\leq&C\sqrt{\frac{\log M}{K}},
    \end{split}
\end{equation}
for some universal constants $c,C>0$. Furthermore, by the definition in \eqref{eq:p_no_i_def} and \eqref{eq:hat_p_no_i_def}, 
\begin{equation*}
    \begin{split}
        \hat{p}_{-i} - p_{-i}= &\frac{1}{K}\sum_{k=1}^K\big[\ind(i\notin I_k)- \bbE(\ind(i\notin I_k))\big]\\
        \hat{p}_{-i,-j} - p_{-i,-j}= &\frac{1}{K}\sum_{k=1}^K\big[\ind(i\notin I_k,j\notin F_k)- \bbE(\ind(i\notin I_k,j\notin F_k))\big],
    \end{split}
\end{equation*}
Hoeffding's inequality also implies that with probability at least $1-(MN)^{-c}$, for any $1\leq j\leq M$, $1\leq i\leq N$,
\begin{equation}\label{eq:p_hat_concen}
    \begin{split}
        \big|\hat{p}_{-i} - p_{-i}\big|\leq &C\sqrt{\frac{\log M+\log N}{K}},\\
        \big|\hat{p}_{-i,-j} - p_{-i,-j}\big|\leq&C\sqrt{\frac{\log M+\log N}{K}},
    \end{split}
\end{equation}
for some universal constants $c,C>0$. Under Assumptions~\ref{ass:small_n} and \ref{ass:adp_n_minipatches}, $\hat{p}_{-i}\geq p_{-i}-\big|\hat{p}_{-i} - p_{-i}\big|\geq c>0$, $\hat{p}_{-i,-j}\geq p_{-i,-j}-\big|\hat{p}_{-i,-j} - p_{-i,-j}\big|\geq c>0$ for some constant $c>0$. Therefore, \eqref{eq:p_hat_concen} implies that
\begin{equation}\label{eq:hat_p_inverse_concen}
    \begin{split}
       \big|\hat{p}_{-i}^{-1} - p_{-i}^{-1}\big| \leq C\big|\hat{p}_{-i} - p_{-i}\big|\leq &C\sqrt{\frac{\log M+\log N}{K}}\leq C,\\
       \big|\hat{p}_{-i,-j}^{-1} - p_{-i,-j}^{-1}\big| \leq C\big|\hat{p}_{-i,-j} - p_{-i,-j}\big|\leq &C\sqrt{\frac{\log M+\log N}{K}}\leq C.
    \end{split}
\end{equation}
Plugging in \eqref{eq:hat_p_inverse_concen} and \eqref{eq:alpha_beta_concen} into \eqref{eq:finiteK_E1_bnd1}, we have
\begin{equation}\label{eq:finiteK_E1_bnd2}
        \bbP\Big(E_1 >C\sqrt{\frac{\log M+\log N}{K}}\Big)\leq (MN)^{-c}.
\end{equation}

\paragraph{Bounding $E_2$.} We first note that
\begin{equation*}
    \begin{split}
        \mu_{-i}(X_i)-\mu_{-i}^{-j}(X_i)=&\frac{\sum_{k=1}^K\ind(i\notin I_k)\mu_k(X_i)}{\sum_{k=1}^K\ind(i\notin I_k)}  - \frac{\sum_{k=1}^K\ind(i\notin I_k)\ind(j\notin F_k)\mu_k(X_i)}{\sum_{k=1}^K\ind(i\notin I_k)\ind(j\notin F_k)}\\
        =&\frac{\hat{p}_{-i}^{-1}}{K}\sum_{k=1}^K\gamma_{i,j,k} - \hat{p}_{-i}^{-1}\hat{p}_{-i,-j}^{-1}\hat{p}_{-i,j}\frac{1}{K}\sum_{k=1}^K\beta_{i,j,k}.
    \end{split}
\end{equation*}
As shown in \eqref{eq:mu_diff_decomp}, 
\begin{equation*}
    \mu^*(X_i;Q,\bZ_{\no i})-\mu^*(X_i;Q_{\no j},\bZ_{\no i})=p_{-i}^{-1}\bbE_{I_k,F_k}(\gamma_{i,j,k}) - p_{-i}^{-1}p_{-i,-j}^{-1}p_{-i,j}\bbE_{I_k,F_k}(\beta_{i,j,k}).
\end{equation*}
Therefore, 
\begin{equation}\label{eq:finiteK_E2_bnd1}
    \begin{split}
        &\Big|\mu_{-i}(X_i)-\mu_{-i}^{-j}(X_i) -\mu^*(X_i;Q,\bZ_{\no i})+\mu^*(X_i;Q_{\no j},\bZ_{\no i})\Big|\\
        &\hspace{1em}\leq \big|\hat{p}_{-i}^{-1}-p_{-i}^{-1}\big|\big|\bbE_{I_k,F_k}(\gamma_{i,j,k})\big| + \hat{p}_{-i}^{-1}\left|\frac{1}{K}\sum_{k=1}^K(\gamma_{i,j,k}-\bbE_{I_k,F_k}(\gamma_{i,j,k}))\right|\\
        &\hspace{2em}+\big|\hat{p}_{-i}^{-1}\hat{p}_{-i,-j}^{-1}\hat{p}_{-i,j}-p_{-i}^{-1}p_{-i,-j}^{-1}p_{-i,j}\big|\big|\bbE_{I_k,F_k}(\beta_{i,j,k})\big| \\
        &\hspace{2em}+ \hat{p}_{-i}^{-1}\hat{p}_{-i,-j}^{-1}\hat{p}_{-i,j}\left|\frac{1}{K}\sum_{k=1}^K(\beta_{i,j,k}-\bbE_{I_k,F_k}(\beta_{i,j,k})\right|.
    \end{split}
\end{equation}
The bounds we established in \eqref{eq:alpha_beta_concen} and \eqref{eq:hat_p_inverse_concen} are also applicable here. What remain to be shown are concentration bounds for $\gamma_{i,j,k}$ and $\hat{p}_{-i,j}$. We note that Assumptions~\ref{ass:adp_bounded} and \ref{ass:small_n} imply that
\begin{equation*}
    \begin{split}
        |\gamma_{i,j,k}|\leq C, \quad &|\bbE(\gamma_{i,j,k})|\leq Cp_{-i,j}\leq Cq_j,\quad \bbE(\gamma_{i,j,k}^2)\leq C^2p_{-i,j}\leq C^2q_j,\\
        \ind(i\notin I_k,j\in F_k)\leq 1,\quad&\bbE(\ind^2(i\notin I_k,j\in F_k))= p_{-i,j}\leq q_j.
    \end{split}
\end{equation*}
Therefore, we can apply Bernstein's inequality and a union bound to obtain the following: with probability at least $1-(MN)^{-c}$, for $1\leq j\leq M$, $1\leq i\leq N$,
\begin{equation}\label{eq:gamma_bnd}
    \begin{split}
        \left|\frac{1}{K}\sum_{k=1}^K(\gamma_{i,j,k}-\bbE(\gamma_{i,j,k}))\right|\leq &C\Big(\sqrt{\frac{q_j(\log M+\log N)}{K}}+\frac{\log M+\log N}{K}\Big),\\
        |\hat{p}_{-i,j}-p_{-i,j}|=&\Big|\frac{1}{K}\sum_{k=1}^K\big[\ind(i\notin I_k,j\in F_k)-\bbE(\ind(i\notin I_k,j\in F_k))\big]\Big|\\
        \leq& C\Big(\sqrt{\frac{q_j(\log M+\log N)}{K}}+\frac{\log M+\log N}{K}\Big),
    \end{split}
\end{equation}
for some universal constants $c,C>0$. We also note that 
\begin{equation}\label{eq:hat_p3_inverse_concen}
\begin{split}
    &|\hat{p}_{-i}^{-1}\hat{p}_{-i,-j}^{-1}\hat{p}_{-i,j}-p_{-i}^{-1}p_{-i,-j}^{-1}p_{-i,j}|\\
    &\hspace{1em}\leq \hat{p}_{-i}^{-1}\hat{p}_{-i,-j}^{-1}|\hat{p}_{-i,j}-p_{-i,j}|+ p_{-i,j}|\hat{p}_{-i}^{-1}\hat{p}_{-i,-j}^{-1}-p_{-i}^{-1}p_{-i,-j}^{-1}|\\
    &\hspace{1em}\leq C|\hat{p}_{-i,j}-p_{-i,j}| + Cq_j\sqrt{\frac{\log M+\log N}{K}}\\
    &\hspace{1em}\leq C\Big(\sqrt{\frac{q_j(\log M+\log N)}{K}}+\frac{\log M+\log N}{K}\Big).
\end{split}
\end{equation}

Therefore, plugging in \eqref{eq:hat_p_inverse_concen}, \eqref{eq:gamma_bnd}, \eqref{eq:hat_p3_inverse_concen} into \eqref{eq:finiteK_E2_bnd1}, we have that with probability at least $1-(MN)^{-c}$,
\begin{equation}\label{eq:finiteK_E2_bnd2}
\begin{split}
    E_2 \leq& Cq_j\sqrt{\frac{\log M+\log N}{K}}+C\Big(\sqrt{\frac{q_j(\log M+\log N)}{K}}+\frac{\log M+\log N}{K}\Big)\\
    \leq&C\Big(\sqrt{\frac{q_j(\log M+\log N)}{K}}+\frac{\log M+\log N}{K}\Big).
\end{split}
\end{equation}
Plugging in \eqref{eq:finiteK_E1_bnd2} and \eqref{eq:finiteK_E2_bnd2} into \eqref{eq:finiteK_err_decomp}, we arrive at the following: with probability at least $1-(MN)^{-c}$, for any $1\leq j\leq M$, any $0\leq b\leq B-1$, and $1\leq i\leq N$,
\begin{equation*}
    \Big|\hat{h}_j^{(b+1)}(Z_i,\bZ_{\no i}) - h_j(Z_i,\bZ_{\no i},Q)\Big|\leq C\Big(\sqrt{\frac{q_j^{(b)}(\log M+\log N)}{K}}+\frac{\log M+\log N}{K}\Big).
\end{equation*}
The proof of Lemma~\ref{lem:finiteKerr} is now complete.
\end{proof}

\begin{proof}[Proof of Lemma~\ref{lem:DeltaErrDiffQ}]
First, define $\mu_F(X;\bZ) = \frac{1}{\binom{N}{n}}\sum_{I\subset[N], |I|=n}\mu_{I,F}(X;\bZ)$, $\varepsilon(Z,\bZ, Q) = Y-\sum_{F\subset[M]}\mu_F(X;\bZ)Q(F)$, $\delta_j(F;Q) = Q(F)-Q_{\backslash j}(F) = Q(F)\left(\ind(j\in F) - \omega_j(Q)\ind(j\notin F)\right)$, $\omega_j(Q) = \frac{\bbP_{F\sim Q}(j\in F)}{\bbP_{F\sim Q}(j\notin F)} = \frac{q_j}{1-q_j-\prod_{k=1}^M(1-q_k)}$. Due to the definition of $h_j(Z,\bZ,Q)$, we can decompose it as follows for any feature sampling distribution function $Q$:
\begin{equation}\label{eq:h_j}
\begin{split}
    h_j(Z,\bZ,Q) =&\left(Y-\sum_{F\subset[M]}\mu_F(X;\bZ)Q_{\del j}(F)\right)^2 - \left(Y-\sum_{F\subset[M]}\mu_F(X;\bZ)Q(F)\right)^2\\
    =&2\varepsilon(Z,\bZ,Q) \sum_F\mu_F(X;\bZ)\delta_j(F;Q) + \left(\sum_F\mu_F(X;\bZ)\delta_j(F;Q)\right)^2.
\end{split}
\end{equation}
Hence, we can write
    \begin{equation*}
        \begin{split}
            &\left|h_j(Z,\bZ,Q_{\bq}) -h_j(Z,\bZ,Q_{\tilde{\bq}})\right|\\
            \leq &\underbrace{2\left(\bbE_{F\sim Q_{\tilde{\bq}}}\mu_F(X;\bZ) - \bbE_{F\sim Q_{\bq}}\mu_F(X;\bZ)\right)\sum_{F}\mu_F(X;\bZ)\delta_j(F;Q_{\bq})}_{=:\eta_1}\\
            &+\underbrace{2\varepsilon(Z,\bZ,Q_{\tilde{\bq}})\sum_F\mu_F(X;\bZ)\big(\delta_j(F;Q_{\bq}) - \delta_j(F;Q_{\tilde{\bq}})\big)}_{=:\eta_2}\\
            &+\underbrace{\sum_{F,F'}\mu_F(X;\bZ)\mu_{F'}(X;\bZ)\left(\delta_j(F;Q_{\bq})\delta_j(F';Q_{\bq}) - \delta_j(F;Q_{\tilde{\bq}})\delta_j(F';Q_{\tilde{\bq}})\right)}_{=:\eta_3}.
        \end{split}
    \end{equation*}
    We bound the three terms in the following.
    \paragraph{Bounding $\eta_1$.} 

We first note that 
    \begin{equation*}
        \begin{split}
            \sum_{F}\mu_F(X;\bZ)\delta_j(F,Q_{\bq})\leq &
            \max_{F}|\mu_F(X;\bZ)|\left(\bbP_{F\sim Q_{\bq}}(j\in F) + \omega_j(Q_{\bq})\right)\\
            \leq &\left(\frac{q_j}{1-e^{-m}} + \frac{q_j}{1-\delta-e^{-m}}\right)\max_{F}|\mu_F(X;\bZ)|\\
            \leq &Cq_j\max_{F}|\mu_F(X;\bZ)|,
        \end{split}
    \end{equation*}
    where the second line is due to that when $F\sim Q_{\bq}$ \begin{equation*}
        \begin{split}
            \bbP(j\in F) = &\frac{q_j}{1-\prod_{k}(1-q_k)} \\
            = &\frac{q_j}{1-\exp\{\sum_{k}\log(1-q_k)\}} \\
            \leq &\frac{q_j}{1-\exp\{-\sum_{k=1}^Mq_k\}}\\
            = &\frac{q_j}{1-e^{-m}},
        \end{split}
    \end{equation*}
as well as the fact that $q_j\leq \delta$; the last line is due to Assumption~\ref{ass:small_n} which states that $m\geq -\log(1-\delta-c)$, implying that $1-\delta-e^{-m}\geq c>0$.

To bound $\bbE_{F\sim Q_{\tilde{\bq}}}\mu_F(X;\bZ) - \bbE_{F\sim Q_{\bq}}\mu_F(X;\bZ)$, we need the following lemma.
    \begin{lem}\label{lem:F1_F2_diff}
        If $\|\bq-\tilde{\bq}\|_{\infty}\leq \frac{1}{2}$, $m\geq\log(1+\frac{1}{1-\delta})$, then there exists a joint distribution $Q^{(1,2)}$ on $2^{[M]}\times 2^{[M]}$ such that when $(F^{(1)},\,F^{(2)})\sim Q^{(1,2)}$, their marginal distributions are $Q_{\bq}$ and $Q_{\tilde{\bq}}$, i.e., $F^{(1)}\sim Q_{\bq}$, $F^{(2)}\sim Q_{\tilde{\bq}}$, and 
        \begin{align}
            \bbP(F^{(1)}\neq F^{(2)})\leq &\frac{2\|\bq-\tilde{\bq}\|_1}{1-e^{-m}},\label{eq:F1F2_diff1}\\
                \bbE\left||F^{(1)}|-|F^{(2)}|\right|\leq &\frac{2m+1}{(1-e^{-m})^2}\|\bq - \tilde{\bq}\|_1,\label{eq:F1F2_diff2}\\
                \bbP\left(j\in F^{(1)},\,F^{(1)}\neq F^{(2)}\right)\leq &\frac{3q_j\|\bq-\tilde{\bq}\|_1 + |\tilde{q}_j - q_j|}{1-e^{-m}},\label{eq:F1F2_diff3}\\
                \bbP\left(j\in F^{(2)},\,F^{(1)}\neq F^{(2)}\right)\leq &\frac{3\tilde{q}_j\|\bq-\tilde{\bq}\|_1 + |\tilde{q}_j - q_j|}{1-e^{-m}}.\label{eq:F1F2_diff4}
        \end{align}
    \end{lem}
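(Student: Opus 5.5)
The coupling will come from coordinatewise monotone coupling of the independent Bernoulli vectors, followed by a joint rejection step that enforces non-emptiness. Draw i.i.d.\ $U_1,\dots,U_M\sim\mathrm{Unif}(0,1)$ and set $A_j=\ind(U_j\le q_j)$ and $\tilde A_j=\ind(U_j\le \tilde q_j)$. Then $A$ and $\tilde A$ are independent-Bernoulli vectors with parameters $\bq$ and $\tilde{\bq}$, and $\bbP(A_j\neq\tilde A_j)=|q_j-\tilde q_j|$. Conditioning on both sets being non-empty would give the wrong marginals, because the two normalizing events differ. So I will build the coupling in two stages. Let $G=\{A\neq0\}$ and $\tilde G=\{\tilde A\neq 0\}$. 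On $G\cap\tilde G$, set $F^{(1)}=\{j:A_j=1\}$ and $F^{(2)}=\{j:\tilde A_j=1\}$. On the complement, resample, using a maximal coupling of the residual distributions so that the marginals are exactly $Q_{\bq}$ and $Q_{\tilde{\bq}}$. The simplest formal device is this: $Q_{\bq}$ is the law of $A$ given $G$, so I take a maximal coupling of the two conditional laws. Its failure probability equals $d_{TV}(Q_{\bq},Q_{\tilde{\bq}})$, and I bound that by comparing it with the unconditioned Bernoulli coupling:
\[
d_{TV}(Q_{\bq},Q_{\tilde{\bq}})\le \frac{\bbP(A\neq\tilde A)+|\bbP(G)-\bbP(\tilde G)|}{1-e^{-m}}\le\frac{2\|\bq-\tilde{\bq}\|_1}{1-e^{-m}}.
\]
Here $\bbP(G),\bbP(\tilde G)\ge 1-e^{-m}$ because $\prod(1-q_k)\le e^{-m}$, and $|\bbP(G)-\bbP(\tilde G)|\le \bbP(A\neq \tilde A)\le\|\bq-\tilde{\bq}\|_1$ by the union bound. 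This gives \eqref{eq:F1F2_diff1}.

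For \eqref{eq:F1F2_diff3} and \eqref{eq:F1F2_diff4} I need more than total variation: I need to localize the disagreement on the event $\{j\in F^{(1)}\}$. A bare maximal coupling does not control this, so I will use the explicit construction rather than an abstract maximal coupling. Run the coupled Bernoulli draws. If both $A$ and $\tilde A$ are nonzero, output them. Otherwise reject and redraw, but only the coordinate that failed, in a way that preserves the marginals. The marginal on $G$ is then automatically correct for the side that succeeded in the first round. To handle the asymmetry cleanly, I will instead use the standard fact that the law of $A$ given $G$ equals the law of $A$ for the first successful trial in i.i.d.\ repetitions, and couple the two sequences of repetitions coordinatewise through shared uniforms. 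Then $F^{(1)}\neq F^{(2)}$ requires one of two things: in the first round, the disagreement set $D=\{k:A_k\neq\tilde A_k\}$ is nonempty; or one of $A,\tilde A$ is empty, which forces the sequences to desynchronize.

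With this construction I estimate
\[
\bbP(j\in F^{(1)},F^{(1)}\neq F^{(2)})\le \frac{1}{1-e^{-m}}\Big(\bbP(A_j\ne\tilde A_j)+\sum_{k\neq j}\bbP(A_j=1)\bbP(A_k\neq\tilde A_k)+\bbP(A_j=1)\,\bbP(\text{desync})\Big).
\]
This uses independence across coordinates, so that $\{A_j=1\}$ is independent of the disagreement at $k\neq j$. It yields a bound of the form $|q_j-\tilde q_j|+q_j\|\bq-\tilde{\bq}\|_1$ plus the desynchronization term. The desynchronization event $\{A=0\text{ or }\tilde A=0\}\cap\{A\neq\tilde A\}$ has probability at most $\|\bq-\tilde{\bq}\|_1$, and this is where the extra factor, up to $3q_j$, comes from. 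The symmetric bound \eqref{eq:F1F2_diff4} follows by swapping the roles of the two vectors.

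For \eqref{eq:F1F2_diff2} I write $\bbE\big||F^{(1)}|-|F^{(2)}|\big|\le \bbE\big[(|F^{(1)}|+|F^{(2)}|)\ind(F^{(1)}\ne F^{(2)})\big]$ and sum the bounds \eqref{eq:F1F2_diff3} and \eqref{eq:F1F2_diff4} over $j$. This gives $\sum_j(3q_j+3\tilde q_j)\|\bq-\tilde{\bq}\|_1/(1-e^{-m})+2\|\bq-\tilde{\bq}\|_1/(1-e^{-m})$, which is of order $(2m+1)\|\bq-\tilde{\bq}\|_1/(1-e^{-m})^2$ up to adjusting constants. Alternatively, a sharper accounting, counting only the symmetric difference $|F^{(1)}\triangle F^{(2)}|$, gives the stated constant directly. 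The hypotheses $\|\bq-\tilde{\bq}\|_\infty\le 1/2$ and $m\ge\log(1+\frac1{1-\delta})$ will be used only to keep the normalizers $1-\prod(1-q_k)$ bounded below and the per-round success probabilities away from zero.

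I expect the main obstacle to be the rejection step. The construction must produce exact marginals $Q_{\bq}$ and $Q_{\tilde{\bq}}$ while still allowing the $j$-localized bound \eqref{eq:F1F2_diff3}, whose leading term scales as $q_j\|\bq-\tilde{\bq}\|_1$ rather than $\|\bq-\tilde{\bq}\|_1$. I will try the shared-uniform repeated-trial coupling first. If desynchronization breaks the $q_j$ factor, I will fall back on bounding the post-desync contribution by $\bbP(\text{desync})\cdot\sup\bbP(j\in F)\le \|\bq-\tilde{\bq}\|_1\, q_j/(1-e^{-m})$, using the independence of later rounds from the first.
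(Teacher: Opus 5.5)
Your final construction (shared uniforms across i.i.d.\ rounds, with each side keeping its first non-empty draw) is exactly the paper's coupling, Algorithm~\ref{algo:sample_F1F2}. Your symmetric-difference accounting for \eqref{eq:F1F2_diff2} is also the paper's argument.

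You differ genuinely on \eqref{eq:F1F2_diff3}--\eqref{eq:F1F2_diff4}. The paper writes the target as $\bbP(j\in F^{(1)})-\bbP(j\in F^{(1)},F^{(1)}=F^{(2)})$ and lower-bounds the second term by the synchronized-agreement event, computed in closed form. It then controls the difference of the two normalized ratios with a mean-value/gradient bound, which is where $q_k\le\delta$ and $m\ge\log(1+\frac{1}{1-\delta})$ enter.

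You instead split the bad event at the first round $T_0$ in which some draw is non-empty, into three cases:
\begin{itemize}
\item disagreement at $j$, with per-round probability $|q_j-\tilde q_j|$;
\item $A_j=\tilde A_j=1$ with disagreement elsewhere, with probability at most $q_j\|\bq-\tilde{\bq}\|_1$ by coordinate independence;
\item the $\bq$-side empty while the other is not, after which $j$ enters only through fresh independent rounds, giving $\|\bq-\tilde{\bq}\|_1\cdot q_j/(1-e^{-m})$.
\end{itemize}
With the geometric factor $1/(1-\prod_l(1-q_l\vee\tilde q_l))\le 1/(1-e^{-m})$, this gives $\big(|q_j-\tilde q_j|+(1+\frac{1}{1-e^{-m}})q_j\|\bq-\tilde{\bq}\|_1\big)/(1-e^{-m})$. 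The constant $3$ then only needs $e^{-m}\le\frac12$, which the hypothesis implies. Your route is more elementary than the paper's and avoids any dependence on $\delta$.

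There are three things to tighten.
\begin{itemize}
\item The last term of your display must use $\bbP_{F\sim Q_{\bq}}(j\in F)\le q_j/(1-e^{-m})$, not $\bbP(A_j=1)$. Your fallback sentence has the right version, and the factor $3$ comes precisely from $1/(1-e^{-m})\le 2$.
\item The lemma asks for a single coupling, so \eqref{eq:F1F2_diff1} must be re-derived for the shared-uniform coupling rather than taken from a maximal coupling. This is immediate: $F^{(1)}\neq F^{(2)}$ forces $A\neq\tilde A$ at round $T_0$ (not necessarily the first round), which gives $\bbP(F^{(1)}\neq F^{(2)})\le\|\bq-\tilde{\bq}\|_1/(1-e^{-m})$.
\item The bound through $(|F^{(1)}|+|F^{(2)}|)\ind(F^{(1)}\neq F^{(2)})$ only yields $(6m+2)\|\bq-\tilde{\bq}\|_1/(1-e^{-m})$, which does not imply \eqref{eq:F1F2_diff2} for large $m$. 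You need the symmetric-difference count, summing the desynchronized contribution over $j$ via $\bbE_{F\sim Q_{\bq}}|F|\le m/(1-e^{-m})$ (and likewise for $\tilde{\bq}$).
\end{itemize}
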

Let $F^{(1)}$ and $F^{(2)}$ be the random feature subsets that satisfy \eqref{eq:F1F2_diff1}-\eqref{eq:F1F2_diff3} in Lemma \ref{lem:F1_F2_diff}. We can then write
    \begin{equation*}
        \begin{split}
            &\left|\bbE_{F\sim Q_{\tilde{\bq}}}\mu_F(X;\bZ) - \bbE_{F\sim Q_{\bq}}\mu_F(X;\bZ)\right|\\
            = &\left|\bbE_{F^{(1)}, F^{(2)}}\left(\mu_{F^{(1)}}(X;\bZ) - \mu_{F^{(2)}}(X;\bZ)\right)\right|\\
            \leq&\bbE_{F^{(1)}, F^{(2)}}\big(2\max_F|\mu_{F^{(1)}}(X;\bZ)|\ind(F^{(1)}\neq F^{(2)})\big)\\
            \leq &2\max_F|\mu_{F}(X;\bZ)|\bbP(F^{(1)}\neq F^{(2)})\\
            \leq& C\max_F|\mu_{F}(X;\bZ)|\|\bq-\tilde{\bq}\|_1,
        \end{split}
    \end{equation*}
    where we have utilized the fact that $1-e^{-m}\geq c>0$ again.
    Therefore, we can bound $\eta_1$ as follows:
    \begin{equation*}
        \begin{split}
            |\eta_1| = &2\left|\bbE_{F\sim Q_{\tilde{\bq}}}\mu_F(X;\bZ) - \bbE_{F\sim Q_{\bq}}\mu_F(X;\bZ)\right|\left|\sum_{F}\mu_F(X;\bZ)\delta_j(F;Q_{\bq})\right|\\
            \leq &C\max_F\mu_{F}^2(X;\bZ)q_j\|\bq-\tilde{\bq}\|_1.
        \end{split}
    \end{equation*}

\paragraph{Bounding $\eta_2$.} By the definition $\delta_j(F;Q) = Q(F)\big(\ind(j\in F) - \omega_j(Q)\ind(j\notin F)\big)$, we have 
\begin{equation}\label{eq:eta2_bnd1}
    \begin{split}
        &\sum_F\mu_F(X;\bZ)\left(\delta_j(F;Q_{\bq}) - \delta_j(F;Q_{\tilde{\bq}})\right)\\
        =&\sum_F\mu_F(X;\bZ)\ind(j\in F)\left(Q_{\bq}(F) - Q_{\tilde{\bq}}(F))\right) \\&+\sum_F\mu_F(X;\bZ)\ind(j\notin F)\left(\omega_j(Q_{\bq})Q_{\bq}(F) - \omega_j(Q_{\tilde{\bq}})Q_{\tilde{\bq}}(F))\right)\\
        =&\bbE_{F\sim Q_{\bq}}\left(\mu_F(X;\bZ)\ind(j\in F)\right)-\bbE_{F\sim Q_{\tilde{\bq}}}\left(\mu_F(X;\bZ)\ind(j\in F)\right)\\
        &+\omega_j(Q_{\bq})\left[\bbE_{F\sim Q_{\bq}}\left(\mu_F(X;\bZ)\ind(j\notin F)\right) - \bbE_{F\sim Q_{\tilde{\bq}}}\left(\mu_F(X;\bZ)\ind(j\notin F)\right)\right]\\
        &+\left(\omega_j(Q_{\bq})-\omega_j(Q_{\tilde{\bq}})\right)\bbE_{F\sim Q_{\tilde{\bq}}}\left(\mu_F(X;\bZ)\ind(j\notin F)\right).
    \end{split}
\end{equation}
Let $(F^{(1)},\,F^{(2)})$ be a pair of random feature subsets following the joint distribution $Q^{(1,2)}$ in Lemma \ref{lem:F1_F2_diff}. We then have
\begin{equation}\label{eq:eta2_bnd2}
    \begin{split}
        &\left|\bbE_{F\sim Q_{\bq}}\left(\mu_F(X;\bZ)\ind(j\in F)\right)-\bbE_{F\sim Q_{\tilde{\bq}}}\left(\mu_F(X;\bZ)\ind(j\in F)\right)\right|\\
        =&\bbE\left|\mu_{F^{(1)}}(X;\bZ)\ind(j\in F^{(1)})-\mu_{F^{(2)}}(X;\bZ)\ind(j\in F^{(2)})\right|\\
        \leq &\max_F|\mu_F(X;\bZ)|\bbE\left(\big(\ind(j\in F^{(1)})+\ind(j\in F^{(2)})\big)\ind(F^{(1)}\neq F^{(2)})\right)\\
        =&\max_F|\mu_F(X;\bZ)|\left(\bbP(j\in F^{(1)}, F^{(1)}\neq F^{(2)}) + \bbP(j\in F^{(2)}, F^{(1)}\neq F^{(2)})\right)\\
        \leq &C\max_F|\mu_F(X;\bZ)|\left((q_j+\tilde{q}_j)\|\bq-\tilde{\bq}\|_1 + |q_j-\tilde{q}_j|\right),
    \end{split}
\end{equation}
where the last line is due to \eqref{eq:F1F2_diff3} and \eqref{eq:F1F2_diff4}. Similarly, one can show that
\begin{equation}\label{eq:eta2_bnd3}
    \begin{split}
        &\left|\bbE_{F\sim Q_{\bq}}\left(\mu_F(X;\bZ)\ind(j\notin F)\right)-\bbE_{F\sim Q_{\tilde{\bq}}}\left(\mu_F(X;\bZ)\ind(j\notin F)\right)\right|\\
        =&\bbE\left|\mu_{F^{(1)}}(X;\bZ)\ind(j\notin F^{(1)})-\mu_{F^{(2)}}(X;\bZ)\ind(j\notin F^{(2)})\right|\\
        \leq &2\max_F|\mu_F(X;\bZ)|\bbP(F^{(1)}\neq F^{(2)})\\
        \leq &C\max_F|\mu_F(X;\bZ)|\|\bq-\tilde{\bq}\|_1,,
    \end{split}
\end{equation}
and
\begin{equation}\label{eq:eta2_bnd4}
    \left|\bbE_{F\sim Q_{\tilde{\bq}}}\left(\mu_F(X;\bZ)\ind(j\notin F)\right)\right|\leq \max_F|\mu_F(X;\bZ)|.
\end{equation}
The following Lemma shows an upper bound for $|\omega_j(Q_{\bq})-\omega_j(Q_{\tilde{\bq}})|$.
\begin{lem}\label{lem:omega_j_diff}
    $$
    |\omega_j(Q_{\bq}) - \omega_j(Q_{\tilde{\bq}})|\leq C\big|q_j - \tilde{q}_j\big| + C(q_j+ \tilde{q}_j)\|\bq-\tilde{\bq}\|_1,.
    $$
\end{lem}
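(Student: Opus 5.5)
The plan is a direct computation from the closed form of $\omega_j$. We use
\[
\omega_j(Q_{\bq})=\frac{q_j}{D},\qquad D:=1-q_j-r,\quad r:=\prod_{k=1}^M(1-q_k),
\]
and define $\tilde D$ and $\tilde r$ analogously for $\tilde{\bq}$. Both vectors are sampling probability vectors with entries in $[0,\delta]$ and entries summing to $m$, so the bound $\prod_k(1-q_k)\le e^{-m}$ (already used in the proof of Proposition~\ref{prop:sampling_property}) gives $r,\tilde r\le e^{-m}$. Assumption~\ref{ass:small_n} ($m\geq -\log(1-\delta-c)$) then yields
\[
D,\ \tilde D\ \ge\ 1-\delta-e^{-m}\ \ge\ c>0 .
\]
Consequently $1/(D\tilde D)\le C$, so it suffices to bound the numerator of the difference.

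The first step is to write the difference over a common denominator:
\[
\omega_j(Q_{\bq})-\omega_j(Q_{\tilde{\bq}})=\frac{q_j\tilde D-\tilde q_j D}{D\tilde D}.
\]
Expanding the numerator gives
\[
q_j(1-\tilde q_j-\tilde r)-\tilde q_j(1-q_j-r)=(q_j-\tilde q_j)(1-\tilde r)+\tilde q_j\,(r-\tilde r).
\]
The cross terms $q_j\tilde q_j$ cancel, which is what makes this split clean. The first term is at most $|q_j-\tilde q_j|$ because $0\le 1-\tilde r\le 1$.

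The second step is to control $|r-\tilde r|$. We use the standard telescoping bound for products of numbers in $[0,1]$:
\[
\Big|\prod_k a_k-\prod_k b_k\Big|\le\sum_k|a_k-b_k|,
\]
applied with $a_k=1-q_k$ and $b_k=1-\tilde q_k$. This gives $|r-\tilde r|\le\|\bq-\tilde{\bq}\|_1$. Hence the second term is at most $\tilde q_j\|\bq-\tilde{\bq}\|_1\le(q_j+\tilde q_j)\|\bq-\tilde{\bq}\|_1$.

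Combining the two steps with $D\tilde D\ge c^2$ gives
\[
|\omega_j(Q_{\bq})-\omega_j(Q_{\tilde{\bq}})|\le C|q_j-\tilde q_j|+C(q_j+\tilde q_j)\|\bq-\tilde{\bq}\|_1,
\]
which is the claim. The only place needing care is the uniform lower bound on the denominators, since this is where the cap $q_j\le\delta$ and the condition on $m$ enter. Everything else is elementary algebra, so I do not expect a real obstacle.
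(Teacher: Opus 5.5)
Your argument is correct, but it reaches the bound by a different route than the paper. The paper rewrites $\omega_j(Q_{\bq})=\frac{q_j/(1-q_j)}{1-\prod_{l\neq j}(1-q_l)}$ and treats this as $f(x)=x_M/\bigl(1-\prod_{l<M}(1-x_l)\bigr)$ evaluated at $(\bq_{\no j},\,q_j/(1-q_j))$. It then applies the mean value theorem, reusing the gradient computation from the proof of Lemma~\ref{lem:F1_F2_diff}: the partials in the first $M-1$ coordinates are $O(q_j+\tilde q_j)$, the last partial is $O(1)$, and the last coordinate changes by $O(|q_j-\tilde q_j|)$ because $q_j,\tilde q_j\le\delta$. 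You instead use the exact identity $q_j\tilde D-\tilde q_j D=(q_j-\tilde q_j)(1-\tilde r)+\tilde q_j(r-\tilde r)$ together with the telescoping bound $|r-\tilde r|\le\|\bq-\tilde{\bq}\|_1$. This means you never have to control anything at an unspecified intermediate point $\xi$. The paper's version tacitly needs $1-\prod_{l\neq j}(1-\xi_l)$ to stay bounded below along the segment; this holds by convexity, since $\sum_{l\neq j}\xi_l\ge m-\delta$, but it is not spelled out. Your route gives three things: it is self-contained, it yields the explicit constant $C=(1-\delta-e^{-m})^{-2}$, and it gives a slightly sharper bound. Writing the numerator also as $(q_j-\tilde q_j)(1-r)+q_j(r-\tilde r)$ lets you replace $q_j+\tilde q_j$ by $\min(q_j,\tilde q_j)$. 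The paper's route is more economical within its own text, since the gradient bounds were already derived there. Both arguments rest on the same implicit hypotheses: both vectors lie in $[0,\delta]^M$ with entries summing to $m$, and Assumption~\ref{ass:small_n} keeps $1-\delta-e^{-m}$ bounded away from zero. You correctly identify this as the only delicate point.
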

Therefore, \eqref{eq:eta2_bnd1}-\eqref{eq:eta2_bnd4} and Lemma \ref{lem:omega_j_diff} together imply the following:
\begin{equation*}
    \begin{split}
        |\eta_2|=&2\left|\varepsilon(Z,\bZ,Q_{\tilde{\bq}})\right|\left|\sum_F\mu_F(X;\bZ)\left(\delta_j(F;Q_{\bq}) - \delta_j(F;Q_{\tilde{\bq}})\right)\right|\\
        \leq&C\left|\varepsilon(Z,\bZ,Q_{\tilde{\bq}})\right|\max_F|\mu_F(X;\bZ)|\left(\big(q_j+\tilde{q}_j\big)\|\bq-\tilde{\bq}\|_1, + |q_j-\tilde{q}_j|\right).
    \end{split}
\end{equation*}

\paragraph{Bounding $\eta_3$.} The main idea for bounding $\eta_3$ is similar to the previous arguments, except that we need to deal with two feature subsets $F,\,F'$. First, we take a closer look at $\delta_j(F;Q_{\bq})\delta_j(F';Q_{\bq})-\delta_j(F;Q_{\tilde{\bq}})\delta_j(F';Q_{\tilde{\bq}})$. By the definition of $\delta_j(F;Q)$, we can write

\begin{equation*}
    \begin{split}
        &\delta_j(F;Q_{\bq})\delta_j(F';Q_{\bq})-\delta_j(F;Q_{\tilde{\bq}})\delta_j(F';Q_{\tilde{\bq}})\\
        =&\left(\ind(j\in F\cap F')+\omega_j^2(Q_{\bq})\ind(j\in F^c\cap F'^c)-\omega_j(Q_{\bq})\ind(j\in F\bigtriangleup F')\right)\\
        &\cdot \left(Q_{\bq}(F)Q_{\bq}(F') - Q_{\tilde{\bq}}(F)Q_{\tilde{\bq}}(F')\right)\\
        &+\left((\omega_j^2(Q_{\tilde{\bq}})-\omega_j^2(Q_{\bq}))\ind(j\in F^c\cap F'^c)+(\omega_j(Q_{\bq})-\omega_j(Q_{\tilde{\bq}}))\ind(j\in F\bigtriangleup F')\right)\\
        &\cdot Q_{\tilde{\bq}}(F)Q_{\tilde{\bq}}(F').
    \end{split}
\end{equation*}
Plugging in the identity above into the definition of $\eta_3$, we have
\begin{equation*}
    \begin{split}
        \eta_3 = &\bbE_{F,F'\overset{\rm ind.}{\sim}Q_{\bq}}\Big(\mu_F(X;\bZ)\mu_{F'}(X;\bZ)\\
        &\left(\ind(j\in F\cap F')+\omega_j^2(Q_{\bq})\ind(j\in F^c\cap F'^c)-\omega_j(Q_{\bq})\ind(j\in F\bigtriangleup F')\right)\Big)\\
        &-\bbE_{F,F'\overset{\rm ind.}{\sim}Q_{\tilde{\bq}}}\Big(\mu_F(X;\bZ)\mu_{F'}(X;\bZ)\\
        &\left(\ind(j\in F\cap F')+\omega_j^2(Q_{\bq})\ind(j\in F^c\cap F'^c)-\omega_j(Q_{\bq})\ind(j\in F\bigtriangleup F')\right)\Big)\\
        &+(\omega_j^2(Q_{\tilde{\bq}})-\omega_j^2(Q_{\bq}))\bbE_{F,F'\overset{\rm ind.}{\sim}Q_{\tilde{\bq}}}\Big(\mu_F(X;\bZ)\mu_{F'}(X;\bZ)\ind(j\in F^c\cap F'^c)\Big) \\
        &+(\omega_j(Q_{\bq})-\omega_j(Q_{\tilde{\bq}}))\bbE_{F,F'\overset{\rm ind.}{\sim}Q_{\tilde{\bq}}}\Big(\mu_F(X;\bZ)\mu_{F'}(X;\bZ)\ind(j\in F\bigtriangleup F')\Big).
    \end{split}
\end{equation*}
Here, 
\begin{equation}\label{eq:omega_sq_diff}
    \begin{split}
        |\omega_j^2(Q_{\tilde{\bq}}) - \omega_j^2(Q_{\bq})|=&|\omega_j(Q_{\tilde{\bq}}) - \omega_j(Q_{\bq})|(\omega_j(Q_{\tilde{\bq}}) + \omega_j(Q_{\bq}))\\
        \leq &\frac{q_j + \tilde{q}_j}{1-\delta-e^{-m}}|\omega_j(Q_{\tilde{\bq}}) - \omega_j(Q_{\bq})|.
    \end{split}
\end{equation}
In addition, let $(F_1^{(1)},\,F_1^{(2)})$ and $(F_2^{(1)},\,F_2^{(2)})$ be two independent, identically distributed pairs of random feature sets, both having joint distribution $Q^{(1,2)}$. Invoking Lemma \ref{lem:F1_F2_diff} on both $(F_1^{(1)},\,F_1^{(2)})$ and $(F_2^{(1)},\,F_2^{(2)})$, we have
\begin{equation*}
    \begin{split}
        &\bbP\big(F^{(1)}_1\neq F^{(2)}_1\text{ or }F_2^{(1)}\neq F_2^{(2)}\big)\\
        \leq &\bbP\big(F^{(1)}_1\neq F^{(2)}_1\big) + \bbP\big(F^{(1)}_2\neq F^{(2)}_2\big)\\
        \leq &\frac{4\|\bq - \tilde{\bq}\|_1}{1-e^{-m}};
    \end{split}
\end{equation*}
and
\begin{equation*}
    \begin{split}
        &\bbP\big(j\in F^{(1)}_1\cap F^{(1)}_2, F^{(1)}_1\neq F^{(2)}_1\text{ or }F_2^{(1)}\neq F_2^{(2)}\big)\\
        \leq &\bbP\big(j\in F^{(1)}_1, F^{(1)}_1\neq F^{(2)}_1\big) + \bbP\big(j\in F^{(1)}_2, F^{(1)}_2\neq F^{(2)}_2\big)\\
        \leq &\frac{6q_j\|\bq-\tilde{\bq}\|_1, + 2|\tilde{q}_j - q_j|}{1-e^{-m}}.
    \end{split}
\end{equation*}
Similarly, 
\begin{equation*}
    \begin{split}
        &\bbP\big(j\in F^{(2)}_1\cap F^{(2)}_2, F^{(1)}_1\neq F^{(2)}_1\text{ or }F_2^{(1)}\neq F_2^{(2)}\big)\\
      \leq &\frac{6\tilde{q}_j\|\bq-\tilde{\bq}\|_1, + 2|\tilde{q}_j - q_j|}{1-e^{-m}}.
    \end{split}
\end{equation*}
With these properties of $(F_1^{(1)},\,F_1^{(2)})$ and $(F_2^{(1)},\,F_2^{(2)})$, as well as \eqref{eq:omega_sq_diff}, we can then write
\begin{equation*}
    \begin{split}
        |\eta_3|\leq &\bbE\Big|\mu_{F_1^{(1)}}(X;\bZ)\mu_{F_2^{(1)}}(X;\bZ)\ind(j\in F_1^{(1)}\cap F_2^{(1)}) \\
        &- \mu_{F_1^{(2)}}(X;\bZ)\mu_{F_2^{(2)}}(X;\bZ)\ind(j\in F_1^{(2)}\cap F_2^{(2)})\Big|\\
        &+\omega_j^2(Q_{\bq})\bbE\Big|\mu_{F_1^{(1)}}(X;\bZ)\mu_{F_2^{(1)}}(X;\bZ)\ind(j\in F_1^{(1)c}\cap F_2^{(1)c}) \\
        &- \mu_{F_1^{(2)}}(X;\bZ)\mu_{F_2^{(2)}}(X;\bZ)\ind(j\in F_1^{(2)c}\cap F_2^{(2)c})\Big|\\
        &+\omega_j(Q_{\bq})\bbE\Big|\mu_{F_1^{(1)}}(X;\bZ)\mu_{F_2^{(1)}}(X;\bZ)\ind(j\in F_1^{(1)}\bigtriangleup F_2^{(1)}) \\
        &- \mu_{F_1^{(2)}}(X;\bZ)\mu_{F_2^{(2)}}(X;\bZ)\ind(j\in F_1^{(2)}\bigtriangleup F_2^{(2)})\Big|\\
        &+\Bigg(1+\frac{q_j + \tilde{q}_j}{1-\delta-e^{-m}}\Bigg)\big|\omega_j(Q_{\tilde{\bq}}) - \omega_j(Q_{\bq})\big|\max_F\mu_F^2(X;\bZ)\\
        \leq &\max_F\mu_F^2(X;\bZ)\Big(\bbP(j\in F_1^{(1)}\cap F_2^{(1)},\,F_1^{(1)}\neq F_1^{(2)}\text{ or }F_2^{(1)}\neq F_2^{(2)})\\ 
        &\quad +\bbP(j\in F_1^{(2)}\cap F_2^{(2)},\,F_1^{(1)}\neq F_1^{(2)}\text{ or }F_2^{(1)}\neq F_2^{(2)})\Big)\\
        &+2\omega_j^2(Q_{\bq})\max_F\mu_F^2(X;\bZ)\bbP(F_1^{(1)}\neq F_1^{(2)}\text{ or }F_2^{(1)}\neq F_2^{(2)})\\
        &+2\omega_j(Q_{\bq})\max_F\mu_F^2(X;\bZ)\bbP(F_1^{(1)}\neq F_1^{(2)}\text{ or }F_2^{(1)}\neq F_2^{(2)})\\
        &+C|\omega_j(Q_{\tilde{\bq}}) - \omega_j(Q_{\bq})|\max_F\mu_F^2(X;\bZ)\\
        \leq &C\max_F\mu_F^2(X;\bZ)\left(\big(q_j+\tilde{q}_j\big)\|\bq-\tilde{\bq}\|_1, + |q_j - \tilde{q}_j|\right),
    \end{split}
\end{equation*}
where we utilized Lemma \ref{lem:omega_j_diff} and the fact that $1-\delta-e^{-m}\leq c<1$ for some constant $c>0$, when $m$ is sufficiently large, in the last line. Therefore, putting bounds for $\eta_1,\,\eta_2,\,\eta_3$ together, we have
\begin{equation*}
\begin{split}
    &\left|h_j(Z,\bZ,Q_{\bq}) -h_j(Z,\bZ,Q_{\tilde{\bq}})\right|\\
    \leq &C\Big(\max_F\mu_F^2(X;\bZ)+\left|\varepsilon(Z,\bZ,Q_{\tilde{\bq}})\right|\max_F|\mu_F(X;\bZ)|\Big)\\
    &\cdot\Big(\big(q_j+\tilde{q}_j\big)\big\|\bq-\tilde{\bq}\big\|_1 + \big|q_j - \tilde{q}_j\big|\Big)\\
    \leq &C\Big(\big(q_j+\tilde{q}_j\big)\big\|\bq-\tilde{\bq}\big\|_1 + \big|q_j - \tilde{q}_j\big|\Big),
\end{split}
\end{equation*}
where the last line is due to Assumption~\ref{ass:adp_bounded}.

\end{proof}
\begin{proof}[Proof of Lemma~\ref{lem:LOCO_LOO_concentration}]
For notational simplicity, we omit the superscript of $Q^{*(b)}$ and use $Q$ instead throughout this proof. Recall the definition of $h_j(Z_i,\bZ_{\no i}, Q)$ in Table~\ref{tab:notations}. We can write
\begin{equation*}
    \begin{split}
        h_j(Z_i,\bZ_{\no i},Q) =& \Big(Y_i-\mu^*(X_i;Q_{\no j},\bZ_{\no i})\Big)^2 - \Big(Y_i-\mu^*(X_i;Q,\bZ_{\no i})\Big)^2\\
        =& \Big(\mu^*(X_i;Q,\bZ_{\no i}) - \mu^*(X_i;Q_{\no j}, \bZ_{\no i})\Big)\Big(2Y_i-\mu^*(X_i;Q,\bZ_{\no i}) - \mu^*(X_i;Q_{\no j}, \bZ_{\no i})\Big),
    \end{split}
\end{equation*}
where $\mu^*(X_i;Q,\bZ_{\no i})=\frac{1}{\binom{N-1}{n}}\sum_{I\subset[N]\no i,|I|=n}\sum_{F\subset [M]}Q(F)\mu_{I,F}(X_i)$. We can then write
\begin{equation*}
    \begin{split}
        h_j(Z_i,\bZ_{\no i},Q) =&\frac{1}{\binom{N-1}{n}^2}\sum_{\substack{I_1,I_2\subset[N]\no i\\|I_1|=|I_2|=n}}\Big(\sum_{F\subset [M]}\mu_{I_1,F}(X_i)\delta_j(F,Q)\Big)\Big\{2Y_i - \sum_{F\subset [M]}\mu_{I_2,F}(X_i)(Q(F)+Q_{\no j}(F))\Big\}.
    \end{split}
\end{equation*}
Denote $\Big(\sum_{F\subset [M]}\mu_{I_1,F}(X_i)\delta_j(F,Q)\Big)\Big\{2Y_i - \sum_{F\subset [M]}\mu_{I_2,F}(X_i)(Q(F)+Q_{\no j}(F))\Big\}$ by $\alpha_{i,I_1,I_2}$, a function of $Z_i,\bZ_{I_1},\bZ_{I_2}$. We then have
$$
h_j(Z_i,\bZ_{\no i},Q) = \frac{1}{\binom{N-1}{n}^2}\sum_{\substack{I_1,I_2\subset[N]\no i\\|I_1|=|I_2|=n}}\alpha_{i,I_1,I_2}.
$$
Hence,
\begin{equation*}
    \begin{split}
        &\frac{1}{N}\sum_{i=1}^N\left\{h_j(Z_i,\bZ_{\no i},Q) - \bbE\Big[h_j(Z_i,\bZ_{\no i}, Q)|\bZ_{\no i}\Big]\right\}\\
        =&\frac{1}{N\binom{N-1}{n}^2}\sum_{i=1}^N\sum_{\substack{I_1,I_2\subset[N]\no i\\|I_1|=|I_2|=n}} \Big(\alpha_{i,I_1,I_2}-\bbE\big(\alpha_{i,I_1,I_2}|\bZ_{I_1\cup I_2}\big)\Big),
    \end{split}
\end{equation*}
where the R.H.S. is an average over $N\binom{N-1}{n}^2$ dependent random variables. Although $\alpha_{i,I_1,I_2}$ are dependent from each other through the overlaps across the subsampled observations, each sample is only involved in a small number of $\alpha_{i,I_1,I_2}$'s. The following lemma provides a useful concentration bound for this situation. 

In particular, Lemma~\ref{lem:read_k_concentration} gives the Hoeffding bounds for sum of dependent random variables $\{Y_i\}_{i=1}^n$ when they are functions of a set of independent variables $\notationchange{\{X^{(j)}\}_{j=1}^m}$, where each $\notationchange{X^{(j)}}$ only influences a small number of $\notationchange{Y_i}$'s.
\begin{lem}\label{lem:read_k_concentration}
    Consider centered random variables $Y_1,\dots,Y_n$ satisfying $|Y_i|\leq C_i$. Suppose that there exist independent random variables $\notationchange{X^{(1)}},\dots,\notationchange{X^{(m)}}$ such that $Y_i=f_i(\notationchange{X^{(S_i)}})$ for some set $S_i\subset[m]$. If $\forall 1\leq j\leq m$, $|\{i:S_i\owns j\}|\leq k$, then
    $$
    \bbP\Bigg(\Bigg|\sum_{i=1}^n Y_i\Bigg|>t\Bigg)\leq 2\exp\Bigg\{-\frac{t^2}{2k\sum_{i=1}^n C_i^2}\Bigg\}.
    $$
\end{lem}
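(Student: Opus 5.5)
The plan is to prove Lemma~\ref{lem:read_k_concentration} as a read-$k$ Hoeffding bound, in the spirit of \citet{gavinsky2015tail}. We reduce it to ordinary Hoeffding bounds for independent sums through a fractional coloring of the dependency structure. The dependency hypergraph has vertex set $[n]$, and $i$ and $i'$ conflict whenever $S_i\cap S_{i'}\neq\emptyset$. This route may give a constant of order $k$ times the maximal degree rather than $k$ itself. We therefore prefer the direct approach below, which gives exactly $2k\sum_i C_i^2$ in the exponent.

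\textbf{Step 1 (exponential moment via H\"older over a read-$k$ decomposition).} For $\lambda>0$, bound $\bbE\exp(\lambda\sum_i Y_i)$. For each coordinate $j\in[m]$, let $R_j=\{i:S_i\owns j\}$, so that $|R_j|\le k$. We assign weights to distribute each $Y_i$ across its coordinates. The key tool is the Shearer-type / generalized H\"older inequality for functions of independent variables (as in Gavinsky--Lovett--Saks--Srinivasan, and Finner's inequality). If each variable $X^{(j)}$ is read by at most $k$ of the functions $g_i(X^{(S_i)})\ge0$, then
\[
\bbE\prod_{i} g_i \le \prod_i \big(\bbE g_i^{k}\big)^{1/k}.
\]
We apply this with $g_i=\exp(\lambda Y_i)$, which gives $\bbE e^{\lambda\sum_iY_i}\le\prod_i(\bbE e^{k\lambda Y_i})^{1/k}$.

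\textbf{Step 2 (Hoeffding's lemma per term).} Each $Y_i$ is centered with $|Y_i|\le C_i$. Hoeffding's lemma therefore gives $\bbE e^{k\lambda Y_i}\le\exp(k^2\lambda^2C_i^2/2)$. Combining with Step 1 yields $\bbE e^{\lambda\sum Y_i}\le\exp(k\lambda^2\sum_iC_i^2/2)$. Chernoff's bound, optimized at $\lambda=t/(k\sum C_i^2)$, then gives $\exp\{-t^2/(2k\sum C_i^2)\}$. The same argument applied to $-Y_i$ and a union bound give the factor $2$.

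\textbf{Main obstacle.} The main obstacle is justifying Finner's inequality in Step 1. We prove it by induction on $m$. Condition on $X^{(2)},\dots,X^{(m)}$ and apply the ordinary H\"older inequality in $X^{(1)}$ to the at most $k$ factors that depend on $X^{(1)}$, using exponents $k$. Factors not involving $X^{(1)}$ pass through unchanged. This replaces each such $g_i$ by $(\bbE_{X^{(1)}}g_i^k)^{1/k}$, a function of the remaining coordinates. We then apply the inductive hypothesis to the new family, with exponents tracked as $L^{k}$ norms. At the end, iterated conditional $L^k$ norms of $g_i$ over independent coordinates collapse to $(\bbE g_i^k)^{1/k}$ by Fubini. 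Some care is needed when fewer than $k$ functions read a coordinate. In that case we pad with constant functions equal to $1$, which keeps the H\"older exponents summing to one.
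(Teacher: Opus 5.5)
Your proposal is correct and follows essentially the paper's own proof. The paper likewise integrates out one coordinate $X^{(j)}$ at a time, applies H\"older with exponent $k$ to the at most $k$ factors $e^{\lambda Y_i}$ that read $X^{(j)}$, and collapses the iterated conditional $L^k$ norms to $\prod_i(\bbE e^{k\lambda Y_i})^{1/k}$ before finishing with Hoeffding's lemma, a Chernoff bound, and a union bound. The only difference is presentational: you isolate the read-$k$ H\"older (Finner-type) inequality as a standalone lemma proved by induction, whereas the paper runs the same iteration directly inside the moment generating function computation.
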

Lemma~\ref{lem:read_k_concentration} is an extension of the read-$k$ Chernoff bound previously shown for indicator variables \citep{gavinsky2015tail}. Now we apply Lemma~\ref{lem:read_k_concentration} to $\Big\{\alpha_{i,I_1,I_2}-\bbE\big(\alpha_{i,I_1,I_2}|\bZ_{I_1\cup I_2}\big): 1\leq i\leq N, I_1\subset[N]\no i, I_2\subset[N]\no i, |I_1|=|I_2|=n\Big\}$. We note the following facts:
\begin{itemize}
    \item Assumption~\ref{ass:adp_bounded} implies that 
    \begin{align}
        |\alpha_{i,I_1,I_2}|\leq &C\sum_{F\subset[M]}\big|\delta_j(F,Q)\big|\\
        =&C\sum_{F\subset[M]}\Big|Q(F)\big(\ind(j\in F)-\omega_j(Q)\ind(j\notin F)\big)\Big|\\
        \leq&CP_{F\sim Q}(j\in F) + C\omega_j(Q)\\
        =&\frac{Cq_j}{1-\prod_{l=1}^M(1-q_l)} + \frac{Cq_j}{1-q_j-\prod_{l=1}^M(1-q_l)}\\
        \leq& Cq_j,\label{eq:delta_j_sum}
\end{align}
where $\omega_j(Q) = \frac{\bbP_{F\sim Q}(F\owns j)}{\bbP_{F\sim Q}(F\not\owns j)} = \frac{q_j}{1-q_j-\prod_{k=1}^M(1-q_k)}$. In the last line above, we utilized the fact that $1-q_j-\prod_{l=1}^M(1-q_l)\geq 1-\delta-e^{-m}\geq c>0$ (Assumption~\ref{ass:small_n}). The constants $C$ may vary from line to line. 
    \item $\alpha_{i,I_1,I_2}$'s are functions of the independent random variables $Z_1,\dots,Z_N$.
    \item Each $Z_l$ only influences $\alpha_{i,I_1,I_2}$ if $I_1\cup I_2\cup\{i\}\owns l$, and the total number of influenced terms is
    \begin{equation*}
    \begin{split}
        &\binom{N-1}{n}^2 + (N-1)\Big(\binom{N-1}{n}^2-\binom{N-2}{n}^2\Big)\\
        =&\binom{N-1}{n}^2\Big(2n+1-\frac{n^2}{N-1}\Big)\\
        \leq&\binom{N-1}{n}^2\big(2n+1\big).
    \end{split}
    \end{equation*} 
\end{itemize}
Now we invoke Lemma~\ref{lem:read_k_concentration} to obtain the following:
\begin{equation*}
    \begin{split}
        &\bbP\left(\Big|\frac{1}{N}\sum_{i=1}^N\left\{h_j(Z_i,\bZ_{\no i},Q) - \bbE\Big[h_j(Z_i,\bZ_{\no i}, Q)|\bZ_{\no i}\Big]\right\}\Big|>t\right)\\
        \leq&\bbP\left(\left|\sum_{i=1}^N\sum_{\substack{I_1,I_2\subset[N]\no i\\|I_1|=|I_2|=n}} \Big(\alpha_{i,I_1,I_2}-\bbE\big(\alpha_{i,I_1,I_2}|\bZ_{I_1\cup I_2}\big)\Big)\right|>tN\binom{N-1}{n}^2\right)\\
        \leq&2\exp\left\{-c\frac{Nt^2}{q_j^2n}\right\}.
    \end{split}
\end{equation*}
Plug in $t=Cq_j\sqrt{\frac{n}{N}\log M}$ into the inequality above and take a union bound over $1\leq j\leq M$, we have that with probability at least $1-M^{-c}$, 
$$
\left|\frac{1}{N}\sum_{i=1}^N\left\{h_j(Z_i,\bZ_{\no i},Q) - \bbE\Big[h_j(Z_i,\bZ_{\no i}, Q)|\bZ_{\no i}\Big]\right\}\right|\leq Cq_j\sqrt{\frac{n}{N}\log M}.
$$
Apply the same argument to each iteration with $Q^{*(b)}$ and take a union bound over $1\leq b\leq B\leq CM$, the proof is then complete.

\end{proof}

\begin{proof}[Proof of Lemma~\ref{lem:LOO_err}]
Similar to the proof of Lemma~\ref{lem:LOCO_LOO_concentration}, we can write
\begin{equation*}
    \begin{split}
        h_j(Z^*,\bZ_{\no i},Q) =&\frac{1}{\binom{N-1}{n}^2}\sum_{\substack{I_1,I_2\subset[N]\no i\\|I_1|=|I_2|=n}}\Big(\sum_{F\subset [M]}\mu_{I_1,F}(X^*)\delta_j(F,Q)\Big)\Big[2Y^* - \sum_{F\subset [M]}\mu_{I_2,F}(X^*)(Q(F)+Q_{\no j}(F))\Big],\\
        h_j(Z^*,\bZ,Q) =&\frac{1}{\binom{N}{n}^2}\sum_{\substack{I_1,I_2\subset[N]\\|I_1|=|I_2|=n}}\Big(\sum_{F\subset [M]}\mu_{I_1,F}(X^*)\delta_j(F,Q)\Big)\Big[2Y^* - \sum_{F\subset [M]}\mu_{I_2,F}(X^*)(Q(F)+Q_{\no j}(F))\Big].
    \end{split}
\end{equation*}
Denote $\bbE_{Z^*}\left\{\Big(\sum_{F\subset [M]}\mu_{I_1,F}(X^*)\delta_j(F,Q)\Big)\Big[2Y^* - \sum_{F\subset [M]}\mu_{I_2,F}(X^*)(Q(F)+Q_{\no j}(F))\Big]|\bZ\right\}$ by $\alpha_{I_1,I_2}$. We then have
\begin{equation*}
    \begin{split}
        \bbE_{Z^*}\Big[h_j(Z^*,\bZ_{\no i},Q)|\bZ\Big] =&\frac{1}{\binom{N-1}{n}^2}\sum_{\substack{I_1,I_2\subset[N]\no i\\|I_1|=|I_2|=n}}\alpha_{I_1,I_2} = \frac{1}{\binom{N-1}{n}^2}\sum_{\substack{I_1,I_2\subset[N]\\|I_1|=|I_2|=n}}\ind(I_1\cup I_2\not\owns i)\alpha_{I_1,I_2},\\
        \bbE_{Z^*}\Big[h_j(Z^*,\bZ,Q)|\bZ\Big] =&\frac{1}{\binom{N}{n}^2}\sum_{\substack{I_1,I_2\subset[N]\\|I_1|=|I_2|=n}}\alpha_{I_1,I_2}.
    \end{split}
\end{equation*}
For simplicity, denote $\frac{n}{N}$ by $\gamma$ throughout this proof. We can then write
\begin{equation*}
    \begin{split}
        &\left|\frac{1}{N}\sum_{i=1}^N\bbE_{Z^*}\Big[h_j(Z^*,\bZ_{\no i},Q)-h_j(Z^*,\bZ,Q)|\bZ\Big]\right|\\
    = & \left|\sum_{\substack{I_1,I_2\subset[N]\\|I_1|=|I_2|=n}}\alpha_{I_1,I_2}\left(\frac{1}{N\binom{N-1}{n}^2}\sum_{i=1}^N\ind(I_1\cup I_2\not\owns i) - \frac{1}{\binom{N}{n}^2}\right)\right|\\
    \leq&\frac{1}{\binom{N}{n}^2}\sum_{\substack{I_1,I_2\subset[N]\\|I_1|=|I_2|=n}}
    |\alpha_{I_1,I_2}|\left|\frac{(2\gamma -\gamma^2)-|I_1\cup I_2|/N}{(1-\gamma)^2\binom{N}{n}^2}\right|\\
    \leq &C\frac{q_jn}{N}.
    \end{split} 
\end{equation*}
Here, the last line arises from two facts: (i) by Assumption~\ref{ass:adp_bounded}, $|\alpha_{I_1,I_2}|\leq C\sum_{F\subset[M]}\delta_j(F,Q)\leq Cq_j$; and (ii) $(2\gamma -\gamma^2)-|I_1\cup I_2|/N\in [-\gamma^2,\gamma-\gamma^2]$ for any $I_1,I_2$ with size $n$.

\end{proof}

 \begin{proof}[Proof of Lemma~\ref{lem:F1_F2_diff}]
    Here, we construct two random feature subsets $F^{(1)}\sim Q_{\bq}$ and $F^{(2)}\sim Q_{\tilde{\bq}}$ such that they are similar to each other, using Algorithm \ref{algo:sample_F1F2}. 
    
    \begin{algorithm}[H]
\caption{Coupled Sampling of Two Random Feature Subsets}
\label{algo:sample_F1F2}
\noindent{\textbf{Input}}: Number of features $M$, sampling probability vectors $\bq,\,\tilde{\bq}\in [0,1]^M$.
\begin{enumerate}
    \item Initialize $t=0$, $F^{(1)} = \emptyset$, $F^{(2)} = \emptyset$.
    \item While $F^{(1)}=\emptyset$ or $F^{(2)} = \emptyset$:
    \begin{enumerate}
        \item Set $t = t + 1$.
        \item Randomly sample $U_{t,1},\dots, U_{t,M}\overset{\rm i.i.d.}{\sim}\cU[0,1]$.
        \item Let $F^{(t,1)} = \{j: U_{t,j}\leq q_j\}$, $F^{(t,2)} = \{j: U_{t,j}\leq \tilde{q}_j\}$. 
        \item If $F^{(1)} = \emptyset$, set $F^{(1)} = F^{(t,1)}$; if $F^{(2)} = \emptyset$, set $F^{(2)} = F^{(t,2)}$.
    \end{enumerate} 
\end{enumerate}
\textbf{Output}: $(F^{(1)},\,F^{(2)})$.
\end{algorithm}

Note that $F^{(1)}\sim Q_{\bq}$ and $F^{(2)}\sim Q_{\tilde{\bq}}$. Let $T_1= \min\{t: F^{(t,1)}\neq \emptyset\}$, $T_2 = \min\{t: F^{(t,2)}\neq \emptyset\}$. The updating rule in Algorithm \ref{algo:sample_F1F2} implies that $F^{(1)} = F^{(T_1,1)}$ and $F^{(2)} = F^{(T_2,2)}$. Also let $T_0 = T_1\wedge T_2$. Note that $F^{(1)}\neq F^{(2)}$ implies that $F^{(T_0,1)}\neq F^{(T_0,2)}$, and $F^{(t,1)} = F^{(t,2)} = \emptyset$ for $1\leq t\leq T_0-1$. Hence, we can bound the probability of $F^{(1)}\neq F^{(2)}$ as follows:
\begin{equation*}
    \begin{split}
        \bbP(F^{(1)}\neq F^{(2)})\leq& \bbP(F^{(T_0,1)}\neq F^{(T_0,2)}, F^{(t,1)} = F^{(t,2)} = \emptyset \text{ for }t< T_0)\\
        \leq &\bbP(\exists s\geq 1, \text{ s.t. } F^{(s,1)}\neq F^{(s,2)}, F^{(t,1)} = F^{(t,2)} = \emptyset \text{ for }t< s)\\
        \leq& \sum_{s=1}^\infty \bbP(F^{(s,1)}\neq F^{(s,2)}, F^{(t,1)} = F^{(t,2)} = \emptyset \text{ for }t< s)\\
        = &\sum_{s=1}^\infty \bbP(F^{(s,1)}\neq F^{(s,2)})\prod_{t<s}\bbP(F^{(t,1)} = F^{(t,2)} = \emptyset).
    \end{split}
\end{equation*}
where the fourth line is due to that $\{(F^{(t,1)},\,F^{(t,2)}\}_{t=1}^\infty$ is an independent series of subset pairs. Since $\bbP\left(\ind(j\in F^{(t,1)})\ind(j\notin F^{(t,2)})=0\right) = \bbP\big(U_{t,j}\in (q_j\wedge \tilde{q}_j, q_j\vee \tilde{q}_j]\big) = \big|q_j-\tilde{q}_j\big|$, we have
\begin{equation*}
    \begin{split}
        \bbP(F^{(s,1)}\neq F^{(s,2)}) =& 1- \bbP(F^{(s,1)}= F^{(s,2)})\\
        =&1-\prod_{j=1}^M(1-|q_j - \tilde{q}_j|).
    \end{split}
\end{equation*}
While for $\bbP(F^{(t,1)} = F^{(t,2)} = \emptyset)$, we have
\begin{equation*}
    \begin{split}
        \bbP(F^{(t,1)} = F^{(t,2)} = \emptyset) = \bbP(\forall j, U_{t,j}>q_j\vee \tilde{q}_j) = \prod_{j=1}^M(1-q_j\vee \tilde{q}_j).
    \end{split}
\end{equation*}
Therefore, we can write
\begin{equation}\label{eq:F1F2_unequalProb}
    \begin{split}
         \bbP(F^{(1)}\neq F^{(2)})\leq&\sum_{s=1}^\infty \bbP(F^{(s,1)}\neq F^{(s,2)})\prod_{t<s}\bbP(F^{(t,1)} = F^{(t,2)} = \emptyset)\\
         \leq &\Big(1-\prod_{j=1}^M\big(1-\big|q_j - \tilde{q}_j\big|\big)\Big)\sum_{s=0}^{\infty}\left(\prod_{j=1}^M\Big(1-q_j\vee \tilde{q}_j\Big)\right)^s\\
         =&\Big(1-\prod_{j=1}^M\big(1-\big|q_j - \tilde{q}_j\big|\big)\Big)\left(1-\prod_{j=1}^M\Big(1-q_j\vee \tilde{q}_j\Big)\right)^{-1}\\
         \leq&\frac{1-\exp\{\sum_{j=1}^M\log(1-|q_j - \tilde{q}_j|)\}}{1-e^{-m}},
    \end{split}
\end{equation}
where the fourth line is due to the fact that $$\prod_{j=1}^M\Big(1-q_j\vee \tilde{q}_j)=\exp\{\sum_{j=1}^M\log(1-q_j\vee \tilde{q}_j)\}\leq \exp\{-\sum_{j=1}^M(q_j\vee \tilde{q}_j)\}\leq e^{-m}.$$ Furthermore, noting that the function $\log(1-x)$ satisfies $\log(1-x)\geq -\frac{x}{1-x}$, we have $\log(1-|q_j - \tilde{q}_j|)\geq -2|q_j - \tilde{q}_j|$ given that $\|\bq-\tilde{\bq}\|_{\infty}\leq \frac{1}{2}$. Therefore, we can write
\begin{equation}\label{eq:F1F2_unequalProb2}
    \begin{split}
        \bbP(F^{(1)}\neq F^{(2)})\leq&\frac{1-\exp\{-2\|\bq-\tilde{\bq}\|_1,\}}{1-e^{-m}}\\
        \leq&\frac{2\|\bq-\tilde{\bq}\|_1,}{1-e^{-m}},
    \end{split}
\end{equation}
where we have utilized the fact that $1-e^{-x}\leq x$ for $x\geq 0$.

Now we prove an upper bound for $\bbE\big||F^{(1)}|-|F^{(2)}|\big|$. We first note that
\begin{equation}\label{eq:F1F2_size_diff}
    \begin{split}
        &\bbE\big||F^{(1)}|-|F^{(2)}|\big| \\
        =&\bbE\left|\sum_{j=1}^M(\ind(j\in F^{(1)}) - \ind(j\in F^{(2)})\right|\\
        \leq &\bbE\left(\sum_{j=1}^M\ind(j\in F^{(1)}, j\notin F^{(2)}, \text{ or }j\in F^{(2)}, j\notin F^{(1)})\right)\\
        = &\sum_{j=1}^M \bbP(j\in F^{(1)},j\notin F^{(2)}\text{ or }j\in F^{(2)}, j\notin F^{(1)}).
    \end{split}
\end{equation}
For any $1\leq j\leq M$, the event that $j\in F^{(1)},j\notin F^{(2)}\text{ or }j\in F^{(2)}, j\notin F^{(1)}$ implies that at least one of the following three events are true: (i)$\ind(j\in F^{(T_0,1)})+\ind(j\in F^{(T_0,2)})=1$; (ii)$|F^{(T_0,1)}|=0$, $j\notin F^{(T_0,2)}$, $|F^{(T_0,2)}|>0$, $j\in F^{(T_2,2)}$; (iii) $|F^{(T_0,2)}|=0$, $j\notin F^{(T_0,1)}$, $|F^{(T_0,1)}|>0$, $j\in F^{(T_1,1)}$. In the following, we bound the probability of each event. First of all,

\begin{equation*}
    \begin{split}
        &\bbP\left(\ind(j\in F^{(T_0,1)})+\ind(j\in F^{(T_0,2)})=1\right)\\
        = &\bbP\left(U_{T_0,j}\in (q_j\wedge \tilde{q}_j, q_j\vee \tilde{q}_j]\right)\\
        \leq &\bbP\left(\exists s\geq 1, U_{s,j}\in (q_j\wedge \tilde{q}_j, q_j\vee \tilde{q}_j], F^{(t,1)} = F^{(t,2)} = \emptyset \text{ for all }t<s\right)\\
        \leq &|q_j - \tilde{q}_j|\sum_{s=0}^{\infty}\left(\prod_{l=1}^M(1-q_l\vee \tilde{q}_l)\right)^s\\
        \leq &\frac{|q_j-\tilde{q}_j|}{1-e^{-m}},
    \end{split}
\end{equation*}
where the last two lines follow similar arguments to \eqref{eq:F1F2_unequalProb}.

In addition, 
\begin{equation}\label{eq:F1F2_size_diff_event2}
    \begin{split}
        &\bbP\left(|F^{(T_0,1)}|=0, j\notin F^{(T_0,2)}, |F^{(T_0,2)}|>0, j\in F^{(T_2,2)}\right)\\
        \leq &\bbP\Big(\exists s\geq 1\text{ s.t., }\forall t<s, F^{(t,1)} = F^{(t,2)} = \emptyset; j\notin F^{(s,1)}, j\notin F^{(s,2)}; \\
        &\forall k\neq j, k\notin F^{(s,1)}; \exists k\neq j, k\in F^{(s,2)}; j\in F^{(T_2,2)}\Big)\\
        \leq&\sum_{s=1}^\infty \left(\prod_{t=1}^{s-1}\bbP(F^{(t,1)} = F^{(t,2)} =\emptyset)\right) \bbP(j\notin F^{(s,1)}, j\notin F^{(s,2)})\\
        &\times \bbP(\forall k\neq j, k\notin F^{(s,1)};\exists k\neq j, k\in F^{(s,2)}) \bbP(j\in F^{(T_2,2)})\\
        \leq &\sum_{s=1}^\infty \left(\prod_{k=1}^M(1-q_k\vee \tilde{q}_k)^{s-1}\right)(1-q_j\vee \tilde{q}_j)\\
        &\times \left(\prod_{k\neq j}(1-q_k)-\prod_{k\neq j}(1-q_k\vee \tilde{q}_k)\right) \frac{\tilde{q}_j}{1-\prod_{k=1}^M(1-\tilde{q}_k)},
    \end{split}
\end{equation}
where the second inequality is due to the fact that the conditional distribution of $F^{(T_2,2)}$ given the event $F^{(t,2)}=\emptyset$ for $t=1,\dots, s$ is the same as the marginal distribution of $F^{(T_2,2)}$. Following the same arguments as \eqref{eq:F1F2_unequalProb}, we have
\begin{equation*}
    \begin{split}
        \sum_{s=1}^\infty \left(\prod_{k=1}^M(1-q_k\vee \tilde{q}_k)^{s-1}\right)\leq &\frac{1}{1-e^{-m}},\\
        \left(1-\prod_{k=1}^M(1-\tilde{q}_k)\right)^{-1}\leq &\frac{1}{1-e^{-m}}.
    \end{split}
\end{equation*}
In addition, we note that for any vectors $u,\,v\in [0,\infty)^p$ with any $p>1$, we have $$\Bigg|\prod_{i=1}^p u_i - \prod_{i=1}^p v_i\Bigg|\leq \|u-v\|_1\max_j\prod_{i\neq j}\max\{u_i,\,v_i\}.$$ Plugging in $u = (1-q)_{\backslash j}$ and $v = (1-q\vee \tilde{q})_{\backslash j}$ into the inequality above, we then have
\begin{equation*}
    \begin{split}
        \Bigg|\prod_{k\neq j}(1-q_k)-\prod_{k\neq j}(1-q_k\vee \tilde{q}_k)\Bigg|\leq \|\bq-\tilde{\bq}\|_1,.
    \end{split}
\end{equation*}
Therefore, returning to \eqref{eq:F1F2_size_diff_event2}, the second event's probability is controlled as follows:
\begin{equation*}
    \begin{split}
        &\bbP\left(|F^{(T_0,1)}|=0, j\notin F^{(T_0,2)}, |F^{(T_0,2)}|>0, j\in F^{(T_2,2)}\right)\\
        \leq &\left(1-e^{-m}\right)^{-2}\|\bq-\tilde{\bq}\|_1,\tilde{q}_j.
    \end{split}
\end{equation*}
Similarly, the probability of the third event that $|F^{(T_0,2)}|=0$, $j\notin F^{(T_0,1)}$, $|F^{(T_0,1)}|>0$, $j\in F^{(T_1,1)}$ can also be controlled by $\left(1-e^{-m}\right)^{-2}\|\bq-\tilde{\bq}\|_1,q_j$. Hence,
\begin{equation*}
    \begin{split}
        &\bbP(j\in F^{(1)},j\notin F^{(2)}\text{ or }j\in F^{(2)}, j\notin F^{(1)})\\
        \leq&\frac{|q_j-\tilde{q}_j|}{1-e^{-m}} + \left(1-e^{-m}\right)^{-2}\|\bq-\tilde{\bq}\|_1,(q_j+\tilde{q}_j).
    \end{split}
\end{equation*}
This together with \eqref{eq:F1F2_size_diff} yields
\begin{equation*}
    \begin{split}
        &\bbE\big||F^{(1)}|-|F^{(2)}|\big|\\
        \leq&\sum_{j=1}^M \frac{|q_j-\tilde{q}_j|}{1-e^{-m}} + \left(1-e^{-m}\right)^{-2}\|\bq-\tilde{\bq}\|_1,(q_j+\tilde{q}_j)\\
        =& \frac{\|\bq-\tilde{\bq}\|_1,}{1-e^{-m}} + 2m\left(1-e^{-m}\right)^{-2}\|\bq-\tilde{\bq}\|_1,\\
        \leq&\frac{2m+1}{(1-e^{-m})^2}\|\bq-\tilde{\bq}\|_1,.
    \end{split}
\end{equation*}

Finally, we focus on proving \eqref{eq:F1F2_diff3} and \eqref{eq:F1F2_diff4} follows similarly. We can first write
\begin{equation}\label{eq:F_size_unequal_bnd1}
    \begin{split}
        \bbP\big(j\in F^{(1)},\,F^{(1)}\neq F^{(2)}\big)
        =&\bbP(j\in F^{(1)})-\bbP(j\in F^{(1)}, F^{(1)}=F^{(2)})\\
        =&\frac{q_j}{1-\prod_{l=1}^M(1-q_l)}-\bbP(j\in F^{(1)}, F^{(1)}= F^{(2)}).
    \end{split}
\end{equation}
In addition, 
\begin{equation}\label{eq:F_size_unequal_bnd2}
    \begin{split}
        &\bbP\big(j\in F^{(1)},\, F^{(1)}= F^{(2)}\big)\\
        \geq &\bbP\Big(T_1=T_2,\, F^{(T_1,1)}= F^{(T_1,2)},\, j\in F^{(T_1,1)}\Big)\\
        =&\sum_{t=1}^{\infty}\bbP\Big(T_1=T_2=t,\, F^{(T_1,1)}= F^{(T_1,2)},\, j\in F^{(T_1,1)})\\
        =&\sum_{t=1}^{\infty}\bbP\Big(F^{(s,1)}=F^{(s,2)}=\emptyset\text{ for all }s<t,\, F^{(t,1)}= F^{(t,2)},\, j\in F^{(t,1)})\\
        =&\sum_{t=1}^{\infty}\prod_{s=1}^{t-1}\bbP\Big(F^{(s,1)}=F^{(s,2)}=\emptyset)\bbP(F^{(t,1)}= F^{(t,2)},\, j\in F^{(t,1)})\\
        =&(q_j\wedge \tilde{q}_j)\prod_{l\neq j}\left(1-\big|q_l-\tilde{q}_l\big|\right)\sum_{t=1}^{\infty}\left[\prod_{l=1}^M(1-q_l\vee \tilde{q}_l)\right]^{t-1}\\
        =&\frac{q_j\wedge \tilde{q}_j}{1-\prod_{l=1}^M(1-q_l\vee \tilde{q}_l)}\prod_{l\neq j}\left(1-\big|q_l-\tilde{q}_l\big|\right),
    \end{split}
\end{equation}
where the sixth line utilizes the fact that for any $t\geq 1$, 
$$\bbP(F^{(t,1)}= F^{(t,2)},\, j\in F^{(t,1)}) = (q_j\wedge \tilde{q}_j)\prod_{l\neq j}\left(1-\big|q_l-\tilde{q}_l\big|\right),$$
and 
$$\bbP\Big(F^{(s,1)}=F^{(s,2)}=\emptyset) = \prod_{l=1}^M(1-q_l\vee \tilde{q}_l).$$
Hence, \eqref{eq:F_size_unequal_bnd1} and \eqref{eq:F_size_unequal_bnd2} together yield
\begin{equation}\label{eq:F_size_unequal_bnd3}
    \begin{split}
        &\bbP\big(j\in F^{(1)},\,F^{(1)}\neq F^{(2)}\big)\\
        \leq&\frac{q_j}{1-\prod_{l=1}^M\big(1-q_l\big)}-\frac{q_j\wedge \tilde{q}_j}{1-\prod_{l=1}^M\big(1-q_l\vee \tilde{q}_l\big)}\\
        &+ \frac{q_j\wedge \tilde{q}_j}{1-\prod_{l=1}^M(1-q_l\vee \tilde{q}_l)}\Bigg(1-\prod_{l=1}^M(1-|q_l-\tilde{q}_l|)\Bigg)\\
        \leq &\frac{q_j}{1-\prod_{l=1}^M\big(1-q_l\big)}-\frac{q_j\wedge \tilde{q}_j}{1-\prod_{l=1}^M\big(1-q_l\vee \tilde{q}_l\big)}\Bigg) + \frac{2(q_j\wedge \tilde{q}_j)}{1-e^{-m}}\|\bq-\tilde{\bq}\|_1,,
    \end{split}
\end{equation}
where the last inequality is established by following the same arguments as \eqref{eq:F1F2_unequalProb} and \eqref{eq:F1F2_unequalProb2}. To bound the first term in the last line above, we define function $f:\bbR^{M+1}\rightarrow \bbR$ such that $f(x)=\frac{x_{M+1}}{1-\prod_{l=1}^M(1-x_l)}$. We can then write
\begin{equation}\label{eq:F_size_unequal_term1}
    \begin{split}
        &\left|\frac{q_j}{1-\prod_{l=1}^M\big(1-q_l\big)}-\frac{q_j\wedge \tilde{q}_j}{1-\prod_{l=1}^M\big(1-q_l\vee \tilde{q}_l\big)}\right|\\
        =&\left|f((\bq,\,q_j)^\top) - f((\bq\vee \tilde{\bq})^{\top},\,q_j\wedge \tilde{q}_j)^\top)\right|\\
        =&\left|(\nabla f(\xi))_{1:M}^\top \Big(\bq-\big(\bq\vee \tilde{\bq}\big)\Big) + (\nabla f(\xi))_{M+1}\Big(q_j-q_j\wedge \tilde{q}_j\Big)\right|\\
        \leq&\|(\nabla f(\xi))_{1:M}\|_{\infty}\|\bq- \tilde{\bq}\|_1+ |(\nabla f(\xi))_{M+1}||q_j-\tilde{q}_j|,
    \end{split}
\end{equation}
where $\xi\in \bbR^{M+1}$ is set as the interpolation of $(\bq^\top,\,q_j)^\top$ and $(\bq\vee \tilde{\bq})^{\top},\,q_j\vee \tilde{q}_j)^\top$ that satisfies the the second equality above, whose existence is guaranteed by the Lagrange's mean value theorem. Now it remains to show an upper bound for $\|(\nabla f(\xi))_{1:M}\|_{\infty}$ and $|(\nabla f(\xi))_{M+1}|$. Some calculations show that when $k\leq M$,
\begin{equation}\label{eq:f_grad_k}
    \begin{split}
        |(\nabla f(\xi))_k|=\frac{\xi_{M+1}\prod_{l\neq k,\,l\leq M}(1-\xi_l)}{(1-\prod_{l=1}^M(1-\xi_l))^2}\leq \frac{q_j\prod_{l=1}^M(1-q_l)}{(1-\delta)(1-\prod_{l=1}^M(1-q_l))^2}\leq \frac{e^{-m}q_j}{(1-e^{-m})^2(1-\delta)},
    \end{split}
\end{equation}
where the first inequality is due to $q_k\leq \delta$, and the last inequality again uses $\prod_{l=1}^M(1-q_l)\leq e^{-m}$. In addition,
\begin{equation}\label{eq:f_grad_M}
    |(\nabla f(\xi))_{M+1}|=\frac{1}{1-\prod_{l=1}^M(1-\xi_l)}\leq \frac{1}{1-\prod_{l=1}^M(1-q_l)}\leq (1-e^{-m})^{-1}.
\end{equation}
Therefore, the bounds for $\nabla f(\xi)$ above together with \eqref{eq:F_size_unequal_term1} implies
\begin{equation*}
    \begin{split}
        &\left|\frac{\tilde{q}_j}{1-\prod_{l=1}^M\big(1-\tilde{q}_l\big)}-\frac{q_j\wedge \tilde{q}_j}{1-\prod_{l=1}^M\big(1-q_l\vee \tilde{q}_l\big)}\right|\\
        \leq&\frac{q_je^{-m}\|\bq-\tilde{\bq}\|_1}{(1-e^{-m})^2(1-\delta)} + \frac{|q_j-\tilde{q}_j|}{1-e^{-m}}\\
        \leq&\frac{q_j\|\bq-\tilde{\bq}\|_1 + |q_j-\tilde{q}_j|}{1-e^{-m}},
    \end{split}
\end{equation*}
where we utilized the assumption that $m\geq \log(1+\frac{1}{1-\delta})$, which immediately implies $\frac{e^{-m}}{1-e^{-m}}\leq 1-\delta$. Plugging in the bound above into \eqref{eq:F_size_unequal_bnd3}, we get
\begin{equation*}
    \begin{split}
        \bbP\big(j\in F^{(1)},\,F^{(1)}\neq F^{(2)}\big)\leq \frac{3q_j\|\bq-\tilde{\bq}\|_1, + |q_j-\tilde{q}_j|}{1-e^{-m}}.
    \end{split}
\end{equation*}
The proof of Lemma \ref{lem:F1_F2_diff} is now complete.
\end{proof}
\begin{proof}[Proof of Lemma~\ref{lem:omega_j_diff}]
    Recall that $\omega_j(Q) = \frac{q_j}{1-q_j-\prod_{l=1}^M(1-q_l)} = \frac{q_j/(1-q_j)}{1-\prod_{l\neq j}(1-q_l)}$. Define function $f:\bbR^M\rightarrow \bbR$ such that $f(x)=\frac{x_M}{1-\prod_{l<M}(1-x_l)}$. We can then write $\omega_j(Q_{\bq}) - \omega_j(Q_{\tilde{\bq}}) = f((\bq_{\backslash j}^{\top},\,q_j/(1-q_j))^\top) - f((\tilde{\bq}_{\backslash j}^{\top},\,\tilde{q}_j/(1-\tilde{q}_j))^\top).$ Due to the Lagrange's mean value theorem, we can always find a $\xi\in \bbR^M$ that lies between $(\bq_{\backslash j}^{\top},\,q_j/(1-q_j))^\top$ and $(\bq_{\backslash j}^{\top},\,\tilde{q}_j/(1-\tilde{q}_j))^\top$ such that
    \begin{equation*}
        \begin{split}
            |\omega_j(Q_{\bq}) - \omega_j(Q_{\tilde{\bq}})|\leq \|(\nabla f(\xi))_{1:(M-1)}\|_{\infty}\big\|\bq- \tilde{\bq}\big\|_1+ |(\nabla f(\xi))_{M}|\big|\tilde{q}_j-q_j\big|.
        \end{split}
    \end{equation*}
    Following similar calculations as in \eqref{eq:f_grad_k} and \eqref{eq:f_grad_M} in the proof of Lemma \ref{lem:F1_F2_diff}, we have $\|(\nabla f(\xi))_{1:(M-1)}\|_{\infty}\leq C\big(q_j + \tilde{q}_j\big)$, $|(\nabla f(\xi))_M|\leq C$. Therefore, 
    $$
    |\omega_j(Q_{\bq}) - \omega_j(Q_{\tilde{\bq}})|\leq C\big(q_j + \tilde{q}_j\big)\big\|\tilde{\bq}- \bq\big\|_1+ C\big|\tilde{q}_j-q_j\big|.$$
    
\end{proof}

\begin{proof}[Proof of Lemma \ref{lem:epsilon_min_bound}]
Denote $\hat{\Delta}_{\min}^{(b)}- \Delta_{\min}^{*(b)}$ by $\varepsilon_{\min}^{(b)}$.
By definition,

\[
\varepsilon_{\min}^{(b)}
=\hat{\Delta}_{l_{\min}^{(b)}}^{(b)}-\Delta_{l_{\min}^{*(b)}}^{*(b)}.
\]

Since $l_{\min}^{*(b)}$ minimizes $\Delta_l^{*(b)}$, we have

\[
\Delta_{l_{\min}^{*(b)}}^{*(b)}
\le
\Delta_{l_{\min}^{(b)}}^{*(b)},
\]

and therefore

\[
\varepsilon_{\min}^{(b)}
\ge
\hat{\Delta}_{l_{\min}^{(b)}}^{(b)}-\Delta_{l_{\min}^{(b)}}^{*(b)}=
\varepsilon_{l_{\min}^{(b)}}^{(b)}.
\]

Since $l_{\min}^{(b)}$ minimizes $\hat{\Delta}_l^{(b)}$, we also have

\[
\hat{\Delta}_{l_{\min}^{(b)}}^{(b)}
\le
\hat{\Delta}_{l_{\min}^{*(b)}}^{(b)},
\]

which implies

\[
\varepsilon_{\min}^{(b)}
\le
\hat{\Delta}_{l_{\min}^{*(b)}}^{(b)}-\Delta_{l_{\min}^{*(b)}}^{*(b)}
=
\varepsilon_{l_{\min}^{*(b)}}^{(b)}.
\]

Combining the two inequalities yields

\[
\big|\varepsilon_{\min}^{(b)}\big|
\le
\Big|\varepsilon_{l_{\min}^{(b)}}^{(b)}\Big|
\;\vee\;
\Big|\varepsilon_{l_{\min}^{*(b)}}^{(b)}\Big|.
\]

\end{proof}
\begin{proof}[Proof of Lemma~\ref{lem:read_k_concentration}]
    We first show an upper bound of the moment generating function for $\sum_{i=1}^n Y_i$. For $1\leq j\leq m$, let $T_j=\{i:S_i\owns j\}$. As assumed in Lemma~\ref{lem:read_k_concentration}, $|T_j|\leq k$ for $1\leq j\leq m$. We note that
    \begin{equation*}
        \begin{split}
            \bbE\Big(\exp\Big\{\lambda\sum_{i=1}^n Y_i\Big\}\Big) = &\bbE\Big(\prod_{i=1}^ne^{\lambda Y_i}\Big)\\
            =&\bbE_{\notationchange{X^{(1)}},\notationchange{X^{(2)}},\dots,\notationchange{X^{(m-1)}}}\bbE_{\notationchange{X^{(m)}}}\left(\prod_{i\notin T_m}e^{\lambda Y_i}\prod_{i\in T_m}e^{\lambda Y_i}\right)\\
            = &\bbE_{\notationchange{X^{(1)}},\notationchange{X^{(2)}},\dots,\notationchange{X^{(m-1)}}}\left[\prod_{i\notin T_m}e^{\lambda Y_i}\left(\bbE_{\notationchange{X^{(m)}}}\Big(\prod_{i\in T_m}e^{\lambda Y_i}\Big)\right)\right],
        \end{split}
    \end{equation*}
    where we used $\bbE_{\notationchange{X^{(l)}}}$ to denote the expectation taken only over $\notationchange{X^{(l)}}$, with $\notationchange{X^{(\no l)}}$ fixed. Since $|T_m|\leq k$, the H\"older's inequality immediately yields
    \begin{equation*}
        \begin{split}
            \bbE_{\notationchange{X^{(m)}}}\Big(\prod_{i\in T_m}e^{\lambda Y_i}\Big)\leq \prod_{i\in T_m}\left(\bbE_{\notationchange{X^{(m)}}}e^{k\lambda Y_i}\right)^{\frac{1}{k}}.
        \end{split}
    \end{equation*}
    For any $i\notin T_m$, $\left(\bbE_{\notationchange{X^{(m)}}}e^{k\lambda Y_i}\right)^{\frac{1}{k}} = e^{\lambda Y_i}$. Hence we can write
    \begin{equation*}
            \bbE\Big(\exp\Big\{\lambda\sum_{i=1}^n Y_i\Big\}\Big) \leq  \bbE\left[\prod_{i=1}^n\left(\bbE_{\notationchange{X^{(m)}}}e^{k\lambda Y_i}\right)^{\frac{1}{k}}\right].
    \end{equation*}
    Repeating the same arguments above when taking expectation over $\notationchange{X^{(m-1)}}$, we have
    \begin{equation*}
        \begin{split}
            \bbE\left[\prod_{i=1}^n\left(\bbE_{\notationchange{X^{(m)}}}e^{k\lambda Y_i}\right)^{\frac{1}{k}}\right]\leq \bbE_{\notationchange{X^{(1)}},\notationchange{X^{(2)}},\dots,\notationchange{X^{(m-2)}}}\left[\prod_{i=1}^n\Big(
            \bbE_{\notationchange{X^{(m-1)}},\notationchange{X^{(m)}}}e^{k\lambda Y_i}\Big)^{\frac{1}{k}}\right],
        \end{split}
    \end{equation*}
    Similarly, we can keep taking expectation over $\notationchange{X^{(m-2)}},\dots, \notationchange{X^{(1)}}$ one by one to obtain the following:
    \begin{equation*}
         \bbE\Big(\exp\Big\{\lambda\sum_{i=1}^n Y_i\Big\}\Big) \leq \prod_{i=1}^n\Big(
            \bbE e^{k\lambda Y_i}\Big)^{\frac{1}{k}}.
    \end{equation*}
    Since each $Y_i$ is centered and bounded with $|Y_i|\leq C_i$, we can apply the Hoeffding's lemma to bound $\bbE e^{k\lambda Y_i}$:
    \begin{equation*}
            \bbE e^{k\lambda Y_i}\leq \exp\{\frac{1}{2}k^2\lambda^2C_i^2\},
    \end{equation*}
    which then implies 
    \begin{equation*}
        \bbE\Big(\exp\Big\{\lambda\sum_{i=1}^n Y_i\Big\}\Big) \leq \exp\{\frac{k\sum_{i=1}^nC_i^2\lambda^2}{2}\}.
    \end{equation*}
    Therefore, we can apply the Chernoff bound to obtain the following:
    \begin{equation*}
        \bbP\left(\big|\sum_{i=1}^n Y_i\big|>t\right)\leq 2\exp\Big\{-\frac{t^2}{2k\sum_{i=1}^nC_i^2}\Big\}.
    \end{equation*}
    The proof of Lemma~\ref{lem:read_k_concentration} is now complete.
\end{proof}

\section{Proofs for Theorem~\ref{thm:interaction} and Its Auxiliary Lemmas}

Consider the general functional ANOVA model setup in Eq. \eqref{eq:general_interaction_model} and its special case \eqref{eq:interaction_mod}. We provide a complete proof of Theorem~\ref{thm:interaction} here in the supplement.

\begin{proof}[Proof of Theorem \ref{thm:interaction}]
We prove the two inclusions
\[
S \subseteq \hat S
\qquad\text{and}\qquad
\hat S \subseteq S
\]
separately.

\paragraph{No false negatives.}
Our goal is to show that every signal feature $j\in S$ satisfies
$q_j^{(B)} > \frac{1}{2}\delta$, which implies $j\in \hat S$. To establish this, we use the following growth property.

\begin{lem}\label{lem:interaction_q_growth}
    Suppose Assumptions~\ref{ass:minipatch_feature_scaling}, \ref{ass:signal_strength_ratio}, and \ref{ass:interaction_pair_min_sig} hold. Then 
\begin{equation}
\label{eq:growth_factor_C_in_interaction_proof}
q_j^{(b)} \geq \big(\frac{7}{6}\,q_j^{(b-1)}\big)\wedge \delta,
\qquad \forall j\in S,\ \forall\,1\le b\le B.
\end{equation}
\end{lem}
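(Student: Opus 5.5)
I would follow the proof of Lemma~\ref{lem:q_growth} step for step, changing only the oracle-importance computation to account for the interaction term. First, apply Lemma~\ref{prop:q_lwrbnd} with $k=s$, exactly as in the additive case. This gives, for every $j\in S$,
\[
q_j^{(b)}\geq \frac{\til{\Delta}_j^{(b)}(m-s\delta)}{\sum_{l\in S^c}\til{\Delta}_l^{(b)}}\wedge\delta .
\]
It then suffices to show that this ratio is at least $\frac{7}{6}q_j^{(b-1)}$. As before, write $\til{\Delta}_k^{(b)}=\Delta_k^{\star(b)}-\Delta_{\min}^{\star(b)}+\frac{c_0}{M}+\varepsilon_k^{(b)}$.

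The main new ingredient is an analogue of Lemma~\ref{lem:Delta_oracle} for model~\eqref{eq:interaction_mod}. Under independence, the ANOVA centering gives $\mu_F^\star(X^{(F)})=f_0+\sum_{l\in S\setminus\{u,v\}}f_l\ind(l\in F)+f_{uv}\ind(u,v\in F)$, and the components are mutually orthogonal. Hence, repeating Step~1 of that proof,
\[
\Delta_j^\star=\sum_{l\in S\setminus\{u,v\}}\|f_l\|_2^2\big[\bbP(l\notin F\mid j\notin F)^2-\bbP(l\notin F)^2\big]+\|f_{uv}\|_2^2\big[\bbP(\{u,v\}\not\subset F\mid j\notin F)^2-\bbP(\{u,v\}\not\subset F)^2\big].
\]
The terms are handled as follows.
\begin{itemize}
\item For $j\in S\setminus\{u,v\}$, nothing changes: the new cross term is controlled by the same $e^{-(m-1)}$-type correlation bound, so \eqref{eq:delta_oracle_signal_bnd2} holds.
\item For $j=u$, the own term is $\|f_{uv}\|_2^2\,(1-(1-p_{uv})^2)\geq p_{uv}\|f_{uv}\|_2^2$, where $p_{uv}=\bbP(u,v\in F)=q_uq_v/(1-r)\geq q_uq_v$. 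The cross terms are bounded by a multiple of $e^{-(m-1)}q_u\sum\|f\|^2$. To absorb them into a constant fraction of $q_uq_v\|f_{uv}\|_2^2$ I need $q_v\gtrsim m/M$ together with the conditioning assumption; this costs one extra factor of $M$, which is why $M^{C_1-1}$ is replaced by $M^{C_1-2}$. The constant $41/72$ should come out of this bookkeeping, giving roughly $\Delta_u^\star\geq \frac{41}{72}\cdot\frac{1}{\cdot}q_uq_v\|f_{uv}\|_2^2$.
\item For noise $j$, I expect the same sign argument as in Step~5 to give $\Delta_j^\star\leq 0$ with a lower bound of order $M^{-C_1}q_j\sum\|f\|^2$. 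Conditioning on $j\notin F$ raises the joint inclusion probability of $\{u,v\}$, and the closed-form Bernoulli computation should show $\bbP(u,v\in F\mid j\notin F)\geq\bbP(u,v\in F)$.
\end{itemize}

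With these bounds in hand, the numerator and denominator are handled as in the additive proof. For $j=u$, Assumption~\ref{ass:interaction_pair_min_sig} gives $\til{\Delta}_u^{(b)}\geq c\,q_u^{(b-1)}q_v^{(b-1)}\|f_{uv}\|_2^2$. The denominator is bounded by $10M^{-(C_1-1)}\sum\|f\|^2+c_0+\varepsilon_\Delta$. The second part of \eqref{eq:SNR_interaction} has the factor $1/(q_u\wedge q_v)$, which cancels the extra $q_v$ in the numerator. Together with the $c_0$ upper bound $c_0<\frac{m(m-s\delta)}{12M}\|f_{uv}\|_2^2$ and $q_v^{(b-1)}\geq m/M$, this makes the ratio at least $\frac{7}{6}q_u^{(b-1)}$.

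The lower bound $q_v^{(b-1)}\geq m/M$ holds by induction on $b$, since the growth inequality itself keeps all signal $q$'s at or above their initial value $m/M$. The growth factor is weaker than $1.5$ because the own-term constant is smaller.

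The main obstacle is the oracle lower bound for $\Delta_u^\star$. The signal there is only of order $q_uq_v$, so the cross-term contamination must be shown to be small relative to $q_uq_v$, not merely relative to $q_u$. That is exactly where the $m/M$ floor on $q_v$ and the strengthened exponent $M^{C_1-2}$ must be spent, and where the specific constants have to close.
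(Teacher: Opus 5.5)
Your plan matches the paper's proof: Lemma~\ref{prop:q_lwrbnd} with $k=s$, a three-case interaction analogue of Lemma~\ref{lem:Delta_oracle} (using $p_{uv}\geq q_uq_v$ for $j\in\{u,v\}$ and $p_{uv}^{(-j)}\geq p_{uv}$ for $j\notin\{u,v\}$), and a joint induction that keeps $q_u^{(b)}\wedge q_v^{(b)}\geq m/M$. Only your constant bookkeeping needs correcting, in two places.

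First, the own term for $u$ is the full $q_uq_v\|f_{uv}\|_2^2$. The $\frac{41}{72}$ is not an oracle constant but the error allowance $|\varepsilon_u^{(b)}|<\frac{41}{72}q_u^{(b-1)}q_v^{(b-1)}\|f_{uv}\|_2^2$ from \eqref{eq:SNR_interaction}. The looser constants in that assumption, not a smaller own-term constant, are what bring the growth factor down to $\frac{7}{6}=\frac{7/18}{1/3}$.

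Second, the exponent $M^{C_1-2}$ is genuinely needed in the denominator, not the numerator. There it ensures $10M^{-(C_1-1)}\sum_{w\in\cS}\|f_w\|_2^2\lesssim\frac{m(m-s\delta)}{M}\|f_{uv}\|_2^2\leq(q_u\wedge q_v)(m-s\delta)\|f_{uv}\|_2^2$. The numerator cross term in $\Delta_u^\star$ is already absorbed under $M^{C_1-1}$.
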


Iterating
\eqref{eq:growth_factor_C_in_interaction_proof} yields that $\forall j\in S$,
\[
q_j^{(B)}\geq 
\big((\frac{7}{6})^B q_j^{(0)}\big)\wedge \delta= \Big((\frac{7}{6})^B \frac{m}{M}\Big)\wedge \delta,
\]
where we utilized the fact that the initial sampling probability $q_k^{(0)} = \frac{m}{M}$ for all feature $k$. Hence, as long as $B > (\log(\frac{7}{6}))^{-1}\log\left(\frac{\delta M}{2m}\right)$, we have $q_j^{(B)} > \frac{\delta M}{2m}\frac{m}{M}=\frac{1}{2}\delta$. Therefore, by the definition of $\hat{S}$ in Algorithm~\ref{algo:loco_adamp}, as long as we set the constant $C$ in Theorem~\ref{thm:interaction} as $(\log(\frac{7}{6}))^{-1}$, we have $j\in \hat{S}$. Since the arguments above hold for all $j\in S$, this implies that $S \subseteq \hat{S}$.

\paragraph{No false positives.}
We next show that no noise feature is selected, namely,
\[
\hat S \subseteq S.
\]

For this part, we use the following lemma.

\begin{lem}\label{lem:interaction_noise_prob_bound}
Suppose Assumptions~\ref{ass:minipatch_feature_scaling}, \ref{ass:signal_strength_ratio}, and \ref{ass:noise_err} hold.
For any $j\in S^c$ and $1\le b\le B$, $q_j^{(b)}\le
\frac{1}{2}\delta$.
\end{lem}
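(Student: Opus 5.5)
The argument mirrors the proof of Lemma~\ref{lem:noise_prob_bound}; the only new ingredient is an analogue of the noise-feature bound \eqref{eq:delta_oracle_noise_bnd2} for the interaction model \eqref{eq:interaction_mod}.

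\textbf{Step 1: a uniform denominator bound.} By Algorithm~\ref{algo:find_threshold}, the threshold satisfies $t^{\star(b)}\ge \min_l\til{\Delta}^{(b)}_l\ge c_0/M$. Hence
\[
\sum_l\big(\til{\Delta}^{(b)}_l\wedge t^{\star(b)}\big)\ge c_0,
\qquad\text{so}\qquad
q_j^{(b)}\le \frac{m}{c_0}\,\til{\Delta}_j^{(b)} .
\]
For $j\in S^c$ we write
\[
\til{\Delta}_j^{(b)}=\Delta_j^{\star(b)}-\Delta_{\min}^{\star(b)}+\frac{c_0}{M}+\varepsilon_j^{(b)}.
\]
It therefore suffices to show two things: $\Delta_j^{\star(b)}\le 0$, and $-\Delta_{\min}^{\star(b)}\le 10M^{-C_1}\sum_{k\in\cS}\|f_k\|_2^2$ (or a comparable bound).

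\textbf{Step 2: oracle importance under the interaction model.} Under independence and the ANOVA orthogonality, the oracle aggregated predictor is
\[
\bbE_{F}\mu_F^*=\sum_{l}f_l\,\bbP(l\in F)+f_{uv}\,\bbP(u,v\in F),
\]
so the risk is $\sigma_\varepsilon^2+\sum_{w\in\cS}\|f_w\|_2^2\,\bbP(w\not\subseteq F)^2$. This gives
\[
\Delta_j^\star=\sum_{w\in\cS}\|f_w\|_2^2\Big[\bbP(w\not\subseteq F\mid j\notin F)^2-\bbP(w\not\subseteq F)^2\Big].
\]

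For $j\in S^c$, conditioning on $j\notin F$ under the Bernoulli-conditioned-on-nonempty law only increases inclusion probabilities of the other features. Step 5 of the proof of Lemma~\ref{lem:Delta_oracle} already shows this for singletons. For the pair $\{u,v\}$, the same computation gives $\bbP(u,v\in F)=q_uq_v/(1-r)$ versus $q_uq_v/(1-r_{-j})$, and $r_{-j}\ge r$. So every bracket is nonpositive, and $\Delta_j^\star\le 0$.

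For the magnitude, the singleton terms are bounded as before. The pair term is
\[
\|f_{uv}\|_2^2\cdot 2\,q_uq_v\Big|\tfrac{1}{1-r_{-j}}-\tfrac{1}{1-r}\Big|\lesssim e^{-(m-1)}q_j\|f_{uv}\|_2^2,
\]
and the factor $e^{-(m-1)}$ is at most $eM^{-C_1}$ by Assumption~\ref{ass:minipatch_feature_scaling}. Hence for every $k\in S^c$, $-\Delta_k^{\star(b)}\le 10M^{-C_1}\sum_{w\in\cS}\|f_w\|_2^2$.

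\textbf{Step 3: the hard part is bounding $-\Delta_{\min}^{\star(b)}$.} The minimizer might lie in $S$, for instance at $u$ or $v$. For $j\in S$ the pair term has a positive contribution when $j\in\{u,v\}$, while the other terms are bounded by the same $M^{-C_1}$ cross-term estimate. Since $\Delta_{\min}^{\star(b)}$ is at least the minimum over these cases, the bound $-\Delta_{\min}^{\star(b)}\le 10M^{-C_1}\sum_{w}\|f_w\|_2^2$ holds uniformly. We do not need positivity for $j\in S$ here, only that no index is more negative than the cross-term bound. This is where the replacement of $M^{C_1-1}$ by $M^{C_1-2}$ in Assumption~\ref{ass:signal_strength_ratio} may be needed to absorb constants.

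\textbf{Step 4: conclusion.} Plugging Steps 2 and 3 into Step 1 gives
\[
q_j^{(b)}\le \frac{m}{c_0}\Big(10M^{-C_1}\sum_{w}\|f_w\|_2^2+\frac{c_0}{M}+\|\varepsilon_{S^c}^{(b)}\|_\infty\Big).
\]
Assumption~\ref{ass:noise_err} gives $\frac{m}{c_0}\big(\|\varepsilon^{(b)}_{S^c}\|_\infty+10M^{-C_1}\sum_w\|f_w\|_2^2\big)<\delta/4$. Assumption~\ref{ass:minipatch_feature_scaling} gives $m/M<\delta/4$. Together these yield $q_j^{(b)}\le \delta/2$.
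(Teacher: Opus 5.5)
Your proposal is correct and follows the paper's proof: lower-bound the truncated denominator by $c_0$ to get $q_j^{(b)}\le \frac{m}{c_0}\til{\Delta}_j^{(b)}$, control $\Delta_j^{\star(b)}-\Delta_{\min}^{\star(b)}$ using the $10M^{-C_1}$ cross-term bounds from Lemma~\ref{lem:interaction_Delta_oracle}, and finish with Assumptions~\ref{ass:minipatch_feature_scaling} and \ref{ass:noise_err}. Your Step~3 is a bit more careful than the paper, which bounds $-\Delta_{\min}^{\star(b)}$ by citing only the noise-feature bound \eqref{eq:interaction_delta_oracle_noise_bnd2}; as you note, for indices in $S$ the own-effect and pair contributions are nonnegative and every remaining cross term satisfies the same $10M^{-C_1}$ estimate, so the uniform bound holds without needing the $M^{C_1-2}$ version of Assumption~\ref{ass:signal_strength_ratio}.
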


\medskip

Applying Lemma~\ref{lem:interaction_noise_prob_bound} at iteration $b=B$, we obtain
\[
q_j^{(B)} \le \frac{1}{2}\delta,
\qquad \forall j\in S^c.
\]
Therefore no noise feature can cross the selection threshold, and hence
\[
\hat S \subseteq S.
\]

Combining the no-false-negative and no-false-positive parts, we conclude that
\[
\hat S = S.
\]
This completes the proof.
\end{proof}

\section{Proof of Lemma~\ref{lem:interaction_q_growth} and Lemma~\ref{lem:interaction_noise_prob_bound}}
Here, we present the proof of Lemma~\ref{lem:interaction_q_growth} and Lemma~\ref{lem:interaction_noise_prob_bound}.

\begin{proof}[Proof of Lemma \ref{lem:interaction_q_growth}]
Consider any $j\in S$. Apply Lemma~\ref{prop:q_lwrbnd} to the $b$th iteration, we have
\begin{equation*}
    q_j^{(b)}\geq \frac{\til{\Delta}_j^{(b)}(m-s\delta)}{\sum_{l>s}\til{\Delta}^{(b)}_{(l)}}\wedge \delta \geq \frac{\til{\Delta}^{(b)}_j(m-s\delta)}{\sum_{l\in S^c}\til{\Delta}^{(b)}_{l}}\wedge \delta,
\end{equation*}
where the second inequality is due to the fact that $\sum_{l\in S^c}\til{\Delta}^{(b)}_{l}\geq \sum_{l>s}\til{\Delta}_{(l)}.$ Therefore, it suffices to show that
\[
\frac{\til{\Delta}^{(b)}_j(m-s\delta)}{\sum_{l\in S^c}\til{\Delta}^{(b)}_{l}}\geq \frac{7}{6} q_j^{(b-1)}.
\]

We note that for any $1\leq k\leq M$, the estimated and shifted feature importance score satisfies
\[
\til{\Delta}^{(b)}_k
=
\Delta_k^{*(b)} - \Delta_{\min}^{*(b)} + \frac{c_0}{M} +\varepsilon_k^{(b)}.
\]
The following lemma provides lower and upper bounds for the population feature importance $\Delta_k^{*(b)}$ when $k\in S$ and $k\in S^c$ separately.

\begin{lem}\label{lem:interaction_Delta_oracle}
Suppose Assumption~\ref{ass:signal_strength_ratio} holds. For any $k\in S\setminus\{u,v\}$,
\begin{equation}
\label{eq:interaction_delta_oracle_signal_bound}
\Delta_k^{*(b)}
\ge
\frac{(1-2e^{-m})q_k^{(b-1)}}{2(1-e^{-m})}\|f_k\|_2^2
-
\frac{2e^{-(m-1)}}{1-e^{-(m-1)}}\frac{q_k^{(b-1)}}{1-\prod_l(1-q_l^{(b-1)})}\,q_u^{(b-1)}q_v^{(b-1)}\,\|f_{uv}\|_2^2.
\end{equation}

For any $k \in \{u,v\}$,
\begin{equation}
\label{eq:interaction_delta_oracle_pair_bound}
\Delta_k^{*(b)}
\ge
q_u^{(b-1)}q_v^{(b-1)}\,\|f_{uv}\|_2^2
-\frac{2e^{-(m-1)}}{1-e^{-(m-1)}}\frac{q_k^{(b-1)}}{1-\prod_l(1-q_l^{(b-1)})}\sum_{l\in S} q_l^{(b-1)}\|f_l\|_2^2
.    
\end{equation}

For any $k\in S^c$,
\begin{equation}
\label{eq:interaction_delta_oracle_noise_bound}
-\frac{2e^{-(m-1)}}{1-e^{-(m-1)}}\frac{q_k^{(b-1)}}{1-\prod_l(1-q_l^{(b-1)})}
\left(
\sum_{l\in S} q_l^{(b-1)}\|f_l\|_2^2 + q_u^{(b-1)}q_v^{(b-1)} \|f_{uv}\|_2^2
\right)
\le
\Delta_k^{*(b)}
\le 0.
\end{equation}
\end{lem}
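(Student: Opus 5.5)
The plan is to redo the explicit computation from the proof of Lemma~\ref{lem:Delta_oracle}, now carrying one extra ANOVA component $f_{uv}$, and to reuse the inclusion-probability identities derived there. Throughout, I fix $b$, write $q_l$ for $q_l^{(b-1)}$ and $Q$ for $Q^{(b)}$, and set $r=\prod_{l}(1-q_l)$.

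\emph{Step 1 (oracle predictor and risk).} Under \eqref{eq:interaction_mod} with independent features, I will first show that
\[
\mu_F^*(X^{(F)})=f_0+\sum_{l\in (S\setminus\{u,v\})\cap F} f_l(X^{(l)})+f_{uv}(X^{(u)},X^{(v)})\,\ind(u\in F,\,v\in F).
\]
The reason is that the hierarchical centering $\bbE(f_{uv}\mid X^{(u)})=\bbE(f_{uv}\mid X^{(v)})=0$ kills $f_{uv}$ unless both $u$ and $v$ are observed. Averaging over $F\sim Q$ gives the aggregated predictor $f_0+\sum_l p_l f_l+p_{uv}f_{uv}$, where $p_{uv}=\bbP_Q(u,v\in F)$. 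The same holds under $Q_{\no k}$, with conditional probabilities $p_l^{(-k)}$ and $p_{uv}^{(-k)}$.

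Since the ANOVA components are mutually orthogonal (by centering and independence) and orthogonal to $\varepsilon$, the risk takes the form $\sigma_\varepsilon^2+\sum_l(1-p_l)^2\|f_l\|_2^2+(1-p_{uv})^2\|f_{uv}\|_2^2$. Therefore
\[
\Delta_k^{*(b)}=\sum_{l\in S\setminus\{u,v\}}\big[(1-p_l^{(-k)})^2-(1-p_l)^2\big]\|f_l\|_2^2+\big[(1-p_{uv}^{(-k)})^2-(1-p_{uv})^2\big]\|f_{uv}\|_2^2 .
\]
This step is where I expect the main care is needed. One has to verify that the centering convention of \eqref{eq:general_interaction_model} really gives $\bbE(f_{uv}\mid X^{(F)})=0$ whenever $F$ misses $u$ or $v$, and that all cross terms vanish. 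Independence of the features plus the conditional-mean-zero condition should suffice.

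\emph{Step 2 (the interaction term).} For $k\notin\{u,v\}$, direct computation gives $p_{uv}=q_uq_v/(1-r)$ and $p_{uv}^{(-k)}=q_uq_v(1-q_k)/(1-r-q_k)$. Hence
\[
p_{uv}^{(-k)}-p_{uv}=\frac{q_uq_vq_k\,r}{(1-r)(1-r-q_k)}\ \ge 0 .
\]
The interaction bracket therefore equals $(p_{uv}-p_{uv}^{(-k)})(2-p_{uv}-p_{uv}^{(-k)})\le 0$. Its absolute value is at most $2(p_{uv}^{(-k)}-p_{uv})$. Using $r/(1-r-q_k)=r_{-k}/(1-r_{-k})\le e^{-(m-1)}/(1-e^{-(m-1)})$, exactly as in Step~3 of the proof of Lemma~\ref{lem:Delta_oracle}, this is bounded by $\frac{2e^{-(m-1)}}{1-e^{-(m-1)}}\frac{q_k}{1-r}q_uq_v$.

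For $k\in\{u,v\}$ we instead have $p_{uv}^{(-k)}=0$. The bracket is then $2p_{uv}-p_{uv}^2\ge p_{uv}\ge q_uq_v$, because $1-r\le 1$.

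\emph{Step 3 (assembling the three cases).}
\begin{itemize}
\item For $k\in S\setminus\{u,v\}$: the singleton sum is handled verbatim by Steps~2--4 of the proof of Lemma~\ref{lem:Delta_oracle}, using Assumption~\ref{ass:signal_strength_ratio}. This gives the first term of \eqref{eq:interaction_delta_oracle_signal_bound}. Subtracting the interaction bound from Step~2 yields \eqref{eq:interaction_delta_oracle_signal_bound}.
\item For $k\in\{u,v\}$: there is no $l=k$ singleton term. Every singleton bracket is bounded in absolute value by the cross-term estimate \eqref{eq:cross_term_bound}, summed over $l\in S$. Adding the lower bound $q_uq_v\|f_{uv}\|_2^2$ from Step~2 gives \eqref{eq:interaction_delta_oracle_pair_bound}.
\item For $k\in S^c$: Step~5 of the earlier proof shows each singleton bracket is $\le 0$, and Step~2 shows the interaction bracket is $\le 0$, so $\Delta_k^{*(b)}\le 0$. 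Summing the magnitude bounds (the cross-term bound over $l\in S$, plus the interaction bound) yields the lower bound in \eqref{eq:interaction_delta_oracle_noise_bound}.
\end{itemize}
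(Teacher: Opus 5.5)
Your proposal is correct and follows the paper's proof essentially step for step. Both use the same decomposition $\Delta_k^{*(b)}=A_k+B_k$: the main-effect part $A_k$ is handled by the additive computations in the proof of Lemma~\ref{lem:Delta_oracle}, and the interaction part $B_k$ is bounded via $p_{uv}^{(-k)}-p_{uv}=q_kq_uq_v r_{-k}/\{(1-r)(1-r_{-k})\}\ge 0$ for $k\notin\{u,v\}$ and $p_{uv}^{(-k)}=0$ for $k\in\{u,v\}$. Your formula $q_uq_v(1-q_k)/(1-r-q_k)$ for $p_{uv}^{(-k)}$ agrees with the paper's $q_uq_v/(1-r_{-k})$, since $1-r-q_k=(1-q_k)(1-r_{-k})$.
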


Since $m\geq 2>\ln{4}$, we can further simplify the bound \eqref{eq:interaction_delta_oracle_signal_bound} in Lemma~\ref{lem:interaction_Delta_oracle} as follows: for $k\in S\setminus\{u,v\}$,
\begin{equation}\label{eq:interaction_delta_oracle_signal_bnd2}
\begin{split}
    \Delta_k^{*(b)}
    \geq&
    \frac{(1-2e^{-m})q_k^{(b-1)}}{2(1-e^{-m})}\|f_k\|_2^2
    -
    \frac{2e^{-(m-1)}}{1-e^{-(m-1)}}\frac{q_k^{(b-1)}}{1-\prod_l(1-q_l^{(b-1)})}\,q_u^{(b-1)}q_v^{(b-1)}\,\|f_{uv}\|_2^2
    \\
    \geq&
    \frac{1}{2}\left(1-\frac{e^{-m}}{1-e^{-m}}\right)q_k^{(b-1)}\|f_k\|_2^2
    -
    \frac{2e}{(1-1/e)(1-e^{-m})}M^{-C_1}q_k^{(b-1)}\,\|f_{uv}\|_2^2
    \\
    \geq&
    \frac{1}{3}q_k^{(b-1)}\|f_k\|_2^2
    -
    10M^{-C_1}q_k^{(b-1)}\,\|f_{uv}\|_2^2,
\end{split}
\end{equation}
where the second inequality is due to Assumption~\ref{ass:minipatch_feature_scaling} and the fact that
\[
\prod_l(1-q_l^{(b-1)})
=
\exp\Big\{\sum_{l=1}^M \log(1-q_l^{(b-1)})\Big\}
\leq
\exp\big\{-\sum_{l=1}^M q_l^{(b-1)}\big\}
=
e^{-m}.
\]

For $k\in\{u,v\}$, 
\begin{equation}\label{eq:interaction_delta_oracle_pair_bnd2}
    \begin{split}
        \Delta_k^{*(b)}
        \geq&
        q_u^{(b-1)}q_v^{(b-1)}\,\|f_{uv}\|_2^2
        -\frac{2e^{-(m-1)}}{1-e^{-(m-1)}}\frac{q_k^{(b-1)}}{1-\prod_l(1-q_l^{(b-1)})}\sum_{l\in S} q_l^{(b-1)}\|f_l\|_2^2
        \\
        \geq&
        q_u^{(b-1)}q_v^{(b-1)}\,\|f_{uv}\|_2^2
        -10M^{-C_1}q_k^{(b-1)}c_1M^{C_1-2}\|f_{uv}\|_2^2
        \\
        =&
        (q_u^{(b-1)}q_v^{(b-1)}\ - \frac{10c_1}{M^2}q_k^{(b-1)})\,\|f_{uv}\|_2^2,
    \end{split}
\end{equation}

where the second inequality is due to Assumption~\ref{ass:minipatch_feature_scaling} and Assumption~\ref{ass:signal_strength_ratio}.
While for the lower bound \eqref{eq:interaction_delta_oracle_noise_bound}, Assumption~\ref{ass:minipatch_feature_scaling} helps to simplify it as follows: for $k\in S^c$,
\begin{equation}\label{eq:interaction_delta_oracle_noise_bnd2}
    \begin{split}
        -\Delta_k^{*(b)}
        \leq&
        \frac{2e^{-(m-1)}}{1-e^{-(m-1)}}\frac{q_k^{(b-1)}}{1-\prod_l(1-q_l^{(b-1)})}
        \left(
        \sum_{l\in S}q_l^{(b-1)}\|f_l\|_2^2
        +
        q_u^{(b-1)}q_v^{(b-1)} \|f_{uv}\|_2^2
        \right)
        \\
        \leq&
        10M^{-C_1}q_k^{(b-1)}
        \left(
        \sum_{l\in S}q_l^{(b-1)}\|f_l\|_2^2
        +
        q_u^{(b-1)}q_v^{(b-1)} \|f_{uv}\|_2^2
        \right).
    \end{split}
\end{equation}

Therefore, for $j\in S\setminus\{u,v\}$, we can write
\begin{equation}\label{eq:interaction_q_numerator_lwr}
    \begin{split}
        \til{\Delta}_j^{(b)}
        \geq&
        \Delta_j^{*(b)} - \Delta_{\min}^{*(b)} +\frac{c_0}{M} +\varepsilon_j^{(b)}
        \\
        \geq&
        \left(
        \frac{1}{3}\|f_j\|_2^2
        -
        10M^{-C_1}\,\|f_{uv}\|_2^2
        -
        \big|\varepsilon_j^{(b)}\big|/q_j^{(b-1)}
        \right)q_j^{(b-1)}
        \\
        \geq&
        \left(
        \frac{1}{6}\|f_j\|_2^2
        -
        10M^{-C_1}\,\|f_{uv}\|_2^2
        \right)q_j^{(b-1)},
    \end{split}
\end{equation}
where we applied Assumption~\ref{ass:interaction_pair_min_sig} in the last line above.

In addition,
\begin{equation}\label{eq:interaction_q_denominator_upp}
    \begin{split}
        \sum_{l\in S^c}\til{\Delta}_l^{(b)}
        =
        &\sum_{l\in S^c}\Delta_l^{*(b)}
        -
        (M-|S|)\Delta_{\min}^{*(b)}
        +
        \frac{c_0(M-|S|)}{M}
        +
        \sum_{l\in S^c}\varepsilon_l^{(b)}
        \\
        \leq&
        10M^{-(C_1-1)}
        \left(
        \sum_{l\in S}\|f_l\|_2^2+\|f_{uv}\|_2^2
        \right)
        +
        c_0+\|\varepsilon_{S^c}^{(b)}\|_1.
    \end{split}
\end{equation}
where the second line utilizes Lemma~\ref{lem:interaction_Delta_oracle} and \eqref{eq:interaction_delta_oracle_noise_bnd2}.

Suppose that we choose the constant $c_1\leq \frac{m-s\delta}{480} \wedge \frac{m(m-s\delta)}{60}$ in Assumption~\ref{ass:signal_strength_ratio}. Then, for $j\in S\setminus\{u,v\}$, we have
\begin{equation*}
    \begin{split}
        \frac{\til{\Delta}^{(b)}_j(m-s\delta)}{\sum_{l\in S^c}\til{\Delta}^{(b)}_{l}}
        \geq&
        \frac{
        \left(\frac{1}{6}-\frac{10c_1}{M^2}\right)
        \|f_j\|_2^2(m-s\delta)
        }{
        20\frac{c_1}{M}\|f_j\|_2^2+c_0+\|\varepsilon_{S^c}^{(b)}\|_1
        }
        q_j^{(b-1)}
        \\
        \geq&
        \frac{
        \left(\frac{1}{6}-\frac{1}{48}\right)
        \|f_j\|_2^2(m-s\delta)
        }{
        \frac{1}{24}(m-s\delta)\|f_j\|_2^2+c_0+\|\varepsilon_{S^c}^{(b)}\|_1
        }
        q_j^{(b-1)}
        \\
        \geq&
        \frac{7}{6}q_j^{(b-1)}.
    \end{split}
\end{equation*}
where the first inequality is due to \eqref{eq:interaction_q_numerator_lwr}, \eqref{eq:interaction_q_denominator_upp}, and Assumption~\ref{ass:signal_strength_ratio}; the second line is due to $\frac{m-s\delta}{M}\le 1$; the third line is due to Assumption~\ref{ass:interaction_pair_min_sig}.

For $j\in\{u,v\}$, by \eqref{eq:interaction_delta_oracle_pair_bnd2},
\begin{equation}
\label{eq:interaction_q_numerator_lwr_pair}
\begin{split}
    \til{\Delta}_j^{(b)}
    \geq&
    \Delta_j^{*(b)}
    -
    \Delta_{\min}^{*(b)}
    +
    \frac{c_0}{M}
    +
    \varepsilon_j^{(b)}
    \\
    \geq&
    (q_u^{(b-1)}q_v^{(b-1)}\ - \frac{10c_1}{M^2}q_j^{(b-1)})\,\|f_{uv}\|_2^2
    -
    \big|\varepsilon_j^{(b)}\big|
    \\
    \geq&
    (\frac{31}{72}
    q_u^{(b-1)}q_v^{(b-1)}- \frac{m-s\delta}{6M}\frac{m}{M}q_j^{(b-1)})
    \|f_{uv}\|_2^2,
\end{split}
\end{equation}
where the third line follows from
Assumption~\ref{ass:interaction_pair_min_sig}.

We now establish \eqref{eq:growth_factor_C_in_interaction_proof} for
$j\in\{u,v\}$ by induction on $b$. In particular, we simultaneously show that
\begin{equation}
\label{eq:interaction_pair_q_lower}
q_u^{(b)}\wedge q_v^{(b)}
\geq
\frac{m}{M},
\qquad
0\leq b\leq B.
\end{equation}

First, consider the base case $b=0$. By initialization,
\[
q_u^{(0)}=q_v^{(0)}=\frac{m}{M},
\]
so \eqref{eq:interaction_pair_q_lower} holds.

We next establish the growth bound from iteration $0$ to iteration $1$.
By Assumption~\ref{ass:minipatch_feature_scaling}, $\frac{m-s\delta}{M} < \frac{1}{4}$, together with base case, 
\eqref{eq:interaction_q_numerator_lwr_pair} gives
\begin{equation}
    \label{eq:interaction_q_numerator_lwr_pair_new}
  \til{\Delta}_j^{(b)} \geq \frac{7}{18}(q_u^{(b-1)}\wedge q_v^{(b-1)})\|f_{uv}\|_2^2 \, q_j^{(b-1)}  
\end{equation}

By Assumption~\ref{ass:signal_strength_ratio},
\[
\sum_{l\in S}\|f_l\|_2^2
+
\|f_{uv}\|_2^2
\leq
c_1M^{C_1-2}\|f_{uv}\|_2^2.
\]
Therefore, \eqref{eq:interaction_q_denominator_upp} gives
\[
\sum_{l\in S^c}\til{\Delta}_l^{(1)}
\leq
\frac{10c_1}{M}\|f_{uv}\|_2^2
+
c_0
+
\varepsilon_{\Delta}.
\]
Since
\[
c_1
\leq
\frac{m(m-s\delta)}{60},
\qquad
c_0
<
\frac{m(m-s\delta)}{12M}\|f_{uv}\|_2^2,
\]
and Assumption~\ref{ass:interaction_pair_min_sig} implies
\[
\|\varepsilon_{S^c}^{(1)}\|_1
<
\frac{(m-s\delta)}{12}(q_u^{(0)}\wedge q_v^{(0)})
\|f_{uv}\|_2^2,
\]
we obtain
\begin{equation}
\label{eq:interaction_q_denominator_upp_pair_base}
\sum_{l\in S^c}\til{\Delta}_l^{(1)}
\leq
\frac{1}{3}
\big(q_u^{(0)}\wedge q_v^{(0)}\big)
(m-s\delta)\|f_{uv}\|_2^2.
\end{equation}
Combining \eqref{eq:interaction_q_numerator_lwr_pair_new} and
\eqref{eq:interaction_q_denominator_upp_pair_base}, for
$j\in\{u,v\}$,
\[
\begin{split}
\frac{
\til{\Delta}_j^{(1)}(m-s\delta)
}{
\sum_{l\in S^c}\til{\Delta}_l^{(1)}
}
&\geq
\frac{
\frac{7}{18}
\big(q_u^{(0)}\wedge q_v^{(0)}\big)
\|f_{uv}\|_2^2
q_j^{(0)}
(m-s\delta)
}{
\frac{1}{3}
\big(q_u^{(0)}\wedge q_v^{(0)}\big)
(m-s\delta)\|f_{uv}\|_2^2
}
\\
&=
\frac{7}{6}q_j^{(0)}.
\end{split}
\]
Hence,
\[
q_j^{(1)}
\geq
\left(
\frac{7}{6}q_j^{(0)}
\right)\wedge\delta,
\qquad
j\in\{u,v\}.
\]
Since $q_j^{(0)}\leq\delta$, this further implies
\[
q_j^{(1)}
\geq
q_j^{(0)}
=
\frac{m}{M},
\]
so \eqref{eq:interaction_pair_q_lower} holds for $b=1$.

Now suppose that \eqref{eq:interaction_pair_q_lower} holds at iteration
$b-1$ for some $2\leq b\leq B$. Then
\[
q_u^{(b-1)}\wedge q_v^{(b-1)}
\geq
\frac{m}{M}.
\]
Consequently, we can get \eqref{eq:interaction_q_numerator_lwr_pair_new} from \eqref{eq:interaction_q_numerator_lwr_pair}.
Besides,
the conditions on $c_1$ and $c_0$ imply
\[
\frac{10c_1}{M}\|f_{uv}\|_2^2
\leq
\frac{1}{6}
\big(q_u^{(b-1)}\wedge q_v^{(b-1)}\big)
(m-s\delta)\|f_{uv}\|_2^2,
\]
and
\[
c_0
<
\frac{1}{12}
\big(q_u^{(b-1)}\wedge q_v^{(b-1)}\big)
(m-s\delta)\|f_{uv}\|_2^2.
\]
Moreover, Assumption~\ref{ass:interaction_pair_min_sig} gives
\[
\|\varepsilon_{S^c}^{(b)}\|_1
<
\frac{1}{12}
\big(q_u^{(b-1)}\wedge q_v^{(b-1)}\big)
(m-s\delta)\|f_{uv}\|_2^2.
\]
Plugging these bounds into
\eqref{eq:interaction_q_denominator_upp} yields
\begin{equation}
\label{eq:interaction_q_denominator_upp_pair}
\sum_{l\in S^c}\til{\Delta}_l^{(b)}
\leq
\frac{1}{3}
\big(q_u^{(b-1)}\wedge q_v^{(b-1)}\big)
(m-s\delta)\|f_{uv}\|_2^2.
\end{equation}
Combining \eqref{eq:interaction_q_numerator_lwr_pair_new} and
\eqref{eq:interaction_q_denominator_upp_pair}, for $j\in\{u,v\}$,
\[
\begin{split}
\frac{
\til{\Delta}_j^{(b)}(m-s\delta)
}{
\sum_{l\in S^c}\til{\Delta}_l^{(b)}
}
&\geq
\frac{
\frac{7}{18}
\big(q_u^{(b-1)}\wedge q_v^{(b-1)}\big)
\|f_{uv}\|_2^2
q_j^{(b-1)}
(m-s\delta)
}{
\frac{1}{3}
\big(q_u^{(b-1)}\wedge q_v^{(b-1)}\big)
(m-s\delta)\|f_{uv}\|_2^2
}
\\
&=
\frac{7}{6}q_j^{(b-1)}.
\end{split}
\]
Therefore,
\[
q_j^{(b)}
\geq
\left(
\frac{7}{6}q_j^{(b-1)}
\right)\wedge\delta,
\qquad
j\in\{u,v\}.
\]
Since $q_j^{(b-1)}\leq\delta$, we also have
\[
q_j^{(b)}
\geq
q_j^{(b-1)}
\geq
\frac{m}{M}.
\]
Thus, \eqref{eq:interaction_pair_q_lower} holds at iteration $b$.
By induction, \eqref{eq:interaction_pair_q_lower} holds for all
$0\leq b\leq B$, and \eqref{eq:growth_factor_C_in_interaction_proof} holds for
$j\in\{u,v\}$ and all $1\leq b\leq B$.

The proof is now complete.
\end{proof}

\begin{proof}[Proof of Lemma \ref{lem:interaction_noise_prob_bound}]

By Algorithm~\ref{algo:IndepSampWeight}, we have

\[
q_j^{(b)}
=
\frac{
m\big(\til{\Delta}_j^{(b)}\wedge t^{*(b)}\big)
}{
\sum_l\big(\til{\Delta}_l^{(b)}\wedge t^{*(b)}\big)
},
\]
where $t^{*(b)}$ is the output of Algorithm~\ref{algo:find_threshold} when applied to $\{\til{\Delta}_l^{(b)}\}_{l=1}^M$. After examining the steps of Algorithm~\ref{algo:find_threshold}, we note that $t^{*(b)}\geq \til{\Delta}_{(\lfloor \frac{m}{\delta}\rfloor +1)}^{(b)} \geq \frac{c_0}{M}$, since $\min_l \til{\Delta}_l^{(b)} \geq \frac{c_0}{M}$ by definition. Therefore, we can write $\sum_l\big(\til{\Delta}_l^{(b)}\wedge t^*\big)\ge M\frac{c_0}{M} = c_0$. Hence, $q_j^{(b)}\le
\frac{m}{c_0}\til{\Delta}_j^{(b)}$.

By definition of $\til{\Delta}_j^{(b)}$,

\begin{equation*}
\begin{split}
    q_j^{(b)}\le &\frac{m}{c_0}\left(\Delta_j^{*(b)}-\Delta_{\min}^{*(b)} + \frac{c_0}{M} + \varepsilon_j^{(b)}\right)\\
    \le &\frac{m}{c_0}\left(10M^{-C_1}(\sum_{l\in S}\|f_l\|_2^2 + \|f_{uv}\|_2^2) +\frac{c_0}{M} + \big|\varepsilon_j^{(b)}\big|\right)\\
    \le & \frac{1}{2}\delta,
\end{split}
\end{equation*}
where the second line is due to \eqref{eq:interaction_delta_oracle_noise_bnd2} and the last line is due to Assumption~\ref{ass:minipatch_feature_scaling} and Assumption~\ref{ass:noise_err}. By the definition of $\hat{S}$ in Algorithm~\ref{algo:loco_adamp}, the proof of Lemma~\ref{lem:interaction_noise_prob_bound} is now complete.
\end{proof}

\begin{proof}[Proof of Lemma~\ref{lem:interaction_Delta_oracle}]
For notational simplicity we fix an iteration $b$ and omit the superscript $(b)$ throughout.
Thus $q_l$ means $q_l^{(b-1)}$, $Q$ means $Q^{(b)}$, and $Q_{\backslash j}$ means
$Q^{(b)}_{\backslash j}$.
Also define
\[
I_{uv}:=\|f_{uv}\|_2^2=\bbE[f_{uv}(\notationchange{X^{(u)}},\notationchange{X^{(v)}})^2].
\]
and
\[
r:=\prod_{l=1}^M(1-q_l).
\]

\paragraph{Step 1: Oracle predictors under the order-two functional ANOVA model.}

The oracle predictor on a feature subset $F$ is
\[
\mu_F^*(\notationchange{X^{(F)}})
=
f_0+\sum_{l\in S} f_l(\notationchange{X^{(l)}})\ind(l\in F)
+
f_{uv}(\notationchange{X^{(u)}},\notationchange{X^{(v)}})\ind(u\in F,v\in F).
\]
Hence
\[
\bbE_{F\sim Q}\mu_F^*(\notationchange{X^{(F)}})
=
f_0+\sum_{l\in S} p_l\,f_l(\notationchange{X^{(l)}})
+
p_{uv}\,f_{uv}(\notationchange{X^{(u)}},\notationchange{X^{(v)}}),
\]
and
\[
\bbE_{F\sim Q_{\backslash j}}\mu_F^*(\notationchange{X^{(F)}})
=
f_0+\sum_{l\in S} p_l^{(-j)}\,f_l(\notationchange{X^{(l)}})
+
p_{uv}^{(-j)}\,f_{uv}(\notationchange{X^{(u)}},\notationchange{X^{(v)}}),
\]
where
\[
p_l:=\bbP_{F\sim Q}(l\in F),
\qquad
p_l^{(-j)}:=\bbP_{F\sim Q}(l\in F\mid j\notin F),
\]
and
\[
p_{uv}:=\bbP_{F\sim Q}(u,v\in F),
\qquad
p_{uv}^{(-j)}:=\bbP_{F\sim Q}(u,v\in F\mid j\notin F).
\]

\paragraph{Step 2: Risk decomposition.}

Write
\[
R_Q:=\bbE\Big(Y-\bbE_{F\sim Q}\mu_F^*(\notationchange{X^{(F)}})\Big)^2,
\qquad
R_{Q,\backslash j}:=\bbE\Big(Y-\bbE_{F\sim Q_{\backslash j}}\mu_F^*(\notationchange{X^{(F)}})\Big)^2.
\]
Because the ANOVA components are centered and pairwise orthogonal, we have
\[
R_Q
=
\sigma_\varepsilon^2
+
\sum_{l\in S}(1-p_l)^2\|f_l\|_2^2
+
(1-p_{uv})^2 I_{uv},
\]
and similarly
\[
R_{Q,\backslash j}
=
\sigma_\varepsilon^2
+
\sum_{l\in S}(1-p_l^{(-j)})^2\|f_l\|_2^2
+
(1-p_{uv}^{(-j)})^2 I_{uv}.
\]
Hence
\begin{equation}\label{eq:Delta_decomp_interaction_anova}
\Delta_j^*
=
\underbrace{
\sum_{l\in S}
\big[(1-p_l^{(-j)})^2-(1-p_l)^2\big]\|f_l\|_2^2
}_{=:A_j}
+
\underbrace{
\big[(1-p_{uv}^{(-j)})^2-(1-p_{uv})^2\big]I_{uv}
}_{=:B_j}.
\end{equation}

\paragraph{Step 3: Bound the main-effect part $A_j$.}

The term $A_j$ is exactly the additive oracle quantity. Since interaction terms have no main effect,
they act like noise term in additive model. By the proof of Lemma~\ref{lem:Delta_oracle},
\[
A_j
\ge
\frac{(1-2e^{-m})q_j}{2(1-e^{-m})}\|f_j\|_2^2,
\qquad j\in S \setminus \{u, v\},
\]
and
\[
-\frac{2e^{-(m-1)}}{1-e^{-(m-1)}}\frac{q_j}{1-r}
\sum_{l\in S} q_l\|f_l\|_2^2
\le
A_j
\le 0,
\qquad j\in S^c \cup \{u, v\}.
\]

It remains only to bound the interaction term $B_j$.

\paragraph{Step 4: Bound the interaction term when $j\notin\{u,v\}$.}

Assume first that $j\notin\{u,v\}$. Since under the Bernoulli construction
\[
\bbP(F=\varnothing)=r=\prod_{l=1}^M(1-q_l),
\]
we have
\[
p_{uv}
=
\bbP_{F\sim Q}(u,v\in F)
=
\frac{q_uq_v}{1-r}.
\]
Let
\[
r_{-j}:=\prod_{l\neq j}(1-q_l),
\qquad\text{so that}\qquad
r=(1-q_j)r_{-j}.
\]
Because $\{u,v\}\cap\{j\}=\varnothing$,
\[
p_{uv}^{(-j)}
=
\bbP_{F\sim Q}(u,v\in F\mid j\notin F)
=
\frac{q_uq_v}{1-r_{-j}}.
\]
Therefore
\[
p_{uv}^{(-j)}-p_{uv}
=
q_uq_v\left(\frac{1}{1-r_{-j}}-\frac{1}{1-r}\right)
=
\frac{q_jq_uq_v\,r_{-j}}{(1-r)(1-r_{-j})}.
\]
Since $\sum_{l\neq j}q_l\ge m-1$, we have
\[
\frac{r_{-j}}{1-r_{-j}}
\le
\frac{e^{-(m-1)}}{1-e^{-(m-1)}}.
\]
Hence
\[
0\le p_{uv}^{(-j)}-p_{uv}
\le
\frac{e^{-(m-1)}}{1-e^{-(m-1)}}
\frac{q_jq_uq_v}{1-r}.
\]
Now
\[
B_j
=
\big[(1-p_{uv}^{(-j)})^2-(1-p_{uv})^2\big]I_{uv}
=
(p_{uv}-p_{uv}^{(-j)})(2-p_{uv}^{(-j)}-p_{uv})I_{uv},
\]
so $B_j\le 0$, and
\[
|B_j|
\le
2(p_{uv}^{(-j)}-p_{uv})I_{uv}
\le
\frac{2e^{-(m-1)}}{1-e^{-(m-1)}}
\frac{q_jq_uq_v}{1-r}\,I_{uv}.
\]
Thus
\begin{equation}\label{eq:Bj_nonpair_anova}
-\frac{2e^{-(m-1)}}{(1-e^{-(m-1)})(1-r)}\,q_jq_uq_v\,I_{uv}
\le
B_j
\le 0.
\end{equation}

\paragraph{Step 5: Bound the interaction term when $j\in\{u,v\}$.}

Assume now that $j=u$; the case $j=v$ is identical.
Under $Q_{\backslash u}$ the set $F$ never contains $u$, so
\[
p_{uv}^{(-u)}=0.
\]
Hence
\[
B_u
=
\big[1-(1-p_{uv})^2\big]I_{uv}
=
(2p_{uv}-p_{uv}^2)I_{uv}
\ge
p_{uv}I_{uv}.
\]
Since
\[
p_{uv}=\frac{q_uq_v}{1-r}\ge q_uq_v,
\]
we obtain
\begin{equation}\label{eq:Bj_pair_anova}
B_u\ge q_uq_v I_{uv}.
\end{equation}
The same bound holds for $B_v$.

\paragraph{Step 6: Combine the bounds.}

We now combine the decomposition \eqref{eq:Delta_decomp_interaction_anova} with the bounds above.

\medskip
\noindent
\emph{Case 1: $j\in S\setminus\{u,v\}$.}
Then $A_j\ge \frac{(1-2e^{-m})q_j}{2(1-e^{-m})}\|f_j\|_2^2$, and
$B_j$ satisfies \eqref{eq:Bj_nonpair_anova}. Therefore
\[
\Delta_j^*
=
A_j+B_j
\ge
\frac{(1-2e^{-m})q_j}{2(1-e^{-m})}\|f_j\|_2^2
-
\frac{2e^{-(m-1)}}{(1-e^{-(m-1)})(1-r)}\,q_jq_uq_v\,I_{uv}.
\]

\medskip
\noindent
\emph{Case 2: $j\in\{u,v\}$.}
Then $A_j\ge -\frac{2e^{-(m-1)}}{1-e^{-(m-1)}}\frac{q_j}{1-r}\sum_{l\in S} q_l\|f_l\|_2^2$, and by
\eqref{eq:Bj_pair_anova},
\[
\Delta_j^*
=
A_j+B_j
\ge
q_uq_v I_{uv}
-\frac{2e^{-(m-1)}}{1-e^{-(m-1)}}\frac{q_j}{1-r}\sum_{l\in S} q_l\|f_l\|_2^2
.
\]

\medskip
\noindent
\emph{Case 3: $j\in S^c$.}
Since $j\notin\{u,v\}$, we have $A_j\le 0$ and $B_j\le 0$. Hence $\Delta_j^*\le 0$. Moreover,
combining the lower bound for $A_j$ with \eqref{eq:Bj_nonpair_anova} 
yields
\[
\Delta_j^*
=
A_j+B_j
\ge
-\frac{2e^{-(m-1)}}{(1-e^{-(m-1)})(1-r)}\,
q_j
\left(
\sum_{l\in S} q_l\|f_l\|_2^2 + q_uq_v I_{uv}
\right).
\]

This proves all claims.
\end{proof}

\section{Detailed Theory and Proofs for the Correlated Feature Setting}
\label{sec:correlateddetailtheory}

Consider a linear model with $M\geq 3$ and $S=\{1\}$:
\[
    Y=\beta \notationchange{X^{(1)}}+\varepsilon,
    \qquad
    \varepsilon\perp X,
    \qquad
    \bbE(\varepsilon)=0,
\]
where $\forall j\in [M]$, $\bbE(\notationchange{X^{(j)}})=0$, $\Var(\notationchange{X^{(j)}})=1$, and $(\notationchange{X^{(1)}},\notationchange{X^{(2)}})\perp \notationchange{X^{(3)}}\perp \notationchange{X^{(4)}}\perp\dots\perp \notationchange{X^{(M)}}$. $(\notationchange{X^{(1)}},\notationchange{X^{(2)}})$ is a correlated pair satisfying $\bbE(\notationchange{X^{(1)}}\mid \notationchange{X^{(2)}})=\rho \notationchange{X^{(2)}}$, which implies $\Corr(\notationchange{X^{(1)}},\notationchange{X^{(2)}})=\rho$. We consider an oracle version of Algorithm~\ref{algo:loco_adamp}, which we refer to as Algorithm~\ref{algo:loco_adamp} (oracle), where in each iteration $b$, we replace the LOCO-LOO feature importance estimate $\hat{\Delta}_j^{(b)}$ by the oracle feature importance score $\Delta_j^{\star(b)}$ in Definition~\ref{def:Delta_defs} where $\mu_F^\star(\notationchange{X^{(F)}}) = \bbE(Y|\notationchange{X^{(F)}}).$

\begin{prop}
\label{prop:correlated}
Consider the linear model setup described above. If Algorithm~\ref{algo:loco_adamp} (oracle) is run for $B$ iterations with squared error loss $\err(Y,\hat Y)=(Y-\hat Y)^2$, then the following statements hold.
\begin{enumerate}
    \item {\bf (Monotonicity)} $\forall\rho\in [-1,1]$, $j\geq 2$, $q_j^{(b)}\leq q_1^{(b)}$ for $b\geq 0$. If $|\rho|<1$, then $\Delta_2^{\star(b)}<\Delta_1^{\star(b)}$.

    \item {\bf (No false negative)} $\forall\rho\in [-1,1]$, if $m \geq 2 \vee (\log M+C_\delta)$ with $C_\delta>0$ sufficiently large depending only on $\delta$ and
    \begin{equation}
        \label{eq:ass_no_false_neg}
        c_0\leq (2-\frac{\delta}{1-e^{-m}})(1-\frac{\delta}{1-e^{\delta-m}})^2(\frac{2\delta}{1+\delta}m-2\delta)\beta^2 ,
    \end{equation}
    then $q_1^{(b)}\geq \big(\frac{1+\delta}{2\delta}q_1^{(b-1)}\big)\wedge \delta$. As a consequence, $S\subseteq \hat S$ whenever
    $B> (\log(\frac{1+\delta}{2\delta}))^{-1}\log\left(\frac{\delta M}{2m}\right)$.

    \item {\bf (No false positive)} $\hat S\subseteq S$ if the following holds:
    \begin{equation}\label{eq:rho_cond}
        \rho^2
        <
        \frac{c_0}{\beta^2}\frac{\frac{1}{2}\delta(M-2)-m+q_1^{(B)}}{M(m-q_1^{(B)}-\frac{1}{2}\delta)}-
        \frac{2\delta e^{-(m-1)}}{(1-e^{-m})(1-e^{-(m-1)})}.
    \end{equation}
\end{enumerate}
\end{prop}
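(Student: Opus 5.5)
The whole argument rests on an explicit closed form for the oracle scores $\Delta_j^{\star(b)}$ in this linear setting, after which parts 1--3 reduce to algebra on Alg.~\ref{algo:IndepSampWeight}. Write $q_j=q_j^{(b-1)}$ and $r=\prod_l(1-q_l)$. Since $(X^{(1)},X^{(2)})$ is independent of the other features, the oracle predictor satisfies $\mu_F^\star=\beta X^{(1)}$ when $1\in F$, $\mu_F^\star=\beta\rho X^{(2)}$ when $1\notin F,2\in F$, and $\mu_F^\star=0$ otherwise. Hence the aggregated predictor under $Q$ is $\beta\big(a X^{(1)}+c\rho X^{(2)}\big)$, where $a=\bbP(1\in F)$ and $c=\bbP(1\notin F,2\in F)$. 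The same holds under $Q_{\no j}$ with the corresponding conditional probabilities. Using $\Var(X^{(1)})=\Var(X^{(2)})=1$ and $\mathrm{Cov}=\rho$, the risk is $\sigma^2+\beta^2\big[(1-a)^2-2(1-a)c\rho^2+c^2\rho^2\big]$. This is exactly the Lemma~\ref{lem:Delta_oracle} computation, with the extra $\rho^2$ cross terms. The Bernoulli-conditioning formulas from Step~3 of that proof then give $a,c$ and their conditional versions in terms of $q_1,q_2,q_j,r$.

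\textbf{Part 1 (monotonicity).} I would write $\Delta_1^\star-\Delta_2^\star$ and $\Delta_1^\star-\Delta_j^\star$ ($j\ge3$) explicitly. For $j\ge3$, $\Delta_j^\star\le 0$ follows as in Step~5 of Lemma~\ref{lem:Delta_oracle}, because conditioning on $j\notin F$ only raises inclusion probabilities. Meanwhile $\Delta_1^\star\ge \beta^2(1-\rho^2)(\cdot)\ge0$ up to the same exponentially small corrections.

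For $j=2$, removing $1$ loses the term $\beta^2$ but lets $X^{(2)}$ partially substitute. I expect $\Delta_1^\star-\Delta_2^\star$ to factor with a positive multiple of $(1-\rho^2)$, which gives the strict inequality when $|\rho|<1$ and the weak one at $|\rho|=1$.

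The ordering of the $q$'s then follows by induction on $b$. At $b=0$ all $q_j$ are equal. If $\Delta_1^\star\ge\Delta_j^\star$, then $\tilde\Delta_1\ge\tilde\Delta_j$, and the map $\tilde\Delta\mapsto q$ in Alg.~\ref{algo:IndepSampWeight} is monotone in each coordinate. The step I expect to be the main obstacle is showing $\Delta_1^\star\ge\Delta_2^\star$ when $q_2>q_1$ is not excluded a priori. The induction hypothesis $q_2\le q_1$ should be carried along and used inside the sign analysis of the factored difference.

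\textbf{Part 2 (no false negative).} Mirroring Lemma~\ref{lem:q_growth}, I would apply Lemma~\ref{prop:q_lwrbnd} with $k=2$:
\[
q_1^{(b)}\ge \frac{\tilde\Delta_1(m-2\delta)}{\sum_{l\ge3}\tilde\Delta_l}\wedge\delta .
\]
For the numerator, I would lower bound $\Delta_1^\star\ge(2-\tfrac{\delta}{1-e^{-m}})(1-\tfrac{\delta}{1-e^{\delta-m}})^2\,\beta^2 q_1$ uniformly in $\rho$. This is the worst case $\rho^2=1$, with $q_2\le\delta$ and $r\le e^{-m}$ entering the constants, which explains the shape of \eqref{eq:ass_no_false_neg}.

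For the denominator, the bound is $\sum_{l\ge3}\tilde\Delta_l\le c_0+M\max_{l\ge3}|\Delta_l^\star|$. The second term is $O(Me^{-m}m\beta^2)$, which $m\ge\log M+C_\delta$ makes negligible. Condition \eqref{eq:ass_no_false_neg} then yields a ratio of at least $\tfrac{1+\delta}{2\delta}q_1$. Iterating this gives the stated $B$, exactly as in Theorem~\ref{thm:main1}.

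\textbf{Part 3 (no false positive).} For $j\ge3$, I would reuse Lemma~\ref{lem:noise_prob_bound}: the bound $q_j\le\frac{m}{c_0}\tilde\Delta_j$ holds, and $\tilde\Delta_j$ is exponentially small, giving the second term in \eqref{eq:rho_cond}.

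For $j=2$, a sharper denominator is needed than $c_0$. Specifically, I would use $\sum_l\omega_l\ge \omega_1+\omega_2+(M-2)c_0/M$ together with $q_1^{(B)}=m\omega_1/\sum\omega$. Solving $q_2^{(B)}\le\delta/2$ gives a requirement on $\omega_2/\sum\omega$. The last ingredient is the bound $\tilde\Delta_2\lesssim\rho^2\beta^2+$ (exponentially small), since $X^{(2)}$ contributes only through $\rho X^{(2)}$. Rearranging produces exactly \eqref{eq:rho_cond}.
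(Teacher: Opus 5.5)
Parts 1 and 2, and the $j=2$ case of Part 3, are sound and follow the paper's route: the closed form via $H(t_1,t_2)$, induction carrying $q_2\le q_1$, and the identity $q_2^{(B)}=(m-q_1^{(B)})\,\omega_2/\sum_{l\ne 1}\omega_l$. Three small corrections apply. First, $\Delta_1^\star-\Delta_2^\star=\beta^2\{g(p_1^{(-2)})-\rho^2 g(\pi_2^{(-1)})\}$ with $g(x)=2x-x^2$ does not factor through $1-\rho^2$. Instead, the induction hypothesis gives $p_1^{(-2)}\ge\pi_2^{(-1)}$, hence the lower bound $\beta^2(1-\rho^2)g(\pi_2^{(-1)})$. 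Second, $\Delta_1^\star=\beta^2 g(p_1)\{1-\rho^2 g(\pi_2^{(-1)})\}\ge 0$ holds exactly, with no exponential correction. So your comparison $\Delta_1^\star\ge 0\ge\Delta_j^\star$ for $j\ge 3$ works and is simpler than the paper's term-by-term argument. Third, in Part 2 you use Lemma~\ref{prop:q_lwrbnd} with $k=2$ instead of the paper's bound $\sum_l\tilde\Delta_l\le 2\tilde\Delta_1+c_0$; both routes work. However, $\Delta^\star_{\min}$ may be attained at feature $2$, so the noise sum is $\sum_{l\ge 3}\tilde\Delta_l\le c_0+(M-2)|\Delta^\star_{\min}|=c_0+O(\delta M e^{-m}\beta^2)$, with no extra factor $m$. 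This matters because \eqref{eq:ass_no_false_neg} leaves only $O(\beta^2)$ slack in your ratio.

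The genuine gap is Part 3 for $j\ge 3$. Reusing Lemma~\ref{lem:noise_prob_bound} means using $q_j^{(B)}\le\frac{m}{c_0}\tilde\Delta_j^{(B)}$, which comes from the crude bound $\sum_l\omega_l\ge c_0$. Let $E=\frac{2\delta e^{-(m-1)}}{(1-e^{-m})(1-e^{-(m-1)})}$, so that $-\Delta^\star_{\min}\le\beta^2E$ and $\tilde\Delta_j^{(B)}\le\beta^2E+c_0/M$. Then your route needs $\beta^2 E\le c_0\frac{\frac12\delta M-m}{mM}$. But \eqref{eq:rho_cond}, even at $\rho=0$, only gives $\beta^2E<c_0\frac{\frac12\delta(M-2)-m+q_1^{(B)}}{M(m-q_1^{(B)}-\frac12\delta)}$. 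At $q_1^{(B)}=\delta$, the value Part 2 drives it to, this equals $c_0\frac{\frac12\delta M-m}{M(m-\frac32\delta)}$, which exceeds what you need by the factor $m/(m-\frac32\delta)$. So your argument proves Part 3 only under an extra hypothesis: the denominator $c_0$ discards the mass $q_1^{(B)}$ that \eqref{eq:rho_cond} is calibrated to. Relatedly, the second term of \eqref{eq:rho_cond} comes from $-\Delta^\star_{\min}$ inside $\tilde\Delta_2$, not from the features $j\ge 3$.

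There are two repairs. One is to apply your own $j=2$ argument to each $j\ge 3$: $q_j^{(B)}=(m-q_1^{(B)})\,\omega_j/\sum_{l\ne 1}\omega_l\le(m-q_1^{(B)})\,\tilde\Delta_j/\big(\tilde\Delta_j+(M-2)c_0/M\big)$, where $\tilde\Delta_j$ is dominated by your bound on $\tilde\Delta_2$. The other is the paper's: features $3,\dots,M$ are exchangeable in the oracle dynamics and start equal, so $q_j^{(B)}=(m-q_1^{(B)}-q_2^{(B)})/(M-2)\le(m-q_1^{(B)})/(M-2)<\frac12\delta$. The last step is just positivity of the right-hand side of \eqref{eq:rho_cond}.
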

Proposition~\ref{prop:correlated} shows that the true signal feature $\notationchange{X^{(1)}}$ remains the strongest feature despite its correlation with a noise feature. Even when $|\rho|=1$, $\notationchange{X^{(1)}}$ would still be selected provided that $c_0$ is appropriately chosen. The correlated noise feature $\notationchange{X^{(2)}}$ behaves like a weak signal feature: when $|\rho|<1$, its oracle importance is strictly smaller than that of $\notationchange{X^{(1)}}$, and it would not be selected when $\rho$ is sufficiently small as specified in \eqref{eq:rho_cond}, where the R.H.S. is an increasing function of $q_1^{(B)}$ and $B$ when \eqref{eq:ass_no_false_neg} holds.
We note that the sufficient condition we impose on $\rho$ could be much more conservative than what is required in practice: 
as shown in Figure~\ref{fig:corr_theory}, feature $2$ is not selected even when $\rho=0.5$ or $0.7$ when $M=100$, $m=12$.
Furthermore, our empirical results also shows that the adaptive sampling can be helpful for eliminating the correlated proxy: when $\rho=0.7$, the correlated noise feature initially has sampling probability around the selection threshold, but after $q_1^{(1)}$ increases to $0.8$, its sampling probability drops below the threshold in later iterations. More experiment results can be found in Section~\ref{sec:supp_correlated_theory_validation}, where similar patterns are observed.

\begin{proof}[Proof of Proposition~\ref{prop:correlated}]
We first note that in Algorithm~\ref{algo:loco_adamp} (oracle), we update the sampling probability via the following equation:
\begin{equation}
    \label{eq:prob_delta_formula}
    q_j^{(b)}
    =
    \frac{m(\widetilde{\Delta}_j^{(b)}\wedge t^{\star(b)})}{\sum_{l=1}^M (\widetilde{\Delta}_l^{(b)}\wedge t^{\star(b)})},
\end{equation}
where $\widetilde{\Delta}_j^{(b)}$ satisfies
\begin{equation}
    \label{eq:loco_importance_estimation_proxy}
    \widetilde{\Delta}_j^{(b)}
    =
    \Delta_j^{\star(b)}
    -
    \Delta_{\min}^{\star(b)}
    +
    \frac{c_0}{M},
    \qquad
    \Delta_{\min}^{\star(b)}
    =
    \min_{1\leq j\leq M}\Delta_j^{\star(b)},
\end{equation}
and $t^{\star(b)}$ is the threshold determined in Algorithm~\ref{algo:IndepSampWeight} when applied to $\{\widetilde{\Delta}_j^{(b)}\}_{j=1}^M$. In the following, we prove the three claims in order.
\paragraph{Monotonicity.}
We first show that, for all $b\geq 1$ and $j\geq3$,
\[
    \Delta_j^{\star(b)} \le \Delta_1^{\star(b)},
    \qquad
    q_j^{(b)}\leq q_1^{(b)}.
\]
We prove this by induction on $b$. The following lemma provides the exact
oracle LOCO importance formula used throughout the proof.

\begin{lem}\label{lem:Delta_oracle_exact}
Under the model described in Section~\ref{sec:correlated_theory}, at iteration $b$, let $q_j=q_j^{(b-1)}$,
$r^{(b-1)}=\prod_{\ell=1}^M(1-q_\ell)$, and
$r_{-j}^{(b-1)}=\prod_{\ell\neq j}(1-q_\ell)$. Define
\[
    p_1=\frac{q_1}{1-r^{(b-1)}},
    \qquad
    \pi_2=\frac{(1-q_1)q_2}{1-r^{(b-1)}},
    \qquad
    \pi_2^{(-1)}=\frac{q_2}{1-r_{-1}^{(b-1)}},
    \qquad
    p_1^{(-2)}=\frac{q_1}{1-r_{-2}^{(b-1)}}.
\]
Then
\[
    \Delta_1^{\star(b)}
    =
    \beta^2(2p_1-p_1^2)
    \left\{
        1-\rho^2
        \left(2\pi_2^{(-1)}-(\pi_2^{(-1)})^2\right)
    \right\},
\]
\[
    \Delta_2^{\star(b)}
    =
    \beta^2
    \left[
        (1-p_1^{(-2)})^2-(1-p_1)^2
        +
        \rho^2\{(1-p_1)^2(2\pi_2^{(-1)}-(\pi_2^{(-1)})^2)\}
    \right],
\]
and, for $j \geq 3$,
\[
    \Delta_j^{\star(b)}
    =
    \beta^2
    \{
        (1-p_1^{(-j)})^2-(1-p_1)^2
        +
        \rho^2\left[\pi_2^{(-j)}(\pi_2^{(-j)}-2)-\pi_2(\pi_2-2)+2(p_1^{(-j)}\pi_2^{(-j)}-p_1\pi_2)\right]
    \}.
\]
\end{lem}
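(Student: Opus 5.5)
The plan is to compute everything in closed form. Under this linear model the oracle predictors take only three forms, so each aggregated predictor is a linear combination of $X^{(1)}$ and $X^{(2)}$ with explicit coefficients. After that the lemma reduces to evaluating a quadratic risk and regrouping terms.

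\emph{Step 1: oracle predictors.} Since $\varepsilon\perp X$, $(X^{(1)},X^{(2)})$ is independent of the other features, and $\bbE(X^{(1)}\mid X^{(2)})=\rho X^{(2)}$, we have
\[
\mu_F^\star(X^{(F)})=\beta X^{(1)} \text{ if } 1\in F,\qquad \mu_F^\star(X^{(F)})=\beta\rho X^{(2)} \text{ if } 1\notin F,\,2\in F,
\]
and $\mu_F^\star\equiv 0$ otherwise. Averaging over $F$ from any distribution $P$ on feature subsets gives
\[
\beta\big(a\,X^{(1)}+c\,\rho X^{(2)}\big),\qquad a=\bbP_P(1\in F),\quad c=\bbP_P(1\notin F,\,2\in F).
\]
The coefficients are obtained from the Bernoulli construction of Definition~\ref{def:prob_Q}, exactly as in the proof of Lemma~\ref{lem:Delta_oracle}.
\begin{itemize}
\item Under $Q$: $(a,c)=(p_1,\pi_2)$.
\item Under $Q_{\no 1}$: $(a,c)=(0,\pi_2^{(-1)})$, because conditioning on $1\notin F$ makes the Bernoulli draws on $[M]\setminus\{1\}$ conditioned on nonemptiness.
\item Under $Q_{\no 2}$: $(a,c)=(p_1^{(-2)},0)$.
\item Under $Q_{\no j}$ with $j\geq 3$: $(a,c)=(p_1^{(-j)},\pi_2^{(-j)})$, with the analogous conditional probabilities.
\end{itemize}

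\emph{Step 2: risk formula.} Using $\Var(X^{(1)})=\Var(X^{(2)})=1$, $\bbE(X^{(1)}X^{(2)})=\rho$ and $\varepsilon\perp X$,
\[
\bbE\big(Y-\beta aX^{(1)}-\beta c\rho X^{(2)}\big)^2=\sigma_\varepsilon^2+\beta^2\big[(1-a)^2-2(1-a)c\rho^2+c^2\rho^2\big].
\]
In each $\Delta_j^{\star(b)}$ the $\sigma_\varepsilon^2$ terms cancel. For $j\geq 3$, plugging in the two coefficient pairs and subtracting gives the stated expression directly, after expanding $c'^2-2c'+2a'c'-c^2+2c-2ac$.

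\emph{Step 3: the identity for $j=1,2$.} These cases need the factorization
\[
\pi_2=(1-p_1)\,\pi_2^{(-1)}.
\]
To verify it, write $1-p_1=(1-q_1-r)/(1-r)$ and use $r=(1-q_1)r_{-1}$, so that $1-p_1=(1-q_1)(1-r_{-1})/(1-r)$. Multiplying by $q_2/(1-r_{-1})$ gives $(1-q_1)q_2/(1-r)=\pi_2$.

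With this identity, the $Q$-risk factors as
\[
\sigma_\varepsilon^2+\beta^2(1-p_1)^2\Big[1-\rho^2\big(2\pi_2^{(-1)}-(\pi_2^{(-1)})^2\big)\Big].
\]
\begin{itemize}
\item For $j=1$, the $Q_{\no1}$-risk is $\sigma_\varepsilon^2+\beta^2\big[1-\rho^2\big(2\pi_2^{(-1)}-(\pi_2^{(-1)})^2\big)\big]$. The difference is therefore $\beta^2\big(1-(1-p_1)^2\big)$ times the bracket, and $1-(1-p_1)^2=2p_1-p_1^2$.
\item For $j=2$, the $Q_{\no2}$-risk is $\sigma_\varepsilon^2+\beta^2(1-p_1^{(-2)})^2$. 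Subtracting the factored $Q$-risk gives the stated formula.
\end{itemize}

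The only step that needs care is spotting the factorization $\pi_2=(1-p_1)\pi_2^{(-1)}$, which is what makes the $j=1$ expression collapse to the product form. Everything else is direct substitution.
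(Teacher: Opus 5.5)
Your proposal is correct and matches the paper's proof step by step. It uses the same three-case form of $\mu_F^\star$, the same quadratic risk (the paper's $H(t_1,t_2)=(1-t_1)^2+\rho^2t_2^2-2\rho^2(1-t_1)t_2$), the same factorization $\pi_2=(1-p_1)\pi_2^{(-1)}$ to collapse the $j=1,2$ cases, and direct expansion for $j\geq 3$.
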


For the base case, we show that
$\Delta_1^{\star(1)} \ge \Delta_j^{\star(1)}$. By Lemma~\ref{lem:Delta_oracle_exact}
and the identity $\pi_2=(1-p_1)\pi_2^{(-1)}$, we have
\begin{equation}
\label{eq:delta1_minus_deltaj}
\begin{split}
    \Delta_1^{\star(b)}-\Delta_j^{\star(b)}
    &=\beta^2(
    (1-\pi_2^{(-1)})^2
    -
    (1-p_1^{(-j)}-\pi_2^{(-j)})^2 \\
    &\quad
    +
    (1-\rho^2)
    \left\{
        \pi_2^{(-1)}(2-\pi_2^{(-1)})
        -
        \pi_2^{(-j)}
        \left(2(1-p_1^{(-j)})-\pi_2^{(-j)}\right)
    \right\}).
\end{split}    
\end{equation}
We now show that the two terms in \eqref{eq:delta1_minus_deltaj} are nonnegative whenever
$q_j^{(b-1)}\leq q_1^{(b-1)}$. First,
\[
    (1-\pi_2^{(-1)})^2-(1-p_1^{(-j)}-\pi_2^{(-j)})^2\geq 0
\]
since $p_1^{(-j)}+\pi_2^{(-j)}\leq 1$ and $p_1^{(-j)}+\pi_2^{(-j)}\geq \pi_2^{(-1)}$. Second, 
\[
    \pi_2^{(-j)}
    \left(2(1-p_1^{(-j)})-\pi_2^{(-j)}\right)
    \leq
    \pi_2^{(-j)}(2-\pi_2^{(-j)})
    \leq
    \pi_2^{(-1)}(2-\pi_2^{(-1)})
\]
since 
$0<\pi_2^{(-j)}\leq \pi_2^{(-1)}<1$ from $q_j^{(b-1)}\leq q_1^{(b-1)}$ and $x(2-x)$ is increasing on $[0,1]$. Combining the preceding inequalities yields $\Delta_1^{\star(b)}-\Delta_j^{\star(b)}\geq 0$ whenever $q_j^{(b-1)}\leq q_1^{(b-1)}$.

We now continue to complete the induction. For the base case, the initialization satisfies
$q_j^{(0)}=q_1^{(0)}=m/M$. Applying the implication above with $b=1$ gives $\Delta_j^{\star(1)}\leq \Delta_1^{\star(1)}$.
We also have $\widetilde{\Delta}_j^{(1)} \leq \widetilde{\Delta}_1^{(1)}$ from ~\eqref{eq:loco_importance_estimation_proxy}.
The sampling update in Algorithm~\ref{algo:IndepSampWeight} satisfies
\begin{equation}
    \label{eq:prob_delta_formula_first_iteration}
    q_j^{(1)}
    =
    \frac{
        m(\widetilde{\Delta}_j^{(1)}\wedge t^{\star(1)})
    }{
        \sum_{l=1}^M
        (\widetilde{\Delta}_l^{(1)}\wedge t^{\star(1)})
    }.
\end{equation}
Thus, $q_j^{(1)}\leq q_1^{(1)}$. This proves the base case.

Now suppose that $q_j^{(k)}\leq q_1^{(k)}$ for some $k\geq1$. Applying \eqref{eq:delta1_minus_deltaj} with $b=k+1$ gives
$\Delta_j^{\star(k+1)}\leq\Delta_1^{\star(k+1)}$.
Therefore, $\widetilde{\Delta}_j^{(k+1)} \leq \widetilde{\Delta}_1^{(k+1)}$. Using the sampling update again,
\[
    q_j^{(k+1)}
    =
    \frac{
        m(\widetilde{\Delta}_j^{(k+1)}\wedge t^{\star(k+1)})
    }{
        \sum_{l=1}^M
        (\widetilde{\Delta}_l^{(k+1)}\wedge t^{\star(k+1)})
    }
    \leq
    q_1^{(k+1)}.
\]
By induction, for all $b\geq1$ and $j\geq3$,
\[
    \Delta_j^{\star(b)}
    \leq
    \Delta_1^{\star(b)},
    \qquad
    q_j^{(b)}
    \leq
    q_1^{(b)}.
\]

It remains to show that
\[
    \Delta_2^{\star(b)} < \Delta_1^{\star(b)}
    \quad \text{when } |\rho|<1,
    \qquad
    \Delta_1^{\star(b)} = \Delta_2^{\star(b)}
    \quad \text{when } |\rho|=1,
\]
for all $b\geq1$. We again proceed by induction. For the base case, it suffices to compare $\Delta_1^{\star(1)}$ and
$\Delta_2^{\star(1)}$. We have
\begin{equation}
    \label{eq:delta1_minus_delta2}
    \Delta_1^{\star(b)} - \Delta_2^{\star(b)}
    =
    \beta^2\left[(2p_1^{(-2)}-(p_1^{(-2)})^2)-\rho^2(2\pi_2^{(-1)}-(\pi_2^{(-1)})^2)\right].
\end{equation}
At initialization, $q_1^{(0)}=\cdots=q_M^{(0)}=m/M$. Applying
Lemma~\ref{lem:Delta_oracle_exact} with $b=1$ gives
$p_1^{(-2)}=\pi_2^{(-1)}$. Therefore
$\Delta_1^{\star(1)}=\Delta_2^{\star(1)}$ when $|\rho|=1$, while when $|\rho|<1$,
$\Delta_1^{\star(1)}-\Delta_2^{\star(1)}>0$ since
$0<\pi_2^{(-1)}<1$. This proves the base case.

Now suppose that
$\Delta_2^{\star(k)}<\Delta_1^{\star(k)}$ when $|\rho|<1$, and
$\Delta_1^{\star(k)}=\Delta_2^{\star(k)}$ when $|\rho|=1$. We show that the
same claim holds at iteration $k+1$. When $|\rho|=1$, the induction hypothesis, together with \eqref{eq:loco_importance_estimation_proxy} and \eqref{eq:prob_delta_formula}, implies $q_1^{(k)}=q_2^{(k)}$. Applying Lemma~\ref{lem:Delta_oracle_exact} with $b=k+1$, we have $\pi_2^{(-1)}=p_1^{(-2)}$, and hence \eqref{eq:delta1_minus_delta2} gives
$\Delta_1^{\star(k+1)}=\Delta_2^{\star(k+1)}$. When $|\rho|<1$, the induction hypothesis, together with
\eqref{eq:loco_importance_estimation_proxy} and \eqref{eq:prob_delta_formula},
implies $q_1^{(k)}\geq q_2^{(k)}$. Applying
Lemma~\ref{lem:Delta_oracle_exact} with $b=k+1$, we have
$\pi_2^{(-1)}\leq p_1^{(-2)}$. Since $0<\pi_2^{(-1)}\leq p_1^{(-2)}<1$,
\eqref{eq:delta1_minus_delta2} yields
$\Delta_1^{\star(k+1)}-\Delta_2^{\star(k+1)}>0$.

By induction,
\[
    \Delta_1^{\star(b)} > \Delta_2^{\star(b)}
    \quad \text{for } b\geq1 \text{ when } |\rho|<1,
\]
and
\[
    \Delta_1^{\star(b)} = \Delta_2^{\star(b)}
    \quad \text{for } b\geq1 \text{ when } |\rho|=1.
\]
Combining $\Delta_1^{\star(b)}\geq\Delta_2^{\star(b)}$ with
\eqref{eq:prob_delta_formula} gives $q_1^{(b)}\geq q_2^{(b)}$ for all
$b\geq1$.

\paragraph{No false negatives.}
By symmetry of the independent noise variables and the common initialization,
the independent-noise oracle LOCO feature importances remain identical across
iterations. Denote their common value by $\Delta_N^{\star(b)}$. From the
monotonicity part, we have
$\Delta_1^{\star(b)}\geq \Delta_2^{\star(b)}$ and 
$\Delta_1^{\star(b)}\geq \Delta_N^{\star(b)}$. We now show that $\sum_{\ell=1}^M \widetilde{\Delta}_\ell^{(b)}$ is bounded by
$2\widetilde{\Delta}_1^{(b)}+c_0$.

First, if $\Delta_N^{\star(b)}=\Delta_{\min}^{\star(b)}$, then
$\widetilde{\Delta}_3^{(b)} = \cdots = \widetilde{\Delta}_M^{(b)}=c_0/M$.

Therefore,
\[
\sum_{\ell=1}^M \widetilde{\Delta}_\ell^{(b)}
=
\widetilde{\Delta}_1^{(b)}
+
\widetilde{\Delta}_2^{(b)}
+
\frac{M-2}{M}c_0  \
\leq
2\widetilde{\Delta}_1^{(b)}+c_0,
\]
where the inequality uses
$\widetilde{\Delta}_2^{(b)}\leq\widetilde{\Delta}_1^{(b)}$. It remains to consider the case
$\Delta_2^{\star(b)}=\Delta_{\min}^{\star(b)}$. In this case,
\[
\widetilde{\Delta}_2^{(b)}=\frac{c_0}{M},
\qquad
\widetilde{\Delta}_j^{(b)}
=
\Delta_N^{\star(b)}-\Delta_2^{\star(b)}+\frac{c_0}{M},
\quad j\geq3.
\]

The following lemma gives the
required oracle importance bounds.

\begin{lem}\label{lem:Delta_oracle_bound}
Under the linear model described in Section~\ref{sec:correlated_theory}, \[
    -
    \frac{
        2\beta^2 q_1q_2  e^{-(m-1)}
    }{
        (1-e^{-m})(1-e^{-(m-1)})
    }\leq \Delta_2^{\star(b)}
    \leq
    \beta^2\rho^2 (1-q_1) \leq \beta^2\rho^2.
\]
For any independent noise feature $j\geq3$,
\[
    -\frac{
        2\beta^2 q_j e^{-(m-1)}
    }{
        (1-e^{-m})(1-e^{-(m-1)})
    }\leq \Delta_j^{\star(b)}\leq0.
\]
\end{lem}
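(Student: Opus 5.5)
The plan is to work directly from the exact formulas in Lemma~\ref{lem:Delta_oracle_exact}, at iteration $b$ with $q_\ell=q_\ell^{(b-1)}$, and bound each piece using the elementary probability facts already exploited in the proof of Lemma~\ref{lem:Delta_oracle}. These facts are $r\le e^{-m}$, $r_{-j}/(1-r_{-j})\le e^{-(m-1)}/(1-e^{-(m-1)})$, and the identity $p_1^{(-j)}-p_1=\frac{q_1q_jr_{-j}}{(1-r)(1-r_{-j})}\ge 0$ for $j\ne1$. The analogous identity for $\pi_2$ is obtained by the same computation, giving $\pi_2^{(-j)}-\pi_2\ge 0$ for $j\ge3$, of order $q_2q_j r_{-j}/((1-r)(1-r_{-j}))$.

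For $\Delta_2^{\star(b)}$, I will split it into the main-effect part $\beta^2[(1-p_1^{(-2)})^2-(1-p_1)^2]$ and the $\rho^2$ part. Since $p_1^{(-2)}\ge p_1$, the first part is $\le 0$. Its absolute value is at most $2\beta^2(p_1^{(-2)}-p_1)\le \frac{2\beta^2q_1q_2e^{-(m-1)}}{(1-e^{-m})(1-e^{-(m-1)})}$, using $1-r\ge1-e^{-m}$. The $\rho^2$ part is nonnegative, so this yields the lower bound. For the upper bound, I drop the nonpositive first part. It then remains to bound $\rho^2(1-p_1)^2(2\pi_2^{(-1)}-(\pi_2^{(-1)})^2)\le\rho^2(1-p_1)^2\le \rho^2(1-p_1)\le\rho^2(1-q_1)$, using $p_1\ge q_1$ and $x(2-x)\le1$. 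The final bound $\le\beta^2\rho^2$ is immediate.

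For $j\ge3$, the main-effect part behaves exactly as above. It is nonpositive, with absolute value at most $\frac{2\beta^2q_1q_je^{-(m-1)}}{(1-e^{-m})(1-e^{-(m-1)})}$. The work lies in the $\rho^2$ bracket $\pi_2^{(-j)}(\pi_2^{(-j)}-2)-\pi_2(\pi_2-2)+2(p_1^{(-j)}\pi_2^{(-j)}-p_1\pi_2)$. I will recombine it as $(1-p_1-\pi_2)^2-(1-p_1^{(-j)}-\pi_2^{(-j)})^2$ minus the main-effect difference $(1-p_1)^2-(1-p_1^{(-j)})^2$, up to checking the algebra. Here $p_1+\pi_2=\mathbb{P}(1\in F\text{ or }2\in F)$ under $Q$, and $p_1^{(-j)}+\pi_2^{(-j)}$ is the same probability conditioned on $j\notin F$.

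The total $\Delta_j^{\star(b)}$ should then collapse to $\beta^2[(1-\rho^2)\{(1-p_1^{(-j)})^2-(1-p_1)^2\}+\rho^2\{(1-P^{(-j)}_{12})^2-(1-P_{12})^2\}]$, with $P_{12}$ the probability that $F$ meets $\{1,2\}$. Each brace is nonpositive, because conditioning on $j\notin F$ raises inclusion probabilities of other sets, by the same $r_{-j}$ computation applied to the event $\{F\cap\{1,2\}\neq\emptyset\}$. This gives $\Delta_j^{\star(b)}\le0$.

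For the lower bound, each brace is at least $-2(P^{(-j)}-P)$. The difference $P^{(-j)}-P$ equals $\frac{P_{\rm Bern}\,q_jr_{-j}}{(1-r)(1-r_{-j})}\le \frac{q_je^{-(m-1)}}{(1-e^{-m})(1-e^{-(m-1)})}$, since $P_{\rm Bern}\le1$. The convex combination with weights $1-\rho^2$ and $\rho^2$ then gives the stated bound with the factor $2\beta^2q_j$.

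The main obstacle is the algebraic regrouping of the $\rho^2$ bracket in Lemma~\ref{lem:Delta_oracle_exact} into the union-probability form. If the signs in that formula do not match exactly, I would instead recompute $\Delta_j^{\star}$ from scratch. The route is to write $Y-\mathbb{E}_F\mu_F^\star=\beta(1-p_1)X^{(1)}-\beta\rho\pi_2X^{(2)}$, using $\mu_F^\star=\beta X^{(1)}$ if $1\in F$, $\beta\rho X^{(2)}$ if only $2\in F$, and $0$ otherwise. Expanding the second moment with $\mathbb{E}[X^{(1)}X^{(2)}]=\rho$ gives the variance $\beta^2[(1-p_1-\pi_2)^2\rho^2+(1-p_1)^2(1-\rho^2)]$ directly, from which the regrouped form follows.
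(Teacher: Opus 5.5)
Your proposal is correct. Your treatment of $\Delta_2^{\star(b)}$ is the same as the paper's. The paper bounds the $\rho^2$ term via $2(1-p_1)\pi_2-\pi_2^2\le(1-p_1)^2$, which is your $x(2-x)\le 1$ after substituting $\pi_2=(1-p_1)\pi_2^{(-1)}$.

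For $j\ge 3$ you take a genuinely different route. The paper keeps $H(t_1,t_2)=(1-t_1)^2+\rho^2t_2^2-2\rho^2(1-t_1)t_2$ in raw form and writes $(p_1^{(-j)},\pi_2^{(-j)})=\lambda(p_1,\pi_2)$ with $\lambda=(1-r)/(1-r_{-j})\ge 1$. It gets the sign by checking $\frac{d}{d\lambda}H(\lambda p_1,\lambda\pi_2)\le 0$ on the feasible range $\lambda(p_1+\pi_2)\le 1$. It gets the magnitude separately, by the mean value theorem with $|\partial_{t_1}H|\le 2$ and $|\partial_{t_2}H|\le 2\rho^2$.

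You instead use $H(t_1,t_2)=(1-\rho^2)(1-t_1)^2+\rho^2(1-t_1-t_2)^2$, a mixture of the squared non-coverage probabilities of $\{1\}$ and of $\{1,2\}$. Conditioning on $j\notin F$ multiplies every such $j$-free inclusion probability by the same factor $\lambda$. So both the sign and the size of $\Delta_j^{\star(b)}$ follow from one elementary fact, with no calculus. The formula also shows how the answer interpolates between $\rho=0$ and $\rho^2=1$, where features $1$ and $2$ are interchangeable.

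The two routes end at exactly the same intermediate bound $2\beta^2q_jr_{-j}\{q_1+\rho^2(1-q_1)q_2\}/\{(1-r)(1-r_{-j})\}$, because $(1-\rho^2)q_1+\rho^2\{1-(1-q_1)(1-q_2)\}=q_1+\rho^2(1-q_1)q_2$. The paper's mean-value argument is more generic, since it only needs bounded gradients of the risk. Yours exploits the quadratic structure and is more transparent.

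One small slip to fix when you write it out. The $\rho^2$ bracket in Lemma~\ref{lem:Delta_oracle_exact} equals $\{(1-P^{(-j)}_{12})^2-(1-P_{12})^2\}-\{(1-p_1^{(-j)})^2-(1-p_1)^2\}$, which is the negative of the recombination you first state. Your collapsed formula for $\Delta_j^{\star(b)}$ and your fallback variance computation are nonetheless correct.
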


Since $\Delta_N^{\star(b)}\leq0$, Lemma~\ref{lem:Delta_oracle_bound} gives
\[
(M-2)({\Delta_N^{\star(b)}-\Delta_2^{\star(b)}})
\leq
-(M-2)\Delta_2^{\star(b)} \leq
\frac{
2\beta^2 q_1^{(b-1)}\delta(M-2)e^{-(m-1)}
}{
(1-e^{-m})(1-e^{-(m-1)})
}.
\]
Under the assumption $m\geq \log M+C_\delta$ with $C_\delta>0$ sufficiently
large depending only on $\delta$,
\[
\frac{
2\delta(M-2)e^{-(m-1)}
}{
(1-e^{-m})(1-e^{-(m-1)})
}
\leq
\left(2-\frac{\delta}{1-e^{-m}}\right)
\left(1-\frac{\delta}{1-e^{\delta-m}}\right)^2.
\]
Combining this bound with the lower bound for the signal importance yields
\[
(M-2)({\Delta_N^{\star(b)}-\Delta_2^{\star(b)}})
\leq
\beta^2 q_1^{(b-1)}
\left(2-\frac{\delta}{1-e^{-m}}\right)
\left(1-\frac{\delta}{1-e^{\delta-m}}\right)^2
\leq
\widetilde{\Delta}_1^{(b)}.
\]
We show the last inequality here.
By \eqref{eq:loco_importance_estimation_proxy} and Lemma~\ref{lem:Delta_oracle_bound},
$\Delta_{\min}^{\star(b)}\leq0$. Hence,
\[
\begin{split}
\widetilde{\Delta}_1^{(b)}
&\geq
\Delta_1^{\star(b)} \\
&\geq
\beta^2\frac{q_1^{(b-1)}}{1-r^{(b-1)}}
\left(2-\frac{q_1^{(b-1)}}{1-r^{(b-1)}}\right)
\left(1-\frac{q_2^{(b-1)}}{1-r_{-1}^{(b-1)}}\right)^2 \\
&\geq
\beta^2\frac{q_1^{(b-1)}}{1-r^{(b-1)}}
\left(2-\frac{\delta}{1-e^{-m}}\right)
\left(1-\frac{\delta}{1-e^{\delta-m}}\right)^2 .
\end{split}
\]
Here the second inequality follows from Lemma~\ref{lem:Delta_oracle_exact} and
$\rho^2\leq1$, while the third inequality uses
\[
    r_{-1}
    =
    \prod_{\ell\neq1}(1-q_\ell)
    \leq
    \exp\{-(m-q_1)\}
    \leq
    e^{\delta-m},
    \qquad
    r \le e^{-m}.
\]

Hence,
\[
\begin{split}
\widetilde{\Delta}_1^{(b)}
+
\widetilde{\Delta}_2^{(b)}
+
\sum_{j=3}^M \widetilde{\Delta}_j^{(b)}
&=
\widetilde{\Delta}_1^{(b)}
+
\frac{M-1}{M}c_0
+
(M-2)({\Delta_N^{\star(b)}-\Delta_2^{\star(b)}}) \\
&\leq
2\widetilde{\Delta}_1^{(b)}+c_0.
\end{split}
\]
Thus, in both cases, for every update before the signal feature reaches the
cap, we have
\begin{equation}
\label{eq:q_1_bound}
q_1^{(b)}
=
\frac{
m\widetilde{\Delta}_1^{(b)}
}{
\sum_{\ell=1}^M \widetilde{\Delta}_\ell^{(b)}
}
\geq
\frac{
m\widetilde{\Delta}_1^{(b)}
}{
2\widetilde{\Delta}_1^{(b)}+c_0
}
=
\frac{m}{2+c_0/\widetilde{\Delta}_1^{(b)}}.
\end{equation}

Therefore, 
\begin{equation}
    \label{eq:signal_prob_growth}
    q_1^{(b)}
    \geq
    \frac{m q_1^{(b-1)}}{2\delta+\frac{c_0}{\beta^2(2-\frac{\delta}{1-e^{-m}})(1-\frac{\delta}{1-e^{\delta-m}})^2}}
    \geq
    \Big(\frac{1+\delta}{2\delta}q_1^{(b-1)}\Big)\wedge \delta,
\end{equation}
where the second inequality follows from the assumptions
\[
    \beta^2 \geq
    \frac{c_0}{(2-\frac{\delta}{1-e^{-m}})(1-\frac{\delta}{1-e^{\delta-m}})^2(\frac{2\delta}{1+\delta}m-2\delta)}
\]
and $m\geq 2 >1+\delta$.

Iterating \eqref{eq:signal_prob_growth} gives
\[
q_1^{(B)}\geq
\big((\frac{1+\delta}{2\delta})^B q_1^{(0)}\big)\wedge \delta
=
\Big((\frac{1+\delta}{2\delta})^B \frac{m}{M}\Big)\wedge \delta,
\]
where $q_1^{(0)}=m/M$. Hence, if
$B> (\log(\frac{1+\delta}{2\delta}))^{-1}\log\left(\frac{\delta M}{2m}\right)$, then
$q_1^{(B)}> \frac{\delta M}{2m}\frac{m}{M}=\frac{1}{2}\delta$.

\paragraph{No false positives.}
It suffices to show that $q_j^{(B)} \le \frac{1}{2}\delta$ for all $j\ge3$. We start by showing $q_2^{(B)} \le \frac{1}{2}\delta$

Let $\omega_j^{(B)}=\widetilde{\Delta}_j^{(B)}\wedge t^{\star(B)}$. Since
\[
    q_1^{(B)}
    =
    \frac{m\omega_1^{(B)}}{\sum_{l=1}^M \omega_l^{(B)}},
\]
we have
\[
    \omega_1^{(B)}
    =
    \frac{q_1^{(B)}}{m-q_1^{(B)}}
    \sum_{l=2}^M \omega_l^{(B)}.
\]
Hence,
\begin{equation}
\label{eq:q_2_bound_in_delta}
q_2^{(B)}
=
\frac{m\omega_2^{(B)}}{(1+\frac{q_1^{(B)}}{m-q_1^{(B)}})(\omega_2^{(B)}+\sum_{l=3}^M \omega_l^{(B)})}
\le
\frac{m-q_1^{(B)}}{1+\frac{c_0(M-2)}{M\omega_2^{(B)}}}
\leq
\frac{m-q_1^{(B)}}{1+\frac{c_0(M-2)}{M\widetilde{\Delta}_2^{(B)}}},
\end{equation}
where the first inequality uses $\sum_{l=3}^M \omega_l^{(B)} \ge \frac{M-2}{M}c_0$ and the second inequality uses $\omega_2^{(B)}\leq\widetilde{\Delta}_2^{(B)}$.

Therefore, by \eqref{eq:q_2_bound_in_delta} and
\eqref{eq:loco_importance_estimation_proxy}, it suffices to have
\[
\Delta_2^{\star(B)}
\leq
\frac{(M-2)c_0\frac{1}{2}\delta}{M(m-q_1^{(B)}-\frac{1}{2}\delta)}
+
\Delta_{\min}^{\star(B)}
-
\frac{c_0}{M}.
\]
Applying Lemma~\ref{lem:Delta_oracle_bound} to bound
$\Delta_2^{\star(B)}$ and $\Delta_{\min}^{\star(B)}$, it suffices that
\[
\rho^2 < \frac{c_0}{\beta^2}\frac{\frac{1}{2}\delta(M-1)-m+q_1^{(B)}}{M(m-q_1^{(B)}-\frac{1}{2}\delta)}-
\frac{2\delta e^{-(m-1)}}{(1-e^{-m})(1-e^{-(m-1)})}.
\]
Under the stated assumption, $q_2^{(B)} \le \frac{1}{2}\delta$. 
\\
Now we show $q_j^{(B)} \le \frac{1}{2}\delta$ for all $j\ge3$. By symmetry and $\sum_{l=1}^Mq_l^{(B)} = m$, we have 
\[
q_j^{(B)}=\frac{m-q_1^{(B)}-q_2^{(B)}}{M-2} \leq \frac{m-q_1^{(B)}}{M-2} \le \frac{1}{2}\delta
\] 
for all $j \ge 3$, where the last inequality follows from \eqref{eq:rho_cond}: since $\rho^2\ge0$ and $m-q_1^{(B)}-\frac{1}{2}\delta>0$, the right-hand side of \eqref{eq:rho_cond} must be positive, which implies $\frac{1}{2}\delta(M-2)-m+q_1^{(B)}>0$. Hence $\hat S\subseteq S$. This completes the proof.

\end{proof}

\begin{proof}[Proof of Lemma~\ref{lem:Delta_oracle_exact}]
By definition,
\[
    \mu_F^\star(\notationchange{X^{(F)}})=\bbE(\beta \notationchange{X^{(1)}}\mid \notationchange{X^{(F)}}).
\]
Under the stated assumptions,
\[
    \mu_F^\star(\notationchange{X^{(F)}})
    =
    \begin{cases}
        \beta \notationchange{X^{(1)}}, & 1\in F,\\
        \beta\rho \notationchange{X^{(2)}}, & 1\notin F,\ 2\in F,\\
        0, & 1\notin F,\ 2\notin F.
    \end{cases}
\]
Indeed, if $1\in F$, then $\notationchange{X^{(1)}}$ is observed. If $1\notin F$ but $2\in F$, then
the only observed variable carrying information about $\notationchange{X^{(1)}}$ is $\notationchange{X^{(2)}}$, so
$\bbE(\notationchange{X^{(1)}}\mid \notationchange{X^{(F)}})=\bbE(\notationchange{X^{(1)}}\mid \notationchange{X^{(2)}})=\rho \notationchange{X^{(2)}}$. If neither $1$ nor $2$ is
included, then the variables in $F$ are independent of $\notationchange{X^{(1)}}$, and the
conditional mean is zero.

Define
\[
    p_1:=\bbP_{F\sim Q}(1\in F),
    \qquad
    \pi_2:=\bbP_{F\sim Q}(1\notin F,\ 2\in F),
\]
and, for each feature $j$,
\[
    p_1^{(-j)}
    :=
    \bbP_{F\sim Q}(1\in F\mid j\notin F),
    \qquad
    \pi_2^{(-j)}
    :=
    \bbP_{F\sim Q}(1\notin F,\ 2\in F\mid j\notin F).
\]
Then
\[
    \bbE_{F\sim Q}\mu_F^\star(\notationchange{X^{(F)}})
    =
    \beta p_1\notationchange{X^{(1)}}+\beta\rho\pi_2\notationchange{X^{(2)}},
\]
and
\[
    \bbE_{F\sim Q_{\backslash j}}\mu_F^\star(\notationchange{X^{(F)}})
    =
    \beta p_1^{(-j)}\notationchange{X^{(1)}}+\beta\rho\pi_2^{(-j)}\notationchange{X^{(2)}}.
\]
Since $Q$ is obtained by independent Bernoulli sampling conditioned on
$F\neq\emptyset$,
\[
    p_1=\frac{q_1}{1-r},
    \qquad
    \pi_2=\frac{(1-q_1)q_2}{1-r}.
\]

For a generic pair $(t_1,t_2)$, define
\[
    H(t_1,t_2)
    :=
    (1-t_1)^2+\rho^2t_2^2-2\rho^2(1-t_1)t_2.
\]
Then
\[
\begin{split}
    \bbE\{Y-\beta t_1\notationchange{X^{(1)}}-\beta\rho t_2\notationchange{X^{(2)}}\}^2
    &=
    \sigma_\varepsilon^2
    +
    \beta^2\bbE\{(1-t_1)\notationchange{X^{(1)}}-\rho t_2\notationchange{X^{(2)}}\}^2  \\
    &=
    \sigma_\varepsilon^2+\beta^2H(t_1,t_2),
\end{split}
\]
where we used $\Var(\notationchange{X^{(1)}})=\Var(\notationchange{X^{(2)}})=1$ and
$\operatorname{Cov}(\notationchange{X^{(1)}},\notationchange{X^{(2)}})=\rho$. Therefore,
\begin{equation}
\label{eq:Delta_one_proxy_H_form_complete}
    \Delta_j^\star
    =
    \beta^2
    \left\{
        H(p_1^{(-j)},\pi_2^{(-j)})
        -
        H(p_1,\pi_2)
    \right\}.
\end{equation}

We first consider $j=1$. In this case,
\[
    p_1^{(-1)}=0,
    \qquad
    \pi_2^{(-1)}=\frac{q_2}{1-r_{-1}}.
\]
Moreover,
\[
    1-p_1
    =
    1-\frac{q_1}{1-r}
    =
    \frac{1-r-q_1}{1-r}
    =
    \frac{(1-q_1)(1-r_{-1})}{1-r},
\]
and hence
\[
    \pi_2
    =
    \frac{(1-q_1)q_2}{1-r}
    =
    (1-p_1)\pi_2^{(-1)}.
\]
Thus
\[
    H(p_1,\pi_2)
    =
    (1-p_1)^2H(0,\pi_2^{(-1)}).
\]
By \eqref{eq:Delta_one_proxy_H_form_complete},
\[
\begin{split}
    \Delta_1^\star
    &=
    \beta^2
    \{1-(1-p_1)^2\}
    H(0,\pi_2^{(-1)}) \\
    &=
    \beta^2(2p_1-p_1^2)
    \left\{
        1-\rho^2\left(2\pi_2^{(-1)}-(\pi_2^{(-1)})^2\right)
    \right\}.
\end{split}
\]
This proves the exact expression for $\Delta_1^{\star(b)}$.

We next consider $j=2$. In this case,
\[
    p_1^{(-2)}=\frac{q_1}{1-r_{-2}},
    \qquad
    \pi_2^{(-2)}=0.
\]
Therefore,
\[
\begin{split}
    \Delta_2^\star
    &=
    \beta^2
    \left\{
        H(p_1^{(-2)},0)-H(p_1,\pi_2)
    \right\} \\
    &=
    \beta^2
    \left[
        (1-p_1^{(-2)})^2-(1-p_1)^2
        +
        \rho^2\{2(1-p_1)\pi_2-\pi_2^2\}
    \right] \\
    &=
    \beta^2
    \left[
        (1-p_1^{(-2)})^2-(1-p_1)^2
        +
        \rho^2\{(1-p_1)^2(2\pi_2^{(-1)}-(\pi_2^{(-1)})^2)\}
    \right].
\end{split}
\]
where the last equality uses $\pi_2=(1-p_1)\pi_2^{(-1)}$. This proves the exact
expression for $\Delta_2^{\star(b)}$.

By \eqref{eq:Delta_one_proxy_H_form_complete} and direct computation, for
$j\geq3$,
\[
    \Delta_j^{\star(b)}
    =
    \beta^2
    \{
        (1-p_1^{(-j)})^2-(1-p_1)^2
        +
        \rho^2\left[\pi_2^{(-j)}(\pi_2^{(-j)}-2)-\pi_2(\pi_2-2)+2(p_1^{(-j)}\pi_2^{(-j)}-p_1\pi_2)\right]
    \}.
\]
This proves the exact expression for $\Delta_j^{\star(b)}$ for $j\geq3$. The
proof is complete.
\end{proof}

\begin{proof}[Proof of Lemma~\ref{lem:Delta_oracle_bound}]
For $j=2$, since $1-r\geq1-r_{-2}$, we have $p_1^{(-2)}\geq p_1$, and hence
\[
    (1-p_1^{(-2)})^2-(1-p_1)^2\leq0.
\]
Thus
\[
    \Delta_2^\star
    \leq
    \beta^2\rho^2\{2(1-p_1)\pi_2-\pi_2^2\}
    \leq
    \beta^2\rho^2(1-p_1)^2
    \leq
    \beta^2\rho^2(1-p_1)
    \leq
    \beta^2\rho^2(1-q_1)
    \leq
    \beta^2\rho^2.
\]
For the lower bound, the proxy term is nonnegative, so
\[
\begin{split}
    \Delta_2^\star
    &\geq
    \beta^2\{(1-p_1^{(-2)})^2-(1-p_1)^2\} \\
    &=
    -\beta^2(p_1^{(-2)}-p_1)(2-p_1^{(-2)}-p_1) \\
    &\geq
    -2\beta^2(p_1^{(-2)}-p_1).
\end{split}
\]
Since
\[
    p_1^{(-2)}-p_1
    =
    \frac{q_1q_2r_{-2}}{(1-r)(1-r_{-2})},
\]
and
\[
    r_{-2}\leq e^{-(m-1)},
    \qquad
    r \leq e^{-m},
\]
we obtain
\[
    \Delta_2^\star
    \geq
    -
    \frac{
        2\beta^2 q_1q_2 e^{-(m-1)}
    }{
        (1-e^{-m})(1-e^{-(m-1)})
    }.
\]

Now consider any independent noise feature $j\geq3$. In this case,
\[
    p_1^{(-j)}=\frac{q_1}{1-r_{-j}},
    \qquad
    \pi_2^{(-j)}=\frac{(1-q_1)q_2}{1-r_{-j}}.
\]
Since $r=(1-q_j)r_{-j}$,
\[
    1-r=(1-r_{-j})+q_jr_{-j}.
\]
Therefore,
\[
    p_1^{(-j)}=\lambda p_1,
    \qquad
    \pi_2^{(-j)}=\lambda\pi_2,
    \qquad
    \lambda:=\frac{1-r}{1-r_{-j}}\geq1.
\]
Also, $p_1^{(-j)}+\pi_2^{(-j)}\leq1$, so
$\lambda(p_1+\pi_2)\leq1$. Direct differentiation gives
\[
\begin{split}
    \frac{d}{d\lambda}H(\lambda p_1,\lambda\pi_2)
    =
    -2\left[
        \{1-\lambda(p_1+\pi_2)\}(p_1+\rho^2\pi_2)
        +
        (1-\rho^2)\lambda p_1\pi_2
    \right]
    \leq0.
\end{split}
\]
Hence
\[
    H(p_1^{(-j)},\pi_2^{(-j)})
    =
    H(\lambda p_1,\lambda\pi_2)
    \leq
    H(p_1,\pi_2),
\]
which implies
\[
    \Delta_j^{\star(b)}
    \leq0
\]
for all $j\ge3$.

It remains to bound $|\Delta_j^\star|$. For $t_1,t_2\in[0,1]$,
\[
    \frac{\partial H}{\partial t_1}(t_1,t_2)
    =
    2(t_1-1+\rho^2t_2),
\]
and therefore
\[
    \left|\frac{\partial H}{\partial t_1}(t_1,t_2)\right|\leq2.
\]
Similarly,
\[
    \frac{\partial H}{\partial t_2}(t_1,t_2)
    =
    2\rho^2(t_1+t_2-1),
\]
and therefore
\[
    \left|\frac{\partial H}{\partial t_2}(t_1,t_2)\right|\leq2\rho^2.
\]
By the mean value theorem,
\[
    |\Delta_j^\star|
    \leq
    2\beta^2 |p_1^{(-j)}-p_1|
    +
    2\beta^2\rho^2|\pi_2^{(-j)}-\pi_2|.
\]
Moreover,
\[
    |p_1^{(-j)}-p_1|
    =
    \frac{q_1q_jr_{-j}}{(1-r)(1-r_{-j})},
\]
and
\[
    |\pi_2^{(-j)}-\pi_2|
    =
    \frac{(1-q_1)q_2q_jr_{-j}}{(1-r)(1-r_{-j})}.
\]
Therefore,
\[
    |\Delta_j^\star|
    \leq
    \frac{
        2\beta^2 q_j r_{-j}
        \{q_1+\rho^2(1-q_1)q_2\}
    }{
        (1-r)(1-r_{-j})
    }
\]
for all $j\ge3$. Using
\[
    r\leq e^{-m},
    \qquad
    r_{-j}\leq e^{-(m-1)},
\]
and $q_1+\rho^2(1-q_1)q_2\leq1$ gives the stated independent-noise bound. The
proof is complete.
\end{proof}

\section{Additional Empirical Details and Results}
In the following sections, we provide additional empirical details and experimental results.

\section{Additional Details for Simulation Figures}
\label{sec:intro_fig_sim_details}

We provide additional details for the data generating process and LAMPS configurations to produce empirical results presented in the main text.

\paragraph{LOCO importance scores in Figure~\ref{fig:locomp_split}.}
The data generating process is the same as in Linear Model Experiments part of Section~\ref{sec:simulation_settings} with correlation level fixed at $\rho = 0.5$ and non-permuted Toeplitz design. We compare two LOCO-based feature importance scores. LOCO-MP is implemented using a minipatch ensemble with ordinary least squares without an intercept as the base learner, while LOCO-Split is implemented using Lasso with cross-validation as the base learner. Figure~\ref{fig:locomp_split} reports the resulting LOCO feature importance scores for signal and noise features.

\paragraph{Adaptive sampling probabilities in Figure~\ref{fig:prob_delta_vs_epoch}.}
Figure~\ref{fig:prob_delta_vs_epoch} uses the same linear data-generating model as Figure~\ref{fig:locomp_split}.
LAMPS is run with squared-error loss and ordinary least square without an intercept as the base learner. The minipatch ratios are fixed at $n/N=0.4$ and $m/M=0.12$, with $B=3$ iterations and ensemble size $K$ satisfying the coverage conditions in Section~\ref{sec:tuning}. We set the adaptivity parameter to $\delta=0.8$ and return the selected set $\widehat S=\{j:q_j^{(B)}>\delta/2\}$.

The figure displays the evolution of LOCO importance scores and sampling probabilities across the first three iterations, showing how the adaptive update progressively separates signal features from noise features.

\paragraph{One-signal correlated-noise setting in Figure~\ref{fig:corr_theory}.}
We generate data with $N=500$ observations and $M=100$ features. The predictors follow a zero-mean Gaussian distribution in which only the first two features are correlated:
$$
\Sigma_{11}=\Sigma_{22}=1,\qquad \Sigma_{12}=\Sigma_{21}=\rho,\qquad \rho\in\{0.5,0.7,0.9\},
$$
and all remaining features are independent standard normal noise variables. The response is generated as $Y=\notationchange{\bX_{\cdot,1}^2}$, so that $\notationchange{\bX_{\cdot,1}}$ is the only signal feature and $\notationchange{\bX_{\cdot,2}}$ is a correlated noise proxy. LAMPS uses decision tree regression as the base learner. LAMPS is run with squared-error loss, minipatch ratios $n/N=0.4$ and $m/M=0.12$, $B=10$ iterations, ensemble size $K$ satisfying the coverage conditions in Section~\ref{sec:tuning}, adaptivity parameter $\delta=0.8$, and final selection rule $\widehat S=\{j:q_j^{(B)}>\delta/2\}$. Figure~\ref{fig:corr_theory} illustrates how increasing correlation makes the noise proxy harder to separate from the true signal feature.

\paragraph{Feature selection performance in Figure~\ref{fig:linear-nonoracle}-\ref{fig:interaction-rate}.}

In Figure~\ref{fig:linear-nonoracle}, Lasso \citep{Tibshirani1996Lasso} is run on standardized predictors and tuned by five-fold cross-validation and by eBIC \citep{chen2008extended}. Stability Selection \citep{MeinshausenBuhlmann2010} uses a Lasso
base learner with selection-probability threshold $\pi_{\mathrm{thr}}=0.6$. CPSS follows \citet{ShahSamworth2013} with $B=50$ complementary pairs. Elastic Net \citep{ZouHastie2005ElasticNet} is tuned by five-fold cross-validation on the regularization strength parameter with $l_1$ penalty ratio held fixed at 0.5. All methods operate on standardized predictors and are evaluated under identical simulation replicates as LAMPS.

In Figure~\ref{fig:additive-nonoracle}, for LAMPS, the nonlinear base learner is taken to be either MARS or SpAM. When SpAM is used as the base learner for LAMPS, we fix its internal regularization parameter at $\lambda=0.01$ for computational efficiency. MARS is implemented via the \texttt{earth} package \citep{friedman1991multivariate, MilborrowEarth} with default settings. SpAM follows the \texttt{SAM} implementation \citep{Ravikumar2009SpAM, jiang2026package} with tuning parameter selected by cross-validation. Model-X Knockoffs are implemented using random-forest feature statistic with swap importances as the knockoff score with target FDR levels $q\in\{0.1,0.2,0.3\}$.

In Figure~\ref{fig:interaction-rate}, for LAMPS, when MARS is used as the base learner, the degree parameter is set to 2 to allow MARS to detect possible interaction. The degree parameter for MARS itself as a comparison method is also set to 2. All other competing procedures are the same as in the nonlinear additive study.

\clearpage

\section{Additional Correlated-Theory Validation Results}
\label{sec:supp_correlated_theory_validation}

We further examine the sampling-probability and LOCO feature-importance dynamics in the one-signal correlated-noise setting studied in Section~\ref{sec:correlated_theory}. We consider both linear and nonlinear signal structures. In each setting, we generate $N=500$ observations with $M=25, 100, 200$ features from a zero-mean Gaussian distribution, where only the first two features are correlated:
\[
\Sigma_{11}=\Sigma_{22}=1,\qquad
\Sigma_{12}=\Sigma_{21}=\rho,\qquad
\rho\in\{0.5,0.7,0.9\},
\]
and all remaining features are mutually independent standard normal variables. Thus, $\notationchange{\bX_{\cdot,1}}$ is the signal feature and $\notationchange{\bX_{\cdot,2}}$ is a correlated noise proxy. For the linear setting, the response is generated as $Y=\notationchange{\bX_{\cdot,1}}$ and LAMPS uses linear regression as the base learner; for the nonlinear setting, the response is generated as $Y=\notationchange{\bX_{\cdot,1}^2}$ and the base learner is decision tree regression. All other LAMPS configurations are the same as in Figure~\ref{fig:corr_theory}: squared-error loss, minipatch ratios $n/N=0.4$ and $m/M=0.12$, $B=10$ iterations, constant ensemble size $K$ satisfying the coverage conditions in Section~\ref{sec:tuning}, adaptivity parameter $\delta=0.8$, and final selection rule $\widehat S=\{j:q_j^{(B)}>\delta/2\}$.

\suppfigure
    {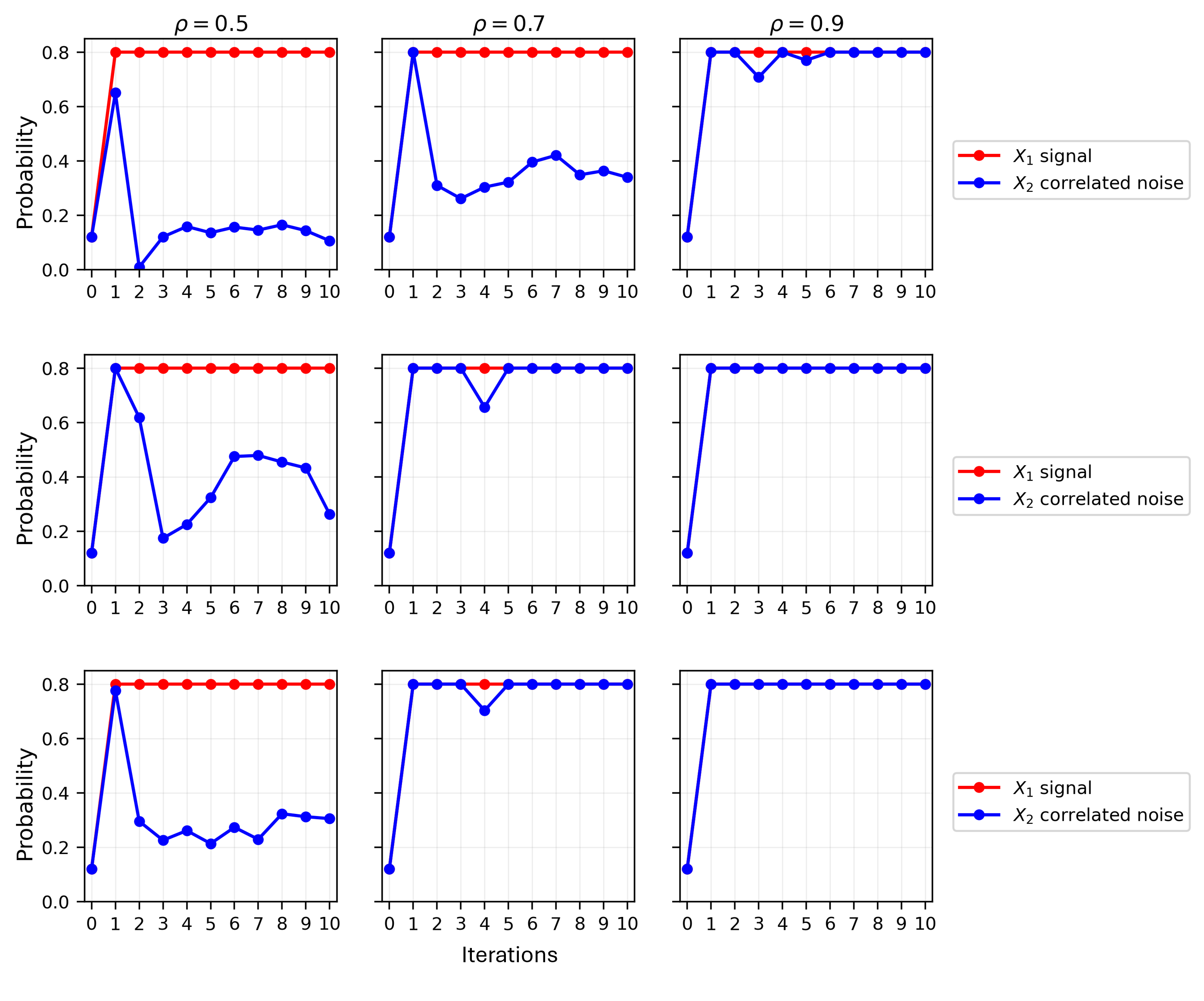}
    {fig:supp_probability_vs_epoch_linear}
    {Sampling probabilities across LAMPS iterations in the linear one-signal correlated-noise setting. Columns correspond to $\rho=0.5,0.7,0.9$, and rows correspond to $M=25,100,200$. The minipatch feature ratio is fixed at $m/M=0.12$, giving $m=4,12,24$, respectively. Red curves show the signal feature $\notationchange{\bX_{\cdot,1}}$, and blue curves show the correlated noise feature $\notationchange{\bX_{\cdot,2}}$. The response is generated as $Y=\notationchange{\bX_{\cdot,1}}$, and linear regression is used as the base learner, with $N=500$. The correlated noise feature receives increasingly large sampling probability as either $\rho$ or $m$ increases, making it more difficult to distinguish from the signal feature, consistent with Proposition~\ref{prop:correlated}.
}

\suppfigure
    {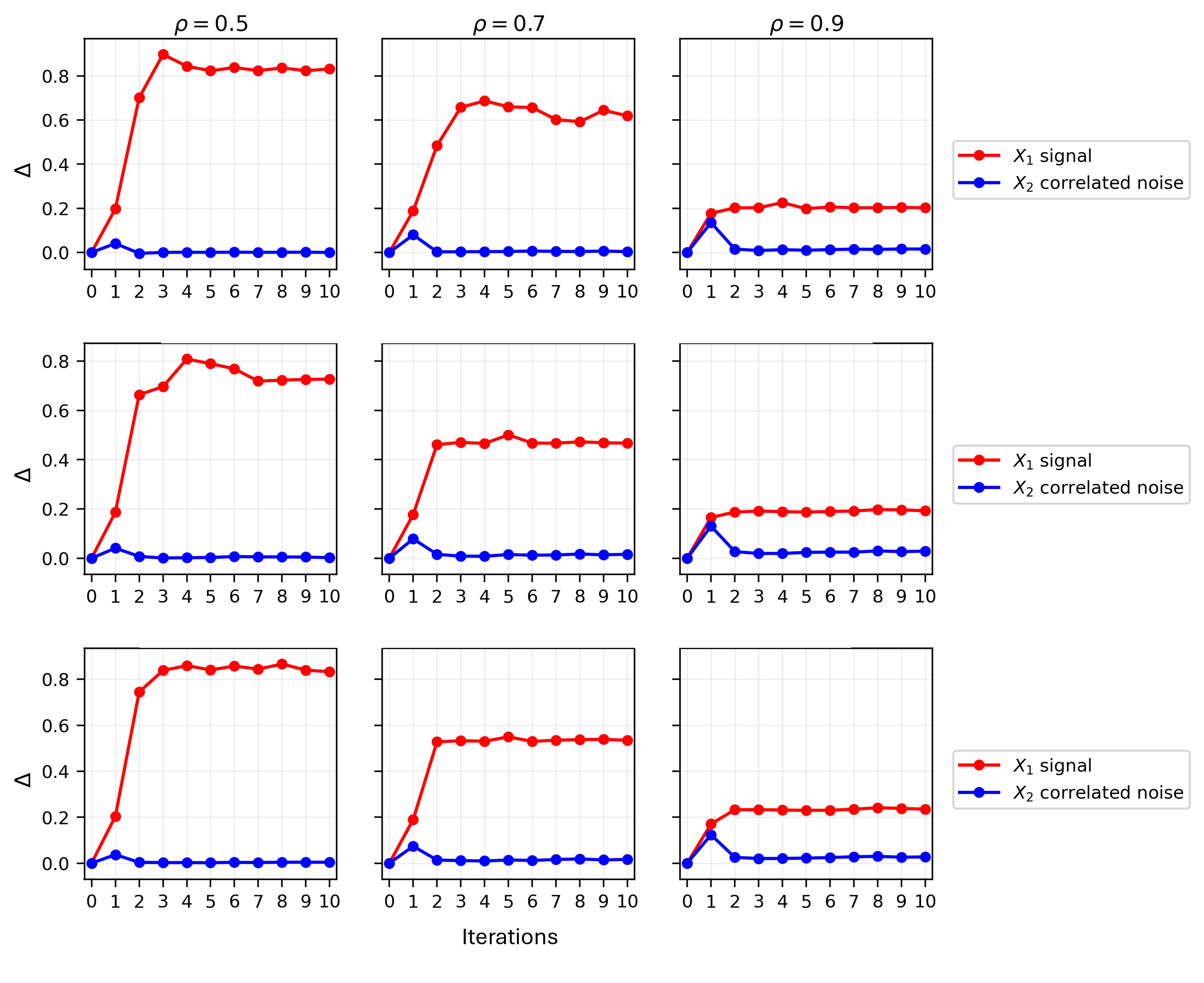}
    {fig:supp_delta_vs_epoch_linear}
    {LOCO feature importance across LAMPS iterations in the linear one-signal correlated-noise setting. Columns correspond to $\rho=0.5,0.7,0.9$, and rows correspond to $M=25,100,200$. The minipatch feature ratio is fixed at $m/M=0.12$, giving $m=4,12,24$, respectively. Red curves show the signal feature $\notationchange{\bX_{\cdot,1}}$, and blue curves show the correlated noise feature $\notationchange{\bX_{\cdot,2}}$. The response is generated as $Y=\notationchange{\bX_{\cdot,1}}$, and linear regression is used as the base learner.}

\suppfigure
    {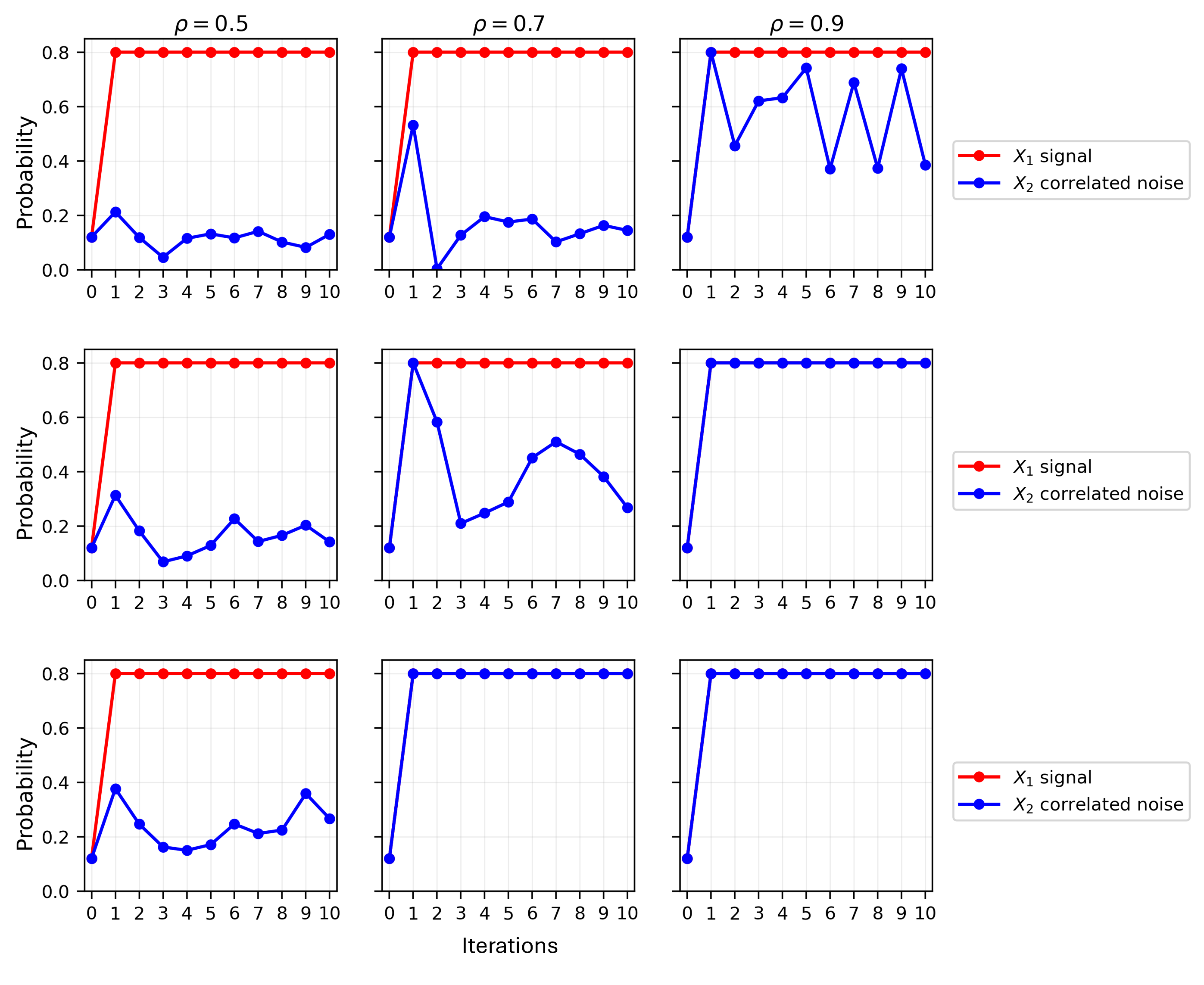}
    {fig:supp_probability_vs_epoch_nonlinear}
    {Sampling probabilities across LAMPS iterations in the nonlinear one-signal correlated-noise setting. Columns correspond to $\rho=0.5,0.7,0.9$, and rows correspond to $M=25,100,200$. The minipatch feature ratio is fixed at $m/M=0.12$, giving $m=4,12,24$, respectively. Red curves show the signal feature $\notationchange{\bX_{\cdot,1}}$, and blue curves show the correlated noise feature $\notationchange{\bX_{\cdot,2}}$. The response is generated as $Y=\notationchange{\bX_{\cdot,1}^2}$, and decision tree regression is used as the base learner. The signal feature remains distinguishable, while the correlated noise feature receives increasingly large sampling probability as $\rho$ increases, consistent with Proposition~\ref{prop:correlated}.}

\suppfigure
    {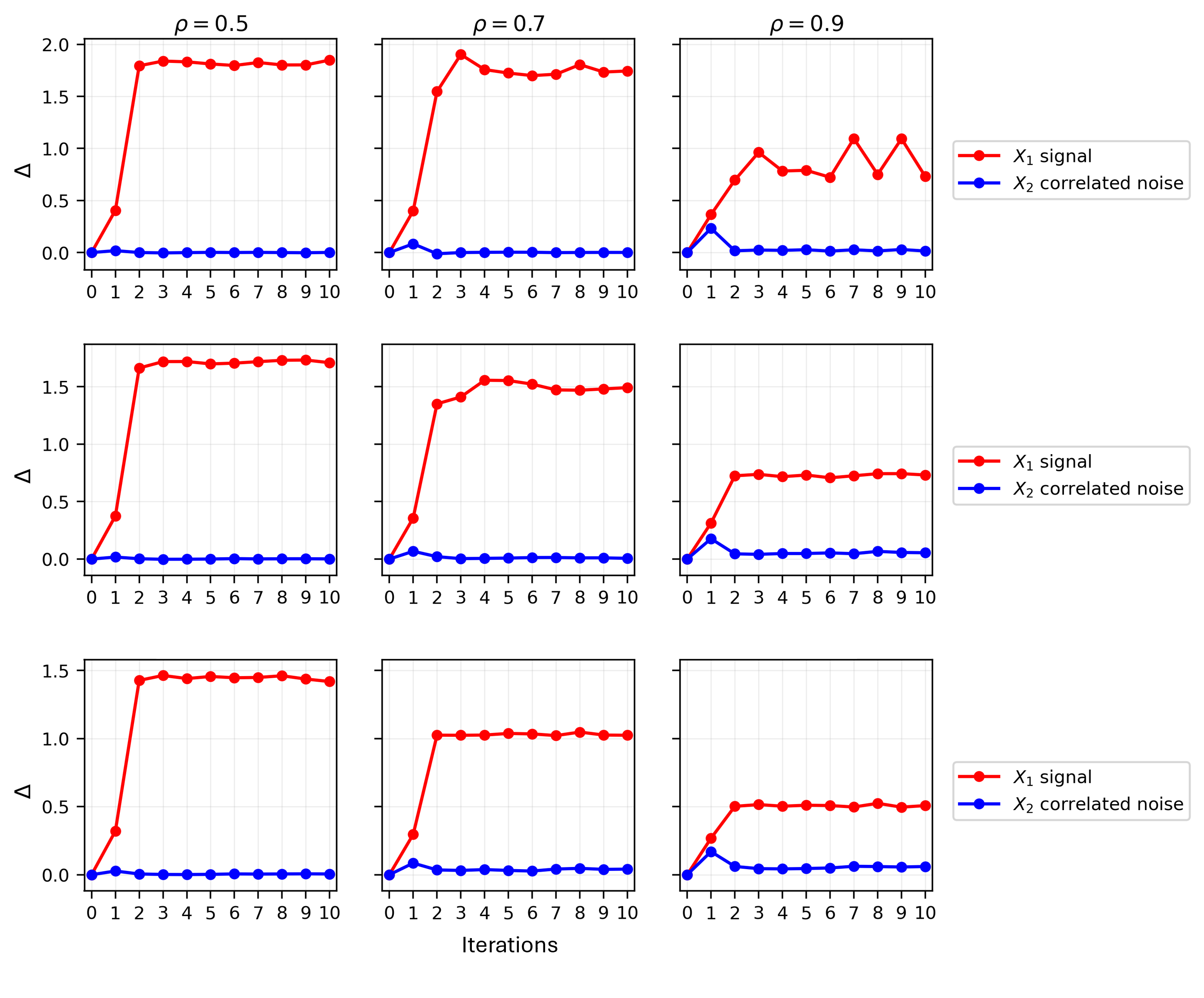}
    {fig:supp_delta_vs_epoch_nonlinear}
    {LOCO feature importance across LAMPS iterations in the nonlinear one-signal correlated-noise setting. Columns correspond to $\rho=0.5,0.7,0.9$, and rows correspond to $M=25,100,200$. The minipatch feature ratio is fixed at $m/M=0.12$, giving $m=4,12,24$, respectively. Red curves show the signal feature $\notationchange{\bX_{\cdot,1}}$, and blue curves show the correlated noise feature $\notationchange{\bX_{\cdot,2}}$. The response is generated as $Y=\notationchange{\bX_{\cdot,1}^2}$, and decision tree regression is used as the base learner.}

\clearpage

\section{Additional Precision and Recall and Oracle Sparsity Tuning Results}
\label{sec:supp_additional_precision_recall}

We present additional precision, recall and oracle sparsity tuning summaries for the linear and nonlinear additive experiments. These figures complement the F1-score results in the main paper. All simulation settings are the same as in Section~\ref{sec:simulation_settings}.

\subsection{Linear Models}

Figures~\ref{fig:supp_linear_oracle} report f1 score for the linear settings with oracle sparsity tuning methods included. For Lasso with oracle tuning, we choose regularization strength parameter so that Lasso gives the desired sparsity. For Elastic Net with oracle tuning, we choose regularization strength parameter and $l_1$ penalty ratio parameter so that Elastic Net gives the desired sparsity. Figures~\ref{fig:supp_linear_precision} and \ref{fig:supp_linear_recall} report precision and recall for the linear settings. The top row corresponds to the non-permuted Toeplitz design, and the bottom row corresponds to the permuted design. Columns represent correlation levels $\rho\in\{0,0.5,0.9\}$. LAMPS yields the best result in most scenarios; only giving lower recall than methods that yield denser selected sets with low precision. When restricting the sparsity level, Lasso and elastic net with oracle sparsity tuning give lower F1, precision, and recall than LAMPS. 

\suppfigure
    {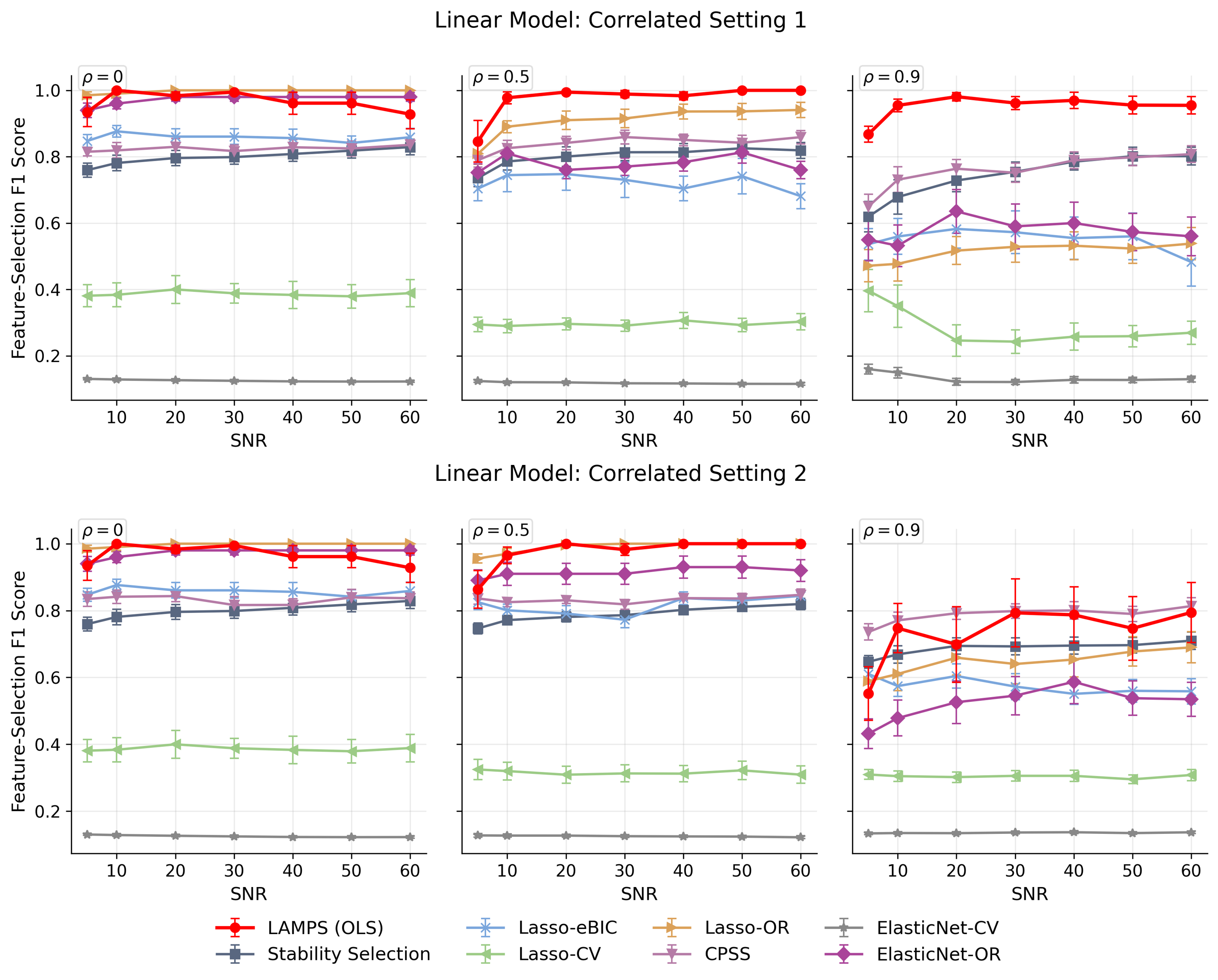}
    {fig:supp_linear_oracle}
    {F1 score in linear models across signal-to-noise ratios. Top row: Linear Setting~1 with non-permuted Toeplitz covariance; bottom row: Linear Setting~2 with permuted covariance. Columns correspond to $\rho\in\{0,0.5,0.9\}$. LAMPS is shown in red, and competing non-oracle and oracle baselines are shown in distinct colors and markers. Error bars represent $\pm$ one standard error over 10 Monte Carlo replicates.}
    
\suppfigure
    {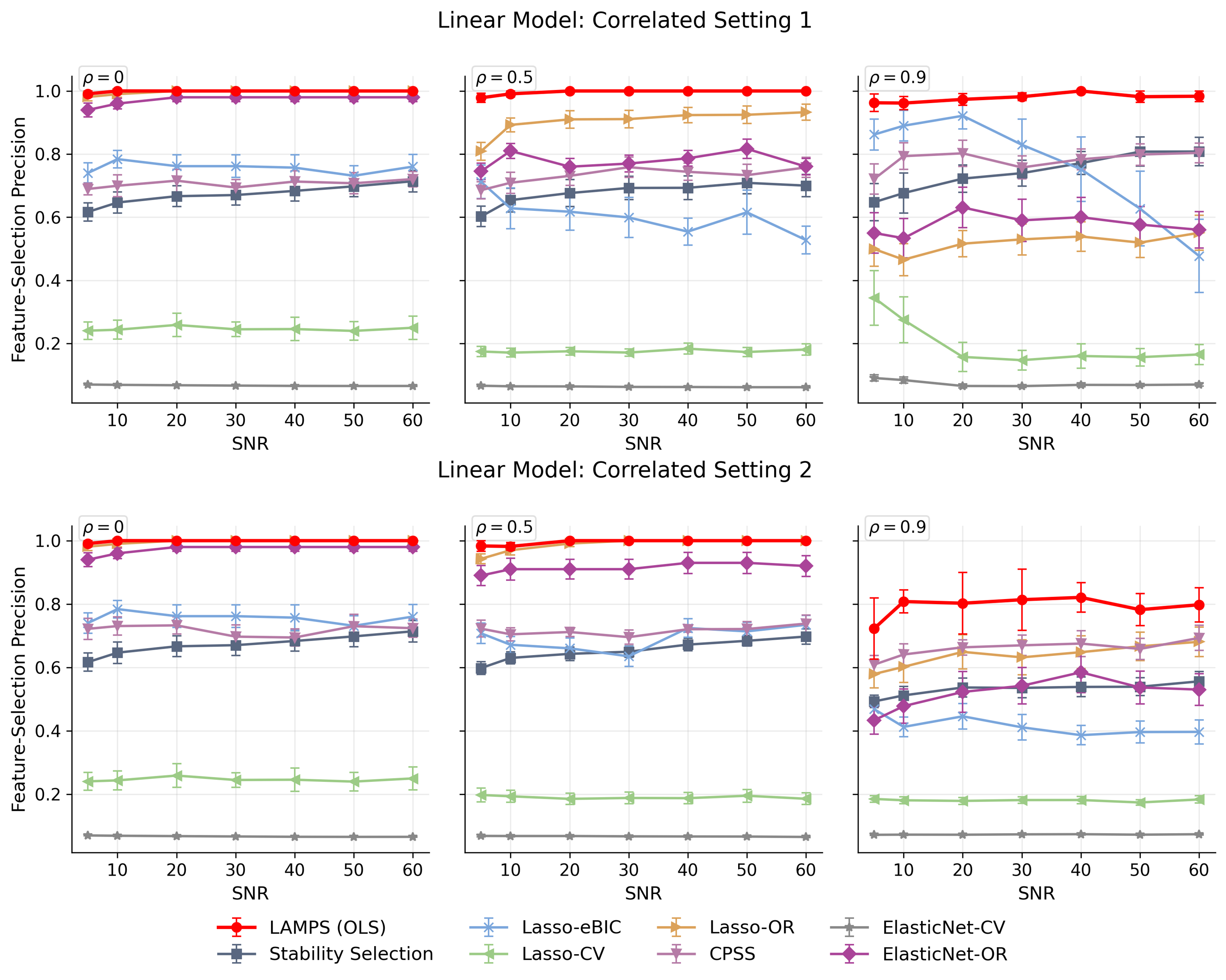}
    {fig:supp_linear_precision}
    {Precision in linear models across signal-to-noise ratios. Top row: Correlated Setting~1 with non-permuted Toeplitz covariance; bottom row: Correlated Setting~2 with permuted covariance. Columns correspond to $\rho\in\{0,0.5,0.9\}$. LAMPS is shown in red, and competing baselines are shown in distinct colors and markers. Error bars represent $\pm$ one standard error over 10 Monte Carlo replicates.}

\suppfigure
    {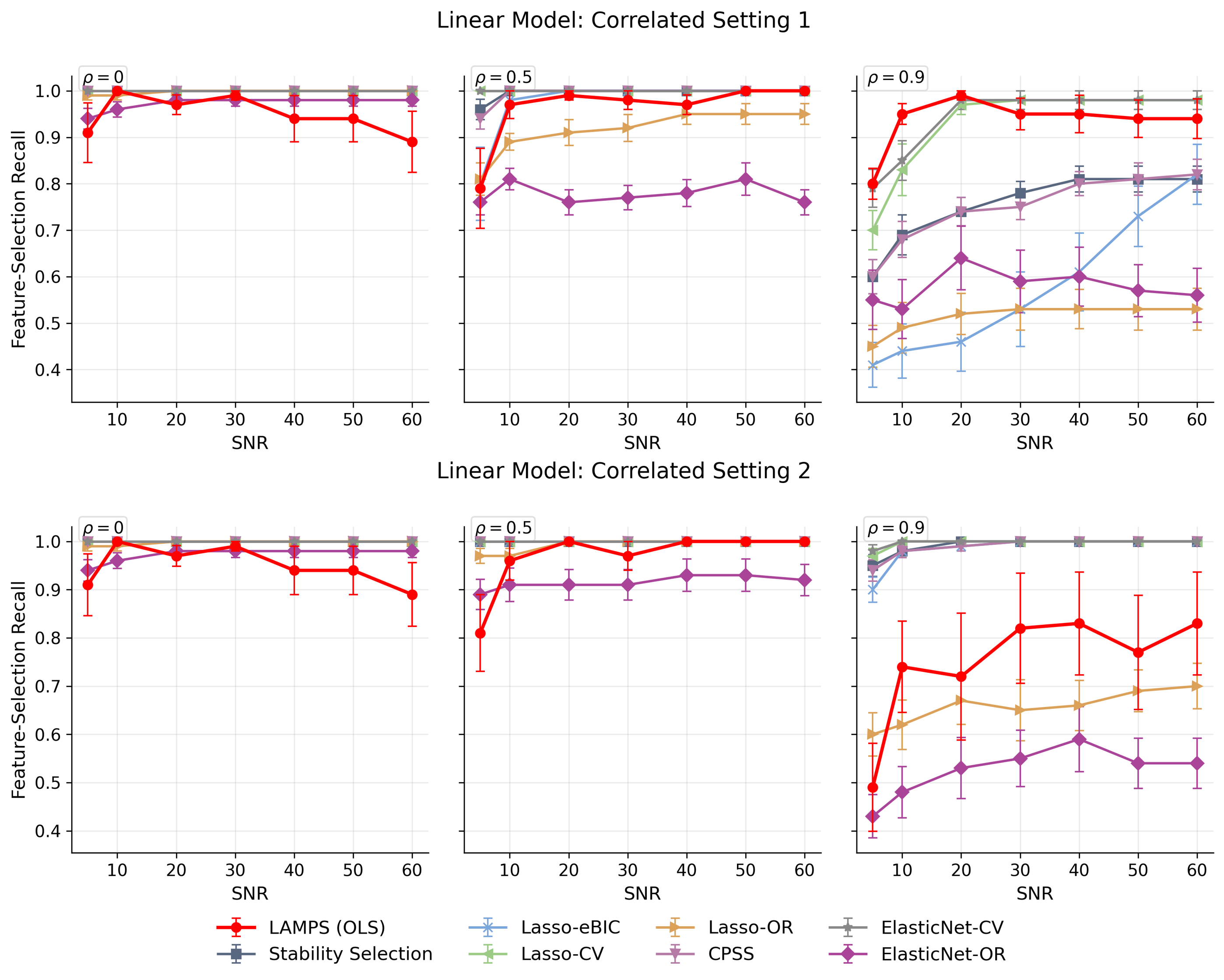}
    {fig:supp_linear_recall}
    {Recall in linear models across signal-to-noise ratios. Top row: Correlated Setting~1 with non-permuted Toeplitz covariance; bottom row: Correlated Setting~2 with permuted covariance. Columns correspond to $\rho\in\{0,0.5,0.9\}$. LAMPS is shown in red, and competing baselines are shown in distinct colors and markers. Error bars represent $\pm$ one standard error over 10 Monte Carlo replicates.}

\subsection{Nonlinear Additive Models}

Figures~\ref{fig:supp_nonlinear_additive_precision_nonoracle}--\ref{fig:supp_nonlinear_additive_f1_oracle} report additional results for nonlinear additive models. LAMPS yields the best precision result in all scenarios; only giving lower recall than SpAM. However, SpAM with cross-validation yields dense selection and lower precision. Thus, we separately look at the oracle sparsity tuning comparison. In oracle tuning comparison, we tune the regularization parameter in SpAM so that it gives the correct sparsity. LAMPS gives the best result in oracle sparsity tuning comparison.

\suppfigure
    {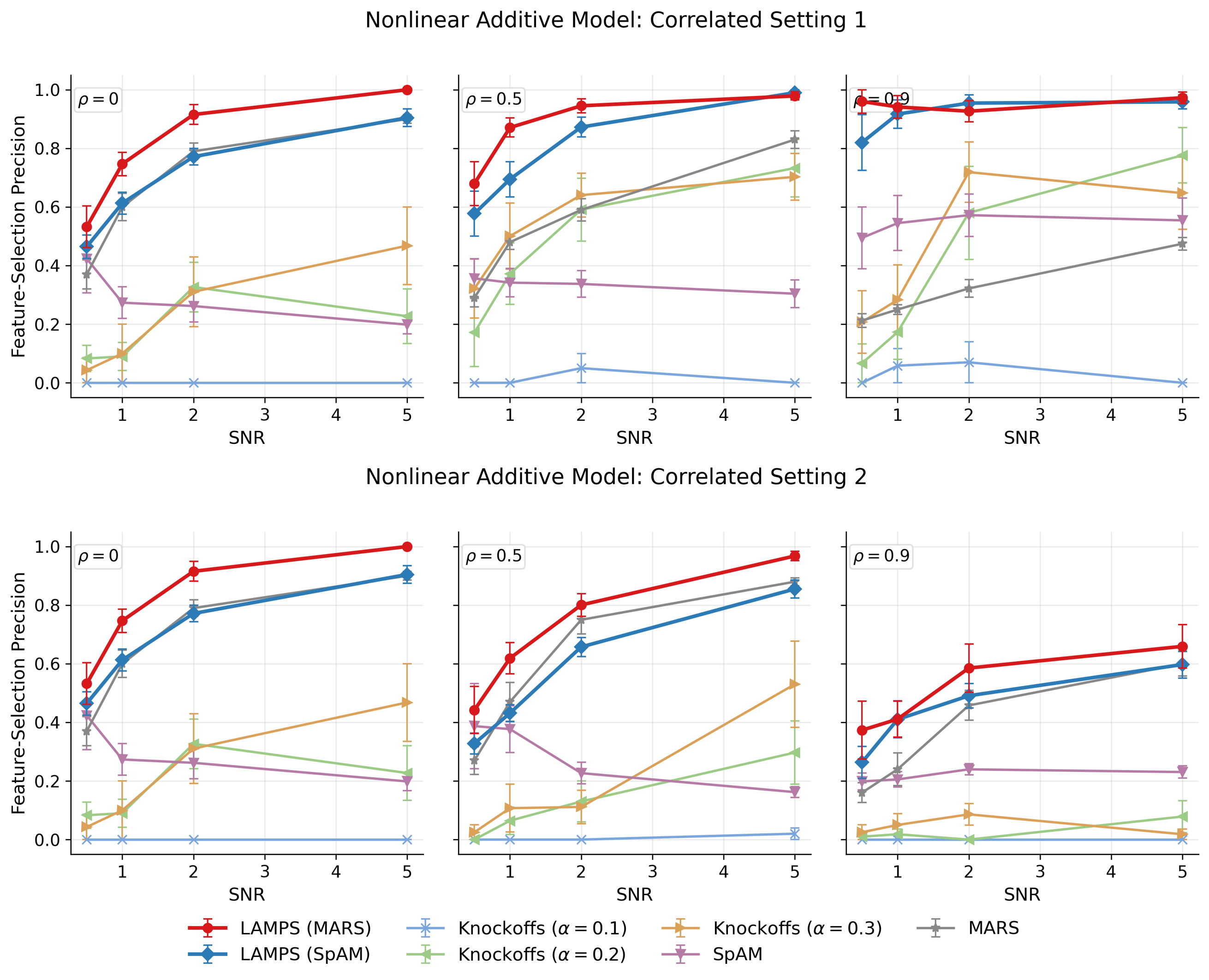}
    {fig:supp_nonlinear_additive_precision_nonoracle}
    {Precision for non-oracle procedures in nonlinear additive models. Top row: Correlated Setting~1 with non-permuted Toeplitz covariance; bottom row: Correlated Setting~2 with permuted covariance. Columns correspond to $\rho\in\{0,0.5,0.9\}$. LAMPS with MARS and SpAM base learners are shown separately, together with competing nonlinear and additive baselines. Error bars represent $\pm$ one standard error over 10 Monte Carlo replicates.}

\suppfigure
    {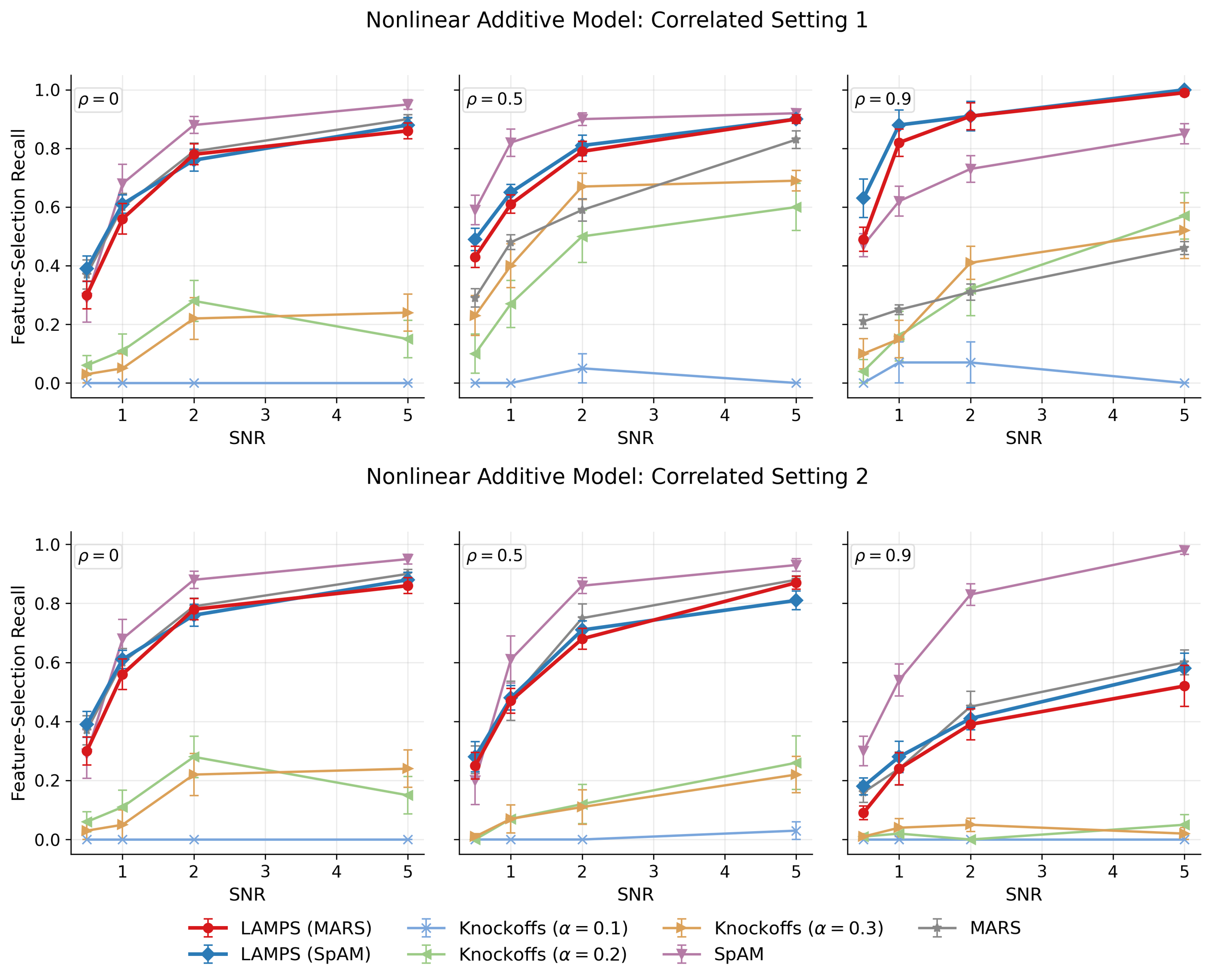}
    {fig:supp_nonlinear_additive_recall_nonoracle}
    {Recall for non-oracle procedures in nonlinear additive models. Top row: Correlated Setting~1 with non-permuted Toeplitz covariance; bottom row: Correlated Setting~2 with permuted covariance. Columns correspond to $\rho\in\{0,0.5,0.9\}$. LAMPS with MARS and SpAM base learners are shown separately, together with competing nonlinear and additive baselines. Error bars represent $\pm$ one standard error over 10 Monte Carlo replicates.}

\suppfigure
    {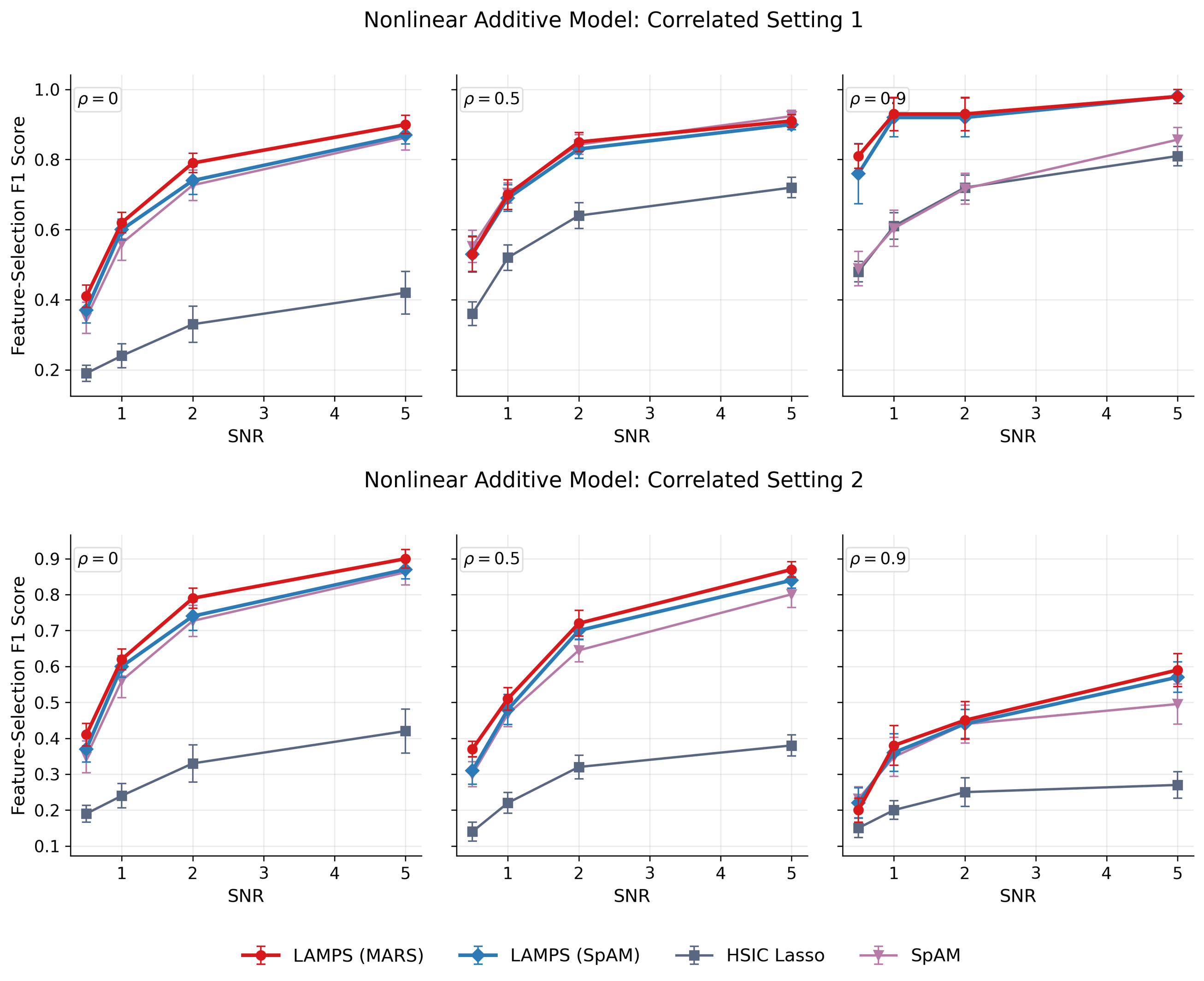}
    {fig:supp_nonlinear_additive_f1_oracle}
    {F1 scores for procedures with oracle-sparsity tuning in nonlinear additive models. Top row: Correlated Setting~1 with non-permuted Toeplitz covariance; bottom row: Correlated Setting~2 with permuted covariance. Columns correspond to $\rho\in\{0,0.5,0.9\}$. LAMPS with MARS and SpAM base learners are shown separately, together with oracle-tuning competitors. MARS and knockoffs are not excluded since they do not come with sparsity tuning. Error bars represent $\pm$ one standard error over 10 Monte Carlo replicates.}

\section{Comparative Results with Data-driven Tuning of Minipatch Sizes}
\label{sec:supp_hyperparameter_tuning}

We evaluate the performance of LAMPS with data-driven tuning of minipatch sizes for the three simulation classes studied in the main paper: linear, nonlinear additive, and nonlinear interaction models. In all three experiments, we fix the correlation level at $\rho=0.5$ and use the non-permuted Toeplitz design. All other simulation settings, base learners, and evaluation protocols are the same as in the corresponding main-text experiments.

Following Section~\ref{sec:tuning}, we parameterize the minipatch sizes as
\[
n=\lfloor n_{\rm ratio}N\rfloor,
\qquad
m=\lfloor m_{\rm ratio}M\rfloor,
\]
and search over
\[
n_{\rm ratio}\in\{0.3,0.4,0.5\},
\qquad
m_{\rm ratio}\in\{0.1,0.15,0.2\}.
\]
For each Monte Carlo replicate, we select the pair $(n_{\rm ratio},m_{\rm ratio})$ that minimizes the final accepted LAMPS LOO error. The selected pair is then used for feature selection in that replicate. For each minipatch size candidate, the ensemble size $K$ is recomputed using the coverage rule in Section~\ref{sec:tuning}.

Figures~\ref{fig:supp_hyper_linear_f1}--\ref{fig:supp_hyper_nonlinear_interaction_rate} report the resulting performance. For the linear and nonlinear additive settings, we report F1 score across signal-to-noise ratios. For the nonlinear interaction setting, we report the interaction detection rate across interaction strengths. Overall, the LOO-based tuning rule gives stable performance across the three model classes.

\suppfigure[0.76\textwidth]
    {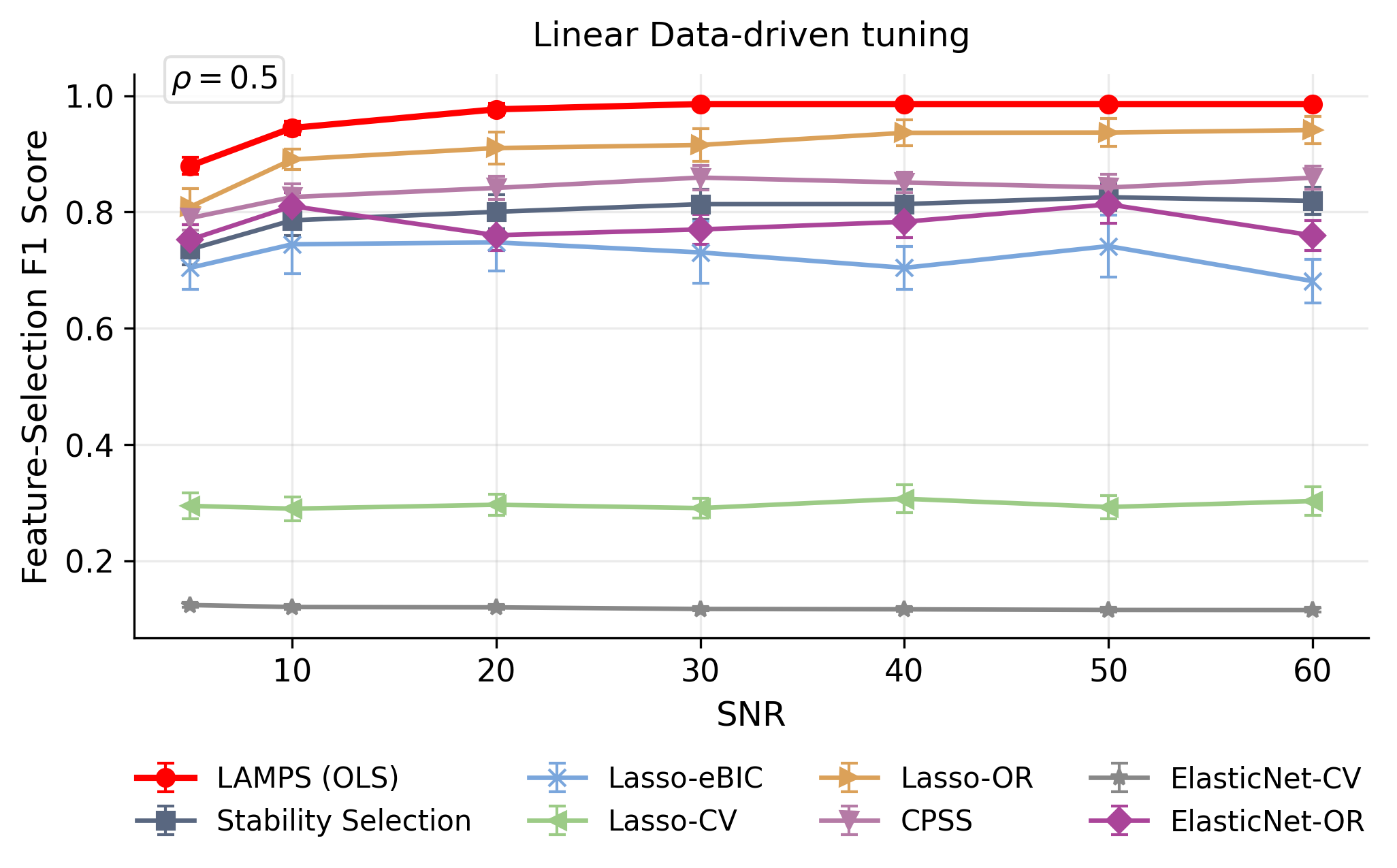}
    {fig:supp_hyper_linear_f1}
    {F1 scores of LAMPS with data-driven tuning of minipatch sizes, presented together with baseline methods in the linear model setup. This figure corresponds to Correlated Setting~1 with non-permuted Toeplitz covariance and correlation level $\rho=0.5$. Error bars represent $\pm$ one standard error over 10 replicates.}

\suppfigure[0.76\textwidth]
    {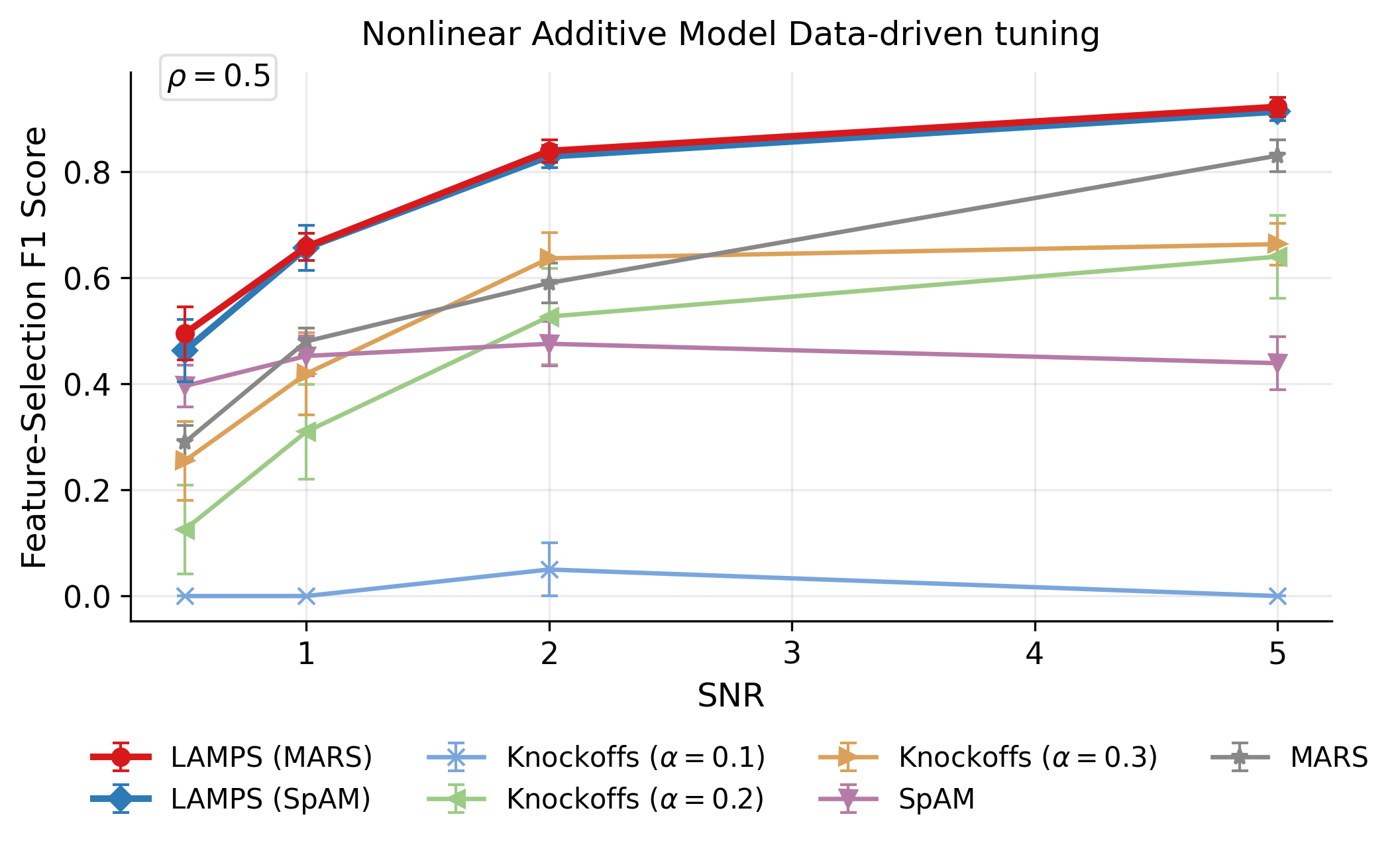}
    {fig:supp_hyper_nonlinear_additive_f1}
    {F1 scores of LAMPS with data-driven tuning of minipatch sizes, presented together with baseline methods in the non-linear additive model setup. This figure corresponds to Correlated Setting~1 with non-permuted Toeplitz covariance and correlation level $\rho=0.5$. Error bars represent $\pm$ one standard error over 10 replicates.}

\suppfigure[0.76\textwidth]
    {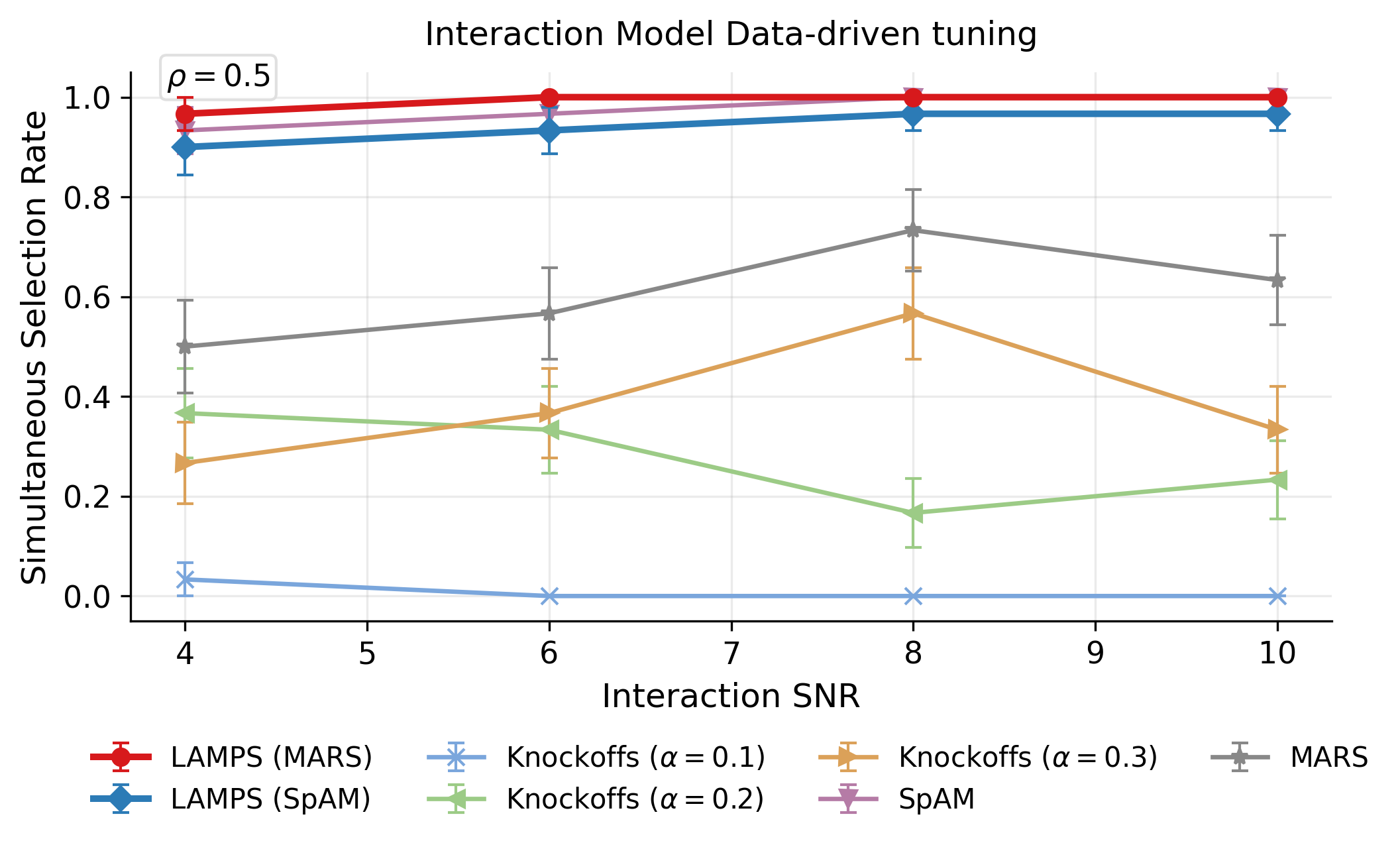}
    {fig:supp_hyper_nonlinear_interaction_rate}
    {Interaction detection rate (proportion of both interaction features been selected) of LAMPS with data-driven tuning of minipatch sizes, presented together with baseline methods in the non-linear interaction model setup. This figure corresponds to Correlated Setting~1 with non-permuted Toeplitz covariance and correlation level $\rho=0.5$. Error bars represent $\pm$ one standard error over 30 replicates.}

\clearpage

\section{Additional Details on the Case Study}
\label{sec:supp_case_study_details}

We present the gene selection results by LAMPS in the ROSMAP case study. The selection frequencies across the 10 splits are reported in Table~\ref{tab:rosmap_selection_frequency}. Among the frequently selected genes, RPL41P1 has been reported in immune-related expression analyses in neurological contexts, suggesting a possible neuroimmune link to cognitive processes~\citep{wieland2022epstein}. AC104841.1 is located near SEPT2 and may have local regulatory relevance for neuronal structure or cognitive development~\citep{tada2007role,gil2020regulation}. RP11-599B13.6 has domain similarity to VAMP2, a protein involved in neural signaling~\citep{schoch2001snare}. ENSG00000262526.2 encodes I3L401, which is annotated as similar to proteins involved in transcriptional regulation. CCL2 has been linked to cognition through neuroinflammation and dementia-related pathology~\citep{westin2012ccl2}, and S100A4 is involved in inflammatory and neuronal response pathways that may be relevant to brain function~\citep{serrano2019s100a4}.

\begin{table}[t]
\centering
\caption{\small Selection frequencies of genes identified by LAMPS (MARS) in the ROSMAP experiment across the 10 cross-validation splits.
The table reports all genes selected in at least one split, ordered by decreasing selection frequency.}
\label{tab:rosmap_selection_frequency}
\begin{tabular}{clc}
\toprule
Rank & Gene & Selection frequency (out of 10) \\
\midrule
1 & RPL41P1 & 10 \\
2 & AC104841.1 & 10 \\
3 & RP11-599B13.6 & 9 \\
4 & ENSG00000262526.2 & 9 \\
5 & CCL2 & 3 \\
6 & S100A4 & 3 \\
7 & MT-TV & 2 \\
8 & HPR & 2 \\
9 & ENSG00000247457.9 & 1 \\
\bottomrule
\end{tabular}
\end{table}

\typeout{get arXiv to do 4 passes: Label(s) may have changed. Rerun}
\end{document}